%% file: Arxiv.tex
\documentclass[11pt]{article}
\usepackage[top=2.54cm,left=2.54cm,right=2.54cm,bottom=2.54cm]{geometry}

\input{math_commands.tex}

\usepackage[authoryear,round]{natbib}
\usepackage[hidelinks]{hyperref}     
\usepackage{url}      
\usepackage{amsmath,amssymb,amsthm,mathtools}
\usepackage[utf8]{inputenc}
\usepackage[T1]{fontenc}   
\usepackage{booktabs}    
\usepackage{nicefrac}   
\usepackage{microtype}   
\usepackage{xcolor}   
\usepackage{float}
\usepackage{graphicx}
\usepackage{subcaption}
\usepackage{bbm}
\newtheorem{theorem}{Theorem}[section]

\newtheorem{lemma}[theorem]{Lemma}

\newtheorem{definition}[theorem]{Definition}

\newcommand{\non}{{\textnormal{sub}}}
\newcommand{\opt}{{\textnormal{opt}}}
\def\dmin{\Delta_{\textnormal{min}}}
\usepackage[capitalize,noabbrev]{cleveref}
\newcommand{\sah}{\mathcal{S} \times \mathcal{A} \times [H]}
\usepackage{algorithm}
\usepackage{algorithmic}

\title{Variance-Aware Fine-Grained Gap-Dependent Bounds for Online Reinforcement Learning}

\author{
Haochen Zhang$^{1}$, Lingzhou Xue$^{1}$, and Zhong Zheng$^{2}$\\[0.5em]
$^{1}$Department of Statistics, The Pennsylvania State University\\
$^{2}$Department of Applied and Computational Mathematics and Statistics,\\ University of Notre Dame
}
\date{}

\begin{document}

\maketitle
\begin{abstract}
We study model-free online reinforcement learning (RL) for episodic tabular Markov decision processes, focusing on both gap-dependent regret and policy switching cost. While fine-grained gap-dependent analysis has been established for model-free RL algorithms using Hoeffding-type exploration bonuses, such results for model-free algorithms with variance-based exploration bonuses remain unknown, despite their superior worst-case and coarse-grained gap-dependent guarantees. In this paper, we resolve this open problem by establishing the first fine-grained gap-dependent regret upper bound for UCB-Bernstein+, a refined UCB-Bernstein algorithm, in variance-aware model-free online RL. Moreover, by integrating a stage-wise policy update design into our fine-grained framework and using refined variance-based bonuses, we achieve the best-known gap-dependent local switching cost to date. In addition, our analysis yields improved worst-case guarantees for both regret and local switching cost over the original UCB-Bernstein algorithm. Numerical experiments further demonstrate that UCB-Bernstein+ achieves favorable empirical performance in both regret and local switching cost.
\end{abstract}

\section{Introduction}
Reinforcement learning (RL)~\citep{1998Reinforcement} provides a framework for sequential decision-making, where an agent learns to maximize cumulative rewards through interaction with the environment. RL algorithms are commonly divided into model-based and model-free methods. Model-free methods directly learn value functions and policies, and are widely used due to their simple implementation and low memory requirements~\citep{jin2018q}. In contrast, model-based methods learn an empirical model of the transition dynamics and rewards, often incurring higher memory costs.

In this paper, we focus on model-free RL for episodic tabular Markov decision processes (MDPs) with inhomogeneous transition kernels. Specifically, we consider an episodic tabular MDP with $S$ states, $A$ actions, and $H$ steps per episode. For this setting, the minimax regret lower bound over $K$ episodes is $\Omega(\sqrt{H^2SAT})$, where $T = KH$ is the total number of steps \citep{jin2018q}. 

Many model-free RL algorithms achieve worst-case $\sqrt{T}$-type regret bounds
\citep{jin2018q, zhang2020almost, li2023breaking, zhang2025regret},
among which \citet{zhang2020almost,li2023breaking,zhang2025regret} attain the nearly minimax-optimal rate $\tilde{O}(\sqrt{H^2SAT})$. However, in practice, RL algorithms often perform better than their worst-case guarantees when a positive suboptimality gap exists, meaning that the optimal actions at each state-step pair are separated from suboptimal ones by a nontrivial margin. This observation has motivated the study of gap-dependent analysis.

In the model-free setting, \citet{yang2021q} established the first \emph{coarse-grained} gap-dependent regret for UCB-Hoeffding \citep{jin2018q}, which uses Hoeffding-type exploration bonuses, showing a regret of $\tilde{O}\big(H^6 SA/\dmin\big),$ where $\dmin$ denotes the minimum positive suboptimality gap across all state-action-step triples $(s,a,h)$. Here, ``coarse-grained'' means that the bound depends only on the global minimum gap $\dmin$ and does not account for individual suboptimality gaps $\Delta_h(s,a)$.

Beyond Hoeffding-type bonuses, variance-based exploration bonuses have been shown to improve regret performance in both worst-case studies and coarse-grained gap-dependent studies. In particular, several algorithms that leverage variance-based bonuses, including UCB-Bernstein \citep{jin2018q}, UCB-Advantage \citep{zhang2020almost}, Q-EarlySettled-Advantage \citep{li2023breaking}, and Q-EarlySettled-LowCost \citep{zhang2025regret}, achieve better worst-case bounds or coarse-grained gap-dependent bounds \citep{zheng2024gap,zhang2025regret} than UCB-Hoeffding \citep{jin2018q, yang2021q}, especially with the improved dependence on the horizon length $H$.

Meanwhile, \citet{xu2021fine} propose the AMB algorithm, which still employs Hoeffding-type exploration bonuses but provides the first \emph{fine-grained} gap-dependent regret bound. Here, ``fine-grained'' indicates that the bound incorporates the individual suboptimality gaps $\Delta_h(s,a)$ of each state-action-step triple, rather than relying solely on the global minimum gap $\dmin$. Subsequently, \citet{zhang2025q} reanalyze the UCB-Hoeffding algorithm and establish the first fine-grained gap-dependent regret bound for a UCB-based method. The fine-grained gap-dependent bounds for both algorithms take the form:
\[
\tilde{O}\Bigg( \Bigg(\sum_{h = 1}^H \sum_{\Delta_h(s,a) > 0} \frac{1}{\Delta_h(s,a)} 
+ \frac{|Z|}{\Delta_{\min}} + SA \Bigg) \text{poly}(H) \Bigg),
\]
where, under non-degenerate MDPs, $0 \leq |Z| < HSA$ denotes the number of optimal state-action-step triples whose states have multiple optimal actions in \citet{xu2021fine} and the number of optimal triples in \citet{zhang2025q}. While this bound significantly reduces reliance on the large factor $1/\Delta_{\min}$ compared to prior coarse-grained gap-dependent analyses, it suffers from the high polynomial dependence on the horizon due to the Hoeffding-type bonuses, similar to \citet{yang2021q}.

These observations naturally lead to the following question:
\begin{center}
\emph{Is it possible to establish improved fine-grained gap-dependent regret bounds for
$Q$-learning algorithms that employ variance-based exploration bonuses?}
\end{center}
Answering this question is challenging. Existing techniques for deriving fine-grained regret bounds \citep{xu2021fine,zhang2025q} relate the regret to a weighted sum of exploration bonuses, which rely on the simple structure of Hoeffding-type bonuses: the bonus for each state-action-step triple depends only on its visit count. However, this property no longer holds for variance-based bonuses, which additionally depend on triple-specific variance estimates. As a result, no fine-grained regret bounds exist for model-free algorithms with variance-based bonuses, and it remains open whether UCB-Bernstein \citep{jin2018q} admits even a coarse-grained gap-dependent regret bound.

Besides regret, policy switching cost is also an important evaluation criterion in online RL, 
particularly in applications where frequent policy changes are costly or restricted, 
such as hardware placement \citep{mirhoseini2017device}, and material discovery \citep{nguyen2019incomplete}. Two types of switching costs are typically considered in the literature: local and global switching cost. Local switching cost counts the total number of policy changes across all state-step pairs between episodes, whereas global switching cost counts the total number of episodes in which the policy changes, regardless of the number of affected state–step pairs in a single switch.

In model-free RL, under worst-case MDPs, \citet{bai2019provably} modify both UCB-Hoeffding and UCB-Bernstein of \citet{jin2018q} to achieve a local switching cost of $O(H^3 S A \log T)$, while UCB-Advantage \citep{zhang2020almost} further reduces this cost to $O(H^2 S A \log T)$. Both approaches improve upon the $O(T)$ local switching cost of standard $Q$-learning algorithms (e.g., \cite{jin2018q, li2023breaking}). For gap-dependent analyses, \citet{zheng2024gap} eliminate the dependence on the action space dimension $A$ from the $\log T$ term when each state-step pair has a unique optimal action. All of these works also provide the same bound for the global switching cost. More recently, \citet{zhang2025gapdependent, zhang2025regret} further improve these results by removing the dependence on the state dimension $S$ in the $\log T$ term, resulting in the $H^2 \log T$ dependence for the global switching cost.

In this paper, we resolve the open question above by introducing UCB-Bernstein+, a refined version of the UCB-Bernstein algorithm \citep{jin2018q}. We establish the \textbf{first fine-grained gap-dependent
regret} upper bound for model-free RL using {variance-based} exploration bonuses, 
and the \textbf{best-known gap-dependent local switching cost}. Our main contributions are summarized as follows:

\textbf{Variance-Aware Fine-Grained Gap-Dependent Regret (\Cref{thm_gapregret}).} 
We develop a novel generalization of the analytical framework in \citet{zhang2025q}, which is the first in the model-free RL literature to overcome the non-trivial technical challenges of integrating variance-based exploration bonuses into fine-grained regret analysis. By successfully incorporating these bonuses, our framework can yield a sharper dependence on the horizon length $H$ compared to the Hoeffding-style analysis in \citet{zhang2025q}.

\textbf{State-of-the-Art Gap-Dependent Local Switching Cost (\Cref{thm_gapcost}).} 
Integrating our fine-grained gap-dependent framework with variance-based bonuses, we achieve more precise control over the total number of visits to suboptimal state-action-step triples (Lemma \ref{clipqb}). This leads to the best-known gap-dependent local switching cost to date, featuring improved dependence on both the horizon length $H$ and the state space dimension $S$.

\textbf{Improved Worst-Case Guarantees.} 
Our UCB-Bernstein+ introduces two refinements over the original UCB-Bernstein \citep{jin2018q}, targeting policy stability and exploration efficiency. First, we incorporate the stage design of \citet{zhang2020almost} to enable lazy policy updates. This allows us not only to establish the best-known gap-dependent switching cost, but also to match the state-of-the-art worst-case local switching cost of $O(H^2SA\log T)$ (\Cref{thm_cost}) for model-free RL \citep{zhang2020almost}. Second, we employ refined variance-based bonuses that remove polynomial dependence on the state and action space dimensions, $S$ and $A$. Crucially, although lazy updates may increase regret, our refined bonuses offset this overhead, ultimately improving the worst-case regret over the original UCB-Bernstein by removing its superlinear dependence on $S$ and $A$ (see \Cref{thm_regret}).

\textbf{Technical Novelty.} 
Our analysis introduces three primary technical innovations:

(a) To establish the variance-aware fine-grained regret by controlling the variance-based bonuses, we introduce \textbf{pseudo value functions}, a key conceptual technical tool that does not introduce hyperparameters in the design of UCB-Bernstein+. This construction helps us approximate the non-uniform bonus that involves variance estimators with the conditional variances of the optimal value function, which reflect the variance structure of the MDP. It also plays a key role in the worst-case regret analysis. See \Cref{pseudo} for the details. 

(b) To analyze the gap-dependent local switching cost, our fine-grained framework yields a variance-aware bound on the total number of visits to suboptimal state-action-step triples (Lemma \ref{clipqb}). Since $Q$-values are updated only at stage ends, this visit bound also controls the number of policy switches. Relaxing its variance-dependent terms yields the guarantee stated in \Cref{thm_gapcost}.

(c) In a significant refinement of the original UCB-Bernstein \citep{jin2018q}, we redesign the variance-based bonus through a refined concentration analysis to remove its explicit polynomial dependence on the state and action space dimensions, apart from logarithmic factors (see Lemma~\ref{Q}). This refinement is a key ingredient in achieving the improved worst-case regret in \Cref{thm_regret}.

\section{Preliminaries} 
For any $C\in \mathbb{N}$, we use $[C]$ to denote the set $\{1,2,\ldots,C\}$. 
We use $\mathbb{I}[x]$ to denote the indicator function, which equals $1$ if the event $x$ holds and $0$ otherwise. Throughout the paper, all $\arg\max$ operations use a fixed deterministic tie-breaking rule.

\textbf{Tabular Episodic Markov Decision Process (MDP).}
A tabular episodic MDP is denoted as $\mathcal{M}:=(\mathcal{S}, \mathcal{A}, H, \mathbb{P}, r)$, where $\mathcal{S}$ is the set of states with $|\mathcal{S}|=S, \mathcal{A}$ is the set of actions with $|\mathcal{A}|=A$, $H$ is the number of steps in each episode, $\mathbb{P}:=\{\mathbb{P}_h\}_{h=1}^H$ is the transition kernel so that $\mathbb{P}_h(\cdot \mid s, a)$ characterizes the distribution over the next state given the state-action pair $(s,a)$ at step $h$, and $r:=\{r_h\}_{h=1}^H$ is the collection of reward functions. We assume that $r_h(s,a)\in [0,1]$ is a {deterministic} function of $(s,a)$. 

In each episode, an initial state $s_1$ is selected arbitrarily by an adversary. Then, at each step $h \in[H]$, an agent observes a state $s_h \in \mathcal{S}$, picks an action $a_h \in \mathcal{A}$, receives the reward $r_h = r_h(s_h,a_h)$ and then transitions to the next state $s_{h+1}$. The episode terminates after step $H$ upon transitioning to the terminal state $s_{H+1}$.

For convenience, we denote $\mathbb { P }_{s,a,h}f = \mathbb{E}_{s'\sim \mathbb{P}_h(\cdot|s,a)}f(s')$, $\mathbbm { 1 }_s f = f(s)$ and  $\mathbb{V}_{s,a,h}(f) = \mathbb{P}_{s,a,h}f^2-(\mathbb{P}_{s,a,h}f)^2$ for any function $f: \mathcal{S} \rightarrow \mathbb{R}$ and state-action-step triple $(s,a,h)$.

\textbf{Policies and Value Functions.}
	A policy $\pi$ is a collection of $H$ functions $\{\pi_h: \mathcal{S} \rightarrow \Delta^\mathcal{A}\}_{h \in[H]}$, where $\Delta^\mathcal{A}$ is the set of probability distributions over $\mathcal{A}$. A policy is deterministic if for any $s\in\mathcal{S}$,  $\pi_h(s)$ concentrates all the probability mass on an action $a\in\mathcal{A}$. In this case, we denote $\pi_h(s) = a$.

Denote state value functions by $V_h^\pi(s):=\sum_{h^{\prime}=h}^H \mathbb{E}_{(s_{h^{\prime}},a_{h^{\prime}})\sim(\mathbb{P}, \pi)}\left[r_{h^{\prime}}(s_{h^{\prime}},a_{h^{\prime}}) \left. \right\vert s_h = s\right]$
and action value functions by $Q_h^\pi(s, a):=r_h(s,a)+\sum_{h^{\prime}=h+1}^H\mathbb{E}_{(s_{{h^\prime}},a_{h^\prime})\sim\left(\mathbb{P},\pi\right)}\left[ r_{h^{\prime}}(s_{h^{\prime}},a_{h^{\prime}}) \left. \right\vert s_h=s, a_h=a\right]$. 

Since the action space and the horizon are finite, there exists an optimal policy $\pi^{\star}$ that achieves the optimal value function $V_h^{\star}(s)=\sup _\pi V_h^\pi(s)=V_h^{\pi^\star}(s)$ for all $(s,h) \in \mathcal{S} \times [H]$  \citep{azar2017minimax}. Then for any $(s,a,h) \in \mathcal{S}\times\mathcal{A}\times [H]$, the Bellman equation can be written as ($V_{H+1}^{\pi}\equiv V_{H+1}^{\star}\equiv 0$):
	\begin{equation}\label{eq_Bellman}
		\left\{\begin{array} { l } 
			{ V _ { h } ^ { \pi } ( s ) = \mathbb{E}_{a'\sim \pi _ { h } ( s )}[Q _ { h } ^ { \pi } ( s , a' ) }] \\
			{ Q _ { h } ^ { \pi } ( s , a ) : =  r _ { h }(s,a) + \mathbb { P } _ { s,a,h } V _ { h + 1 } ^ { \pi } }
		\end{array}  \text { and } \left\{\begin{array}{l}
			V_h^{\star}(s)=\max _{a' \in \mathcal{A}} Q_h^{\star}(s, a') \\
			Q_h^{\star}(s, a):=r_h(s,a)+\mathbb{P}_{s,a,h} V_{h+1}^{\star}.
		\end{array}\right.\right.
	\end{equation}
For any algorithm over $K$ episodes, let $\pi^{k}$ be the policy used in the $k$-th episode, and $s_1^{k}$ be the corresponding initial state. The regret  over $T=HK$ steps is $\mbox{Regret}(T) := \sum_{k=1}^{K}(V_1^\star - V_1^{\pi^{k}})(s_1^{k}).$

\textbf{Suboptimality Gap.} We begin by formally defining the suboptimality gap for any given MDP.
\begin{definition}\label{def_sub}
    For any $(s,a,h)$, the suboptimality gap is defined as
    $\Delta_h(s,a) := V_h^\star(s) - Q_h^\star(s,a).$
\end{definition}
The Bellman optimality equation (\ref{eq_Bellman}) implies that $\Delta_h(s,a) \geq 0$ for any $(s,a,h) \in \sah$. Building on this, we introduce the minimum gap at each step.
\begin{definition}\label{def_minsubh}
Define the \textbf{minimum gap at step $h$} as
    $\Delta_{\textnormal{min},h} := \min_{(s,a) \in\mathcal{S} \times \mathcal{A},\Delta_h(s,a)>0}\Delta_h(s,a)$
    for any $h \in[H]$. If the set $\{\Delta_h(s,a)|\Delta_h(s,a)>0, (s,a) \in\mathcal{S} \times \mathcal{A}\}$ is empty, we set $\Delta_{\textnormal{min},h} = \infty$.
\end{definition}
We further define the global minimum gap as follows.
\begin{definition}\label{def_minsub}
The \textbf{minimum gap} is defined as $\dmin:=\min_{(s,a,h) \in\mathcal{S} \times \mathcal{A} \times[H],\Delta_h(s,a)>0}\Delta_h(s,a)$. 
\end{definition}
If $\{\Delta_h(s,a) \mid \Delta_h(s,a)>0,(s,a,h)\in \sah\} = \emptyset,$ then all policies are optimal, leading to a degenerate MDP. Therefore, we assume that the set is nonempty and $\dmin > 0$. Definitions \ref{def_sub} and \ref{def_minsub} and the assumption $\dmin > 0$ are standard in the gap-dependent analysis \citep{simchowitz2019non, yang2021q, xu2021fine,zheng2024gap,zhang2025q,zhang2025regret}.

\textbf{Local Switching Cost.} Following~\citet{bai2019provably,zhang2020almost}, we define the local switching cost of any algorithm run over $U>1$ episodes as follows.
\begin{definition}\label{def_switching}
The local switching cost for any algorithm with $U$ episodes is defined as $N_\textnormal{switch} := \sum_{u=1}^{U-1}\sum_{h=1}^H\sum_{s \in \mathcal{S}}\mathbb{I}\left[\pi^{u+1}_h(s) \neq \pi^{u}_h(s)\right],$ where $\pi^u$ is the policy used in the $u$-th episode.
\end{definition}

\section{Main Results}
\subsection{UCB-Bernstein+ Algorithm}
In this section, we introduce UCB-Bernstein+ in \Cref{alg1}, which enhances UCB-Bernstein \citep{jin2018q} by refining its exploration bonus and incorporating the stage design \citep{zhang2020almost}.

We first introduce the stage design. For any triple $(s, a, h)$, the samples collected for the triple are divided into consecutive \emph{stages}. The length of each stage grows approximately exponentially with a factor of $(1 + 1/H)$. Specifically, we define $e_{1} = H$ and $e_{i+1} = \left \lfloor (1 + \frac{1}{H}) e_{i} \right \rfloor$ for $i \ge 1$, and let $e_i$ represent the length of stage $i$. We also define the set of stage endpoints as $\mathcal{L} := \{\sum_{i=1}^{j} e_i \mid j\in\mathbb{N}_+\}$. 

Next, we discuss the algorithm design. UCB-Bernstein+ proceeds in episodes. At the beginning of each episode, in addition to maintaining the standard value function estimates $Q_h(s,a)$ and $V_h(s)$, the algorithm tracks the following quantities: $N_h(s,a)$, the total number of visits to $(s,a,h)$ across all episodes, and $n_h(s,a)$, the number of visits to $(s,a,h)$ in the current stage. It also tracks two additional stage-wise accumulators: $\mu_h(s,a)$ and $\sigma_h(s,a)$. They are reset to zero at the end of each stage and updated at every step as (with the convention $V_{H+1}(s) = 0$ for all $s\in\mathcal S$.)
\begin{equation}
(\mu_h, \sigma_h)(s_h,a_h) \stackrel{+}{\leftarrow}\left(V_{h+1},(V_{h+1})^2\right)(s_{h+1}).
\label{eq:alg-acc}
\end{equation}
After each transition, for the visited triple $(s_h,a_h,h)$, the algorithm updates $Q_h(s,a),V_h(s)$ and $\pi_h(s)$ only if $N_h(s,a)$ reaches a stage endpoint in $\mathcal{L}$. Only the samples for $(s,a,h)$ collected during the current stage, tracked by $n_h(s,a)$, $\mu_h(s,a)$, and $\sigma_h(s,a)$, are used in updates. Specifically,
\begin{equation}\label{update_q_bern}
Q_h(s,a) \leftarrow \min\{r_h+\mu_h/n_h+b_h, Q_h\}(s,a),\quad V_{h}(s)\leftarrow \max_{a'}Q_{h}(s,a'),
\end{equation}
and $\pi_h(s)\leftarrow \arg\max_{a'}Q_{h}(s,a')$,
where the variance-based bonus $b_h(s,a)$ in (\ref{update_q_bern}) is defined as
\begin{equation}
    \label{betabern}
        \min\left\{C_1\sqrt{\frac{W_h(s,a)\iota}{n_h(s,a)}}+\frac{C_2H\iota}{n_h(s,a)},C_3\sqrt{\frac{H^2\iota}{n_h(s,a)}}\right\}.
    \end{equation}
 Here, $C_1,C_2,C_3 > 0$ are constants, and $W_h(s,a)$ denotes the stage-wise empirical variance of the $V$-estimates collected for the triple $(s,a,h)$:
    $$W_h(s,a) =  \sigma_h(s,a)/n_h(s,a)- \left(\mu_h(s,a)/n_h(s,a)\right)^2.$$
$\iota = \log(2SAT/\delta)$, where $\delta \in (0,1)$ is the failure probability, and we treat $\iota$ as a positive tuning parameter in practice.  Here, we remark that the stages are unaligned across different triples $(s,a,h)$, meaning that the stage renewal of one triple does not necessarily imply the renewal of another.

Compared to the original UCB-Bernstein, the stage design leads to infrequent policy switching, and the refined bonus removes the polynomial dependence on $S$ and $A$.
\begin{algorithm}[H]
	\caption{UCB-Bernstein+}
	\begin{algorithmic}\label{alg1}
		\STATE{\textbf{Initialize:} For all $(s,a,h)$, let $V_{h}(s)\leftarrow H$,  $Q_{h}(s,a)\leftarrow H$,  and $(n_h,N_h,\mu_h,\sigma_h)(s,a)\leftarrow 0$. 
	   }
	   \FOR{episodes $k \leftarrow 1,2,\dots,K$}
	   \STATE{Observe $s_{1}$.}
	   \FOR{$h \leftarrow 1,2,\dots,H$}
	   \STATE{Take action $a_{h} = \pi_h(s_h)\leftarrow  \arg\max_{a'}Q_{h}(s_{h},a')$, and observe the next state $s_{h+1}$.}
	   \STATE{
	   Update the accumulators via $N_{h}(s_{h},a_{h})\stackrel{+}{\leftarrow}1$, $n_{h}(s_{h},a_{h})\stackrel{+}{\leftarrow}1$ and \Cref{eq:alg-acc}.}
	   \IF{$N_h(s_{h},a_{h})\in \mathcal{L} $ }  
	   \STATE{Update the $Q$-estimates and $V$-estimates for triple $(s_h,a_h,h)$ according to \Cref{update_q_bern}.}
	   \STATE{Set $(n_h, \mu_h, \sigma_h)(s_h, a_h)\leftarrow  0$ to reset stage-wise accumulators.} 
	   \ENDIF
	   \ENDFOR
	   \ENDFOR
	\end{algorithmic}
\end{algorithm}

\subsection{Theoretical Guarantees}
We now present our main result: the first fine-grained gap-dependent regret bound for model-free algorithms that employ variance-based bonuses. We first define the maximum conditional variance at step $h \in [H]$ as $\mathbb{Q}_h^{\star} := \max_{s,a}\{\mathbb{V}_{s,a,h}(V_{h+1}^{\star})\} \in [0, H^2]$. Then we have \Cref{thm_gapregret}.
\begin{theorem}
\label{thm_gapregret} 
Let $\iota_1 = \log(SAT)$. For UCB-Bernstein+, set the failure probability $\delta = 1/(8T)$. Then there exist numerical constants $C_1,C_2,C_3>0$ such that $\mathbb{E}\left(\textnormal{Regret}(T)\right)$ can be bounded by
\begin{align*}
O\Bigg(\inf_{\beta \in (0,H]}\Bigg\{ \sum_{h = 1}^H\sum_{\Delta_{h}(s,a)>0}\frac{(\mathbb{Q}_{h}^\star+\beta^2)H^3\iota_1}{\Delta_{h}(s,a)} +\sum_{h = 1}^H\frac{H(\sum_{t = h + 1}^H\sqrt{(\mathbb{Q}_{t}^\star+\beta^2)|Z_{\opt,t}|})^2\iota_1}{\Delta_{\min,h}}+ \frac{H^6SA\iota_1^2}{\beta}\Bigg\} \Bigg)
\end{align*}
where $Z_{\textnormal{opt},h}$ collects optimal state-action pairs at step $h$:
$Z_{{\textnormal{opt},h}} = \{(s,a) \in \mathcal{S} \times \mathcal{A}|\Delta_h(s,a) = 0\}.$
\end{theorem}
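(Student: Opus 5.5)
We will follow the fine-grained framework of \citet{zhang2025q}, adapted to variance-based bonuses. On the good event of probability at least $1-\delta$, we first show optimism, $Q_h^k\ge Q_h^\star$. This is a Freedman/Bernstein argument on the stage-wise samples, using the refined bonus \eqref{betabern}. It needs a concentration bound of the form of Lemma~\ref{Q}, stated with $\iota$ only so that no $S$ or $A$ factors appear. On the complement event the regret is at most $T\delta = O(1)$. We then write the regret as a sum of gaps along the trajectories,
\[
\mathbb{E}\,\textnormal{Regret}(T)=\mathbb{E}\sum_{k}\sum_{h}\Delta_h(s_h^k,a_h^k).
\]
Whenever $\Delta_h(s_h^k,a_h^k)>0$, optimism gives $\Delta_h(s_h^k,a_h^k)\le (Q_h^k-Q_h^\star)(s_h^k,a_h^k)$. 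We then clip, replacing this quantity by $\textnormal{clip}[(Q_h^k-Q_h^\star)(s_h^k,a_h^k)\mid \Delta_{\min,h}/(2H)]$ or a similar threshold, which costs at most a factor $2$. Next we unroll $Q_h^k-Q_h^\star$ through the stage-wise update. The stage design means that the estimate uses the previous stage's samples, which have length about $n\approx N/H$, and the weights $(1+1/H)$ compound over $H$ steps to only a constant. The unrolled error is a weighted sum of three things: bonuses $b_{h'}$ at later triples, martingale terms, and the errors $(V_{h'+1}-V^\star_{h'+1})$ at later steps.

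The key new ingredient is a pseudo value function (\Cref{pseudo}). It should let us replace the empirical variance $W_h(s,a)$ in the bonus by $\mathbb{V}_{s,a,h}(V_{h+1}^\star)$ plus an error term. We use
\[
\sqrt{W}\le \sqrt{\mathbb{V}(V^\star_{h+1})}+\sqrt{\mathbb{P}(V_{h+1}-V^\star_{h+1})^2}+\text{(concentration)},
\]
and we control the middle term by $\beta$ as follows. When $V-V^\star\le\beta$ uniformly on the relevant states, the variance proxy becomes at most $\mathbb{Q}_h^\star+\beta^2$. Otherwise the excess falls into a "burn-in" regime. We count that regime by a worst-case argument: the number of visits to any triple with $Q-Q^\star\gtrsim\beta$ is $O(H^5\iota/\beta^2)$, which after summation contributes the term $H^6SA\iota_1^2/\beta$. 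With the bonus then at most $\sqrt{(\mathbb{Q}^\star_{h}+\beta^2)H\iota/N}$ plus lower-order $H^2\iota/N$ terms, we apply the standard counting lemma for clipped sums. For a suboptimal triple, requiring the bonus to exceed $\Delta_h(s,a)/H$ allows at most $N\lesssim(\mathbb{Q}^\star_h+\beta^2)H^3\iota/\Delta_h(s,a)^2$ visits. Summing $\Delta$ over the sub-level sets then gives the first term $\sum(\mathbb{Q}^\star_h+\beta^2)H^3\iota_1/\Delta_h(s,a)$.

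The second term comes from the optimal triples $(s,a)\in Z_{\textnormal{opt},t}$ at later steps $t>h$. These enter the unrolled error of a suboptimal step-$h$ visit, and for them no individual gap is available. We bound their total bonus contribution by a Cauchy--Schwarz argument. Write $X_t$ for the total step-$t$ bonus contribution inside the unrolled error. Each such $X_t$ is bounded by $\sqrt{(\mathbb{Q}^\star_t+\beta^2)|Z_{\textnormal{opt},t}|\,M_h\,\iota}$, where $M_h$ is the number of relevant episodes, i.e.\ those whose clipped step-$h$ error exceeds $\Delta_{\min,h}/2$. Summing over $t$ and using the clipping threshold gives $\Delta_{\min,h}M_h\lesssim \sqrt{M_h}\sum_t\sqrt{(\mathbb{Q}_t^\star+\beta^2)|Z_{\textnormal{opt},t}|H\iota}$. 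Solving for $M_h$ and multiplying by the gap yields $H(\sum_t\sqrt{\cdot})^2\iota_1/\Delta_{\min,h}$. Finally we take the infimum over $\beta$, since the analysis holds for any fixed $\beta\in(0,H]$.

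We expect the main obstacle to be the variance replacement step. It must achieve three things at once. The bound on $W_h$ has to hold in terms of $V^\star$ and not the data-dependent $V_{h+1}$. The $(V-V^\star)^2$ error must be absorbed without reintroducing an $H^2$ factor in the leading terms. And the burn-in must be charged only to the $\beta$-dependent term. Our first attempt will be to define the pseudo value as $\min\{V_{h+1},V^\star_{h+1}+\beta\}$. We would then show that the bonus computed with this pseudo value dominates the true one up to the burn-in visits, and bound those burn-in visits using the worst-case regret machinery of \Cref{thm_regret}.
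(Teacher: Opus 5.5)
\textbf{Gap 1: the fine-grained counting step.} On a visit to a suboptimal $(s,a,h)$, optimism only gives $(Q_h^k-Q_h^\star)(s,a)\ge\Delta_h(s,a)$. In UCB-Bernstein+ this overestimate is the triple's own bonus plus the stale average $\frac{1}{n}\sum_i(V_{h+1}^{k_{l_i}}-V_{h+1}^\star)(s_{h+1}^{k_{l_i}})$, which carries the bonuses of \emph{other} triples at later steps.

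\begin{itemize}
\item The triple can keep being visited after its own bonus falls below $\Delta_h(s,a)/H$. A pigeonhole over $H$ steps does not localize to one triple, because the unrolled error is an average over the weighted tree $\omega_h(\cdot,h')$ of past episodes, not a sum along one trajectory.
\item Your clipping at $\Delta_{\min,h}/(2H)$ does no work, since suboptimal visits already have error at least $\Delta_{\min,h}$.
\item Your second-term paragraph treats later optimal triples. Nothing charges the error propagated through later \emph{suboptimal} triples to their own gaps.
\item ``Solve for $M_h$ and multiply by the gap'' bounds $\Delta_{\min,h}M_h$, not $A_h:=\sum_{s,a}\Delta_h(s,a)N_h^{K+1}(s,a)$.
\end{itemize}

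The missing mechanism is the aggregate recursion of Lemma~\ref{weightvisit}.
\begin{enumerate}
\item Take $\omega_h^k=\mathbb{I}[(s_h^k,a_h^k)\in Z_{\non,h}]$. Lower-bound each suboptimal pair's weighted error by $\Delta_h(s,a)N_h^{K+1}(s,a)$ and upper-bound it by Lemma~\ref{recurQb}.
\item AM--GM then gives $\Delta_h(s,a)N_h^{K+1}(s,a)\le 2\sum_{k'}\tilde\omega_h(k',h+1,s,a)(Q_{h+1}^{k'}-Q_{h+1}^\star)(s_{h+1}^{k'},a_{h+1}^{k'})+O((\mathbb{Q}_h^\star+\beta^2)H\iota/\Delta_h(s,a))+O(\sum_k Z_h^k)$. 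Here $Z_h^k$ collects the lower-order terms $H\,\mathbb{I}[t_h^k=1]$, $H^2\iota/(\beta n_h^k)$ and $\sqrt{H\Psi_h^k\iota}/n_h^k$.
\item Sum over $(s,a)$ and unroll, using $\|\omega_h(\cdot,h')\|_\infty\le e$ and $\|\omega_h(\cdot,h',s,a)\|_1\le 3N_{h'}^{K+1}(s,a)$.
\item Later suboptimal pairs are handled by Cauchy--Schwarz against $\sum_{h'>h}A_{h'}$, with an AM--GM factor $1/(2H)$.
\item Later optimal pairs are handled by Cauchy--Schwarz against $M_h\le A_h/\Delta_{\min,h}$, absorbed into $A_h/2$.
\item Solve $A_h\le F_h+\frac1H\sum_{h'>h}A_{h'}$. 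Summing over $h$ gives the $H^3$ and the outer $\sum_h$ in the theorem.
\end{enumerate}

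\textbf{Gap 2: the burn-in step.} The empirical variance of a stage of $(s,a,h)$ is built from all samples of the previous stage, each with its own $V_{h+1}^{k_{l_i}}$. Your split ``$V-V^\star\le\beta$ uniformly, otherwise burn-in'' therefore lets one bad sample at step $h+1$ contaminate a whole stage of about $N/H$ visits. Such visits are not controlled by the number of bad samples.

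Split sample-wise instead:
\begin{itemize}
\item $\hat{\mathbb{E}}[V_{h+1}^2]-\hat{\mathbb{E}}[\hat V_{h+1}^2]\le 2H\Psi_h^k/n_h^k$, with $\Psi_h^k=\sum_i(V_{h+1}^{k_{l_i}}-\hat V_{h+1}^{k_{l_i}})(s_{h+1}^{k_{l_i}})$.
\item $\hat{\mathbb{V}}[\hat V_{h+1}]\le 2\hat{\mathbb{V}}[V_{h+1}^\star]+8\beta^2$.
\end{itemize}
The excess then enters the bonus additively as $\sqrt{H\Psi_h^k\iota}/n_h^k$. Sum it by Cauchy--Schwarz across stages, using $\sum_{h,s,a}t_h^K(s,a)\lesssim H^2SA\iota$ and $\sum\Psi\le\sum(Q-Q^\star)\mathbb{I}[Q-Q^\star>\beta]\lesssim H^6SA\iota/\beta$. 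Together with the other lower-order pieces this yields $\sum_{h,k}Z_h^k\lesssim H^5SA\iota^2/\beta$.

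That last input must be the $T$-independent peeling bound of Lemma~\ref{clipq}, which rests on the Hoeffding-type recursion. The worst-case regret of Theorem~\ref{thm_regret} only gives $\sqrt{T}$-type control, which would destroy the logarithmic rate. Also, no per-triple $O(H^5\iota/\beta^2)$ count is available, only the aggregate one.
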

In \Cref{thm_gapregret}, $\beta \in (0,H]$ is introduced by our theoretical analysis and is not a tuning parameter; details can be found in \Cref{pseudo}. The full proof of \Cref{thm_gapregret} can be found in Appendix \ref{gapregretproof}.

Compared with the fine-grained gap-dependent regret upper bounds for the UCB-Hoeffding \citep{zhang2025q} and AMB \citep{xu2021fine} algorithms, our result is variance-aware and can yield sharper bounds in low-variance regimes. Using the worst-case bound $\mathbb{Q}_h^\star\leq H^2$ for all $h\in[H]$ and setting $\beta=H$, the leading gap-dependent terms in our bound recover the same horizon dependence as those in \citet{zhang2025q}, up to logarithmic factors. In contrast, in the ideal scenario of deterministic MDPs with $\mathbb{Q}_h^\star = 0$ for any $h \in [H]$, the AM--GM inequality implies that, for appropriate choices of $\beta$, our bound depends sublinearly on the inverse of the suboptimality gaps, improving upon the linear dependence established by \citet{zhang2025q,xu2021fine}. This sublinear improvement can be made explicit in the coarse-grained setting by substituting all $\Delta_h(s,a)$ and $\Delta_{\min,h}$ with $\dmin$. In this case, the regret bound in \Cref{thm_gapregret} becomes $O(H^{16/3}SA\iota_1^{5/3}/\dmin^{1/3})$ for $\beta = \Theta((H^2\dmin\iota_1)^{1/3})$ with $\dmin \leq O(H/\iota_1)$, exhibiting a sublinear dependence on $\dmin^{-1}$.

With the stage design, we also establish a gap-dependent upper bound on the local switching cost.
\begin{theorem}
\label{thm_gapcost}
For any $\delta\in(0,1)$, run UCB-Bernstein+ with failure-probability parameter $\delta/8$, so that $\iota=\log(16SAT/\delta)$. Then there exist numerical constants $C_1,C_2,C_3>0$ such that, with probability at least $1-\delta$, the local switching cost $N_{\textnormal{switch}}$ is upper bounded by
\begin{align*}
O\Bigg(H|Z_{\textnormal{opt}}|\log \left(\frac{K}{|Z_{\textnormal{opt}}|}+1\right) + H|Z_{\textnormal{opt}}^c| \log \Bigg(\frac{H^{\frac{5}{2}}S^{\frac{1}{2}}A^{\frac{1}{2}}\log(\frac{SAT}{\delta})
   }{|Z_{\textnormal{opt}}^c|^{\frac{1}{2}}\Delta_{\textnormal{min}}}\Bigg)\Bigg).
\end{align*}
Here, $Z_{\textnormal{opt}} = \{(s,a,h)\in \sah\mid a \in \mathcal{A}_h^\star(s)\}$, where $\mathcal{A}_h^\star(s) = \{a\mid a  \text{ maximizes }Q_h^\star(s,\cdot)\}$, and $Z_{\textnormal{opt}}^c = (\sah)\backslash Z_{\textnormal{opt}}$.
\end{theorem}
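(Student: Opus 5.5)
The plan is to count switches through stage endpoints. A policy change at $(s,h)$ between consecutive episodes can only happen when $V_h(s)$ or the argmax at $(s,h)$ changes, and that requires a $Q$-update of some triple $(s,a,h)$. Under \Cref{alg1}, updates occur only when $N_h(s,a)$ hits a point of $\mathcal{L}$. Each update changes $\pi_h(s)$ at most once, so
\[
N_{\textnormal{switch}} \le \sum_{(s,a,h)} \#\{j: \textstyle\sum_{i\le j} e_i \le N_h^{K+1}(s,a)\}.
\]
Since $e_{i+1}\approx(1+1/H)e_i$ starting from $e_1=H$, the number of stages completed after $n$ visits is $O(H\log(n/H+1))$. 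Thus
\[
N_{\textnormal{switch}} \le O\Big(\sum_{(s,a,h)} H\log\big(N_h^{K+1}(s,a)/H+1\big)\Big),
\]
and the task reduces to controlling visit counts separately on $Z_{\textnormal{opt}}$ and $Z_{\textnormal{opt}}^c$.

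\textbf{Optimal triples.} Use the trivial constraint $\sum_{(s,a)} N_h^{K+1}(s,a)= K$ for each $h$. By concavity of $\log$ (Jensen), the contribution of $Z_{\textnormal{opt}}$ is bounded as follows. Summing over $h$ with $\sum_{(s,a,h)\in Z_{\textnormal{opt}}} N \le HK$ gives
\[
\sum_{Z_{\textnormal{opt}}} H\log(N/H+1) \le H|Z_{\textnormal{opt}}|\log\big(K/|Z_{\textnormal{opt}}|+1\big).
\]

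\textbf{Suboptimal triples.} Here I will invoke the high-probability variance-aware visit bound, Lemma~\ref{clipqb}, obtained from the fine-grained framework. I expect it to bound $\sum_{(s,a,h)\in Z_{\textnormal{opt}}^c}N_h^{K+1}(s,a)$, possibly weighted by gaps, by a quantity of the form
\[
\tilde O\Big(\textstyle\sum_{h}\sum_{\Delta_h(s,a)>0}\frac{(\mathbb{Q}_h^\star+\beta^2)H^{c}\iota}{\Delta_h(s,a)^2}+\cdots\Big).
\]
I will relax it using $\mathbb{Q}_h^\star\le H^2$, $\beta=H$ and $\Delta_h(s,a)\ge\dmin$, which yields a total count $M\lesssim H^{5}SA\,\iota^2/\dmin^{2}$ up to the paper's exact powers. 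Jensen over the $|Z_{\textnormal{opt}}^c|$ terms then gives
\[
H|Z_{\textnormal{opt}}^c|\log\big(M/(H|Z_{\textnormal{opt}}^c|)+1\big).
\]
Writing the argument as a square, this is $2H|Z^c_{\textnormal{opt}}|\log\big(H^{5/2}S^{1/2}A^{1/2}\iota/(|Z^c_{\textnormal{opt}}|^{1/2}\dmin)\big)$, which matches the stated form once constants are absorbed. The probability $1-\delta$ comes from the union of the concentration events underlying Lemma~\ref{clipqb} (and Lemma~\ref{Q}) at level $\delta/8$ each. This is why $\iota=\log(16SAT/\delta)$.

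\textbf{Main obstacle.} The hard part is making sure the visit bound of Lemma~\ref{clipqb} holds with high probability rather than only in expectation, and that it yields a bound on the raw count $N$ rather than on the gap-weighted sum $\sum\Delta\cdot N$. If the lemma only controls $\sum_{Z^c}\Delta_h(s,a)N_h(s,a)$, I would instead obtain the count from the clipped-bonus argument directly. The steps are:
\begin{enumerate}
\item On each visit to a suboptimal triple, optimism gives $Q_h^k-Q_h^\star\ge\Delta_h(s,a)$.
\item Summing the clipped surpluses $\textnormal{clip}[\cdot\mid \dmin/(2H)]$ over such visits, bounding them via the variance-aware recursion, and dividing by $\dmin$ yields a count bound of order $H^{5}SA\iota^2/\dmin^2$.
\end{enumerate}
The factor $\dmin^{-2}$ matters only up to the square inside the logarithm.
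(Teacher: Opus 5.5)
Your proposal is correct and follows the paper's proof essentially step for step: the stage-count bound $N_{\textnormal{switch}}\le\sum_{(s,a,h)}O\big(H\log(N_h^{K+1}(s,a)/H+1)\big)$, Jensen on $Z_{\textnormal{opt}}$ using $\sum N\le HK$, and, for $Z_{\textnormal{opt}}^c$, optimism ($Q_h^k-Q_h^\star\ge\Delta_h\ge\dmin$ on every suboptimal visit) combined with Lemma~\ref{clipqb} at $\epsilon=\dmin$. That lemma already holds on $\bigcap_{i=1}^{6}\mathcal{E}_i$ and bounds raw counts, so your clipped-surplus fallback is unnecessary; one bookkeeping fix is that with $\mathbb{Q}_h^\star\le H^2$ and $\beta=H$ the lemma gives $M\lesssim H^{6}SA\iota^2/\dmin^2$ (not $H^5$), and it is the $1/H$ in your Jensen normalization $M/(H|Z_{\textnormal{opt}}^c|)$ that produces the $H^{5}$, hence the $H^{5/2}$, inside the logarithm.
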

The proof of \Cref{thm_gapcost} is provided in Appendix \ref{switchcost}. Compared with \citet{zheng2024gap}, our result improves the dependence on $H$ and $S$ in the gap-dependent logarithmic factors. Our fine-grained analysis yields a variance-aware bound on visits to suboptimal triples (Lemma \ref{clipqb}), which gives the stated switching-cost bound after a worst-case relaxation. This improvement is particularly pronounced if each state-step pair has a unique optimal action, in which case the bound simplifies to
$$ O(H^2S\log (K/HS+1) + H^2SA \log (H^2\log(SAT/\delta)/\Delta_{\textnormal{min}}) ).$$
In this scenario, our result removes the $S\log S$ dependence and improves the dependence on $H$ compared to the corresponding result for UCB-Advantage in \citet{zheng2024gap}.

To complete the theoretical analysis of UCB-Bernstein+, we also provide worst-case guarantees for both regret and local switching cost in the following two theorems.
\begin{theorem}
\label{thm_regret} 
For any $\delta\in(0,1)$, run UCB-Bernstein+ with failure-probability parameter $\delta/8$, so that $\iota=\log(16SAT/\delta)$. Then there exist constants $C_1,C_2,C_3>0$ such that, with probability at least $1-\delta$, we have
$\textnormal{Regret}(T) \leq O(\sqrt{H^3SAT\iota} + H^{\frac{9}{2}}SA\iota^2).$
\end{theorem}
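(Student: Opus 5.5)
We prove \Cref{thm_regret} by the standard UCB-Bernstein argument of \citet{jin2018q}, adapted to the stage design of \citet{zhang2020almost} and to the refined bonus (\ref{betabern}). Throughout, let $\check{n}^k_h$ denote the length of the last completed stage of $(s_h^k,a_h^k,h)$ before episode $k$. The samples in that stage are indexed by episodes $\check{\ell}_i$, $i\in[\check n_h^k]$, and the estimate satisfies
\[
Q_h^k(s,a)\le r_h(s,a)+\frac{1}{\check n_h^k}\sum_{i}V_{h+1}^{\check\ell_i}\bigl(s_{h+1}^{\check\ell_i}\bigr)+b_h^k .
\]

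\textbf{Step 1 (optimism).} We show by backward induction on $h$ and forward induction on stage updates that $Q_h^k\ge Q_h^\star$ and $V_h^k\ge V_h^\star$ with high probability. For this we decompose
\[
\frac1{\check n}\sum_i V_{h+1}^{\check\ell_i}(s_{h+1}^{\check\ell_i})-\mathbb{P}_{s,a,h}V_{h+1}^\star
=\frac1{\check n}\sum_i\bigl(V_{h+1}^{\check\ell_i}-V^\star_{h+1}\bigr)(s_{h+1}^{\check\ell_i})+\frac1{\check n}\sum_i\bigl(\mathbbm{1}_{s_{h+1}^{\check\ell_i}}-\mathbb{P}_{s,a,h}\bigr)V_{h+1}^\star .
\]
The first term is nonnegative by induction. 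The second term is controlled by Freedman's inequality with variance $\mathbb{V}_{s,a,h}(V^\star_{h+1})$.

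To compare this variance with the empirical $W_h$, we need $\mathbb{V}_{s,a,h}(V^\star_{h+1})\lesssim W_h+H\cdot\frac1{\check n}\sum_i(V^{\check\ell_i}_{h+1}-V^\star_{h+1})(s^{\check\ell_i}_{h+1})+H^2\iota/\check n$. The extra middle term can be absorbed, because it is itself dominated by the nonnegative first term plus an $H\iota/\check n$ correction via AM--GM. This step is exactly where Lemma~\ref{Q} is used: the refined concentration applies Freedman to the fixed function $V^\star_{h+1}$ (with a union bound only over $(s,a,h,\check n)$) rather than to the data-dependent $V^k_{h+1}$. This is what avoids the $S$ factor, so $\iota=\log(16SAT/\delta)$ suffices.

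\textbf{Step 2 (regret recursion).} Define $\delta_h^k=(V_h^k-V_h^{\pi^k})(s_h^k)$ and $\phi_h^k=(V_h^k-V_h^\star)(s_h^k)$. Expanding as in \citet{jin2018q} and \citet{zhang2020almost} gives
\[
\delta_h^k\le H\,\mathbb{I}[N_h^k=0\text{ or in first stage}]+\frac1{\check n_h^k}\sum_i\phi_{h+1}^{\check\ell_i}-\phi_{h+1}^k+\delta_{h+1}^k+\xi_{h+1}^k+2b_h^k,
\]
where $\xi$ is a martingale difference. Each episode $k$ appears in the next stage's average with weight at most $(1+1/H)/\check n$. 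Summing over $k$ therefore converts the averaged $\phi$ terms into $(1+1/H)\sum_k\phi_{h+1}^k$, and unrolling over $h$ keeps the factor $(1+1/H)^H\le e$. Together with Azuma for $\xi$ and the first-stage term (at most $H\cdot H\cdot SA$ per step, i.e.\ $O(H^3SA)$ overall), this yields
\[
\mathrm{Regret}(T)\lesssim \sum_{h,k}b_h^k+\sqrt{H^2T\iota}+H^3SA .
\]

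\textbf{Step 3 (bonus sum; main obstacle).} We bound $\sum_{k,h}\sqrt{W_h^k\iota/\check n_h^k}$. From Step 1, $W_h^k\lesssim \mathbb{V}_{s,a,h}(V^{\pi^k}_{h+1})+H\phi$-type terms $+H^2\iota/\check n$. Cauchy--Schwarz then gives
\[
\sqrt{\textstyle\sum_{h,k}\mathbb{V}(V^{\pi^k}_{h+1})}\cdot\sqrt{\textstyle\sum 1/\check n}\lesssim\sqrt{HT\iota}\,\sqrt{SA\iota},
\]
using the law of total variance ($\sum_h$ of conditional variances $\le H^2$ per episode, up to a martingale correction) and $\sum_k 1/\check n\lesssim H\log T$ per triple because of the stage lengths. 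This yields the leading term $\sqrt{H^3SAT\iota}$.

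The difficulty is the difference $\mathbb{V}(V^k_{h+1})-\mathbb{V}(V^{\pi^k}_{h+1})\le 4H\,\mathbb{P}(V^k_{h+1}-V^{\pi^k}_{h+1})$. Its contribution $\sqrt{H\sum\delta\cdot SA\iota\cdot H}$ is self-bounding, since $\sum\delta$ is the regret-type quantity, so solving the resulting quadratic inequality controls it. The remaining lower-order terms, $\sum H\iota/\check n\lesssim H^2SA\iota\log T$ and the stage-lag overhead, are where $H^{9/2}SA\iota^2$ arises. I expect that tracking how the stage-lag between $V^{\check\ell_i}$ and the current $V^k$ inflates this self-bounding term by extra powers of $H$ will be the delicate step, and it is where the polynomial term comes from.
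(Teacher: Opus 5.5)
Your architecture (optimism, the $(1+1/H)$ stage-weight recursion, then a bound on the bonus sum) is viable, but it differs from the paper's, which never self-bounds. The paper splits $W_h^k$ via the pseudo value function $\hat V$ and bounds the large-error part $\sqrt{H\Psi_h^k}/n_h^k$ with the clipped sum of Lemma~\ref{clipq}. It then compares $\mathbb{V}(V^\star_{h+1})$ with $\mathbb{V}(V^{\pi^k}_{h+1})$ using the Hoeffding-type Lemma~\ref{regreth}; those two steps are where $H^{9/2}SA$ comes from. Your Step~3, as written, does not reach the stated bound.

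\textbf{Leading term.} You pass to $\mathbb{V}_{s,a,h}(V^{\pi^k}_{h+1})$, which varies with $k$, and apply Cauchy--Schwarz against $\sum_{h,k}1/\check n_h^k$.
\begin{itemize}
\item Each triple contributes $\lesssim H\log T$, so $\sum_{h,k}1/\check n_h^k\lesssim H^2SA\log T$, and you get $\sqrt{H^3SAT\,\iota\log T}$, i.e.\ $\sqrt{H^3SAT}\,\iota$. That is a factor $\sqrt{\iota}$ too large, and your displayed $\sqrt{HT\iota}\sqrt{SA\iota}$ also loses a factor $H$.
\item The missing idea is to sum the variance of the fixed function $V^\star_{h+1}$, which is constant on each triple. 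Then $\sum_{k:(s_h^k,a_h^k)=(s,a)}(\check n_h^k)^{-1/2}\lesssim\sqrt{HN_h^{K+1}(s,a)}$ holds with no logarithm (Lemma~\ref{wN}, $\alpha=1/2$).
\item Cauchy--Schwarz over $(s,a,h)$ then gives $\sqrt{H^2SA\iota\sum_{h,k}\mathbb{V}_{s_h^k,a_h^k,h}(V^\star_{h+1})}$. Only at this aggregate level should you pass to $V^{\pi^k}$, via $\mathbb{V}(V^\star)\le\mathbb{V}(V^{\pi^k})+4H\,\mathbb{P}(V^\star-V^{\pi^k})$ and $\mathcal{E}_7$. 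The resulting error sum can be handled by Lemma~\ref{regreth} or by your self-bounding.
\item For the same reason, bound the $V^\star$-fluctuation term with $\mathcal{E}_5$ rather than folding it into ``$2b_h^k$''. It can exceed $b_h^k$ by a $\sqrt{HD\iota/\check n}$ term.
\end{itemize}

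\textbf{Lower-order term.} You leave this open and attribute it to a lag between $V^{\check\ell_i}$ and $V^k$. That is a misdiagnosis: you should not compare with $V^k$ at all, but with $V^\star$.
\begin{itemize}
\item By optimism, $0\le V^{\check\ell_i}_{h+1}-V^\star_{h+1}\le H$. Hence $W_h^k\le 2\hat{\mathbb{V}}[V^\star_{h+1}]+2HD_h^k$ with $D_h^k=\frac{1}{\check n}\sum_i\phi^{\check\ell_i}_{h+1}$.
\item AM--GM gives $C_1\sqrt{2HD_h^k\iota/\check n}\le D_h^k/(2H)+C_1^2H^2\iota/\check n$.
\item Since $\sum_kD_h^k\le(1+1/H)\sum_k\phi^k_{h+1}$ is exactly the stage-weight bound of Step~2, the lag only changes the recursion factor to $1+O(1/H)$. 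The residual sums to $O(H^4SA\iota^2)$.
\item The remaining $\hat{\mathbb{V}}[V^\star]$ is replaced by $\mathbb{V}(V^\star)$ as in the paper, via $\mathcal{E}_3$ and $\mathcal{E}_6$.
\end{itemize}
With these repairs your route closes. It is more elementary and gives a polynomial term no worse than the paper's. The paper's route has the advantage that the same pseudo-value-function decomposition also drives \Cref{thm_gapregret}.

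\textbf{Lemma~\ref{Q}.} Your account of it is off. The paper applies Freedman to the predictable $V^{k_{l_i}}_{h+1}$ (events $\mathcal{E}_1,\mathcal{E}_2$) and compares $\tilde W_h^k$ with $\frac1n\sum_i\mathbb{V}(V^{k_{l_i}}_{h+1})$, so the $S,A$-free bonus is not a union-bound effect. Your fixed-$V^\star$ optimism argument, with AM--GM absorption into $D$, is a valid alternative. However, it needs a Bernstein-type bound $\mathbb{V}(V^\star)\lesssim\hat{\mathbb{V}}[V^\star]+H^2\iota/\check n$. The Hoeffding-type events $\mathcal{E}_3,\mathcal{E}_6$ would leave an $H\iota^{3/4}\check n^{-3/4}$ term that $C_2H\iota/\check n$ cannot cover.
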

The proof is deferred to Appendix \ref{worstregretproof}. UCB-Bernstein \citep{jin2018q} achieves a worst-case regret bound of $O(\sqrt{H^3SAT\iota} + \sqrt{H^9S^3A^3\iota^4}).$ Our bound removes their superlinear dependence on the state and action space dimensions $S$ and $A$, thanks to the refined bonus design in \Cref{betabern}.

UCB2-Bernstein \citep{bai2019provably} achieves a logarithmic local switching cost of $O(H^3SA\log(K/A))$ while maintaining the same worst-case regret as UCB-Bernstein. Our analysis further shows that UCB-Bernstein+, with the stage design, improves upon the local switching cost of UCB2-Bernstein. The worst-case local switching cost bound is provided in the following theorem.
\begin{theorem}
\label{thm_cost} 
The local switching cost of UCB-Bernstein+ is bounded by $O(H^2SA\log(\frac{K}{SAH}+1))$.
\end{theorem}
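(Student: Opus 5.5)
The plan is to charge every policy switch to a stage end, and then count stage ends per triple. In UCB-Bernstein+ the estimates $Q_h(s,a)$, $V_h(s)$ and $\pi_h(s)$ change only when $N_h(s,a)$ hits a stage endpoint in $\mathcal{L}$ for the visited triple $(s,a,h)$. Each such update changes $\pi_h(s)$ at most at the single pair $(s,h)$. It follows that $\pi^{u+1}_h(s)\neq\pi^u_h(s)$ requires at least one stage end for some $(s,a,h)$ during episode $u$, with $a$ ranging over actions. Moreover, each stage end changes at most one policy entry. So
\[
N_{\textnormal{switch}}\le \sum_{(s,a,h)} \#\{j : \textstyle\sum_{i\le j} e_i \le N_h^{K}(s,a)\},
\]
where $N_h^{K}(s,a)$ is the total number of visits to $(s,a,h)$ over all $K$ episodes.

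\textbf{Step 1: stages per triple.} Since $e_1=H$ and $e_{i+1}=\lfloor(1+1/H)e_i\rfloor$, the stage lengths stay constant at $H$ only if the floor kills the growth. In fact $(1+1/H)e_i\ge e_i+1$ whenever $e_i\ge H$, so $e_{i+1}\ge e_i+1$. Once $e_i\ge 2H$, the floor costs at most a constant factor, and for $i\gtrsim H$ we get $e_i\ge c(1+1/(2H))^{i}$-type growth. The cumulative sum $\sum_{i\le j}e_i$ is therefore at least roughly $H(1+1/H)^{j-O(H)}\cdot\Theta(H)$. Inverting, the number of stages completed with $n$ samples satisfies
\[
J(n)\le O\!\left(H\log\!\left(\frac{n}{H}+1\right)\right).
\]
Here the first $O(H)$ stages, of length between $H$ and $2H$, use $\Theta(H^2)$ samples, so $n<H^2$ costs at most $O(n/H)\le O(H\log(n/H+1))$ stages; this is the regime where care is needed. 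This matches the standard bound in \citet{zhang2020almost}, which I would invoke or reprove.

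\textbf{Step 2: concavity.} Sum $J$ over triples with the constraint $\sum_{(s,a)}N_h^{K}(s,a)=K$ for each $h$, since every episode visits exactly one pair per step. The map $n\mapsto H\log(n/H+1)$ is concave, so by Jensen for each $h$
\[
\sum_{s,a}H\log\!\left(\frac{N_h^{K}(s,a)}{H}+1\right)\le SA\,H\log\!\left(\frac{K}{SAH}+1\right).
\]
Summing over the $H$ steps gives $O(H^2SA\log(K/(SAH)+1))$, which is the claim of \Cref{thm_cost}. No probabilistic event is needed, so the bound holds deterministically.

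The main obstacle is Step 1: getting a clean $O(H\log(n/H+1))$ bound that holds uniformly, including small $n$, despite the floor in the stage-length recursion. I would treat the first $\Theta(H)$ stages separately, arguing that they grow by at least $1$ each until reaching length $2H$. After that the growth is geometric with ratio at least $1+1/(2H)$, and I would combine the two regimes.
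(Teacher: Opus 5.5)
Your proposal is correct and matches the paper's proof: switches are charged to stage ends through $\mathbb{I}[\pi_h^{k+1}(s)\neq\pi_h^k(s)]\le\mathbb{I}[\exists a:\ Q_h^{k+1}(s,a)\neq Q_h^k(s,a)]$, the number of stages per triple is bounded using the geometric growth of $e_i$, and the per-step constraint $\sum_{s,a}N_h(s,a)=K$ is handled with Jensen's inequality. The only cosmetic difference is the order of operations: the paper applies Jensen to the convex lower bound $H(1+\frac{1}{2H})^{t_h^K(s,a)-1}-H$ on visits and then inverts, whereas you invert per triple and then apply Jensen to the concave map $n\mapsto H\log(n/H+1)$; your explicit treatment of the floor during the first $H$ stages spells out what the paper compresses into the one-line claim that visits grow by a factor of at least $1+\frac{1}{2H}$ per stage.
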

The proof of \Cref{thm_cost} is given in Appendix \ref{worstswitchproof}. Compared to UCB2-Bernstein, our result improves the dependence on the horizon length by a factor of $H$. Moreover, it matches the best-known worst-case local switching cost achieved by UCB-Advantage \citep{zhang2020almost}.

\section{Our Technical Tool: Pseudo Value Functions}\label{pseudo}
We introduce a key technical tool for the proofs of \Cref{thm_gapregret,thm_regret}: the pseudo value functions. To facilitate the subsequent analysis, we first introduce some notation.

Let $Q_h^k(s,a)$ and $V_h^k(s)$ denote the values of $Q_h(s,a)$ and $V_h(s)$ at the beginning of episode $k$, respectively. Let $W_h^k(s,a)$ and $b_h^k(s,a)$ be the corresponding bonus terms used to update $Q_h^k(s,a)$, and let $n_h^k(s,a)$ denote the number of visits to $(s,a,h)$ during the previous stage of the triple in episode $k$. For $i \in [n_h^k(s,a)]$, let $k_{l_i(s,a,h,k)}$ be the episode index of the $i$-th visit to $(s,a,h)$ within the most recently completed stage. When $(s,a,h,k)$ is clear from the context, we use $k_{l_i}$ for brevity.

To upper bound the regret, a key step is to control the sum of value estimation errors $(Q_h^k - Q_h^\star)(s_h^k, a_h^k)$. The worst-case analysis considers the total sum over all episodes, whereas both coarse-grained and fine-grained gap-dependent analyses use a weighted sum. After the triple $(s_h^k,a_h^k,h)$ has completed its first stage, by the update rule in Equation (3) and the Bellman Equation (1), the estimation error admits the following decomposition:
\begin{align*}
    &(Q_h^{k} - Q_h^{\star})(s_h^k, a_h^k) \leq \frac{\sum_{i=1}^{n_h^k}(V_{h+1}^{k_{l_i}}-V_{h+1}^\star)(s_{h+1}^{k_{l_i}})}{n_h^k(s_h^{k},a_h^{k})} +\frac{\sum_{i=1}^{n_h^k}(\mathbbm{1}_{s_{h+1}^{k_{l_i}}} - \mathbb{P}_{s_h^{k}, a_h^{k},h})V_{h+1}^\star}{n_h^k(s_h^{k},a_h^{k})}+ b_h^{k}(s_h^{k},a_h^{k}).
\end{align*}
Here, the first RHS term establishes a recursive relationship between estimation errors at steps $h$ and $h+1$, while the second can be controlled using standard concentration inequalities. The main challenge lies in controlling the variance-based bonus in the last term. Lemma C.3 of \citet{jin2018q} directly approximates this bonus by the conditional variance of the optimal value function $V_{h+1}^\star$. However, summing the approximation errors introduces superlinear dependence on $S$ and $A$, which leads to the superlinear dependence in the regret bound of UCB-Bernstein. Moreover, it also remains unclear whether this direct approximation is suitable for fine-grained gap-dependent analysis.

To address this, we introduce the \emph{pseudo value functions} $\hat{V}_h^k(s)$ for any $(s,h,k) \in \mathcal{S} \times [H] \times [K] $:
\begin{equation}\label{eq_def_surro}
    \hat{V}_h^{k}(s) :=\max\left\{V_h^\star(s), \min \{V_h^\star(s)+\beta, V_h^{k}(s)\}\right\},
\end{equation} 
where $\beta \in (0,H]$ is an arbitrary constant. The optimism of UCB-Bernstein+ (Lemma \ref{Q} in Appendix \ref{valueproperty}), which ensures that $V_h^k(s)\geq V_h^\star(s)$ for all $s\in\mathcal S$ with high probability, implies that
\begin{equation}\label{eq_def_surro2}
    V_h^\star(s)\leq \hat{V}_h^{k}(s) = \min\left\{V_h^\star(s) + \beta, V_h^{k}(s)\right\}.
\end{equation}
Next, we discuss how the pseudo value function helps bound the variance-based bonus, which satisfies
\[
b_h^k(s,a)\leq C_b\left(\sqrt{\frac{W_h^k(s,a)\iota}{n_h^k(s,a)}}+\frac{H\iota}{n_h^k(s,a)}\right)
\]
for some universal constant $C_b>0$. Since $H\iota/n_h^k$ does not involve variance estimations, its cumulative sum can be controlled directly using Lemma \ref{wN}, and the corresponding weighted sum can be handled similarly when the weights are uniformly bounded. Therefore, we focus on the first term with the variance estimation $W_h^k(s,a)$. For a sequence $(f^j)_{j=1}^K$ given by $(V_{h+1}^j)$, $(\widehat V_{h+1}^j)$, or the constant sequence $V_{h+1}^\star$, define, when $n_h^k(s,a)>0$, the empirical mean and variance over the $n_h^k(s,a)$ visits to $(s,a,h)$ as $\hat{\mathbb{E}}[f] = \sum_{i=1}^{n_h^k(s,a)} f^{k_{l_i}}(s_{h+1}^{k_{l_i}})/n_h^k(s,a),\
\hat{\mathbb{V}}[f] = \hat{\mathbb{E}}[f^2] - (\hat{\mathbb{E}}[f])^2.$ Since $\hat{V}_h^j(s) \leq V_h^j(s)$ for any $(s,j) \in \mathcal{S}\times[K]$ by Lemma \ref{Q}, $W_h^k(s,a)$ can be decomposed as:
\begin{align}
&W_h^k(s,a) = \hat{\mathbb{V}}[V_{h+1}]\leq \hat{\mathbb{E}}\big[(V_{h+1})^2\big] - \big(\hat{\mathbb{E}}[\hat{V}_{h+1}]\big)
^2 \notag\\
&=(\hat{\mathbb{E}}[(V_{h+1})^2] -\hat{\mathbb{E}}[(\hat{V}_{h+1})^2])+(\hat{\mathbb{V}}[\hat{V}_{h+1}]-2\hat{\mathbb{V}}[V_{h+1}^\star])  + 2(\hat{\mathbb{V}}[V_{h+1}^\star]-\mathbb{V}[V_{h+1}^\star]) + 2\mathbb{V}[V_{h+1}^\star]. \label{decompsew}
\end{align}
Next, we control the terms on the right-hand side of \Cref{decompsew} separately. 
The first term satisfies
\begin{align}
    \hat{\mathbb{E}}\left[(V_{h+1})^2\right] -\hat{\mathbb{E}}\big[(\hat{V}_{h+1})^2\big]
    &\leq 2H\hat{\mathbb{E}}[V_{h+1}-\hat{V}_{h+1}] \nonumber\\ 
    &\leq 2H\hat{\mathbb{E}}\left[\left(V_{h+1}-V_{h+1}^\star\right)\mathbb{I}\left[V_{h+1}-V_{h+1}^\star\geq \beta\right]\right].\label{temp_lable_first_term}
\end{align}
Here, the first inequality follows from factoring the difference of two squares and the fact that $V_{h+1}^j, \hat{V}_{h+1}^j \leq H$ for any $j \in [K]$.  The last inequality holds because for any $j \in [K]$, $V_{h+1}^j - \hat{V}_{h+1}^j > 0$ can occur only when
$V_{h+1}^j - V_{h+1}^\star \geq \beta$ due to \Cref{eq_def_surro2}, and
$V_{h+1}^j - \hat{V}_{h+1}^j \leq V_{h+1}^j - V_{h+1}^\star$. The RHS of \Cref{temp_lable_first_term} characterizes the value estimation error during the early exploration phase and can be properly controlled (see Lemma \ref{clipq} in Appendix \ref{valueproperty} for details). In addition to the refined bonus design, this step is another key to avoiding the superlinear dependence on the state and action space dimensions in the worst-case regret of \citet{jin2018q}. We also remark that this step resembles the control of the reference settling error for algorithms using variance reduction \citep{zhang2020almost,li2023breaking,zhang2025regret}. However, it differs in that the target value gap $\beta$ is conceptual and does not appear in the UCB-Bernstein+ design.

The second term in \Cref{decompsew}, $\hat{\mathbb{V}}[\hat{V}_{h+1}] - 2 \hat{\mathbb{V}}[V_{h+1}^\star]$, can be bounded by $2 \hat{\mathbb{V}}[\hat{V}_{h+1} - V_{h+1}^\star]$ with the Cauchy–Schwarz inequality. It can be further bounded by $O(\beta^2)$ since $|\hat{V}_{h+1}^j - V_{h+1}^\star| \le \beta$ for any $j \in[K]$ by \Cref{eq_def_surro2}. The third term in \Cref{decompsew}, $2(\hat{\mathbb{V}}[V_{h+1}^\star] - \mathbb{V}[V_{h+1}^\star])$, focuses on the empirical estimation error of the variance and can be bounded with the Azuma-Hoeffding inequality. See \Cref{emdiff} in Appendix \ref{gapregretproof} for details.

Finally, bounding the last term, $\mathbb{V}[V_{h+1}^\star]$, highlights the advantage of variance-based exploration bonuses. In the worst-case analysis, we approximate it with $\mathbb{V}[V_{h+1}^\pi]$ as in \citet{jin2018q}, achieving improved $H$-dependence compared to algorithms using Hoeffding-type bonuses. For the gap-dependent analysis, we bound it with $\mathbb{Q}_h^\star$, which captures the MDP's variance structure. This completes the explanation of how the variance-aware bonus is controlled.

Using the pseudo value function and the bounding procedure for variance-based exploration bonuses described above, we derive the fine-grained gap-dependent regret bound by separately analyzing optimal and suboptimal state-action-step triples and recursively summing the weighted value estimation errors across steps (Lemmas \ref{recurQb} and \ref{recurQbtotal}), following the framework in \citet{zhang2025q}. Full details are provided in Appendix~\ref{gapregretproof}. Similarly, the worst-case regret can be obtained by controlling the full sum of value estimation errors using the recursive procedure in \citet{jin2018q,zhang2020almost}, with full details in Appendix~\ref{worstregretproof}.

\section{Numerical Experiments}
\label{mainexp}
In this section, we conduct numerical experiments.
We first evaluate UCB-Bernstein+ against representative model-free algorithms using Hoeffding-type or Bernstein-type variance-based bonuses, including UCB-Hoeffding~\citep{jin2018q}, UCB-Bernstein~\citep{jin2018q}, and UCB2-Bernstein~\citep{bai2019provably}, in a synthetic environment. We assess their performance in terms of both regret and local switching cost, with particular emphasis on the comparison with UCB2-Bernstein, the low-switching-cost variant of UCB-Bernstein. For evaluation, $V^\star$ and each episode's greedy-policy value are computed exactly by backward dynamic programming using the true MDP, which is unavailable to the learning algorithms. The local switching cost is computed according to Definition \ref{def_switching} by comparing consecutive greedy policies over all state--step pairs.

We further provide additional comparisons of the same four algorithms in the GridWorld environment \citep{Dominguesrlberry2021} in Appendix~\ref{exp:grid}. Finally, in Appendix~\ref{exp:baseline}, we compare UCB-Bernstein+ with two state-of-the-art baselines: Q-EarlySettled-LowCost~\citep{zhang2025regret}, which achieves the best-known regret guarantee, and UCB-Advantage~\citep{zhang2020almost}, which achieves the best-known local switching cost guarantee. These additional comparisons further demonstrate the empirical effectiveness of UCB-Bernstein+.

We now describe the setup for comparing the four algorithms in the synthetic environment. We consider four \textbf{experiment scales} with $(H,S,A,K) = (5,3,2,3\times10^4)$, $(5,5,5,10^5)$, $(7,8,6,6 \times 10^5)$, and $(10,15,10,2 \times 10^6)$. For each $(s,a,h)$, the reward $r_h(s,a)$ is sampled independently from the uniform distribution over $[0,1]$, and the transition kernel $\mathbb{P}_h(\cdot \mid s,a)$ is drawn uniformly from the probability simplex over S states. The initial state of each episode is chosen uniformly at random from the state space. We set $\iota = 1$ and use the bonus coefficient $\sqrt{2}$ for the leading term that scales as $\sqrt{1/n}$ in all four algorithms. Here, $n$ represents the number of visits. For the lower-order term that scales as $1/n$ in the three algorithms with Bernstein-type bonuses, we use the coefficient $1$. Accordingly, for the UCB-Bernstein+ algorithm, we set $C_1 = C_3 = \sqrt{2}$ and $C_2 = 1$ in \Cref{betabern}.
\begin{figure}[H]
    \centering
    \begin{subfigure}[b]{0.475\textwidth}
        \centering
        \includegraphics[width=\linewidth]{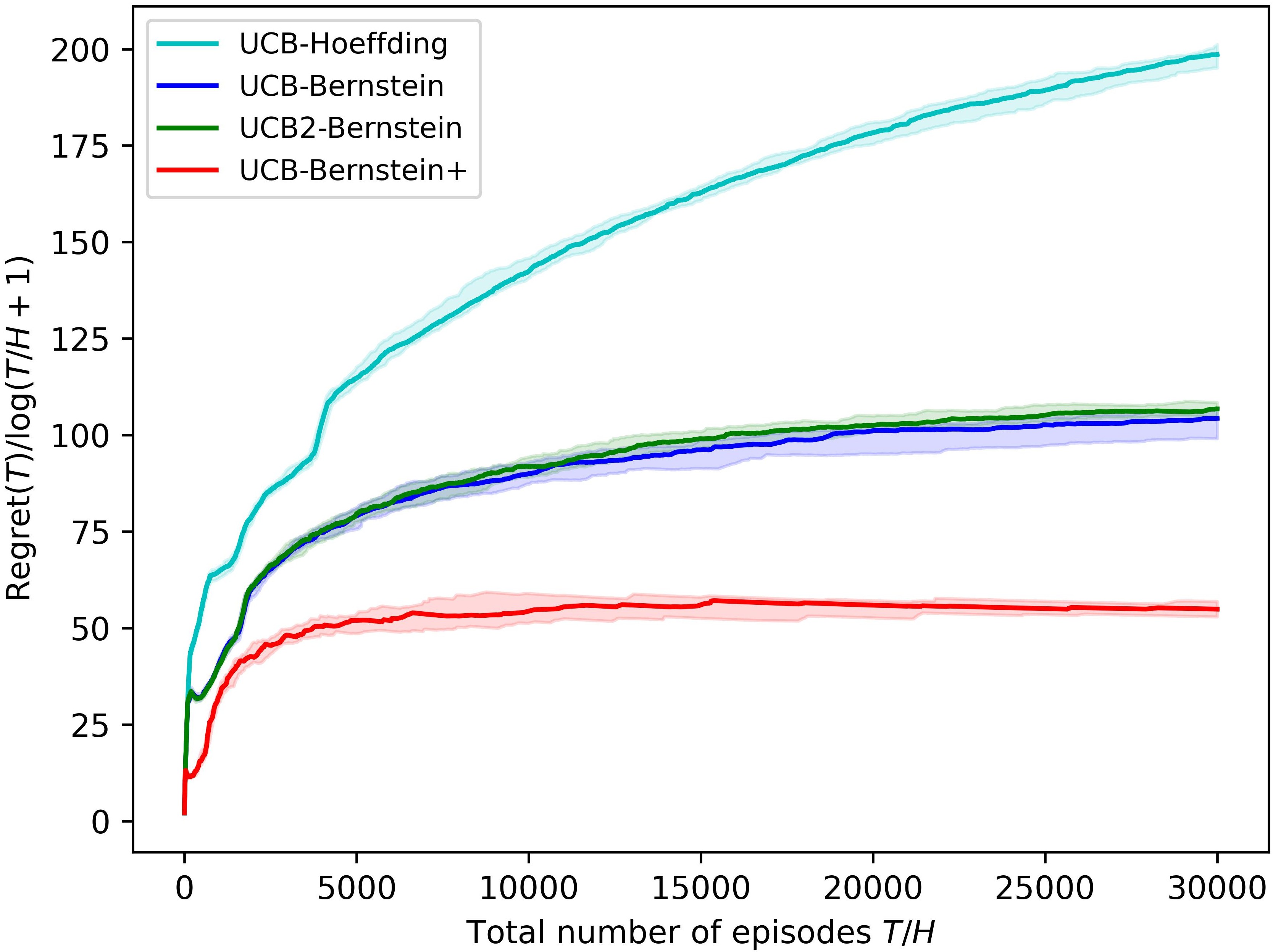}
        \caption{Regret for $(H,S,A) = (5,3,2)$}
    \end{subfigure}
    \hfill
    \begin{subfigure}[b]{0.485\textwidth}
        \centering
        \includegraphics[width=\linewidth]{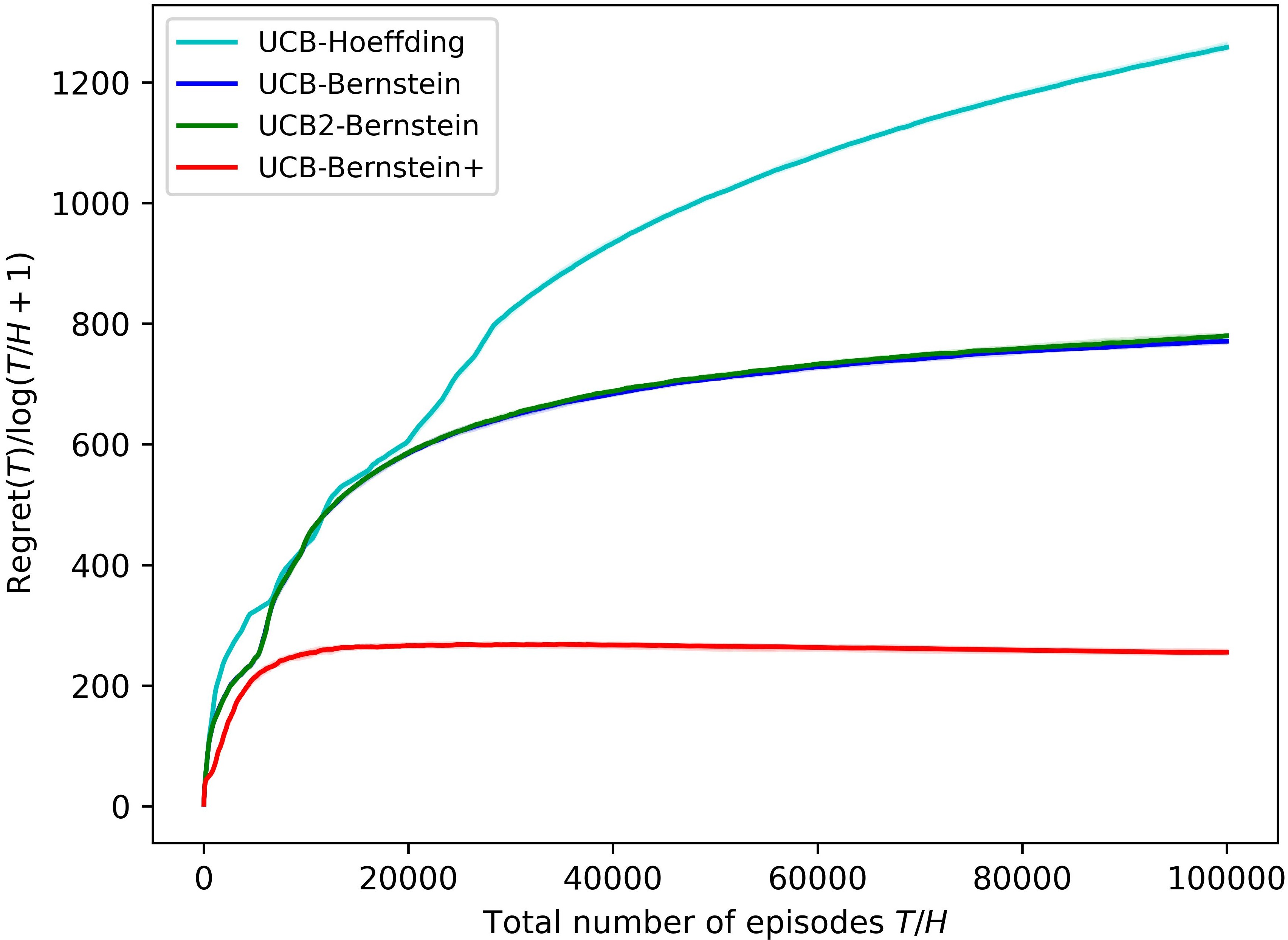}
        \caption{Regret for $(H,S,A) = (5,5,5)$}
    \end{subfigure}  
    \begin{subfigure}[b]{0.475\textwidth}
        \centering
        \includegraphics[width=\linewidth]{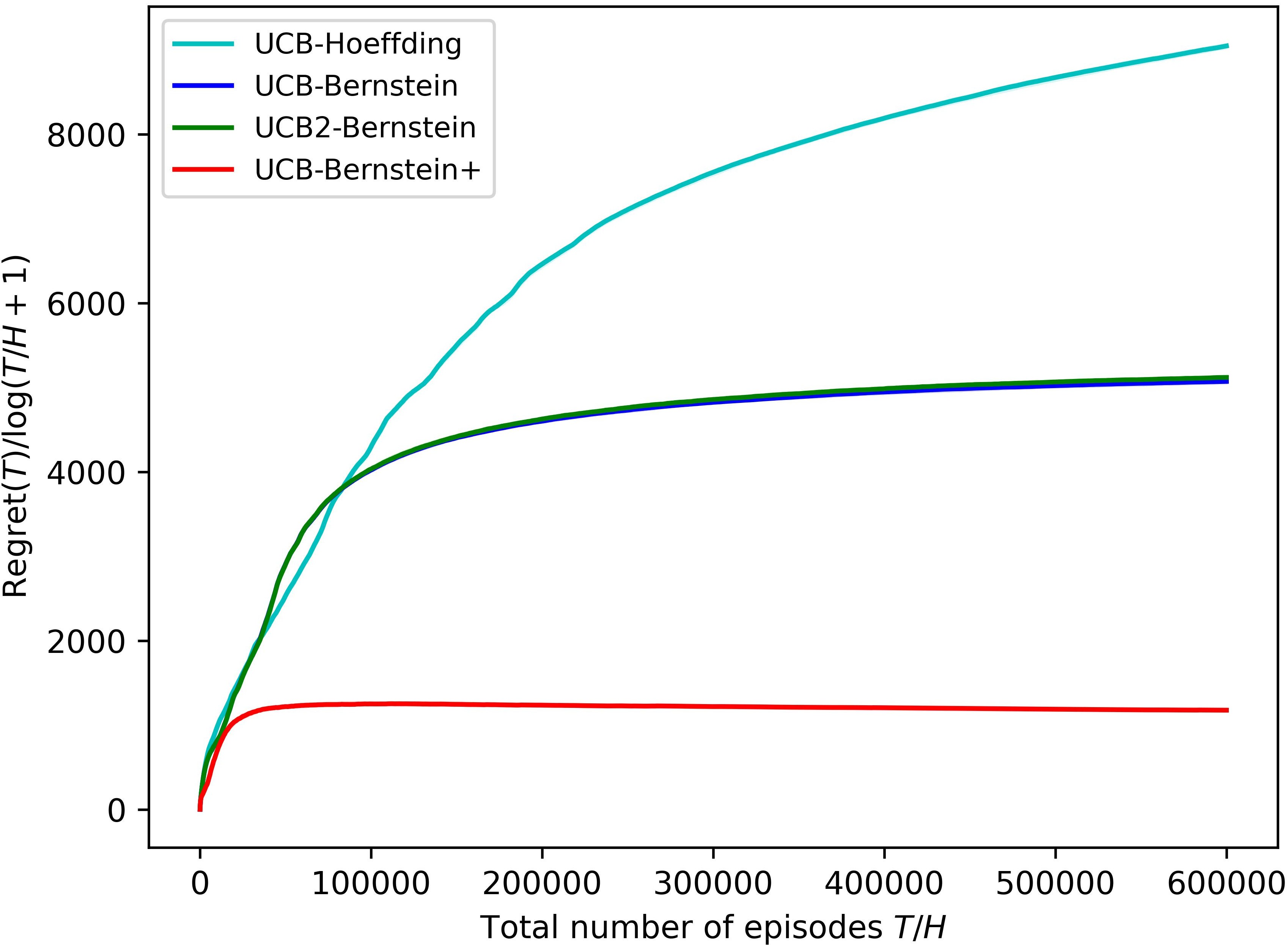}
        \caption{Regret for $(H,S,A) = (7,8,6)$}
    \end{subfigure}
    \hfill
    \begin{subfigure}[b]{0.485\textwidth}
        \centering
        \includegraphics[width=\linewidth]{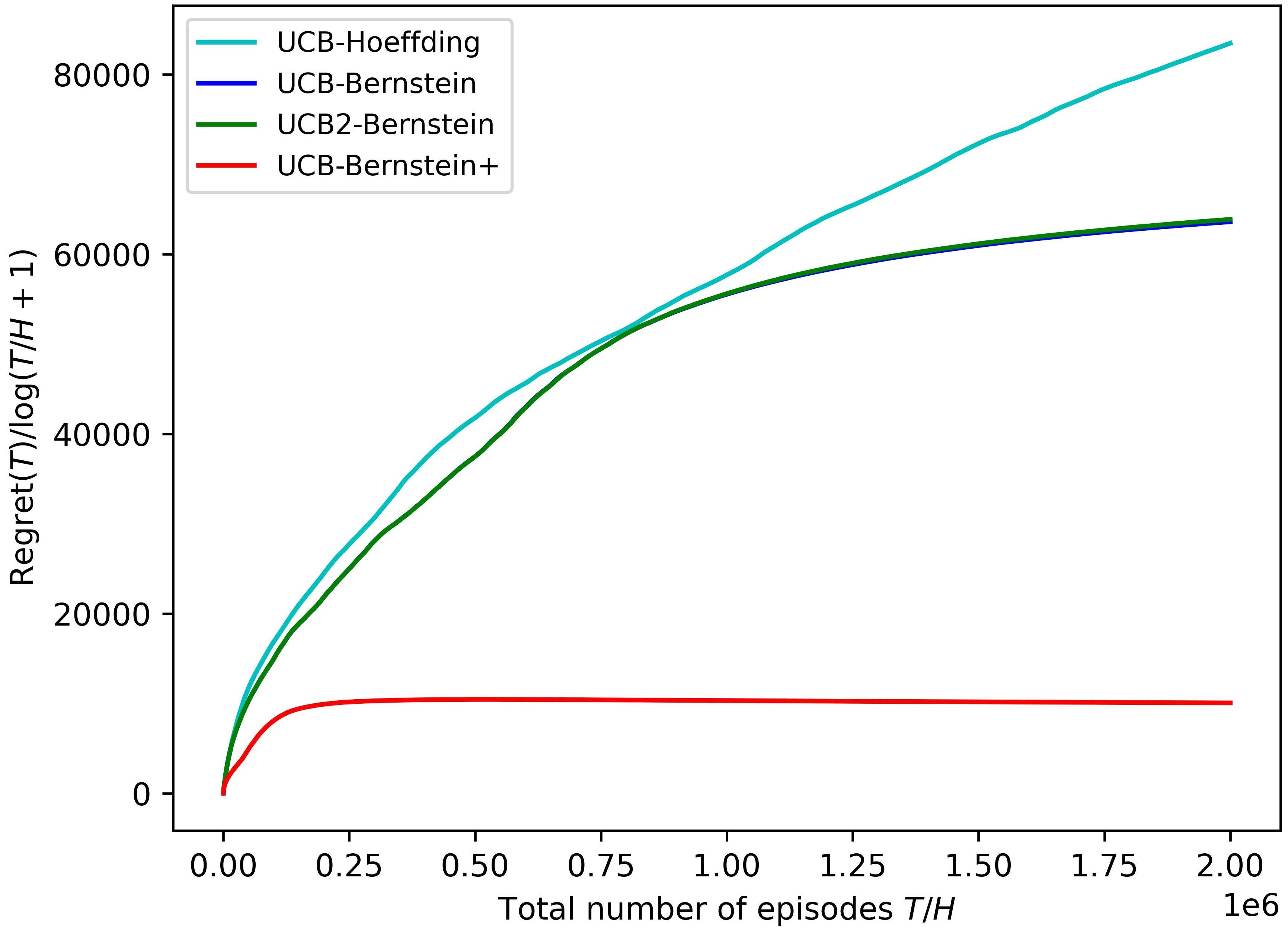}
        \caption{Regret for $(H,S,A) = (10,15,10)$}
    \end{subfigure}
    \caption{Regret comparison of different algorithms.}
    \label{fig:regret_comparison}
\end{figure}

To quantify uncertainty, for each experiment scale, we generate one MDP and run 10 independent trials for each algorithm on this fixed MDP. In the following figures, solid lines represent the median over the 10 trials, and shaded regions indicate the 10th--90th percentile intervals.

\Cref{fig:regret_comparison} presents the regret comparison results, where we plot $\textnormal{Regret}(T)/\log(K+1)$ against the number of episodes $K$. We observe that UCB-Bernstein+ achieves the best regret among the compared four algorithms. In addition, across all experiment scales, its curve flattens as $K$ increases, suggesting approximately logarithmic regret growth, qualitatively consistent with \Cref{thm_gapregret}.

We further compare the local switching costs of UCB2-Bernstein and UCB-Bernstein+, with the results shown in \Cref{fig:costcomparison}. We plot $\textnormal{Local Switching Cost}/\log(K+1)$ against the number of episodes $K$. The results show that UCB-Bernstein+ consistently incurs a substantially lower local switching cost than UCB2-Bernstein across the experiment scales. Moreover, the local switching-cost curve of UCB-Bernstein+ gradually stabilizes as $K$ increases, suggesting approximately logarithmic growth of its local switching cost with the number of episodes. These empirical observations are consistent with both the gap-dependent and worst-case local switching cost guarantees in \Cref{thm_gapcost,thm_cost}.
\begin{figure}[H]
    \centering
    \begin{subfigure}[b]{0.48\textwidth}
        \centering
        \includegraphics[width=\linewidth]{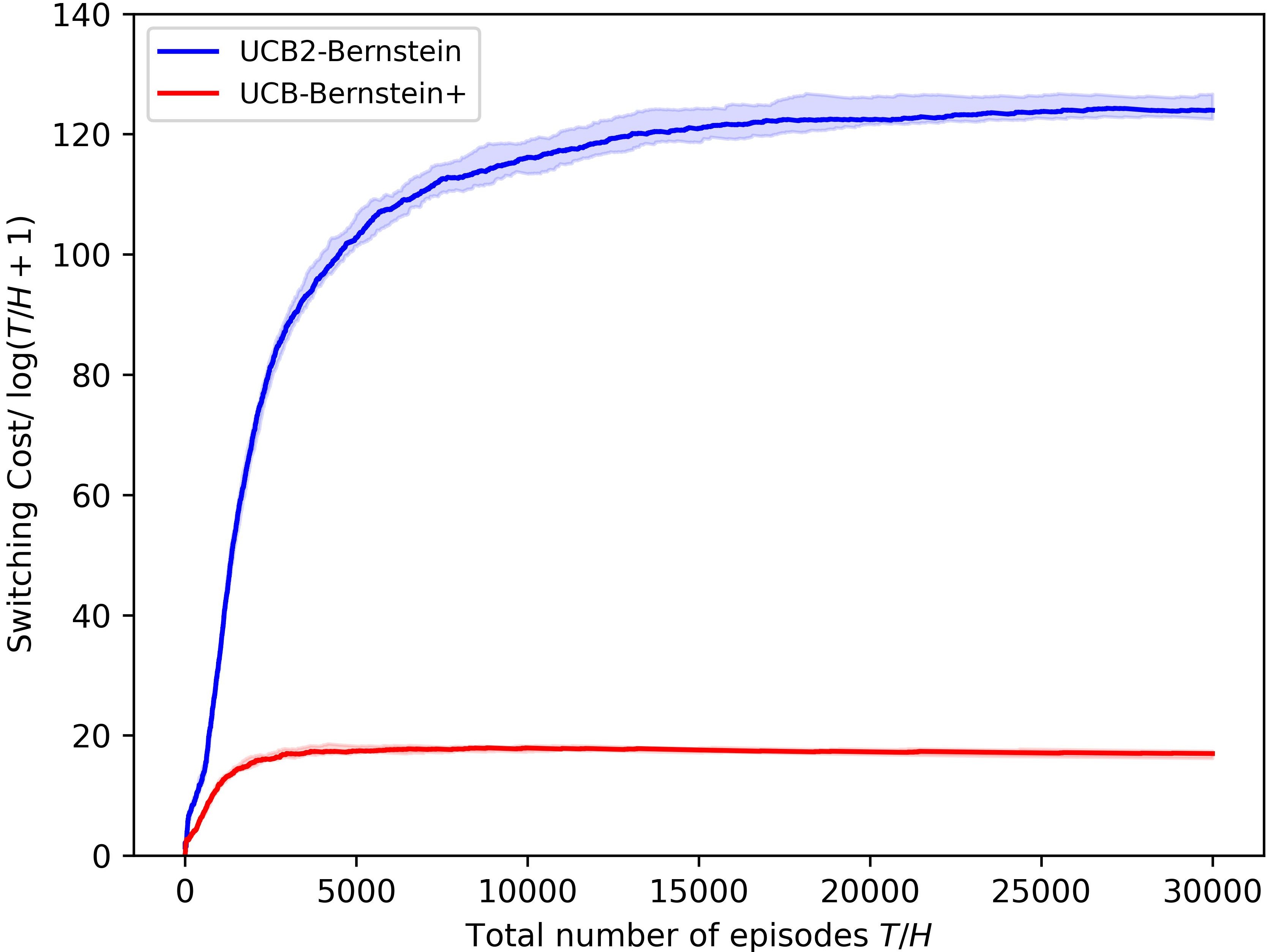}
        \caption{Local switching cost for $(H,S,A) = (5,3,2)$}
    \end{subfigure}
    \hfill
    \begin{subfigure}[b]{0.49\textwidth}
        \centering
        \includegraphics[width=\linewidth]{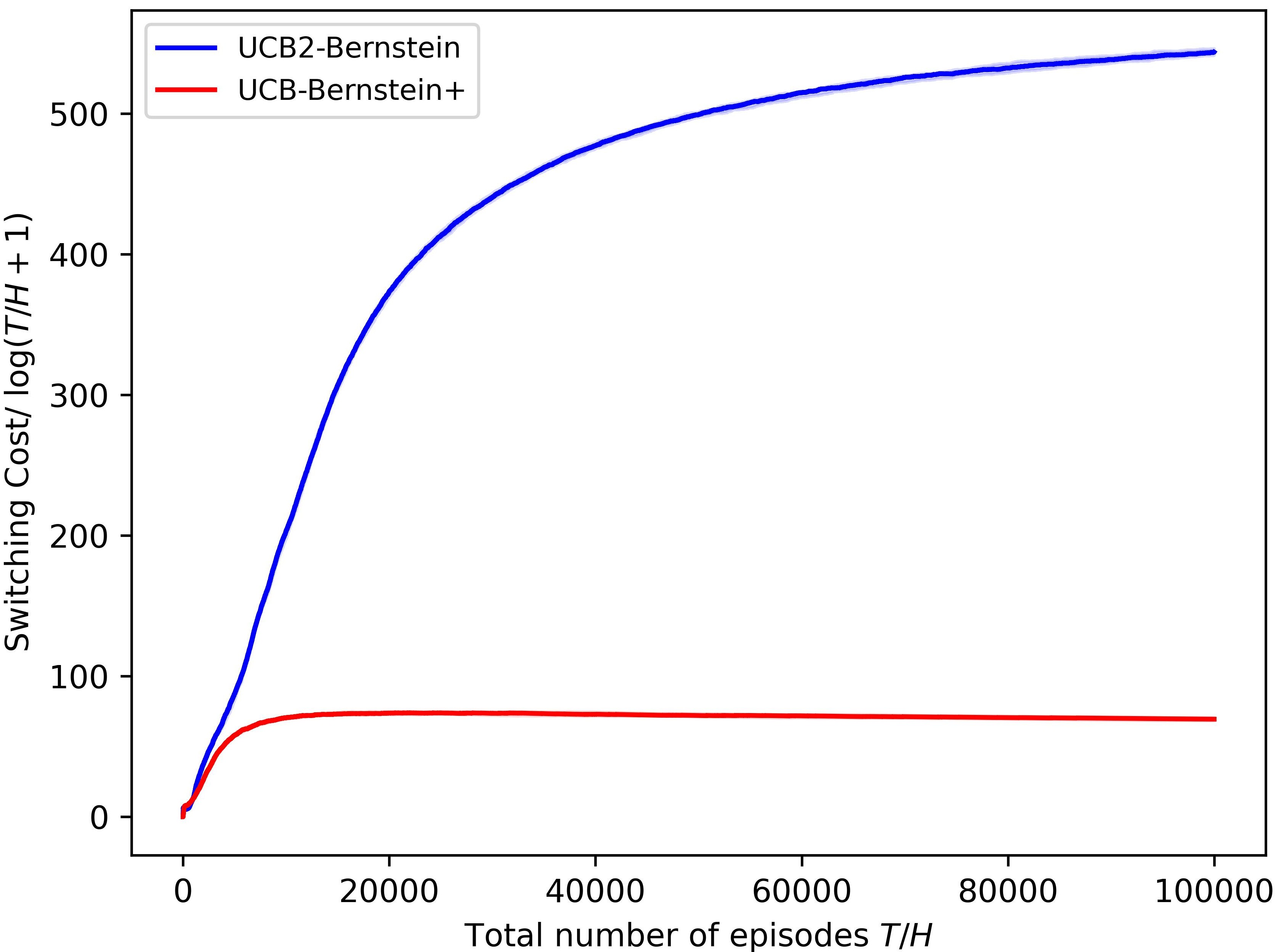}
        \caption{Local switching cost for $(H,S,A) = (5,5,5)$}
    \end{subfigure}    
    \vspace{0.2em}
    
    \begin{subfigure}[b]{0.48\textwidth}
        \centering
        \includegraphics[width=\linewidth]{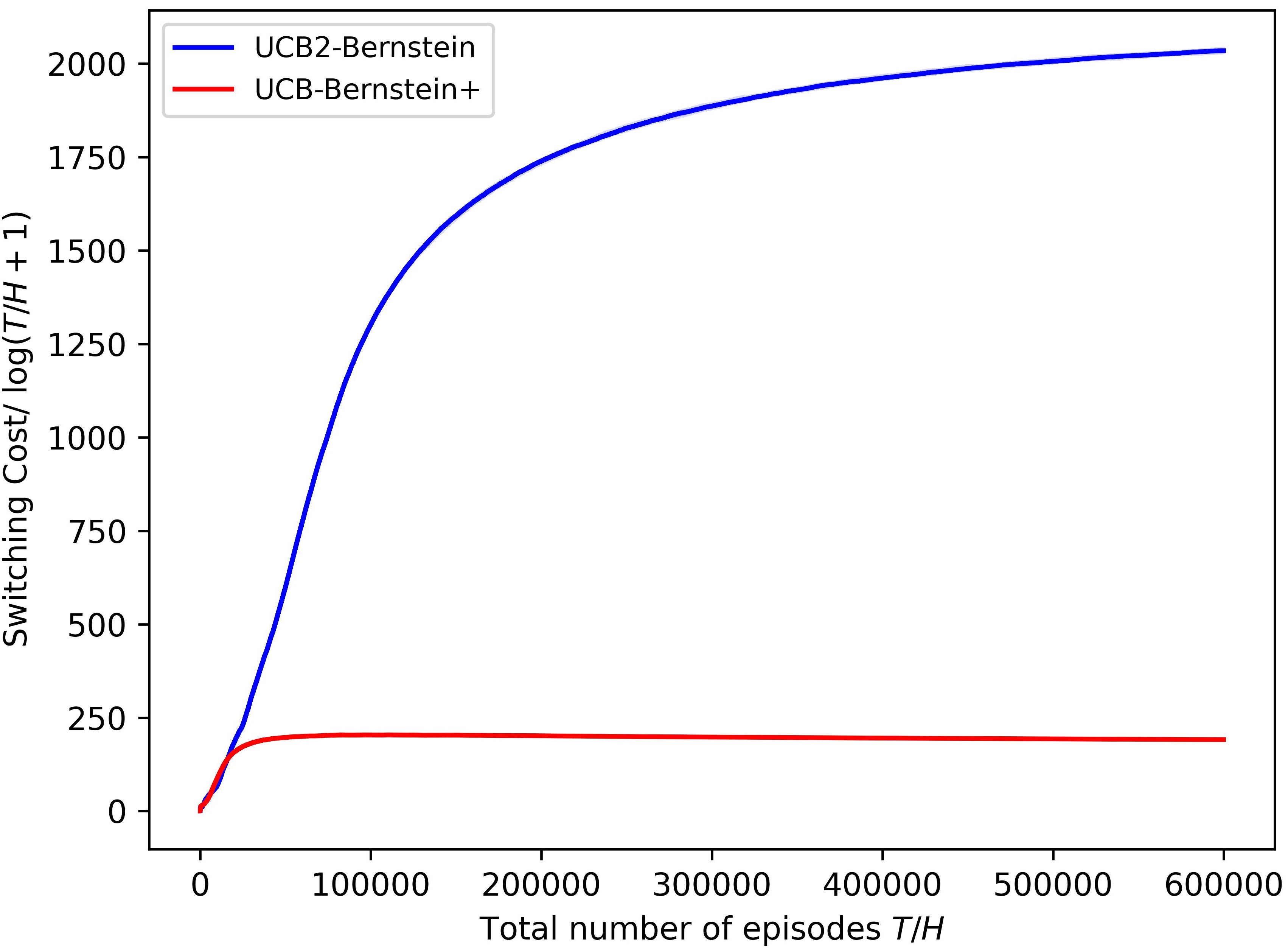}
        \caption{Local switching cost for $(H,S,A) = (7,8,6)$}
    \end{subfigure}
    \hfill
    \begin{subfigure}[b]{0.49\textwidth}
        \centering
        \includegraphics[width=\linewidth]{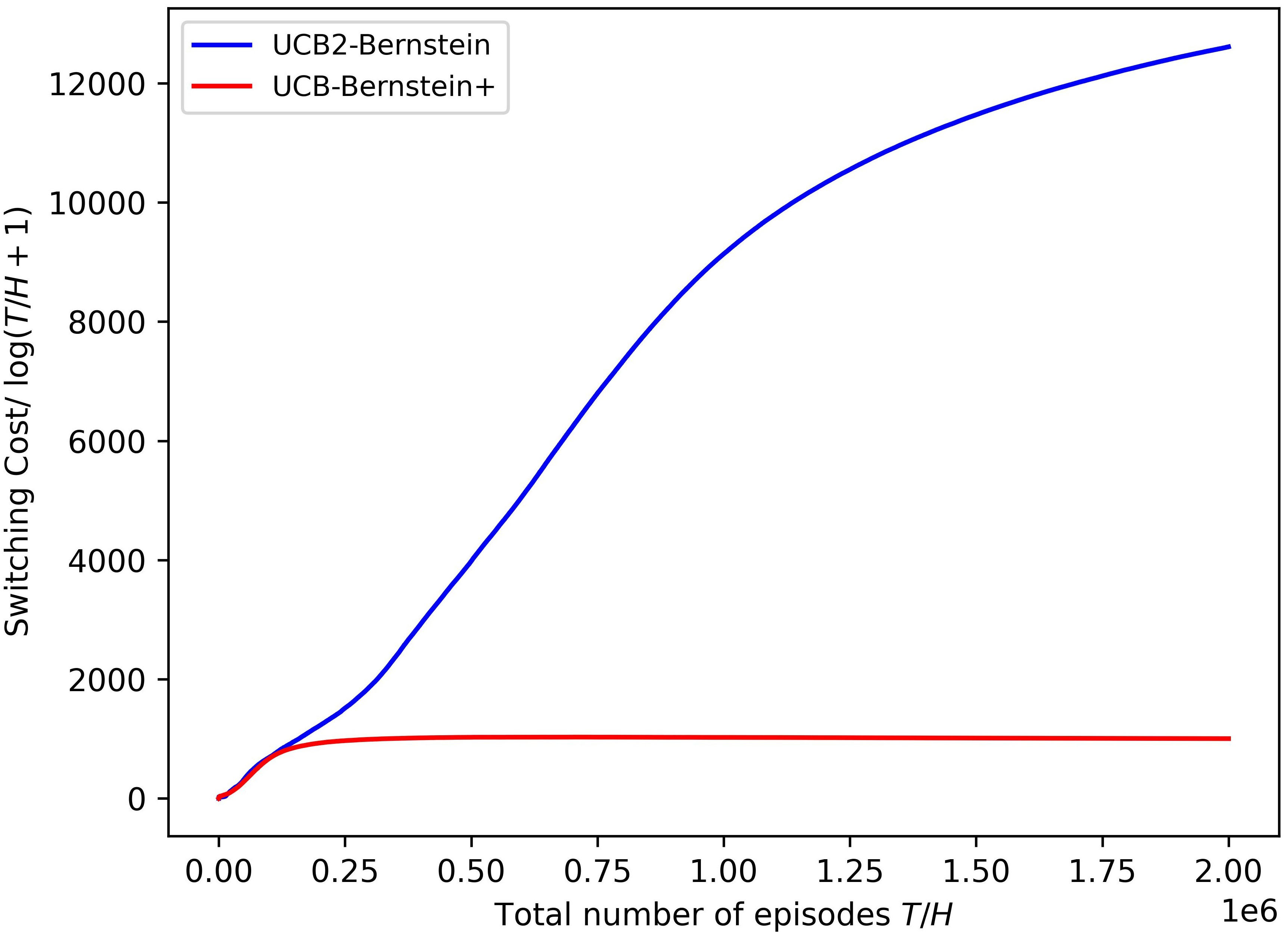}
        \caption{\footnotesize Local switching cost for $(H,S,A) = (10,15,10)$}
    \end{subfigure}
    \caption{Local switching cost comparison of different algorithms.}
    \label{fig:costcomparison}
\end{figure}
\section{Conclusion}
In this work, we study model-free online RL for episodic tabular MDPs, focusing on gap-dependent regret and policy switching cost. We propose UCB-Bernstein+, a refined UCB-Bernstein algorithm with variance-based exploration bonuses. Our analysis establishes the first fine-grained gap-dependent regret upper bound for variance-aware model-free online RL, closing the gap left by prior Hoeffding-based analyses. By incorporating a stage-wise policy update design, UCB-Bernstein+ further achieves the best-known gap-dependent local switching cost to date. We also establish improved worst-case guarantees for both regret and local switching cost over the original UCB-Bernstein algorithm. Numerical experiments further demonstrate the empirical effectiveness of UCB-Bernstein+.

\bibliography{main}
\bibliographystyle{plainnat}

\newpage
\appendix
In the appendix, Appendix \ref{related} reviews related work. Appendix \ref{experiment} presents additional numerical experiments. Appendix \ref{notation} summarizes the notation used throughout the paper. Appendix \ref{keylemma} provides several auxiliary lemmas that support our analysis. Appendix \ref{property} summarizes basic properties of UCB-Bernstein+, while Appendix \ref{probabilityevent} introduces the high-probability events used in the proofs. Appendix \ref{valueproperty} establishes several properties of the value function estimates. The proofs of \Cref{thm_gapregret} and \Cref{thm_gapcost} are provided in Appendix \ref{gapregretproof} and Appendix \ref{switchcost}, respectively, whereas Appendix \ref{worstregretproof} and Appendix \ref{worstswitchproof} contain the proofs of \Cref{thm_regret} and \Cref{thm_cost}, respectively.

\section{Related Work}
\label{related}

\textbf{Online RL for Tabular Episodic MDPs.}  
For tabular episodic MDPs in the worst-case regret setting, \citet{auer2008near}, \citet{agrawal2017optimistic}, \citet{azar2017minimax}, \citet{agarwal2020model}, \citet{zanette2019tighter}, \citet{zhang2021reinforcement}, and \citet{zhang2024settling} study model-based algorithms. Notably, \citet{zhang2024settling} propose the MVP algorithm that achieves a regret of $\tilde{O}(\min\{\sqrt{SAH^2T},T\})$, matching the information-theoretic lower bound up to logarithmic factors. Model-free algorithms have been studied by \citet{jin2018q}, \citet{zhang2020almost}, \citet{li2023breaking}, \citet{menard2021ucb}, and \citet{zhang2025regret}, with the latter four achieving a near-optimal regret of $\tilde{O}(\sqrt{SAH^2T})$. Several works also consider online federated RL, including \citet{liang2026provably}, \citet{zheng2023federated}, \citet{labbi2024federated}, \citet{zheng2024federated}, and \citet{zhang2025regret}, with the last three attaining near-optimal regret bounds up to logarithmic factors.

\textbf{Gap-Dependent RL.}  
When a strictly positive suboptimality gap exists, logarithmic regret becomes achievable. Early studies establish asymptotic logarithmic regret bounds \citep{auer2007logarithmic, tewari2008optimistic}, while more recent work develops non-asymptotic bounds \citep{jaksch2010near, ok2018exploration, simchowitz2019non, he2021logarithmic}. Specifically, \citet{jaksch2010near} design a model-based algorithm whose regret depends on the policy gap rather than the action gap considered in this paper. \citet{ok2018exploration} derive problem-specific logarithmic-type lower bounds for both structured and unstructured MDPs, and \citet{simchowitz2019non} extend the model-based algorithm of \citet{zanette2019tighter} to obtain logarithmic regret bounds. More recently, \citet{chen2025sharp} further improve model-based gap-dependent results. Logarithmic regret bounds have also been established under linear function approximation \citep{he2021logarithmic}, and \citet{nguyen2023instance} provide gap-dependent guarantees for offline RL with linear function approximation.

For model-free algorithms, \citet{yang2021q} show that UCB-Hoeffding \citep{jin2018q} achieves a gap-dependent regret bound of $\tilde{O}(H^6SA/ \Delta_{\textnormal{min}})$. This bound is subsequently refined by \citet{xu2021fine} with the Adaptive Multi-step Bootstrap (AMB) algorithm. \citet{zheng2024gap} provide gap-dependent analyses for algorithms with reference-advantage decomposition \citep{zhang2020almost, li2023breaking}, and  \citet{zhang2025gapdependent} and \citet{zhang2025regret} extend gap-dependent analysis to federated $Q$-learning. More recently,  \citet{zhang2025q} provide the first fine-grained gap-dependent regret upper bound for UCB-Hoeffding.

For RL with linear function approximation, gap-dependent regret bounds have also been studied. For online RL, \citet{he2021logarithmic} provided the first gap-dependent regret bound, showing that LSVI-UCB \citep{jin2020provably} achieves $\tilde{O}(d^3 H^5 / \dmin)$ in linear MDPs, and UCRL-VTR~\citep{ayoub2020model} achieves $\tilde{O}(d^2 H^5 / \dmin)$ in linear mixture MDPs. \cite{papini2021reinforcement} established constant-regret guarantees for LSVI-UCB under the UNISOFT condition. Subsequently, \cite{zhang2024achieving} showed that Cert-LSVI-UCB can also achieve constant regret with high probability under their corresponding structural conditions, independent of the total number of steps $T$. More recently, \cite{zhang2026gap} established a gap-dependent analysis for the nearly minimax-optimal LSVI-UCB++ algorithm.

\textbf{RL with Low Switching Cost.}  
Research on low switching cost aims to minimize the number of policy switches while maintaining regret comparable to fully adaptive algorithms. \citet{bai2019provably} introduce this problem and propose two $Q$-learning algorithms with delayed updates and achieve a local switching cost of $\tilde{O}(SAH^3 \log T)$. This line of work is further advanced by \citet{zhang2020almost}, which improves both the regret upper bound and the local switching cost. More recently, \citet{qiao2022sample} propose a model-based algorithm with the global switching cost $\tilde{O}(\log \log T)$, and \citet{zhang2025regret} introduce Q-EarlySettled-LowCost, achieving the best-known regret among model-free algorithms while maintaining logarithmic global switching cost. \citet{gao2021provably} also provide a logarithmic global switching cost bound in RL with linear function approximation.

\section{Additional Numerical Experiments}
\label{experiment}

\subsection{GridWorld Experiments}
\label{exp:grid}
We also compare the four algorithms based on Hoeffding-type or Bernstein-type variance-based bonuses in the GridWorld environment, including UCB-Hoeffding, UCB-Bernstein, UCB2-Bernstein, and UCB-Bernstein+. In this environment, the agent navigates a grid to reach a target state.

Following a similar experimental setup to~\citet{labbi2024federated}, we consider two grid sizes, $3\times 3$ and $5\times 5$, each with a wall at coordinate $(1,1)$. This gives $S=8$ and $S=24$ valid states, respectively. The horizons are set to $H=10$ and $H=15$, and the numbers of episodes range up to $K=5\times10^4$ and $K=10^5$, respectively. The agent starts from coordinate $(0,0)$ and, at each step, chooses one of four actions: up, down, left, or right. Under the transition kernel, the agent moves to the intended neighboring cell with probability $0.8$ and to a uniformly random other valid neighboring cell with the remaining probability. If the intended action would move the agent to an invalid cell, the agent remains in its current cell with probability $1$. The target states are $(2,2)$ for the $3\times 3$ grid and $(4,4)$ for the $5\times 5$ grid, and each target state is absorbing. The per-step reward is $1$ if the realized next state is the target state and $0$ otherwise.

In these experiments, the algorithms observe both the realized next state and the realized reward. UCB-Hoeffding, UCB-Bernstein, and UCB2-Bernstein use the observed reward in each Q-update. For UCB-Bernstein+, the observed rewards are accumulated within each stage, and their empirical mean replaces $r_h(s,a)$ in the stage update in \Cref{update_q_bern}; the next-state value samples are averaged separately. The empirical variance used in the Bernstein-type bonuses is computed from next-state value samples only, without a separate reward-variance or reward--value covariance term. For these four algorithms, we adopt the same parameter choices as specified in~\Cref{mainexp}. Because the reward can be stochastic conditional on $(s,a)$, this experiment lies outside the deterministic-reward setting of our theoretical guarantees and serves as an empirical robustness check.

To quantify uncertainty, for each experiment scale, we run 10 independent trials for each algorithm on the same grid. In the figures, solid lines represent the median over the 10 trials, and shaded regions indicate the 10th--90th percentile intervals.
\begin{figure}[H]
    \centering
    \begin{subfigure}[b]{0.48\textwidth}
        \centering
        \includegraphics[width=\linewidth]{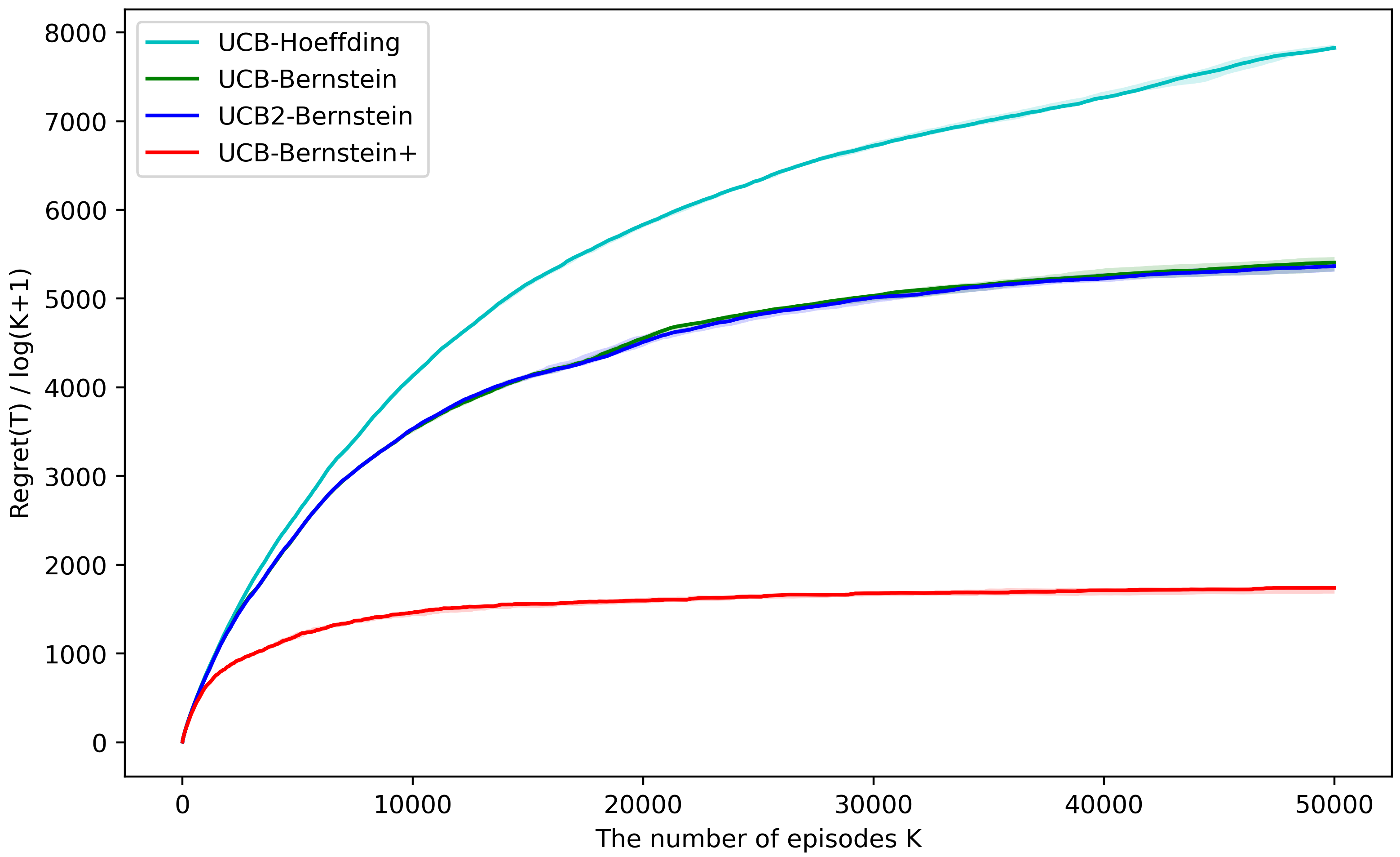}
        \caption{Regret on the $3\times 3$ GridWorld}
    \end{subfigure}
    \hfill
    \begin{subfigure}[b]{0.48\textwidth}
        \centering
        \includegraphics[width=\linewidth]{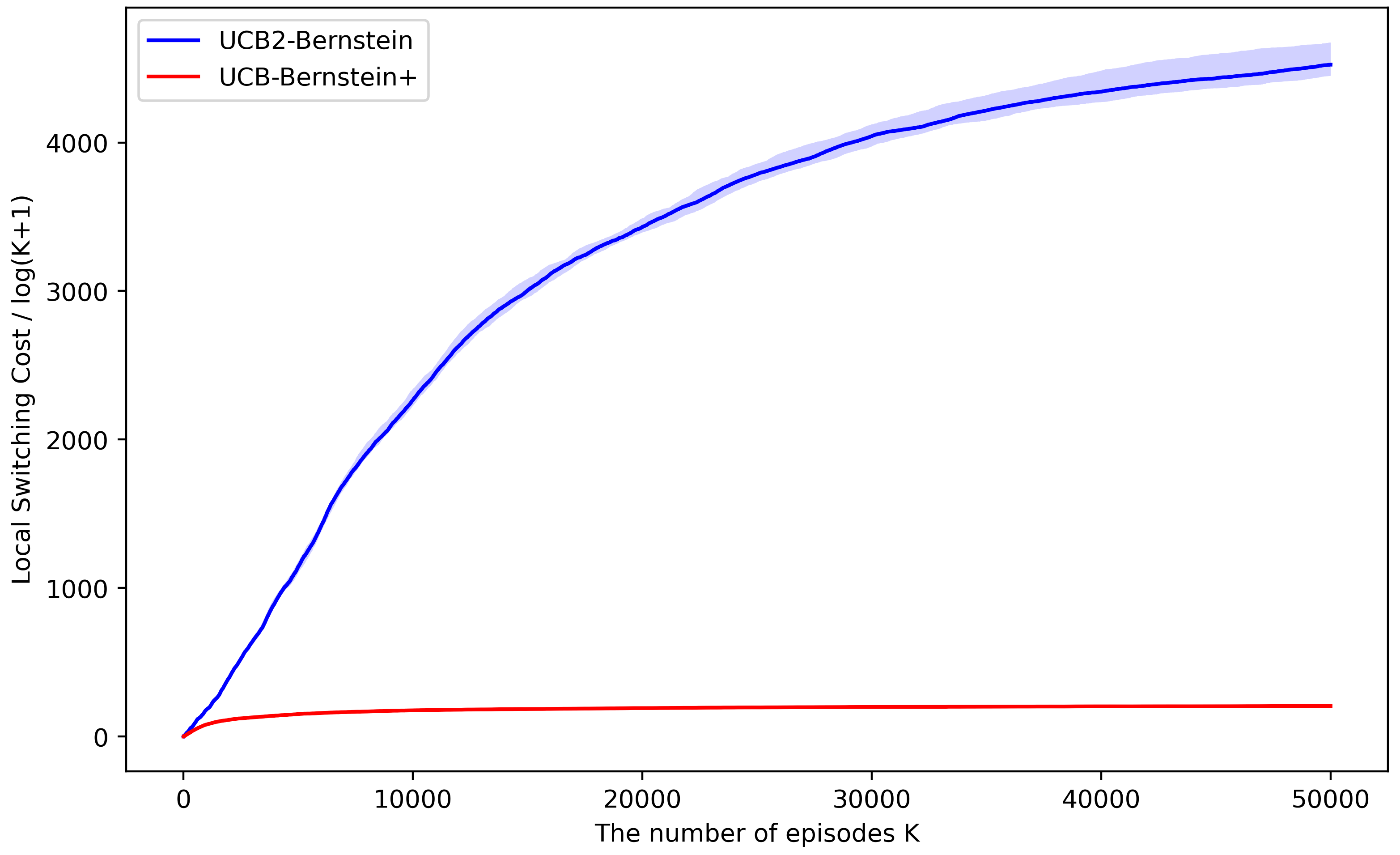}
        \caption{Local switching cost on the $3\times 3$ GridWorld}
    \end{subfigure}

    \vspace{0.3em}
    \begin{subfigure}[b]{0.48\textwidth}
        \centering
        \includegraphics[width=\linewidth]{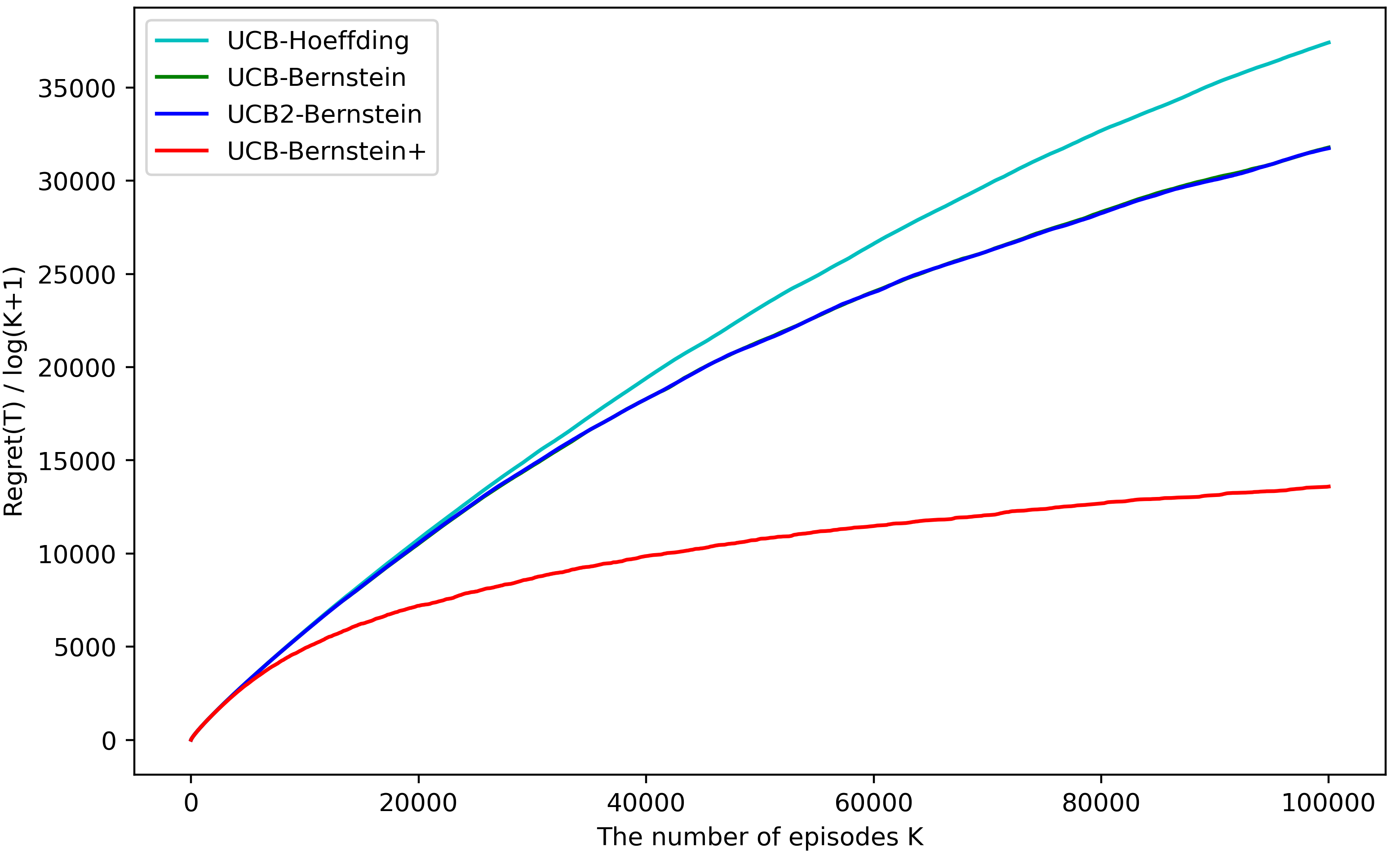}
        \caption{Regret on the $5\times 5$ GridWorld}
    \end{subfigure}
    \hfill
    \begin{subfigure}[b]{0.48\textwidth}
        \centering
        \includegraphics[width=\linewidth]{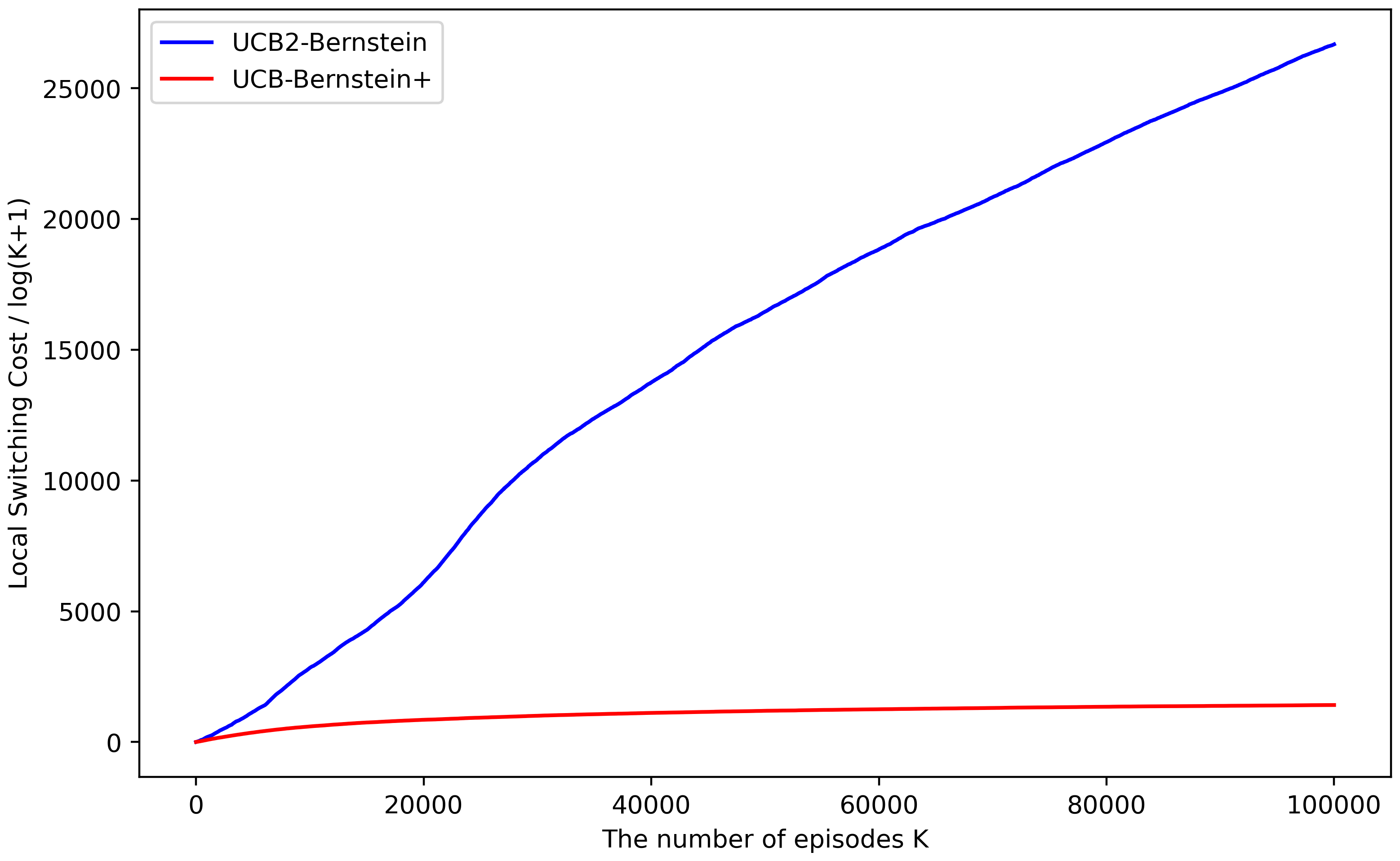}
        \caption{Local switching cost on the $5\times 5$ GridWorld}
    \end{subfigure}
    \caption{GridWorld results.}
    \label{fig:gridworld_comparison}
\end{figure}
\Cref{fig:gridworld_comparison} presents the GridWorld comparison results for the two grid sizes. Plots (a) and (c) show the normalized regret $\textnormal{Regret}(T)/\log(K+1)$ as a function of the number of episodes $K$ for each algorithm, while plots (b) and (d) show the normalized local switching cost $\textnormal{Local Switching Cost}/\log(K+1)$ as a function of $K$.

The empirical findings are consistent with the synthetic-environment results in \Cref{mainexp}. From the regret plots (a) and (c), UCB-Bernstein+ achieves the lowest regret among the four compared algorithms. Moreover, across the two experiment scales, its normalized regret curve becomes flatter as $K$ increases, suggesting approximately logarithmic cumulative regret growth. From the local switching-cost plots (b) and (d), UCB-Bernstein+ incurs a substantially lower normalized local switching cost than UCB2-Bernstein. Its normalized switching-cost curve also stabilizes as $K$ increases, suggesting approximately logarithmic growth of the cumulative local switching cost.

\subsection{Experimental Comparison with State-of-the-Art Baselines}
\label{exp:baseline}
In this subsection, we further evaluate the empirical performance of UCB-Bernstein+ by comparing it with two state-of-the-art model-free RL baselines in a synthetic environment: Q-EarlySettled-LowCost \citep{zhang2025regret}, which achieves the best-known regret guarantee, and UCB-Advantage \citep{zhang2020almost}, which achieves the best-known local switching cost guarantee.

Similar to the synthetic environment introduced in Appendix B.1 of \citet{zhang2025regret}, we consider two experiment scales with $(H,S,A,K) = (5,3,2,3\times10^5)$, and $(7,10,5,10^6)$. For each $(s,a,h)$, the reward $r_h(s,a)$ is sampled independently from the uniform distribution on $[0,1]$, and the transition kernel $\mathbb{P}_h(\cdot \mid s,a)$ is drawn uniformly from the probability simplex over S states. The initial state of each episode is sampled uniformly from the state space. We set $\iota = 1$ and use the bonus coefficients specified in Appendix~B.1 of \citet{zhang2025regret} for Q-EarlySettled-LowCost and UCB-Advantage. For UCB-Bernstein+, we still set $C_1=C_3=\sqrt{2}$ and $C_2=1$ in \Cref{betabern}.

\begin{figure}[H]
    \centering
    \begin{subfigure}[b]{0.49\textwidth}
        \centering
        \includegraphics[width=\linewidth]{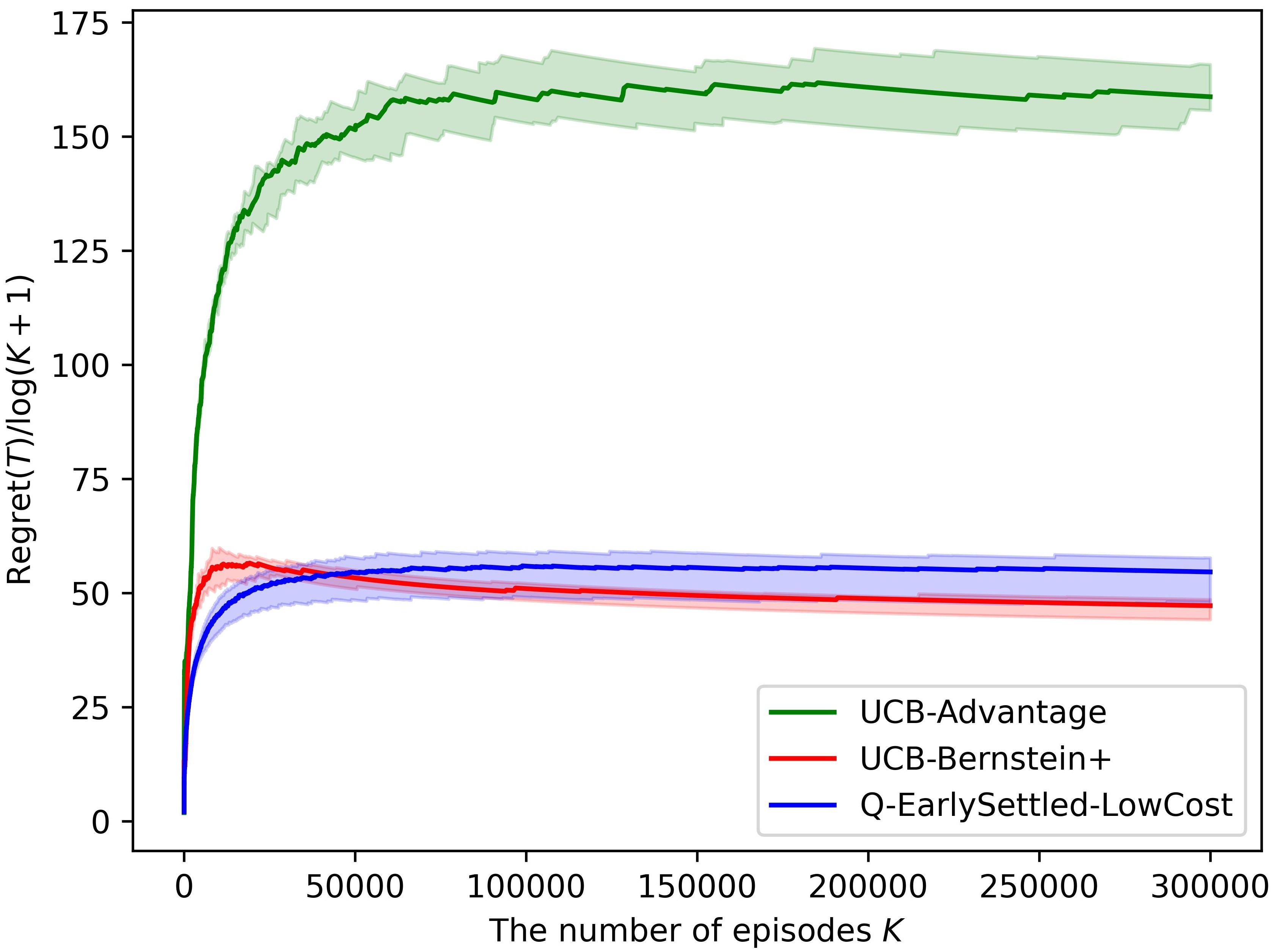}
        \caption{Regret for $(H,S,A)=(5,3,2)$}
    \end{subfigure}
    \hfill
    \begin{subfigure}[b]{0.49\textwidth}
        \centering
        \includegraphics[width=\linewidth]{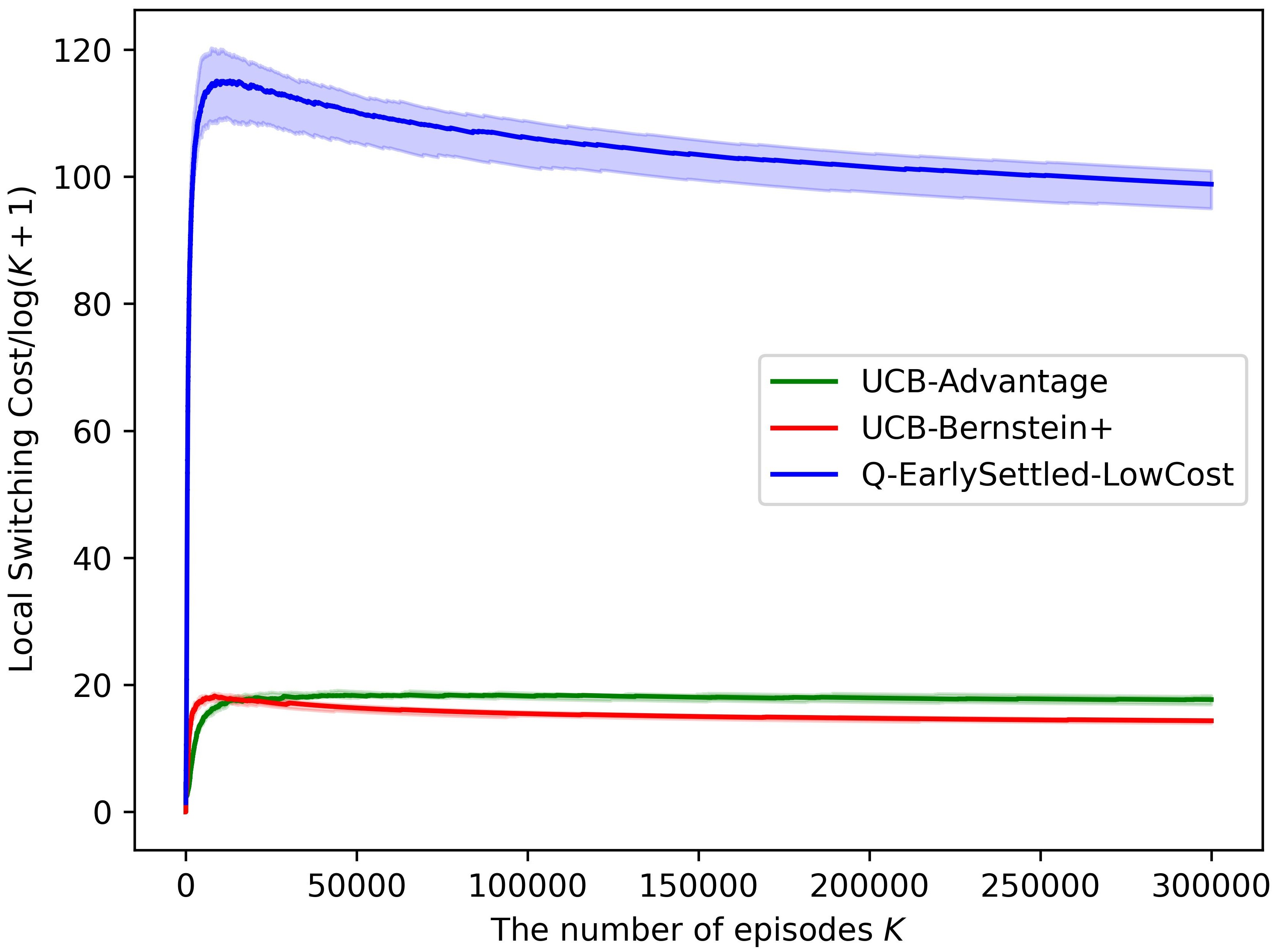}
        \caption{Local switching cost for $(H,S,A)=(5,3,2)$}
    \end{subfigure}

    \begin{subfigure}[b]{0.49\textwidth}
        \centering
        \includegraphics[width=\linewidth]{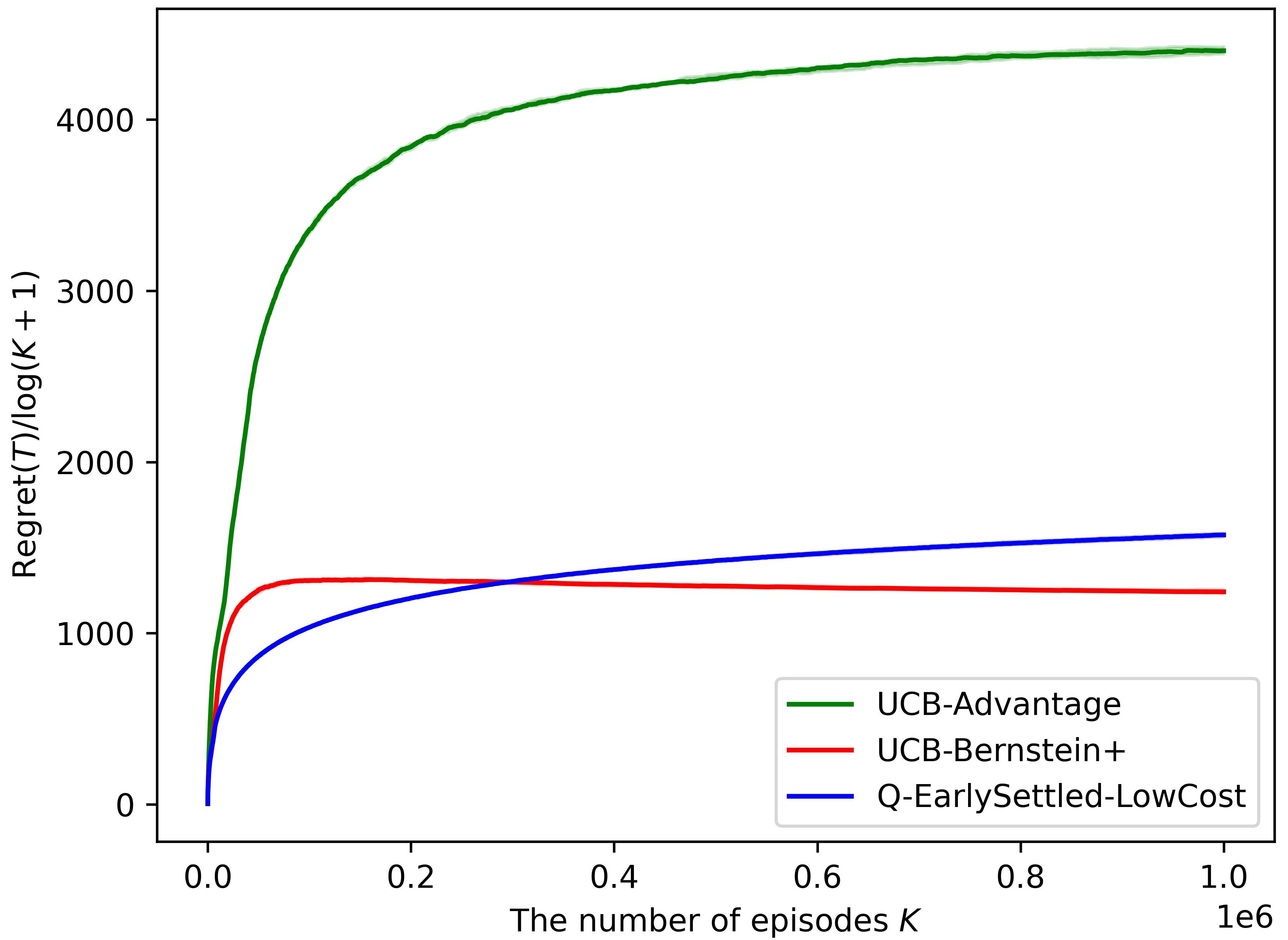}
        \caption{Regret for $(H,S,A)=(7,10,5)$}
    \end{subfigure}
    \hfill
    \begin{subfigure}[b]{0.49\textwidth}
        \centering
        \includegraphics[width=\linewidth]{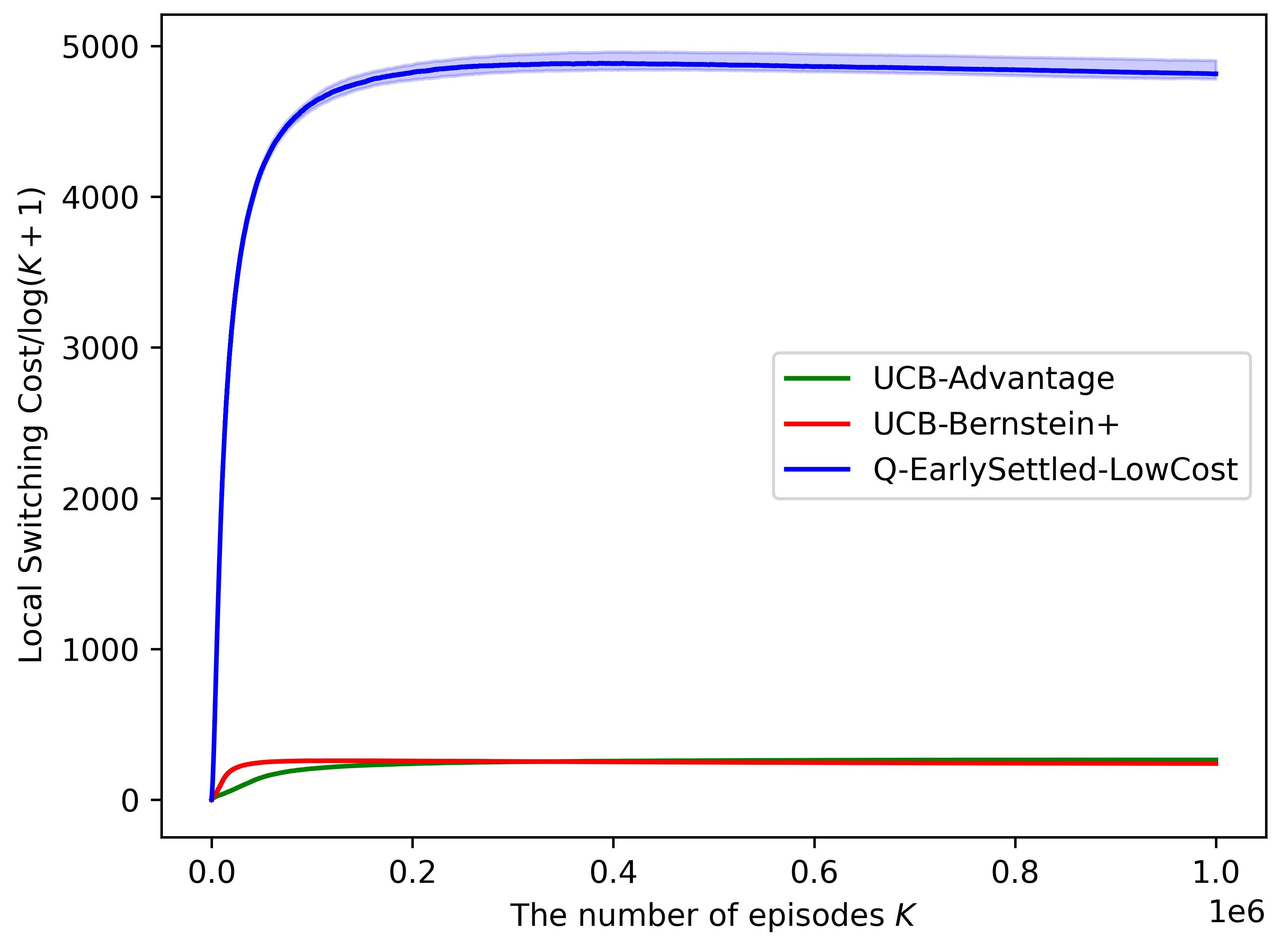}
        \caption{Local switching cost for $(H,S,A)=(7,10,5)$}
    \end{subfigure}
    \caption{Experimental comparison with state-of-the-art baselines.}
    \label{fig:baseline_comparison}
\end{figure}
To quantify uncertainty, for each experiment scale, we generate one MDP and run 10 independent trials for each algorithm on this fixed MDP. In the following figures, solid lines represent the median over the 10 trials, and shaded regions indicate the 10th--90th percentile intervals.

The results for the two experiment scales are presented in \Cref{fig:baseline_comparison}. We observe that UCB-Bernstein+ achieves better long-term regret performance than Q-EarlySettled-LowCost while incurring a significantly lower local switching cost. Compared with UCB-Advantage, UCB-Bernstein+ attains substantially better long-term regret performance while maintaining a comparable local switching cost. Overall, these results suggest that UCB-Bernstein+ offers a more favorable empirical trade-off between regret and local switching cost among the compared algorithms.

\section{Notation Tables}
\label{notation}
For readability, we summarize below the notation used throughout the algorithm and proofs. 
\begin{table}[H]
\centering
\caption{Basic MDP, value-function, and gap notation.}
\label{tab:notation-mdp}
\small
\renewcommand{\arraystretch}{1.1}
\setlength{\tabcolsep}{5pt}
\begin{tabular}{p{0.1\linewidth} p{0.85\linewidth}}
\toprule
\textbf{Notation} & \textbf{Meaning} \\
\midrule

$\mathcal S,\mathcal A$ &
State and action spaces, with $|\mathcal S|=S$ and
$|\mathcal A|=A$. \\

$H,K,T$ &
Horizon length, number of episodes, and total number of steps, respectively, where $T=KH$. \\

$P_h(\cdot|s,a)$ &
Transition distribution at step $h$ from state-action pair $(s,a)$. \\

$\mathbb P_{s,a,h}f$ &
Conditional expectation $\mathbb P_{s,a,h}f:=\mathbb E_{s'\sim P_h(\cdot\mid s,a)}[f(s')]$.
 \\

$\mathbb V_{s,a,h}(f)$ &
Conditional variance $\mathbb V_{s,a,h}(f):= \mathbb{P}_{s,a,h}f^2-(\mathbb{P}_{s,a,h}f)^2.$
 \\

$r_h(s,a)$ &
Reward associated with the state-action-step triple $(s,a,h)$. \\

$s_h^k,a_h^k$ &
State and action encountered at step $h$ of episode $k$. \\

$\pi^k$ &
Policy executed in episode $k$. \\

$V_h^\pi,Q_h^\pi$ &
State-value and action-value functions under policy $\pi$. \\

$V_h^\star,Q_h^\star$ &
Optimal state-value and action-value functions. \\

$\Delta_h(s,a)$ &
Suboptimality gap $\Delta_h(s,a):=V_h^\star(s)-Q_h^\star(s,a)$.
\\

$\Delta_{\min,h}$ &
Minimum positive suboptimality gap among state-action pairs at step
$h$. \\

$\Delta_{\min}$ &
Minimum positive suboptimality gap among all state-action-step
triples. \\

$ Z_{\text{opt},h}$ &
Set of optimal state-action pairs at step $h$: $Z_{\text{opt},h}
    :=
    \{(s,a)\in\mathcal S\times\mathcal A:
    \Delta_h(s,a)=0\}.$\\

$ Z_{\text{opt}}$ &
Set of optimal state-action-step triples: $ Z_{\text{opt}}:=\{(s,a,h):\Delta_h(s,a)=0\}.$\\

$ Z_{\text{opt}}^c$ &
Set of suboptimal state-action-step triples. \\

$\mathbb Q_h^\star$ &
Maximum conditional variance of the optimal next-step value:
$\mathbb Q_h^\star:=\max_{s,a}\mathbb V_{s,a,h}(V_{h+1}^\star).$
 \\

\bottomrule
\end{tabular}
\end{table}
\begin{table}[H]
\centering
\caption{Algorithmic quantities and stage-related notation.}
\label{tab:notation-algorithm}
\small
\renewcommand{\arraystretch}{1.1}
\setlength{\tabcolsep}{2pt}
\begin{tabular}{p{0.152\linewidth} p{0.827\linewidth}}
\toprule
\textbf{Notation} & \textbf{Meaning} \\
\midrule

$Q_h^k(s,a),V_h^k(s)$ &
Values of the algorithmic estimates $Q_h(s,a)$ and $V_h(s)$ at the
beginning of episode $k$. \\

$\pi_h^k(s)$ &
Greedy action induced by $Q_h^k$: $\pi_h^k(s)\in\arg\max_{a\in\mathcal A}Q_h^k(s,a).$
 \\

$\mu_h(s,a)$ &
Stage-wise accumulator of the next-step value estimates.\\

$\sigma_h(s,a)$ &
Stage-wise accumulator of the squared next-step value estimates.\\

$t_h^k(s,a)$ &
Index of the stage associated with triple $(s,a,h)$ at episode $k$.\\

$N_h^k(s,a)$ &
Total number of visits to $(s,a,h)$ in the first $t_h^k(s,a)-1$ completed stages for $k \in [K]$. \\

$N_h^{K+1}(s,a)$ & Total number of visits to $(s,a,h)$ over the first $K$ episodes.\\

$n_h^k(s,a)$ &
Number of visits to $(s,a,h)$ in the stage $t_h^k(s,a) -1$. \\

$W_h^k(s,a)$ &
Stage-wise empirical variance used to construct $Q_h^k(s,a)$. \\

$b_h^k(s,a)$ &
Exploration bonus associated with the update producing
$Q_h^k(s,a)$. \\

$\iota$ & 
Logarithmic confidence factor, typically $\iota=\log(2SAT/\delta)$. \\

$e_j$ &
Length of stage $j$, defined recursively by $e_1=H, e_{j+1}=\left\lfloor\left(1+\frac{1}{H}\right)e_j\right\rfloor.$ \\

$\mathcal L$ &
Set of stage endpoints:
$\mathcal L:=\left\{\sum_{j=1}^{m}e_j:m\in\mathbb N_+\right\}. $ \\

$e_{t_h^k(s,a)}$ &
Length of the stage indexed by $t_h^k(s,a)$. \\

$l_i(s,a,h,k)$ &
The index of the $i$-th visit to the state-action-step triple
$(s,a,h)$ within stage $t_h^k(s,a)-1$. \\

$k_{l_i(s,a,h,k)}$ &
Episode in which the sample indexed by $l_i(s,a,h,k)$ was collected.
\\

\bottomrule
\end{tabular}
\end{table}

\begin{table}[H]
\centering
\caption{Pseudo value functions and empirical operators used in the
variance analysis.}
\label{tab:notation-pseudo}
\small
\renewcommand{\arraystretch}{1.18}
\setlength{\tabcolsep}{4pt}
\begin{tabular}{p{0.12\linewidth} p{0.83\linewidth}}
\toprule
\textbf{Notation} & \textbf{Meaning} \\
\midrule

$\beta$ &
An analysis parameter in $(0,H]$ used to truncate the
value-estimation error. It is not an algorithmic tuning parameter. \\

$\hat V_h^k(s)$ &
Pseudo value function $\hat V_h^k(s):=\max\!\left\{V_h^\star(s),\min\!\left\{V_h^\star(s)+\beta,V_h^k(s)\right\}\right\}.$\\

$\hat{\mathbb E}_{h,s,a,k}[f]$ &
Empirical average of a function $f$ over the
$n_h^k(s,a)$ samples in the relevant stage:
\[
    \hat{\mathbb E}_{h,s,a,k}[f]
    :=
    \frac{1}{n_h^k(s,a)}
    \sum_{i=1}^{n_h^k(s,a)}
    f^{k_{l_i(s,a,h,k)}}
    \!\left(
        s_{h+1}^{k_{l_i(s,a,h,k)}}
    \right).
\] \\

$\hat{\mathbb V}_{h,s,a,k}[f]$ &
Corresponding stage-wise empirical variance:
$\hat{\mathbb V}_{h,s,a,k}[f]:=\hat{\mathbb E}_{h,s,a,k}[f^2]-\left(\hat{\mathbb E}_{h,s,a,k}[f]\right)^2.$\\

$\hat{\mathbb E}[f]$,
$\hat{\mathbb V}[f]$ &
Shorthand for
$\hat{\mathbb E}_{h,s,a,k}[f]$ and
$\hat{\mathbb V}_{h,s,a,k}[f]$ when $(h,s,a,k)$ is clear from
the context. \\

\bottomrule
\end{tabular}
\end{table}

\begin{table}[H]
\centering
\caption{Recursive weights for propagating cumulative weighted estimation errors across steps.}
\label{tab:notation-weights}
\small
\renewcommand{\arraystretch}{1.22}
\setlength{\tabcolsep}{4pt}
\begin{tabular}{p{0.17\linewidth} p{0.78\linewidth}}
\toprule
\textbf{Notation} & \textbf{Meaning} \\
\midrule

$\{\omega_h^k\}_{k=1}^K$ &
A nonnegative initial weight sequence assigned to the estimation
errors at the starting step $h$. \\

$\omega_h(k',h)$ &
Initial propagated weight at step $h$, defined by $\omega_h(k',h):=\omega_h^{k'}.$ \\

$\omega_h(k',h'+1)$ &
Weight assigned to the estimation error at episode $k'$ and step
$h'+1$ after recursively propagating the weights from step $h'$. It captures the contribution of the episode-$k'$, step-$(h'+1)$
estimation error when recursively bounding the cumulative weighted
estimation error from step $h'$ to step $h'+1$. \\

$\omega_h(k',h',s,a)$ &
State-action-specific propagated weight: $\omega_h(k',h',s,a):=\omega_h(k',h')\mathbb I[
(s_{h'}^{k'},a_{h'}^{k'})=(s,a)].$
It isolates the contribution associated with state-action pair
$(s,a)$ at step $h'$. \\

$\tilde\omega_h(k',h'+1,s,a)$ &
The state-action-specific propagated weight assigned to the
episode-$k'$, step-$(h'+1)$ estimation error when bounding the
cumulative weighted estimation error associated with $(s,a)$ at step
$h'$: $\tilde\omega_h(k',h'+1,s,a):=\omega_h(k',h'+1)\mathbb I[(s_{h'}^{k'},a_{h'}^{k'})=(s,a)].$ \\

$\|\omega_h(\cdot,h')\|_1$ &
The $\ell_1$-norm of the propagated weights at step $h'$:
$\|\omega_h(\cdot,h')\|_1:=\sum_{k'=1}^K\omega_h(k',h').$ \\

$\|\omega_h(\cdot,h')\|_\infty$ &
The $\ell_\infty$-norm of the propagated weights at step $h'$:\newline
$\|\omega_h(\cdot,h')\|_\infty:=\max_{k'\in[K]}\omega_h(k',h').$ \\

$\|\omega_h(\cdot,h',s,a)\|_1$ &
The state-action-specific $\ell_1$-norm of the propagated weights at
step $h'$:\newline
$\|\omega_h(\cdot,h',s,a)\|_1:=\sum_{k'=1}^K\omega_h(k',h',s,a).$ \\

$\|\omega_h(\cdot,h',s,a)\|_\infty$ &
The state-action-specific $\ell_\infty$-norm of the propagated weights
at step $h'$:\newline
$\|\omega_h(\cdot,h',s,a)\|_\infty:=\max_{k'\in[K]}\omega_h(k',h',s,a).$ \\

\bottomrule
\end{tabular}
\end{table}
\section{Auxiliary Lemmas}
\label{keylemma}
\begin{lemma}
\label{Azuma} \textnormal{(Azuma-Hoeffding inequality)} Suppose {$\left\{X_k\right\}_{k=0}^\infty$} is a martingale and $|X_k-X_{k-1}|\leq c_k$, $\forall k\in\mathbb{N}_+$ almost surely. Then for any positive integer $N$ and any $\epsilon > 0$, it holds that:
$$\mathbb{P}\left(|X_N-X_{0}|\geq \epsilon\right) \leq 2\exp \left(-\frac{\epsilon^2}{2\sum_{k=1}^{N}c_k^2}\right).$$
\end{lemma}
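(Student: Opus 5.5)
The plan is to prove the Azuma-Hoeffding inequality (Lemma \ref{Azuma}) by the standard Chernoff method applied to the martingale difference sequence $D_k := X_k - X_{k-1}$, $k\in[N]$. These increments satisfy $\mathbb{E}[D_k\mid\mathcal{F}_{k-1}]=0$ and $|D_k|\le c_k$ almost surely, where $\{\mathcal{F}_k\}$ is the filtration with respect to which $\{X_k\}$ is a martingale. By symmetry (apply the same argument to $-X_k$, which is also a martingale with the same increment bounds), it suffices to show the one-sided bound $\mathbb{P}(X_N - X_0 \ge \epsilon)\le \exp(-\epsilon^2/(2\sum_k c_k^2))$. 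A union bound over the two tails then gives the factor $2$.

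The first key step is the conditional version of Hoeffding's lemma. If $Y$ satisfies $\mathbb{E}[Y\mid\mathcal{G}]=0$ and $|Y|\le c$, then $\mathbb{E}[e^{\lambda Y}\mid\mathcal{G}]\le e^{\lambda^2c^2/2}$ for every $\lambda\in\mathbb{R}$. I would prove this using convexity of $y\mapsto e^{\lambda y}$ on $[-c,c]$:
$$e^{\lambda y}\le \frac{c-y}{2c}e^{-\lambda c}+\frac{c+y}{2c}e^{\lambda c}.$$
Taking conditional expectations and using the zero conditional mean gives $\mathbb{E}[e^{\lambda Y}\mid\mathcal{G}]\le\cosh(\lambda c)$. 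Comparing Taylor series termwise, via $(2m)!\ge 2^m m!$, gives $\cosh(\lambda c)\le e^{\lambda^2c^2/2}$.

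The second step is to iterate using the tower property. Since $X_{N-1}-X_0$ is $\mathcal{F}_{N-1}$-measurable,
$$\mathbb{E}\bigl[e^{\lambda(X_N-X_0)}\bigr]=\mathbb{E}\bigl[e^{\lambda(X_{N-1}-X_0)}\,\mathbb{E}[e^{\lambda D_N}\mid\mathcal{F}_{N-1}]\bigr]\le e^{\lambda^2c_N^2/2}\,\mathbb{E}\bigl[e^{\lambda(X_{N-1}-X_0)}\bigr].$$
Inducting down to $k=1$ gives $\mathbb{E}[e^{\lambda(X_N-X_0)}]\le\exp(\lambda^2\sum_{k=1}^N c_k^2/2)$. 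Markov's inequality then yields
$$\mathbb{P}(X_N-X_0\ge\epsilon)\le\exp\Bigl(-\lambda\epsilon+\lambda^2\textstyle\sum_k c_k^2/2\Bigr),$$
and the choice $\lambda=\epsilon/\sum_k c_k^2$ gives the claimed exponent.

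The only mildly delicate point is the conditional Hoeffding lemma, specifically the inequality $\cosh x\le e^{x^2/2}$ and the measurability needed for the tower step. Both are routine. The degenerate case $\sum_k c_k^2=0$ is trivial, since then $X_N=X_0$ almost surely.
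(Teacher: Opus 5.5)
Your proof is correct and complete. It is the standard Chernoff-method argument: a conditional Hoeffding lemma via the convexity interpolation and $\cosh x\le e^{x^2/2}$, peeling off increments with the tower property, Markov's inequality with $\lambda=\epsilon/\sum_{k}c_k^2$, and a union bound for the factor $2$. The almost-sure bound $|X_N-X_0|\le\sum_k c_k$ handles integrability of the exponential moments.

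The paper gives no proof of this lemma; it quotes the classical Azuma--Hoeffding inequality and uses it as a black box in Lemma~\ref{concentrations}. One cosmetic point: your interpolation step divides by $2c_k$, so an individual $c_k=0$ needs separate treatment. That case is trivial, since then $D_k=0$ almost surely.
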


\begin{lemma}
\label{Freedman}
\textnormal{(Freedman's inequality, Theorem EC.1 in \citet{li2024q}.)} Consider a filtration \( \mathcal{F}_0 \subset \mathcal{F}_1 \subset \mathcal{F}_2 \subset \cdots \), and let \( \mathbb{E}_k \) stand for the expectation conditioned on \( \mathcal{F}_k \). Suppose that $Y_n = \sum_{k=1}^n X_k \in \mathbb{R}$, where \( \{X_k\} \) is a real-valued scalar sequence adapted to \( \{\mathcal{F}_k\} \) and obeying
\[
|X_k| \leq R \quad \text{and} \quad \mathbb{E}_{k-1}[X_k] = 0 \quad \text{for all } k \geq 1
\]
for some quantity \( R < \infty \). We also define
$W_n := \sum_{k=1}^n \mathbb{E}_{k-1}[X_k^2].$ In addition, suppose that \( W_n \leq \sigma^2 \) holds deterministically for some given quantity \( \sigma^2 < \infty \). Then for any positive integer \( m \geq 1 \), with probability at least \( 1 - \delta \), one has
\[
|Y_n| \leq \sqrt{8 \max \left\{ W_n, \frac{\sigma^2}{2^m} \right\} \log \frac{2m}{\delta} } + \frac{4}{3} R \log \frac{2m}{\delta}.
\]
\end{lemma}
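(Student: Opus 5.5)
We prove the bound by the standard exponential-supermartingale argument and then a peeling step over the random predictable variance $W_n$; the $\log\frac{2m}{\delta}$ factor is exactly the price of a union bound over $m$ dyadic variance levels and both signs. It suffices to bound the upper tail $\mathbb{P}(Y_n \ge \cdot)$ with failure probability $\delta/2$, since applying the same argument to $-X_k$ gives the lower tail.

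\textbf{Step 1 (one-sided bound at a fixed variance level).} For $\lambda\in(0,3/R)$ and $|X_k|\le R$, we use the Bernstein moment bound $e^{x}-1-x\le \frac{x^2}{2(1-x/3)}$ for $x<3$, together with $\mathbb{E}_{k-1}[X_k]=0$. This gives
$$\mathbb{E}_{k-1}\big[e^{\lambda X_k}\big]\le \exp\big(g(\lambda)\,\mathbb{E}_{k-1}[X_k^2]\big),\qquad g(\lambda):=\frac{\lambda^2}{2(1-\lambda R/3)}.$$
Hence $M_t:=\exp(\lambda Y_t-g(\lambda)W_t)$ is a nonnegative supermartingale with $\mathbb{E}M_n\le 1$. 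Fix a level $v>0$ and set $L:=\log\frac{2m}{\delta}$. By Markov's inequality,
$$\mathbb{P}\big(Y_n\ge g(\lambda)v+L/\lambda,\ W_n\le v\big)\le e^{-L}.$$
Optimizing over $\lambda$ (the usual Bernstein choice $\lambda = \sqrt{2L/v}\,/\,(1+\tfrac{R}{3}\sqrt{2L/v})$) yields the threshold $\sqrt{2vL}+\frac{R}{3}L$. So for each fixed $v$,
$$\mathbb{P}\big(Y_n\ge \sqrt{2vL}+\tfrac{R}{3}L,\ W_n\le v\big)\le \frac{\delta}{2m}.$$

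\textbf{Step 2 (peeling).} Since $W_n\le\sigma^2$ deterministically, the sample space is covered by the $m$ events
$$E_0=\{W_n\le \sigma^2/2^{m-1}\},\qquad E_j=\{\sigma^2/2^{m-j}<W_n\le \sigma^2/2^{m-j-1}\},\quad j=1,\dots,m-1.$$
On $E_0$ we apply Step 1 with $v=\sigma^2/2^{m-1}=2\cdot\sigma^2/2^m$. On $E_j$ with $j\ge1$ we apply Step 1 with $v=\sigma^2/2^{m-j-1}<2W_n$. In every case $v\le 2\max\{W_n,\sigma^2/2^m\}$, so
$$\sqrt{2vL}\le\sqrt{4\max\{W_n,\sigma^2/2^m\}L}\le\sqrt{8\max\{W_n,\sigma^2/2^m\}L}.$$
A union bound over the $m$ levels gives the upper tail with probability at least $1-\delta/2$, with additive term $\frac{R}{3}L\le\frac43 RL$. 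Combining with the symmetric lower tail gives the stated inequality with probability at least $1-\delta$.

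\textbf{Main obstacle.} The delicate point is Step 1. Because $W_n$ is random, we cannot choose $\lambda$ depending on $W_n$ directly; this is why we fix $v$ before choosing $\lambda$, and it forces the peeling in Step 2. We also need the sharper Bernstein moment bound rather than the cruder $e^x\le 1+x+x^2$ (valid for $x\le 1$) to recover the linear term $\tfrac43 R L$. The stated constants $8$ and $\tfrac43$ are loose relative to what the argument gives, so there is slack to absorb the rounding in the choice of $\lambda$ and the factor $2$ lost in peeling.
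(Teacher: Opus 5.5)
Your proof is correct. The paper does not prove this lemma; it imports Theorem EC.1 of \citet{li2024q} verbatim. Your argument is the standard route to it: a tail bound at a fixed variance level, then a union bound over the $m$ dyadic levels of the predictable, deterministically bounded $W_n$ and over the two signs. That union bound is exactly where $\log\frac{2m}{\delta}$ comes from.

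The usual derivation of this exact form starts instead from Freedman's original inequality $\mathbb{P}(|Y_n|\ge\tau,\ W_n\le v)\le 2\exp\big(-\frac{\tau^2/2}{v+R\tau/3}\big)$. It then solves the resulting quadratic crudely with $\tau=\sqrt{4vL}+\frac43 RL$, which is where the constants $8$ and $\frac43$ come from. You instead derive the fixed-level bound from the supermartingale $\exp(\lambda Y_t-g(\lambda)W_t)$. This makes the argument self-contained and gives the sharper $\sqrt{4\max\{W_n,\sigma^2/2^m\}L}+\frac{R}{3}L$.

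One slip in Step 1 needs fixing. Markov's inequality applied to $M_n$ on $\{W_n\le v\}$ gives $\mathbb{P}\big(Y_n\ge \frac{g(\lambda)v+L}{\lambda},\ W_n\le v\big)\le e^{-L}$. So the threshold is $\frac{g(\lambda)}{\lambda}v+\frac{L}{\lambda}$, not $g(\lambda)v+\frac{L}{\lambda}$; the latter does not follow and is false in general, for example for small $\lambda$ and large $v$. Your optimized threshold $\sqrt{2vL}+\frac{R}{3}L$ is what the corrected expression gives at your choice of $\lambda$, since there $g(\lambda)/\lambda=\sqrt{L/(2v)}$. So nothing downstream changes.

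Finally, your choice of $\lambda$ requires $v>0$. The case $\sigma^2=0$ is trivial, since then $\mathbb{E}_{k-1}[X_k^2]=0$ forces $X_k=0$ almost surely.
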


\section{Basic Properties of UCB-Bernstein+}
\label{property}
Let $Q_h^k(s,a)$ and $V_h^k(s)$ denote the values of $Q_h(s,a)$ and $V_h(s)$ at the beginning of the $k$-th episode, respectively. We note that the definition of stages is with respect to each triple $(s,a,h)$. For every $(s,a,h,k)$, define $t_h^k(s,a)$ as the unique stage index $j\geq 1$ such that the number of visits to $(s,a,h)$ before episode $k$ lies in the interval $[\sum_{i=1}^{j-1}e_i,\sum_{i=1}^{j}e_i)$. Thus, if $(s_h^k,a_h^k)=(s,a)$, the sample collected at step $h$ of episode $k$ belongs to stage $t_h^k(s,a)$. When there is no ambiguity, we write $t_h^k$ for short.

Let $N_h^k(s,a)$ denote the total number of visits to $(s,a,h)$ in the first $t_h^k(s,a)-1$ completed stages, and let $n_h^k(s,a)$ denote the total number of visits to $(s,a,h)$ in stage $t_h^k(s,a)-1$. Separately, for notational convenience, $N_{h}^{K+1}(s,a)$ denotes the total number of visits to $(s,a,h)$ over the first $K$ episodes. We also let $l_i(s,a,h,k)$, $i \in [n_h^k(s,a)]$, denote the $i$-th visit to $(s,a,h)$ during stage $t_h^k(s,a)-1$, and let $k_{l_i(s,a,h,k)}$ be the corresponding episode index of the visit $l_i(s,a,h,k)$. When $(s,a,h,k)$ is clear from the context, we use $l_i$ and $k_{l_i}$ for simplicity.

Throughout the appendices, whenever a summand contains the indicator $\mathbb{I}[t_h^k(s,a)>1]$ together with quantities associated with the preceding stage, such as $n_h^k(s,a)$ or $k_{l_i}$, the summand is understood to be evaluated only when $t_h^k(s,a)>1$ and to be zero otherwise. Equivalently, such sums may be restricted to indices satisfying $t_h^k(s,a)>1$.

For any given step $h$ and non-negative weight sequence $\{\omega_h^k\}_{k=1}^K$, we define the following weights for any $k' \in [K],\ h \leq h' < H$:
\begin{align}
\label{recurweight}
&\omega_h(k',h) := \omega_h^{k'};\notag \\
&\omega_{h}(k',h'+1) :=  \sum_{k=1}^K\omega_h(k,h')\frac{\sum_{i=1}^{n_{h'}^k(s_{h'}^{k},a_{h'}^{k})}\mathbb{I}\left[k_{l_i(s_{h'}^{k},a_{h'}^{k},h',k)}=k',t_{h'}^{k}(s_{h'}^{k},a_{h'}^{k})>1\right]}{n_{h'}^k(s_{h'}^{k},a_{h'}^{k})}
\end{align}
with the norms
\begin{equation}
\label{weightnorm}
\|\omega_{h}(\cdot,h')\|_{\infty} := \max_{k' \in [K]} \omega_{h}(k',h'),\ \|\omega_{h}(\cdot,h')\|_{1} := \sum_{k'=1}^K \omega_{h}(k',h').\end{equation}
The weight $\omega_h(k',h'+1)$ captures the contribution of the term $(Q_{h'+1}^{k'} - Q_{h'+1}^\star)(s_{h'+1}^{k'}, a_{h'+1}^{k'})$ when recursively bounding the cumulative weighted estimation error from step $h'$ to $h'+1$ as shown in the second conclusion of Lemma \ref{recurQb} later.

For each state-action pair $(s, a)$, we define the state-action specific weight at any step $h \leq h'\leq H$ as
$\omega_{h}(k',h', s, a) := \omega_{h}(k',h') \cdot \mathbb{I}[(s_{h'}^{k'}, a_{h'}^{k'}) = (s, a)]$
with the corresponding norms given by 
$$
    \|\omega_h(\cdot,h', s, a)\|_{\infty} := \max_{{k'} \in [K]} \omega_{h}({k'},h', s, a),\ 
\|\omega_h(\cdot,h', s, a)\|_{1} := \sum\nolimits_{{k'}=1}^K \omega_{h}({k'},h', s, a).$$
Additionally, for any state-action pair $(s, a)$, we also define $$\tilde{\omega}_{h}(k',h'+1, s, a) = \omega_{h}(k',h'+1)\mathbb{I}[(s_{h'}^{k'}, a_{h'}^{k'}) = (s, a)].$$ The weight $\tilde{\omega}_h(k',h'+1,s,a)$ characterizes the weight of the term $(Q_{h'+1}^{k'} - Q_{h'+1}^\star)(s_{h'+1}^{k'}, a_{h'+1}^{k'})$ when bounding the cumulative weighted estimation error for each state-action pair $(s, a)$ at step $h'$ as shown in the first conclusion of Lemma \ref{recurQb} later. For convenience, we set $\omega_h(k,H+1)=0$ and $\widetilde{\omega}_h(k,H+1,s,a)=0$ for all $k\in[K]$ and $(s,a)\in\mathcal{S}\times\mathcal{A}$.
\begin{lemma}
\label{weight}
For any given non-negative weight sequence $\{\omega_h^k\}_{k \in [K]}$ at step $h$, the following relationships hold for any $k' \in [K]$ and $h \leq h' <H$:
\begin{itemize}
    \item[(a)] $\sum_{s,a} \tilde{\omega}_{h}(k',h'+1, s, a) = \omega_{h}(k',h'+1)$.
    \item[(b)] $\|\omega_h(\cdot,h', s, a)\|_{\infty} \leq \|\omega_h(\cdot,h')\|_{\infty}$.
    \item[(c)]$\|\omega_h(\cdot,h',s,a)\|_{1}  \leq \|\omega_h(\cdot,h')\|_{\infty}N_{h'}^{K+1}(s,a)$.
    \item[(d)] $\|\omega_h(\cdot,h')\|_{1} = \sum_{s,a} \|\omega_h(\cdot,h',s,a)\|_{1}$.
    \item[(e)]
$\|\omega_h(\cdot,h'+1)\|_{\infty}\leq \left(1+\frac{1}{H}\right)\|\omega_h(\cdot,h')\|_{\infty},\ \|\omega_h(\cdot,h'+1)\|_{1} \leq \|\omega_h(\cdot,h')\|_{1}.$
\end{itemize}
\end{lemma}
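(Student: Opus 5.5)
Parts (a)--(d) follow directly from the definitions, and part (e) needs one structural fact about stages; I will take them in order.

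For (a), fix $k'$ and $h'$. Exactly one pair $(s,a)$ satisfies $(s_{h'}^{k'},a_{h'}^{k'})=(s,a)$. Hence $\sum_{s,a}\mathbb{I}[(s_{h'}^{k'},a_{h'}^{k'})=(s,a)]=1$, and summing $\tilde\omega_h(k',h'+1,s,a)=\omega_h(k',h'+1)\mathbb{I}[\cdot]$ over $(s,a)$ returns $\omega_h(k',h'+1)$. Part (d) is the same identity applied to $\omega_h(k',h',s,a)$, summed over $k'$ and exchanged with the sum over $(s,a)$. For (b), each $\omega_h(k',h',s,a)$ is either $\omega_h(k',h')$ or $0$, and all weights are nonnegative, which is clear inductively from \eqref{recurweight}. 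For (c), $\omega_h(k',h',s,a)$ is nonzero only for episodes $k'$ with $(s_{h'}^{k'},a_{h'}^{k'})=(s,a)$. There are at most $N_{h'}^{K+1}(s,a)$ such episodes, and each term is at most $\|\omega_h(\cdot,h')\|_\infty$.

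For the $\ell_1$ claim in (e), I sum \eqref{recurweight} over $k'$ and swap the sums. For each $k$ with $t_{h'}^k>1$, the inner sum $\sum_{k'}\sum_{i=1}^{n_{h'}^k}\mathbb{I}[k_{l_i}=k']/n_{h'}^k$ equals $1$. This is because the episodes $k_{l_i}$, $i\in[n_{h'}^k]$, are distinct, since a triple $(s,a,h')$ is visited at most once per episode. Hence $\|\omega_h(\cdot,h'+1)\|_1=\sum_k\omega_h(k,h')\mathbb{I}[t_{h'}^k>1]\le\|\omega_h(\cdot,h')\|_1$.

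The $\ell_\infty$ claim in (e) is the main step. Fix $k'$ and let $(s,a)=(s_{h'}^{k'},a_{h'}^{k'})$. The only $k$ contributing to $\omega_h(k',h'+1)$ are those with $(s_{h'}^k,a_{h'}^k)=(s,a)$ whose previous completed stage $t_{h'}^k-1$ contains the visit made in episode $k'$. Call that stage $j$; it is the stage containing episode $k'$'s sample. Each such $k$ contributes at most $\|\omega_h(\cdot,h')\|_\infty/e_j$, because $n_{h'}^k=e_j$ when stage $j$ is completed and exactly one index $i$ has $k_{l_i}=k'$. So it suffices to count these $k$. They are exactly the visits to $(s,a,h')$ that fall in stage $j+1$, and there are at most $e_{j+1}$ of them. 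Therefore $\omega_h(k',h'+1)\le \|\omega_h(\cdot,h')\|_\infty\, e_{j+1}/e_j$. Since $e_{j+1}=\lfloor(1+1/H)e_j\rfloor\le(1+1/H)e_j$, this gives the claim.

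The delicate point is bookkeeping the indexing: one must check that $t_{h'}^k-1=j$ precisely for the visits during stage $j+1$, given the convention that the update happens at the stage endpoint and that $N$ counts visits before episode $k$. One must also check that a possibly incomplete final stage only lowers the count.
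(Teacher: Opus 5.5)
Your proof is correct and follows the paper's argument essentially step by step: (a)--(d) come from the indicator identities, and (e) is obtained by swapping sums for the $\ell_1$ bound and, for the $\ell_\infty$ bound, by counting the at most $e_{j+1}$ stage-$(j+1)$ visits that reuse episode $k'$'s stage-$j$ sample, each with weight $1/e_j$. One minor point: the $\ell_1$ normalization $\sum_{k'}\sum_{i}\mathbb{I}[k_{l_i}=k']/n_{h'}^k=1$ holds simply because each index $i$ picks out exactly one $k'$, so distinctness of the episodes $k_{l_i}$ is only needed in the $\ell_\infty$ step.
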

\begin{proof}[Proof of Lemma \ref{weight}]
(a) is because $$\sum_{s,a} \tilde{\omega}_{h}(k',h'+1, s, a) = \sum_{s,a} \omega_{h}(k',h'+1)\mathbb{I}[(s_{h'}^{k'}, a_{h'}^{k'}) = (s, a)] = \omega_{h}(k',h'+1).$$
(b) is because for any $k' \in [K]$
$$\omega_{h}(k',h', s, a) = \omega_{h}(k',h') \cdot \mathbb{I}\left[(s_{h'}^{k'}, a_{h'}^{k'}) = (s, a)\right]\leq \omega_{h}(k',h') \leq \|\omega_h(\cdot,h')\|_{\infty}.$$
(c) is because 
$$\|\omega_h(\cdot,h',s,a)\|_{1} \leq \|\omega_h(\cdot,h')\|_{\infty}\sum_{k'=1}^K  \mathbb{I}\left[(s_{h'}^{k'}, a_{h'}^{k'}) = (s, a)\right]  = \|\omega_h(\cdot,h')\|_{\infty}N_{h'}^{K+1}(s,a).$$
(d) is because
$$\sum_{s,a} \|\omega_h(\cdot,h',s,a)\|_{1} = \sum_{s,a} \sum_{k'=1}^K \omega_{h}(k',h') \cdot \mathbb{I}[(s_{h'}^{k'}, a_{h'}^{k'}) = (s, a)] = \sum_{k'=1}^K \omega_{h}(k',h') = \|\omega_h(\cdot,h')\|_{1}.$$
For (e), note that
$\sum_{i=1}^{n_{h'}^k}\mathbb{I}\left[k_{l_i}=k^{\prime},t_{h'}^{k}>1\right] = 1$
if and only if 
$(s_{h'}^{k},a_{h'}^{k}) = (s_{h'}^{k^{\prime}},a_{h'}^{k^{\prime}})$ and  $1<t_{h'}^k(s_{h'}^{k},a_{h'}^{k}) = t_{h'}^{k^{\prime}}(s_{h'}^{k},a_{h'}^{k})+1.$ In this case, we have $t_{h'}^{k^{\prime}}(s_{h'}^{k},a_{h'}^{k}) >0 \textnormal{ and } n_{h'}^k(s_{h'}^{k},a_{h'}^{k}) = e_{t_{h'}^{k^{\prime}}}.$ Then for any given $k^{\prime}$, it holds that:
\begin{align*}
&\sum_{k=1}^K\sum_{i=1}^{n_{h'}^k}\mathbb{I}\left[k_{l_i}=k^{\prime},t_{h'}^{k}>1\right]\\
&= \sum_{k=1}^K\mathbb{I}\left[(s_{h'}^{k},a_{h'}^{k}) = (s_{h'}^{k^{\prime}},a_{h'}^{k^{\prime}}), t_{h'}^k(s_{h'}^{k},a_{h'}^{k}) = t_{h'}^{k^{\prime}}(s_{h'}^{k},a_{h'}^{k})+1\right] \leq e_{t_{h'}^{k^{\prime}}+1}.
\end{align*} 
Therefore, we have:
\begin{align}
\label{inftygoal}
\sum_{k=1}^K\frac{\sum_{i=1}^{n_{h'}^k}\mathbb{I}[k_{l_i}=k^{\prime},t_{h'}^{k}>1]}{n_{h'}^k(s_{h'}^{k},a_{h'}^{k})} 
\leq\frac{e_{t_{h'}^{k^{\prime}}+1}}{e_{t_{h'}^{k^{\prime}}}}\leq 1+\frac{1}{H}.
\end{align}
Then according to the definition of $\|\omega_h(\cdot,h'+1)\|_{\infty}$, we have:
$$\|\omega_h(\cdot,h'+1)\|_{\infty} \leq \|\omega_h(\cdot,h')\|_{\infty}\max_{{k'} \in [K]}\sum_{k=1}^K\frac{\sum_{i=1}^{n_{h'}^k}\mathbb{I}[k_{l_i}=k^{\prime},t_{h'}^{k}>1]}{n_{h'}^k(s_{h'}^{k},a_{h'}^{k})} \\
\leq \left(1+\frac{1}{H}\right)\|\omega_h(\cdot,h')\|_{\infty}.$$
We also have:
\begin{align*}
   \|\omega_h(\cdot,h'+1)\|_1
   &= \sum_{k^{\prime}=1}^K\sum_{k=1}^K\omega_h(k,h')\sum_{i=1}^{n_{h'}^k}\frac{\mathbb{I}[k_{l_i}=k^{\prime},t_{h'}^{k}>1]}{n_{h'}^k(s_{h'}^{k},a_{h'}^{k})} \\&= \sum_{k=1}^K\omega_h(k,h')\sum_{i=1}^{n_{h'}^k}\sum_{k^{\prime}=1}^K\frac{\mathbb{I}[k_{l_i}=k^{\prime},t_{h'}^{k}>1]}{n_{h'}^k(s_{h'}^{k},a_{h'}^{k})} \\
   &\leq \|\omega_h(\cdot,h')\|_1.
\end{align*}
\end{proof}

\begin{lemma}
\label{wN} For any non-negative weight sequence $\left\{\omega_{h}^{k}\right\}_{k}$ at step $h \in [H]$, any subsequent step $h' \in [h,H]$, any state-action pair $(s,a) \in \mathcal{S} \times \mathcal{A}$, and any $\alpha \in (0,1)$, it holds that:
\begin{equation*}
    \sum_{k=1}^K\frac{\omega_{h}(k,h',s,a)\mathbb{I}\left[t_{h'}^k(s_{h'}^k,a_{h'}^k) > 1\right]}{n_{h'}^k(s_{h'}^k,a_{h'}^k)^{\alpha}} \leq \frac{(4H)^\alpha}{1-\alpha}\|\omega_h(\cdot,{h'})\|_{\infty}^{\alpha}\|\omega_h(\cdot,{h'},s,a)\|_{1}^{1-\alpha},
\end{equation*}
and
\begin{equation*}
    \sum_{k=1}^K\frac{\omega_{h}(k,h')\mathbb{I}\left[t_{h'}^k(s_{h'}^k,a_{h'}^k) > 1\right]}{n_{h'}^k(s_{h'}^k,a_{h'}^k)^{\alpha}}\leq \frac{(4H)^\alpha}{1-\alpha}(SA\|\omega_h(\cdot,h')\|_{\infty})^{\alpha}\|\omega_h(\cdot,h')\|_{1}^{1-\alpha},
\end{equation*}
In particular, for $\alpha = 1$ and any $h \in [H]$, we have the following conclusion:
\begin{equation*}
    \sum_{k=1}^K\frac{\mathbb{I}\left[t_{h}^k(s_{h}^k,a_{h}^k) > 1\right]}{n_h^k(s_h^{k},a_h^{k})} \leq 10HSA\iota.
\end{equation*}
\end{lemma}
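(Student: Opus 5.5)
The proof treats each $(s,a)$ separately. It converts the stage-wise denominators $n_{h'}^k$ into the global visit index, and then uses a rearrangement (``push the mass forward'') argument that exploits $\omega_h(k,h',s,a)\le\|\omega_h(\cdot,h')\|_\infty$ from Lemma~\ref{weight}(b).

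\textbf{Step 1: reduce the denominator to a visit count.} Fix $(s,a)$ and consider the episodes $k$ with $(s_{h'}^k,a_{h'}^k)=(s,a)$ and $t_{h'}^k=j>1$. Each such episode has $n_{h'}^k=e_{j-1}$, and there are at most $e_j$ of them. Order all visits to $(s,a,h')$ chronologically by a global index $m$. A visit in stage $j$ has $m\le\sum_{i\le j}e_i$.

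I will prove the elementary stage inequality
\[
\sum_{i\le j}e_i\le 4H\,e_{j-1}\qquad (j\ge 2).
\]
Two facts give it. First, $e_j\le(1+1/H)e_{j-1}\le 2e_{j-1}$. Second, $\sum_{i\le j-1}e_i\le 2H e_{j-1}$. To get the second fact I split into two regimes.
\begin{itemize}
\item In the early regime, where $e_{j-1}<2H$, all $e_i$ lie in $[H,2H)$ and each step increases $e_i$ by at least $1$. So there are at most $H$ such stages, and the sum is $\le 2H^2\le 2He_{j-1}$.
\item Afterwards, $e_{i+1}\ge(1+\tfrac1{2H})e_i$. A geometric sum then bounds the later part by $O(H)e_{j-1}$.
\end{itemize}
With the constants tracked, the total should be $\le 4He_{j-1}$. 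This bookkeeping of the early, near-constant stages versus the later geometric ones is the step I expect to need the most care.

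Given the inequality, every visit $m$ in stage $j$ satisfies $1/n_{h'}^k{}^{\alpha}\le(4H/m)^\alpha$.

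\textbf{Step 2: rearrangement for the first claim.} The left-hand side is now at most $(4H)^\alpha\sum_m w_m m^{-\alpha}$. Here $w_m$ is the weight of the $m$-th visit, with $0\le w_m\le\|\omega_h(\cdot,h')\|_\infty$ and $\sum_m w_m=\|\omega_h(\cdot,h',s,a)\|_1$. Since $m^{-\alpha}$ is decreasing, the sum is maximized by saturating the first $M=\|\omega_h(\cdot,h',s,a)\|_1/\|\omega_h(\cdot,h')\|_\infty$ indices. This gives
\[
\|\omega_h(\cdot,h')\|_\infty\sum_{m\le \lceil M\rceil}m^{-\alpha}\le \|\omega_h(\cdot,h')\|_\infty\frac{M^{1-\alpha}}{1-\alpha}
\]
(handling the fractional last index by the integral comparison $\int_0^M x^{-\alpha}dx$). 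This is exactly the first claim.

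\textbf{Step 3: the remaining claims.} For the second claim, sum the first claim over $(s,a)$ and apply concavity (Hölder) to $x\mapsto x^{1-\alpha}$:
\[
\sum_{s,a}\|\omega_h(\cdot,h',s,a)\|_1^{1-\alpha}\le (SA)^\alpha\Big(\sum_{s,a}\|\omega_h(\cdot,h',s,a)\|_1\Big)^{1-\alpha}.
\]
Then Lemma~\ref{weight}(d) identifies the inner sum as $\|\omega_h(\cdot,h')\|_1$.

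For $\alpha=1$ with unit weights, Step 1 instead gives the bound $\sum_{m\le N_h^{K+1}(s,a)}4H/m\le 4H(1+\log K)$ per pair. Summing over the $SA$ pairs gives at most $4HSA(1+\log K)$. This is $\le 10HSA\iota$ because $\iota=\log(2SAT/\delta)\ge\max\{1,\log K\}$.
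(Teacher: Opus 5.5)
Your proposal is correct. For the first two claims it is essentially the paper's argument done in a different order.

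The paper also pushes the weight onto the earliest visits, but it works stage by stage:
\begin{itemize}
\item it fills stages $2,\dots,c$ with weight $\|\omega_h(\cdot,h')\|_\infty$ and puts the remainder $d$ in stage $c+1$;
\item only afterwards does it replace each stage denominator $e_i$ by the cumulative count, via $\sum_{j=2}^{i+1}e_j\le 4He_i$;
\item it then telescopes using $(x-y)/x^{\alpha}\le (x^{1-\alpha}-y^{1-\alpha})/(1-\alpha)$.
\end{itemize}
You instead first pass to the per-visit bound $n_{h'}^k(s,a)^{-\alpha}\le (4H/m)^{\alpha}$ and then run a bathtub argument on the strictly decreasing sequence $m^{-\alpha}$. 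This makes the paper's ``concentrate on earlier terms'' step transparent. Your H\"older step for the second claim is identical to the paper's.

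The genuine difference is at $\alpha=1$, where the $\frac{1}{1-\alpha}$ bound is useless. Your per-visit reduction turns this case into a harmonic sum. The paper instead writes $\sum_k \mathbb{I}[t_h^k>1]/n_h^k\le 2\sum_{s,a}(t_h^K(s,a)-1)$ and bounds the total number of stages by Jensen: $\sum_{s,a}t_h^K(s,a)\le 4HSA\log(K/(HSA)+1)+SA$. Your route is shorter. The paper's route yields this stage-count bound, which it reuses in the proof of \Cref{thm_cost} and in the gap-dependent analysis; your argument does not provide it.

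Three small repairs are needed.

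First, the display with $\lceil M\rceil$ is false as written; it fails for small $M$. Put weight $\|\omega_h(\cdot,h')\|_\infty$ on the first $\lfloor M\rfloor$ indices and the remainder on index $\lfloor M\rfloor+1$, then compare termwise with $\int_0^M x^{-\alpha}\,dx$.

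Second, your two-regime sketch with growth factor $1+\frac{1}{2H}$ gives only about $(3H+3)e_{j-1}$. That is at most $4He_{j-1}$ only for $H\ge 3$. Here is a clean argument for all $H$:
\begin{itemize}
\item Since $e_{i+1}=e_i+\lfloor e_i/H\rfloor\ge e_i+(e_i-H+1)/H$, you get $e_i\le H(e_{i+1}-e_i)+H-1$.
\item Sum over $i<J$ and use $J-1\le e_J-H$. This gives $\sum_{i\le J}e_i\le 2He_J-2H^2+H\le 2He_J$.
\item Hence $\sum_{i\le j}e_i\le (2H+2)e_{j-1}\le 4He_{j-1}$.
\end{itemize}

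Third, $\iota\ge 1$ is not guaranteed; only $\iota>\log 2$ holds. Instead, note that stage-$1$ visits contribute nothing and later visits have $m>H$. So each pair contributes at most $4H\sum_{H<m\le K}1/m\le 4H\log K\le 4H\iota$. Summing over pairs stays well within $10HSA\iota$.
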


\begin{proof}[Proof of Lemma \ref{wN}]
We first note that
\begin{align}
\label{wNproof}
        &\sum_{k=1}^K\frac{\omega_{h}(k,h',s,a)\mathbb{I}\left[t_{h'}^k(s_{h'}^k,a_{h'}^k) > 1\right]}{n_{h'}^k(s_{h'}^k,a_{h'}^k)^{\alpha}} = \sum_{k=1}^K\frac{\omega_{h}(k,h')\mathbb{I}[t_{h'}^k(s,a) > 1, (s_{h'}^k,a_{h'}^k) = (s,a)]}{n_{h'}^k(s,a)^{\alpha}}.
\end{align} 
 Then we have 
$$\Omega_h(s,a):=\sum_{k=1}^{K}\omega_{h}(k,h')\mathbb{I}\left[t_{h'}^k(s,a) > 1\right]\mathbb{I}[(s_{h'}^k,a_{h'}^k) = (s,a)] \leq \|\omega_h(\cdot,h',s,a)\|_{1}.$$
If $\Omega_h(s,a)=0$, the left-hand side of \Cref{wNproof} is zero, and the desired bounds hold trivially. Hence, in the remainder of the proof, we assume $\Omega_h(s,a)>0$.

Given the term on RHS of \Cref{wNproof}, when the weights $\omega_{h}(k,h')\mathbb{I}\left[t_{h'}^k(s,a) > 1\right]\mathbb{I}[(s_{h'}^k,a_{h'}^k) = (s,a)]$ concentrate on the earlier terms with smaller index $k \geq 1$, we can obtain the largest value. Let 
\begin{equation*}
    c_{s,a,h'} = \max\left\{c:\|\omega_h(\cdot,{h'})\|_{\infty}\sum_{i=2}^{c}e_i \leq \Omega_h(s,a)\right\},\  d_{s,a,h'} = \Omega_h(s,a) - \|\omega_h(\cdot,{h'})\|_{\infty}\sum_{i=2}^{c_{s,a,h'}}e_i.
\end{equation*}
Then we have:
\begin{align}
    &\sum_{k=1}^K\frac{\omega_{h}(k,h',s,a)\mathbb{I}\left[t_{h'}^k(s_{h'}^k,a_{h'}^k) > 1\right]}{n_{h'}^k(s_{h'}^k,a_{h'}^k)^{\alpha}}
    \notag\\
    &\leq \sum_{i=1}^{c_{s,a,h'}-1}\frac{e_{i
    +1}\|\omega_h(\cdot,h')\|_{\infty}}{e_i^{\alpha}} +\frac{d_{s,a,h'}}{e_{c_{s,a,h'}}^{\alpha}} \nonumber\\
    &\leq (4H) ^\alpha\left(\sum_{i=1}^{c_{s,a,h'}-1}\frac{e_{i
    +1}\|\omega_h(\cdot,h')\|_{\infty}}{\left(\sum_{j=2}^{i+1}e_j\right)^{\alpha}} +\frac{d_{s,a,h'}}{\left(\sum_{i=2}^{c_{s,a,h'}+1}e_i\right)^{\alpha}}\right) \label{split0}\\
    &\leq (4H) ^\alpha\left(\|\omega_h(\cdot,h')\|_{\infty}\sum_{i=1}^{c_{s,a,h'}-1}\frac{(\sum_{j=2}^{i+1}e_j)^{1-\alpha}-(\sum_{j=2}^{i}e_j)^{1-\alpha}}{1-\alpha} +\frac{d_{s,a,h'}}{\left(\sum_{i=2}^{c_{s,a,h'}+1}e_i\right)^{\alpha}}\right) \label{split1}\\
    &= (4H) ^\alpha\left(\frac{\|\omega_h(\cdot,h')\|_{\infty}\left(\sum_{i=2}^{c_{s,a,h'}}e_i\right)^{1-\alpha}}{1-\alpha}+\frac{d_{s,a,h'}}{\left(\sum_{i=2}^{c_{s,a,h'}+1}e_i\right)^{\alpha}}\right) \nonumber\\
    &= (4H) ^\alpha\|\omega_h(\cdot,h')\|_{\infty}^{\alpha}\left(\frac{\left(\|\omega_h(\cdot,h')\|_{\infty}\sum_{i=2}^{c_{s,a,h'}}e_i\right)^{1-\alpha}}{1-\alpha} + \frac{d_{s,a,h'}}{\left(\|\omega_h(\cdot,h')\|_{\infty}\sum_{i=2}^{c_{s,a,h'}+1}e_i\right)^{\alpha}}\right)\nonumber\\
    &\leq (4H) ^\alpha\|\omega_h(\cdot,h')\|_{\infty}^{\alpha}\left(\frac{\left(\|\omega_h(\cdot,h')\|_{\infty}\sum_{i=2}^{c_{s,a,h'}}e_i\right)^{1-\alpha}}{1-\alpha} + \frac{d_{s,a,h'}}{\Omega_h(s,a)^{\alpha}}\right). \label{Nsplit}
\end{align}
Here the inequality in \Cref{split0} is because for any $i \in \mathbb{N}_+$, $\sum_{j=2}^{i+1} e_j \leq 4H e_i$.  The inequality in \Cref{split1} is because for any $0< y <x$ and $\alpha \in (0,1)$, we have:
\begin{equation}
\label{diffxy}
    \frac{x-y}{x^{\alpha}} \leq \frac{1}{1-\alpha}(x^{1-\alpha}-y^{1-\alpha}).
\end{equation}
Then, let $x = \sum_{j=2}^{i+1}e_j$ and $y = \sum_{j=2}^{i}e_j$, it holds that:
\begin{align*}
    \frac{e_{i+1}}{\left(\sum_{j=2}^{i+1}e_j\right)^{\alpha}} \leq \frac{1}{1-\alpha}\left(\left(\sum_{j=2}^{i+1}e_j\right)^{1-\alpha}-\left(\sum_{j=2}^{i}e_j\right)^{1-\alpha}\right).
\end{align*}
The last inequality in \Cref{Nsplit} is because $\|\omega_h(\cdot,h')\|_{\infty}\sum_{i=2}^{c_{s,a,h'}+1}e_i\geq \Omega_h(s,a)$.

For \Cref{Nsplit}, let $x = \Omega_h(s,a)$ and $y = \|\omega_h(\cdot,{h'})\|_{\infty}\sum_{i=2}^{c_{s,a,h'}}e_i$ in \Cref{diffxy}, we have:
$$ \frac{d_{s,a,h'}}{\Omega_h(s,a)^{\alpha}} + \frac{\left(\|\omega_h(\cdot,{h'})\|_{\infty}\sum_{i=2}^{c_{s,a,h'}}e_i\right)^{1-\alpha}}{1-\alpha} \leq \frac{\Omega_h(s,a)^{1-\alpha}}{1-\alpha}\leq\frac{\|\omega_h(\cdot,h',s,a)\|_{1}^{1-\alpha}}{1-\alpha}.$$
Applying this inequality to \Cref{Nsplit}, we have:
\begin{align*}
    \sum_{k=1}^K\frac{\omega_{h}(k,h',s,a)\mathbb{I}\left[t_{h'}^k(s_{h'}^k,a_{h'}^k) > 1\right]}{n_{h'}^k(s_{h'}^k,a_{h'}^k)^{\alpha}} \leq \frac{(4H)^\alpha}{1-\alpha}\|\omega_h(\cdot,h')\|_{\infty}^{\alpha}\|\omega_h(\cdot,h',s,a)\|_{1}^{1-\alpha}. 
\end{align*}
Therefore, we have proved the first conclusion. By summing this conclusion for all state-action pairs $(s,a)$, we reach:
\begin{align*}
     \sum_{k=1}^K\frac{\omega_{h}(k,h')\mathbb{I}\left[t_{h'}^k(s_{h'}^k,a_{h'}^k) > 1\right]}{n_{h'}^k(s_{h'}^k,a_{h'}^k)^{\alpha}} &\leq \sum_{s,a} \frac{(4H)^\alpha}{1-\alpha}\|\omega_h(\cdot,h')\|_{\infty}^{\alpha}\|\omega_h(\cdot,h',s,a)\|_{1}^{1-\alpha} \\
    &\leq \frac{(4H)^\alpha}{1-\alpha}(SA\|\omega_h(\cdot,h')\|_{\infty})^{\alpha}\|\omega_h(\cdot,h')\|_{1}^{1-\alpha}.
\end{align*}
The last inequality is by Hölder's inequality, as $$\sum_{s,a}\|\omega_h(\cdot,h',s,a)\|_{1}^{1-\alpha} \leq (SA)^{\alpha}\|\omega_h(\cdot,h')\|_{1}^{1-\alpha}.$$
For $\alpha = 1$, it holds that:
\begin{align}
\label{special1}
    \sum_{k=1}^K\frac{\mathbb{I}\left[t_{h}^k(s_{h}^k,a_{h}^k) > 1\right]}{n_h^k(s_h^{k},a_h^{k})}&= \sum_{s,a}\sum_{k=1}^{K}\frac{\mathbb{I}\left[t_{h}^k(s,a) > 1, (s_{h}^k,a_{h}^k) = (s,a)\right]}{n_h^k(s,a)}\notag\\
    &\leq \sum_{s,a}\sum_{i=1}^{t_h^K-1}\frac{e_{i+1}}{e_i} \leq 2\sum_{s,a}(t_h^K(s,a)-1).
\end{align}
The last inequality is because $e_{i+1} \leq 2e_i$ by the definition of the sequence $\{e_i\}_{i \in \mathbb{N}_+}$. Since at each stage of any given state-action pair $(s,a)$, the number of visits increases by at least $(1+1/2H)$ times, for any $t_h^K(s,a) \geq 1$, the total number of visits to $(s,a)$ is at least
$$\left[H\left(1+\frac{1}{2H}\right)^{t_h^K(s,a)-1}-H\right]$$
Therefore, for any step $h \in [H]$, we have: 
$$\sum_{s,a}\left[H\left(1+\frac{1}{2H}\right)^{t_h^K(s,a)-1}-H\right] \leq K$$
By Jensen's inequality, we have
$$K +HSA \geq H\sum_{s,a}\left(1+\frac{1}{2H}\right)^{t_h^K(s,a)-1} \geq  HSA\left(1+\frac{1}{2H}\right)^{\sum_{s,a}t_h^K(s,a)/SA-1}$$
Solving the inequality yields
\begin{equation}
\label{sumnumstage}
    \sum_{s,a}t_h^K(s,a) \leq SA\frac{\log(K/(HSA) + 1)}{\log\left(1+\frac{1}{2H}\right)} + SA\leq 4HSA\log\left(\frac{K}{HSA} + 1\right) + SA\leq  5HSA\iota.
\end{equation}
The second inequality is because
$$\log\left(1+\frac{1}{2H}\right) \geq \frac{1}{4H}.$$
Plugging \Cref{sumnumstage} into \Cref{special1}, we finish the proof for the conclusion when $\alpha = 1$.
\end{proof}

\section{Probability Events}
\label{probabilityevent}
Let $\mathcal{X} = (\mathcal{S},\mathcal{A},H,T,\iota)$. The notation $f(\mathcal{X}) \lesssim g(\mathcal{X})$ means that there exists a universal constant $C>0$ such that  $f(\mathcal{X})\leq Cg(\mathcal{X})$. We first provide a lemma of concentration inequalities.
\begin{lemma}\label{concentrations}
Let $\iota = \log (2SAT/\delta)$ with $\delta\in (0,1)$. We use $\forall (s,a,h,k)$ as the simplified notation for $\forall (s,a,h,k)\in \sah\times [K]$. Whenever $n_h^k(s,a)$ appears in a denominator, the corresponding statement
is restricted to tuples satisfying $t_h^k(s,a)>1$, for which $n_h^k(s,a)>0$. For any function $f: \mathcal{S} \rightarrow \mathbb{R}$, we denote $\mathbb{V}_{s,a,h}(f) = \mathbb{P}_{s,a,h}f^2-(\mathbb{P}_{s,a,h}f)^2$. Then we have the following conclusions.

(a) With probability at least $1-\delta$, the following event holds:
$$\mathcal{E}_1 = \left\{\left|\frac{1}{n_h^k}\sum_{i=1}^{n_h^k} \left(\mathbbm{1}_{s_{h+1}^{k_{l_i}}}-\mathbb{P}_{s,a,h}\right) V_{h+1}^{k_{l_i}} \right| \leq \frac{5}{n_h^k}\sqrt{ \sum_{i
=1}^{n_h^k}\mathbb{V}_{s,a,h}(V_{h+1}^{k_{l_i}})\iota} + \frac{9H\iota}{n_h^k},\forall (s,a,h,k) \right\}.$$
(b) With probability at least $1-\delta$, the following event holds:
$$\mathcal{E}_2 = \left\{\left|\frac{1}{n_h^k}\sum_{i=1}^{n_h^k} (\mathbbm{1}_{s_{h+1}^{k_{l_i}}}-\mathbb{P}_{s,a,h}) (V_{h+1}^{k_{l_i}})^2 \right| \leq \frac{5}{n_h^k}\sqrt{\sum_{i
=1}^{n_h^k}\mathbb{V}_{s,a,h}((V_{h+1}^{k_{l_i}})^2)\iota} + \frac{9H^2\iota}{n_h^k},\forall (s,a,h,k) \right\}.$$
(c) With probability at least $1-\delta$, the following event holds:
$$\mathcal{E}_3 = \left\{ \left|\frac{\sum_{i=1}^{n_h^k}\left(\mathbbm{1}_{s_{h+1}^{k_{l_i}}}-\mathbb{P}_{s,a,h}\right)V_{h+1}^\star}{n_h^k(s,a)}\right| \leq H\sqrt{\frac{2\iota}{n_h^k(s,a)}}, \ \forall (s,a,h,k)\right\}.$$
(d) With probability at least $1-\delta$, the following event holds:
$$\mathcal{E}_4 = \left\{\sum_{h' = h}^H\big(1+\frac{1}{H}\big)^{h'-h}\sum_{k=1}^K\big(\mathbb{P}_{s_{h'}^{k}, a_{h'}^{k},{h'}}-\mathbbm{1}_{s_{{h'}+1}^{k}}\big)(V_{{h'}+1}^\star - V_{{h'}+1}^{\pi^k}) \leq 3\sqrt{2H^2T\iota},\ \forall h \in [H]\right\}.$$
(e) With probability at least $1-\delta$, the following event holds:
$$\mathcal{E}_5 = \left\{\left|\frac{1}{n_h^k}\sum_{i=1}^{n_h^k} \left(\mathbbm{1}_{s_{h+1}^{k_{l_i}}}-\mathbb{P}_{s,a,h}\right) V_{h+1}^{\star} \right| \leq 5\sqrt{\frac{\iota}{n_h^k}\mathbb{V}_{s,a,h}(V_{h+1}^{\star})} + \frac{9H\iota}{n_h^k},\forall (s,a,h,k) \right\}.$$
(f) With probability at least $1-\delta$, the following event holds:
$$\mathcal{E}_6 = \left\{ \frac{\sum_{i=1}^{n_h^k}\left(\mathbbm{1}_{s_{h+1}^{k_{l_i}}}-\mathbb{P}_{s,a,h}\right)(V_{h+1}^\star)^2}{n_h^k(s,a)} \leq H^2\sqrt{\frac{2\iota}{n_h^k(s,a)}}, \ \forall (s,a,h,k)\right\}.$$
(g) \textnormal{(Lemma C.5 in \citet{jin2018q})} With probability at least $1-\delta$, the following event holds:
$$\mathcal{E}_7 = \left\{\sum_{h=1}^H\sum_{k=1}^K\mathbb{V}_{s_h^{k}, a_h^{k}, h} (V_{h+1}^{\pi^k}) \lesssim HT +H^3\iota\right\}.$$
(h) With probability at least $1-\delta$, the following event holds:
$$\mathcal{E}_8 = \left\{\sum_{h=1}^H\sum_{k=1}^K\left(\mathbb{P}_{s_h^{k}, a_h^{k},h}-\mathbbm{1}_{s_{h+1}^{k}}\right)\left(V_{h+1}^\star - V_{h+1}^{\pi^k}\right) \leq \sqrt{2H^2T\iota}\right\}.$$
\end{lemma}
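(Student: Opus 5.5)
We prove the eight parts separately, each with failure probability at most $\delta$ by a union bound over $(s,a,h,k)$ (or over $h$) combined with a martingale concentration inequality. The conditioning issue is standard. For a fixed $(s,a,h)$ and a fixed stage index $j$, the samples in stage $j$ are the $\sum_{i<j}e_i+1,\dots,\sum_{i\le j}e_i$-th visits to $(s,a,h)$. By the strong Markov property, the next states at these visits are i.i.d. draws from $\mathbb{P}_h(\cdot\mid s,a)$ given the past. We therefore index the martingale by visit number $m$ to $(s,a,h)$, with the filtration generated by everything up to step $h$ of the episode containing the $m$-th visit. Each stage-wise sum is then a fixed-length martingale sum starting at a deterministic visit index, and $n_h^k(s,a)=e_{t_h^k-1}$ is deterministic given the stage index. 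The number of distinct (stage, triple) pairs is at most $SAHT$, so a union bound costs only a $\log(SAT/\delta)$ factor, absorbed into $\iota$ (adjusting constants).

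For (a), (b) and (e) we use Freedman's inequality (Lemma \ref{Freedman}). In (a), $X_i=(\mathbbm{1}_{s_{h+1}^{k_{l_i}}}-\mathbb{P}_{s,a,h})V_{h+1}^{k_{l_i}}$ is adapted, since $V_{h+1}^{k_{l_i}}$ is fixed at the beginning of episode $k_{l_i}$, before $s_{h+1}$ is drawn. It satisfies $|X_i|\le H$, and its conditional variance is $\mathbb{V}_{s,a,h}(V_{h+1}^{k_{l_i}})$. The predictable variance sum is random but bounded by $\sigma^2=nH^2$. Taking $m=\lceil\log_2 n\rceil$ gives
\[
\sqrt{8\max\{W_n,H^2\}\log(2m/\delta')}+\tfrac43 H\log(2m/\delta').
\]
We bound $\sqrt{\max\{W_n,H^2\}}\le\sqrt{W_n}+H$, absorb the extra $H\sqrt{\iota}\le H\iota$ into the $9H\iota$ term, and set $\delta'=\delta/(SAHT)$, so that $\log(2m/\delta')\lesssim\iota$. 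Dividing by $n_h^k$ yields the stated constants $5$ and $9$, after checking that $\sqrt{8}\cdot\sqrt{\text{const}}\le 5$ with the chosen $\iota$ (constants may need mild slack). Part (b) is identical, with $(V_{h+1})^2$ bounded by $H^2$. Part (e) is the special case of a fixed function $V_{h+1}^\star$, where the variance sum equals $n\,\mathbb{V}_{s,a,h}(V_{h+1}^\star)$.

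Parts (c) and (f) follow from Azuma–Hoeffding (Lemma \ref{Azuma}) with increments bounded by $H$ and $H^2$ respectively, plus the same union bound. Part (h) also follows from Azuma–Hoeffding. We order all $T$ steps $(k,h)$ lexicographically. The increment $(\mathbb{P}_{s_h^k,a_h^k,h}-\mathbbm{1}_{s_{h+1}^k})(V_{h+1}^\star-V_{h+1}^{\pi^k})$ has zero conditional mean, because $\pi^k$ is fixed before episode $k$. It lies in $[-H,H]$, which gives the deviation $\sqrt{2H^2T\iota}$. Part (d) is the same argument with weights $(1+1/H)^{h'-h}\le e<3$, applied for each of the $H$ values of $h$ and union bounded. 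Part (g) is quoted directly from Lemma C.5 of \citet{jin2018q}.

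The main obstacle is the measurability and stopping-time bookkeeping for (a) and (b). The stage samples are selected by visit counts, and the value estimates $V_{h+1}^{k_{l_i}}$ change over time. So the martingale must be defined along the visit-count clock of a fixed triple, with $V^{k}_{h+1}$ frozen at the episode start. Once this is set up, a union bound over triples and over stage start indices makes the Freedman bound hold simultaneously for all $(s,a,h,k)$.
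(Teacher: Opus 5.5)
Your proposal is correct and takes the paper's route: Freedman's inequality for (a), (b), (e), Azuma--Hoeffding with union bounds for (c), (d), (f), (h), and a citation for (g). The only differences are your choice $\sigma^2=nH^2$, $m=\lceil\log_2 n\rceil$ instead of the paper's $\sigma^2=TH$, $m=\lceil\log_2(TH)\rceil$, which still gives the constants $5$ and $9$ because $\log(2m/\delta')\le 2\iota$, and your more explicit visit-count filtration. One bookkeeping fix is needed for (c) and (f), whose constants are fixed: count (triple, stage) pairs as at most $SAH\cdot K=SAT$ rather than $SAHT$. Then the per-event failure probability $2e^{-\iota}=\delta/(SAT)$ sums to exactly $\delta$ with no constant adjustment; with your $SAHT$ count it would sum to $H\delta$.
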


\begin{proof}[Proof of Lemma \ref{concentrations}]
(a) By Freedman's inequality in Lemma \ref{Freedman}, for any $\delta \in (0,1)$, with probability at least $1-\delta/SAT^2$, setting $\sigma^2 = TH$, $R=H$, and $m=\max\{1,\lceil \log_2(TH)\rceil\}$, it holds for a given $n_h^k(s,a)=n\in\mathbb{N}_{+}$ that
\begin{align*}
\left|\sum_{i=1}^{n}\left(\mathbbm{1}_{s_{h+1}^{k_{l_i}}}-\mathbb{P}_{s,a,h}\right)V_{h+1}^{k_{l_i}}\right|
&\leq \sqrt{8\max\left\{\sum_{i=1}^{n}\mathbb{V}_{s,a,h}(V_{h+1}^{k_{l_i}}),\frac{TH}{2^m}\right\}
\log\left(\frac{2mSAT^2}{\delta}\right)} \\
&\quad+\frac{4}{3}H\log\left(\frac{2mSAT^2}{\delta}\right).
\end{align*}
Since $H\leq T$, we have $m\leq 1+2\log_2 T\leq 2^{3/2}\sqrt{T}$. Together with $\iota=\log(2SAT/\delta)$, this implies
\[
\log\left(\frac{2mSAT^2}{\delta}\right)
=\iota+\log(mT)
\leq \frac{5}{2}\iota.
\]
Moreover, by the definition of $m$, $TH/2^m\leq 1$. Therefore, using $H\geq1$ and $\iota\geq\log 2$, the preceding inequality yields
    $$\frac{1}{n}\left|\sum_{i=1}^{n}\left(\mathbbm{1}_{s_{h+1}^{k_{l_i}}}-\mathbb{P}_{s,a,h}\right) V_{h+1}^{k_{l_i}}\right|\leq \frac{5}{n}\sqrt{ \sum_{i=1}^{n}\mathbb{V}_{s,a,h}(V_{h+1}^{k_{l_i}})\iota} + \frac{9H\iota}{n}.$$
    For any $k \in [K]$, we have $n_h^k(s,a) \in [\frac{T}{H}]$. The total number of possible combinations $(s,a,h,k,n) \in \mathcal{S} \times \mathcal{A} \times [H]\times [\frac{T}{H}] \times [\frac{T}{H}]$ is at most $SAT^2$. Hence, taking a union bound, with probability at least $1-\delta$, it holds simultaneously for all $(s,a,h,k) \in \mathcal{S} \times \mathcal{A} \times [H] \times [K]$ that:
    $$\left|\frac{1}{n_h^k}\sum_{i=1}^{n_h^k} \left(\mathbbm{1}_{s_{h+1}^{k_{l_i}}}-\mathbb{P}_{s,a,h}\right) V_{h+1}^{k_{l_i}} \right| \leq \frac{5}{n_h^k}\sqrt{ \sum_{i
=1}^{n_h^k}\mathbb{V}_{s,a,h}(V_{h+1}^{k_{l_i}})\iota} + \frac{9H\iota}{n_h^k}.$$

(b) The proof is similar to (a).

(c) The sequence
$$\left\{\left(\mathbbm{1}_{s_{h+1}^{k_{l_i}}}-\mathbb{P}_{s,a,h}\right)V_{h+1}^\star\right\}_{i\in \mathbb{N}^+}$$
    is a martingale-difference sequence bounded by $H$. Then according to Azuma-Hoeffding inequality, for any $\delta \in (0,1)$, with probability at least $1-\delta/SAT$, it holds for a given $n_h^k(s,a) =n \in \mathbb{N}_{+}$ that:
    $$\frac{1}{n}\left|\sum_{i=1}^{n}\left(\mathbbm{1}_{s_{h+1}^{k_{l_i}}}-\mathbb{P}_{s,a,h}\right)V_{h+1}^\star\right|\leq H\sqrt{\frac{2\iota}{n}}.$$
    For any $k \in [K]$, we have $n_h^k(s,a) \in [\frac{T}{H}]$. Considering all the possible combinations $(s,a,h,n) \in \mathcal{S} \times \mathcal{A} \times [H] \times [\frac{T}{H}]$, with probability at least $1-\delta$, it holds simultaneously for all $(s,a,h,k) \in \mathcal{S} \times \mathcal{A} \times [H] \times [K]$ that:
    $$\frac{1}{n_h^k}\left|\sum_{i=1}^{n_h^k}\left(\mathbbm{1}_{s_{h+1}^{k_{l_i}}}-\mathbb{P}_{s,a,h}\right)V_{h+1}^\star\right|\leq H\sqrt{\frac{2\iota}{n_h^k(s,a)}}.$$
    
(d) For $\mathcal{E}_{4}$, fix $h\in[H]$ and order the pairs $(k,h')$, $k\in[K]$ and $h\leq h'\leq H$, chronologically, first by episode and then by step. The corresponding martingale-difference sequence
    $$\left\{\left(1+\frac{1}{H}\right)^{h'-h}\left(\mathbb{P}_{s_{h'}^{k}, a_{h'}^{k},{h'}}-\mathbbm{1}_{s_{{h'}+1}^{k}}\right)\left(V_{{h'}+1}^\star - V_{{h'}+1}^{\pi^k}\right)\right\}_{(k,h')}$$
    is bounded by $3H$. According to Azuma-Hoeffding inequality, for any $\delta \in(0,1)$ and given $h \in [H]$, with probability at least $1- \delta/H$, it holds that:
    $$\sum_{h' = h}^H\left(1+\frac{1}{H}\right)^{h'-h}\sum_{k=1}^K\left(\mathbb{P}_{s_{h'}^{k}, a_{h'}^{k},{h'}}-\mathbbm{1}_{s_{{h'}+1}^{k}}\right)\left(V_{{h'}+1}^\star - V_{{h'}+1}^{\pi^k}\right) \leq 3\sqrt{2H^2T\iota}.$$
    Considering all the possible values of $h$, we finish the proof.

(e) The proof is similar to (a).

(f) The proof is similar to (c) with $$\left\{\left(\mathbbm{1}_{s_{h+1}^{k_{l_i}}}-\mathbb{P}_{s,a,h}\right)(V_{h+1}^\star)^2\right\}_{i\in \mathbb{N}^+}$$
    being a martingale-difference sequence bounded by $H^2$.

(h) The conclusion is proved directly by Azuma-Hoeffding inequality.

\end{proof}

\section{Properties of Value Function Estimates}
\label{valueproperty}
In this section, we provide some properties of value function estimates. Throughout the theoretical analysis, we take \(C_1=8\), \(C_2=119\), and \(C_3=2\) in Algorithm 1. Before proceeding, we first introduce some notation.

For any $(s,a,h,k)$ satisfying $t_h^k(s,a)>1$, let $\tilde{\mu}_h^{k}(s,a)$ and $\tilde{\sigma}_h^{k}(s,a)$ denote the values of $\mu_h(s,a)$ and $\sigma_h(s,a)$ at the end of stage $t_h^k(s,a)-1$, respectively. More specifically,
$$\tilde{\mu}_h^{k}(s,a) = \sum_{i=1}^{n_h^k} V_{h+1}^{k_{l_i}}(s_{h+1}^{k_{l_i}}),\ \tilde{\sigma}_h^{k}(s,a) = \sum_{i=1}^{n_h^k} \left(V_{h+1}^{k_{l_i}}(s_{h+1}^{k_{l_i}})\right)^2.$$
We also denote 
\begin{equation}
    \label{betabernnew}
        \tilde{b}_h^{k+1}(s,a) = \min\left\{8\sqrt{\frac{\tilde{W}_h^{k+1}(s,a)\iota}{n_h^{k+1}(s,a)}}+\frac{119H\iota}{n_h^{k+1}(s,a)},2\sqrt{\frac{H^2\iota}{n_h^{k+1}(s,a)}}\right\},
    \end{equation}
     with
     \begin{equation}
\label{wdef}
    \tilde{W}_h^k(s,a) =  \frac{\tilde{\sigma}_h^{k}(s,a)}{n_h^k(s,a)}- \left(\frac{\tilde{\mu}_h^{k}(s,a)} {n_h^k(s,a)}\right)^2.
\end{equation}
For $t_h^k(s,a)>1$, define
\[\widetilde Q_h^k(s,a):=r_h(s,a)+\frac{\widetilde\mu_h^k(s,a)}{n_h^k(s,a)}+\widetilde b_h^k(s,a).\]
Then by induction and the update rule in \Cref{update_q_bern}, it holds that
\begin{align}
\label{eq_bern_newupdate}
    Q_h^{k+1}(s,a) &=\mathbb{I}\left[t_h^{k+1}(s,a)=1\right]H \notag\\
    &\quad +\mathbb{I}\left[t_h^{k+1}(s,a)>1\right] \min\left\{r_h(s,a)+\frac{\tilde{\mu}_h^{k+1}(s,a)}{n_h^{k+1}(s,a)}+\tilde{b}_h^{k+1}(s,a),Q_h^k(s,a)\right\} \notag\\
    & = \mathbb I[t_h^{k+1}=1]H+\mathbb I[t_h^{k+1}>1]\min\{\widetilde Q_h^{k+1}(s,a),Q_h^{k}(s,a)\}.
\end{align}
We then prove the optimism properties of the $Q$-estimates.
\begin{lemma}
\label{Q}
    Under the event $\bigcap_{i=1}^{3}\mathcal{E}_i$ in Lemma \ref{concentrations}, it holds that for any $(s,a,h,k) \in  \mathcal{S} \times \mathcal{A} \times [H]\times[K]$:
    $$Q_h^k(s,a) \geq Q_h^{\star}(s,a).$$
    Then we have $V_h^k(s) \geq V_h^\star(s)$ for any $(s,h,k) \in  \mathcal{S}  \times [H]\times[K]$.
\end{lemma}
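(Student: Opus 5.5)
We argue by induction on the episode index $k$, for all $(s,a,h)$ at once. For $k=1$ every estimate equals $H\ge Q_h^\star(s,a)$, since rewards lie in $[0,1]$. For the inductive step we use the representation \eqref{eq_bern_newupdate}. If $t_h^{k+1}(s,a)=1$, then $Q_h^{k+1}(s,a)=H$. Otherwise $Q_h^{k+1}(s,a)=\min\{\widetilde Q_h^{k+1}(s,a),Q_h^k(s,a)\}$. The second entry is at least $Q_h^\star(s,a)$ by the induction hypothesis, so it suffices to show $\widetilde Q_h^{k+1}(s,a)\ge Q_h^\star(s,a)$. Every sample in the completed stage was collected in an episode $k_{l_i}\le k$. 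The induction hypothesis therefore gives $V_{h+1}^{k_{l_i}}\ge V_{h+1}^\star$ pointwise, and trivially so when $h=H$. The claim $V_h^k\ge V_h^\star$ then follows from $V_h^k(s)=\max_{a'}Q_h^k(s,a')\ge \max_{a'}Q_h^\star(s,a')$.

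Since $\widetilde b_h^{k+1}$ is a minimum of two branches, we check each branch separately, with a different decomposition for each. Write $n=n_h^{k+1}(s,a)$ and $V^{j}=V_{h+1}^{j}$.

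\emph{Hoeffding branch.} By the Bellman equation,
\[
\widetilde Q_h^{k+1}-Q_h^\star=\frac1n\sum_{i=1}^n (V^{k_{l_i}}-V_{h+1}^\star)(s_{h+1}^{k_{l_i}})+\frac1n\sum_{i=1}^n(\mathbbm 1_{s_{h+1}^{k_{l_i}}}-\mathbb P_{s,a,h})V_{h+1}^\star+\widetilde b_h^{k+1}.
\]
The first sum is nonnegative by induction. The second is at least $-H\sqrt{2\iota/n}$ on $\mathcal E_3$, which is dominated by $2H\sqrt{\iota/n}$.

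\emph{Bernstein branch.} Here we instead write
\[
\widetilde Q_h^{k+1}-Q_h^\star=\frac1n\sum_i(\mathbbm 1_{s_{h+1}^{k_{l_i}}}-\mathbb P_{s,a,h})V^{k_{l_i}}+\frac1n\sum_i\mathbb P_{s,a,h}(V^{k_{l_i}}-V_{h+1}^\star)+\widetilde b_h^{k+1}.
\]
The middle term is nonnegative by induction. On $\mathcal E_1$ the first term is at least $-(5\sqrt{nX\iota}+9H\iota)/n$, where $X:=\frac1n\sum_i\mathbb V_{s,a,h}(V^{k_{l_i}})$. So it remains to show
\[
5\sqrt{X\iota/n}+9H\iota/n\le 8\sqrt{\widetilde W_h^{k+1}\iota/n}+119H\iota/n .
\]

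The main obstacle is this self-bounding step: controlling the true average variance $X$ by the empirical variance $\widetilde W:=\widetilde W_h^{k+1}(s,a)$. First split $X=\frac1n\sum_i\mathbb P_{s,a,h}(V^{k_{l_i}})^2-\frac1n\sum_i(\mathbb P_{s,a,h}V^{k_{l_i}})^2$.
\begin{itemize}
\item[(i)] On $\mathcal E_2$, the first average is at most $\widetilde\sigma/n+5\sqrt{\sum_i\mathbb V((V^{k_{l_i}})^2)\iota}/n+9H^2\iota/n$. Using $\mathbb V(f^2)\le 4H^2\mathbb V(f)$ for $0\le f\le H$, this is at most $\widetilde\sigma/n+10H\sqrt{X\iota/n}+9H^2\iota/n$.
\item[(ii)] By Jensen, the second average is at least $(\frac1n\sum_i\mathbb P V^{k_{l_i}})^2$.
\item[(iii)] This square differs from $(\widetilde\mu/n)^2$ by at most $2H$ times the $\mathcal E_1$ deviation, i.e.\ by at most $10H\sqrt{X\iota/n}+18H^2\iota/n$.
\end{itemize}
Combining (i)–(iii) gives $X\le \widetilde W+20H\sqrt{X\iota/n}+27H^2\iota/n$. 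Solving this quadratic inequality in $\sqrt X$ yields $X\le 2\widetilde W+cH^2\iota/n$ for a numerical $c$ (roughly $c\le 900$). Hence
\[
5\sqrt{X\iota/n}\le 5\sqrt2\sqrt{\widetilde W\iota/n}+5\sqrt c\,H\iota/n .
\]
Since $5\sqrt2<8$ and $9+5\sqrt c\le 119$ for this $c$, the chosen constants $C_1=8$, $C_2=119$ absorb everything. This closes the induction on $\bigcap_{i=1}^3\mathcal E_i$. I would recheck the arithmetic for $c$ carefully, since it is exactly what fixes $C_2=119$.
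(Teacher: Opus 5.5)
Your proposal is correct and follows the paper's proof essentially step for step: the same induction on $k$, the same branch-by-branch treatment of $\widetilde b_h^{k+1}$ (Hoeffding branch via $\mathcal E_3$, Bernstein branch via $\mathcal E_1$), and the same self-bounding inequality $X\le \widetilde W+20H\sqrt{X\iota/n}+27H^2\iota/n$, obtained from $\mathcal E_2$, Jensen, and $\mathcal E_1$. On the arithmetic you flagged: $y^2\le ay+b$ implies $y^2\le a^2+2b$, which gives $X\le 2\widetilde W+454H^2\iota/n$ exactly as in the paper, and then $9+5\sqrt{454}\approx 115.5\le 119$ --- but your rough estimate $c\le 900$ would not suffice, since $C_2=119$ requires $c\le 484$.
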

\begin{proof}[Proof of Lemma \ref{Q}]
We use mathematical induction on $k$ to prove $Q_h^k(s,a) \geq Q_h^{\star}(s,a)$ for any $(s,a,h,k) \in \mathcal{S} \times \mathcal{A} \times [H] \times [K]$. For $k=1$, $Q_h^1(s,a) =H \geq Q_h^{\star}(s,a)$ for any $(s,a,h) \in \mathcal{S} \times \mathcal{A} \times [H]$. For $k \geq 2$, assume we already have $Q_h^{k^\prime}(s,a) \geq Q_h^{\star}(s,a)$ for any $(s,a,h,k^\prime) \in \mathcal{S} \times \mathcal{A} \times [H] \times [k-1]$, then we will prove for any $(s,a,h) \in \mathcal{S} \times \mathcal{A} \times [H]$, $Q_h^{k}(s,a) \geq Q_h^{\star}(s,a)$. According to \Cref{eq_bern_newupdate}, the following relationship holds: 
$$ Q_h^{k}(s,a) =\mathbb{I}\left[t_h^{k}=1\right]H+\mathbb{I}\left[t_h^{k}>1\right] \min\{\tilde{Q}_h^{k}(s,a),Q_h^{k-1}(s,a)\}.$$
Then for any given $(s,a,h) \in \mathcal{S} \times \mathcal{A} \times [H]$, we have the following three cases:

(a) If $t_h^{k}(s,a)=1$, then $Q_h^k(s,a) =H \geq Q_h^\star(s,a)$.

(b) If $t_h^{k}(s,a)>1$ and $Q_h^k(s,a)= Q_h^{k-1}(s,a)$, then the conclusion holds. 

(c) If $t_h^{k}(s,a)>1$ and $Q_h^k(s,a) =r_h(s,a)+\tilde{\mu}_h^{k}(s,a)/n_h^k(s,a)+\tilde{b}_h^{k}(s,a)$, then by \Cref{eq_Bellman}, we have the following equality:
$$Q_h^{\star}(s,a) = r_h(s,a)+\mathbb{P}_{s,a,h}V_{h+1}^{\star}.$$
Then we have:
\begin{align}
\label{Q1middle}
    Q_h^k(s,a)-Q_h^{\star}(s,a) &=\tilde{\mu}_h^{k}(s,a)/n_h^k(s,a)+\tilde{b}_h^{k}(s,a)-\mathbb{P}_{s,a,h}V_{h+1}^{\star} \nonumber\\
    &= \frac{1}{n_h^k}\sum_{i=1}^{n_h^k}\left(V_{h+1}^{k_{l_i}}(s_{h+1}^{k_{l_i}})-\mathbb{P}_{s,a,h}V_{h+1}^{\star}\right)+\tilde{b}_h^{k}(s,a).
\end{align}
According to the definition of $l_i(s,a,h,k)$, we know $k_{l_i} < k$ for $i \in [n_h^k(s,a)]$. Then $Q_h^{k_{l_i}}(s,a) \geq Q_h^{\star}(s,a)$ based on the induction. Therefore, according to the update rule \Cref{update_q_bern} and \Cref{eq_Bellman}, for any $(s,h) \in \mathcal{S}\times [H]$ and any $i \in [n_h^k(s,a)]$, we have: 
\begin{equation*}
    V_{h+1}^{k_{l_i}}(s) = \max_{a \in \mathcal{A}}Q_{h+1}^{k_{l_i}}(s,a) \geq \max_{a \in \mathcal{A}} Q_{h+1}^{\star}(s,a) = V_{h+1}^\star(s),
\end{equation*} 
and thus for any $i \in [n_h^k(s,a)]$ it holds:
\begin{equation*}
    \mathbb{P}_{s,a,h}V_{h+1}^{k_{l_i}} \geq \mathbb{P}_{s,a,h}V_{h+1}^{\star}.
\end{equation*}
If the second branch in \Cref{betabernnew} attains the minimum, then by the pointwise optimism above and the event $\mathcal{E}_3$ in Lemma \ref{concentrations},
\begin{align*}
Q_h^k(s,a)-Q_h^\star(s,a)
&\geq \frac{1}{n_h^k}\sum_{i=1}^{n_h^k}\left(\mathbbm{1}_{s_{h+1}^{k_{l_i}}}-\mathbb{P}_{s,a,h}\right)V_{h+1}^\star+2H\sqrt{\frac{\iota}{n_h^k}}\\
&\geq -H\sqrt{\frac{2\iota}{n_h^k}}+2H\sqrt{\frac{\iota}{n_h^k}}\geq 0.
\end{align*}
If the first branch in \Cref{betabernnew} attains the minimum, applying the inequality above to \Cref{Q1middle} and using the event $\mathcal{E}_1$ in Lemma \ref{concentrations}, we have
\begin{align}
\label{optmiddle}
    Q_h^{k}(s, a) - Q_h^\star(s, a) &\geq  \frac{1}{n_h^k}\sum_{i=1}^{n_h^k}\left(V_{h+1}^{k_{l_i}}(s_{h+1}^{k_{l_i}})-\mathbb{P}_{s,a,h}V_{h+1}^{k_{l_i}}\right)+\tilde{b}_h^{k}(s,a) \notag\\
    &\geq 8\sqrt{\frac{\tilde{W}_h^{k}(s,a)\iota}{n_h^k}}-\frac{5}{n_h^k}\sqrt{ \sum_{i
=1}^{n_h^k}\mathbb{V}_{s,a,h}(V_{h+1}^{k_{l_i}})\iota}+\frac{110H\iota}{n_h^{k}}.
\end{align}
Next we will bound the difference $$I_1 = \frac{1}{n_h^k}\sum_{i
=1}^{n_h^k}\mathbb{V}_{s,a,h}(V_{h+1}^{k_{l_i}}) - \tilde{W}_h^{k}(s,a) \overset{\triangle}{=} X- \tilde{W}_h^{k}(s,a),$$
where $$X = \frac{1}{n_h^k}\sum_{i=1}^{n_h^k}\mathbb{V}_{s,a,h}(V_{h+1}^{k_{l_i}}).$$

According to the definition of $\mathbb{V}_{s,a,h}$, we also have
\begin{equation}
    \label{defV}
    \mathbb{V}_{s,a,h}V_{h+1}^{k_{l_i}} = \mathbb{P}_{s,a,h}(V_{h+1}^{k_{l_i}})^2 - \left(\mathbb{P}_{s,a,h} V_{h+1}^{k_{l_i}}\right)^2.
\end{equation}
Combining the results of \Cref{wdef} and \Cref{defV}, we can decompose the difference $I_1$ as:
\begin{align}
    \label{i1middle}
    I_1 &= \sum_{i = 1}^{n_h^k}\frac{1}{n_h^k} \Big( \mathbb{P}_{s,a,h} - \mathbbm{1}_{s_{h+1}^{k_{l_i}}}\Big)(V_{h+1}^{k_{l_i}})^2+ \left[\left(\sum_{i = 1}^{n_h^k}\frac{1}{n_h^k}V_{h+1}^{k_{l_i}}\left(s_{h+1}^{k_{l_i}}\right)\right)^2- \sum_{i = 1}^{n_h^k}\frac{1}{n_h^k}\Big(\mathbb{P}_{s,a,h} V_{h+1}^{k_{l_i}}\Big)^2\right].
\end{align}
For the first term, by the event $\mathcal{E}_2$ in Lemma \ref{concentrations} we have
\begin{align}
    \label{i11}
    &\left|\sum_{i = 1}^{n_h^k}\frac{1}{n_h^k} \left( \mathbb{P}_{s,a,h} - \mathbbm{1}_{s_{h+1}^{k_{l_i}}}\right)(V_{h+1}^{k_{l_i}})^2\right|\\
    &\leq 5\sqrt{\frac{\iota}{n_h^k} \sum_{i = 1}^{n_h^k} \frac{1}{n_h^k} \mathbb{V}_{s,a,h}((V_{h+1}^{k_{l_i}})^2)} + \frac{9H^2\iota}{n_h^k}\nonumber\\
    &\leq 10\sqrt{\frac{H^2\iota}{n_h^k}\sum_{i = 1}^{n_h^k}\frac{1}{n_h^k} \mathbb{V}_{s,a,h}V_{h+1}^{k_{l_i}}} + \frac{9H^2\iota}{n_h^k} \nonumber\\
    & = 10\sqrt{\frac{H^2\iota}{n_h^k}X} + \frac{9H^2\iota}{n_h^k}.
\end{align}
The last inequality is by $\mathbb{V}_{s,a,h}(X^2) \leq 4C^2\mathbb{V}_{s,a,h}(X)$ for $|X| \leq C$. For the second term of \Cref{i1middle}, by the Cauchy--Schwarz inequality, we reach
\begin{align}
    \label{i12}
    &\left(\sum_{i = 1}^{n_h^k}\frac{1}{n_h^k}V_{h+1}^{k_{l_i}}\left(s_{h+1}^{k_{l_i}}\right)\right)^2 - \sum_{i = 1}^{n_h^k}\frac{1}{n_h^k}\left(\mathbb{P}_{s,a,h} V_{h+1}^{k_{l_i}} \right)^2 \notag\\
    & \leq \left(\sum_{i = 1}^{n_h^k}\frac{1}{n_h^k}V_{h+1}^{k_{l_i}}\left(s_{h+1}^{k_{l_i}}\right)\right)^2 - \left(\sum_{i = 1}^{n_h^k}\frac{1}{n_h^k}\mathbb{P}_{s,a,h}V_{h+1}^{k_{l_i}}\right)^2 \nonumber\\
    & \leq 2H\left|\sum_{i = 1}^{n_h^k}\frac{1}{n_h^k} \left( \mathbbm{1}_{s_{h+1}^{k_{l_i}}}-\mathbb{P}_{s,a,h}\right)V_{h+1}^{k_{l_i}}\right| \nonumber\\
    &\leq 10\sqrt{\frac{H^2\iota}{n_h^k}X} + \frac{18H^2\iota}{n_h^k}.
\end{align}
The last inequality is by the event $\mathcal{E}_1$. Combining \Cref{i11} and \Cref{i12}, back to \Cref{i1middle}, we know
$$I_1 = X- \tilde{W}_h^{k}(s,a) \leq  20\sqrt{\frac{H^2\iota}{n_h^k}X} + \frac{27H^2\iota}{n_h^k}.$$
Solving the inequality, we have
$$X \leq 2\tilde{W}_h^{k}(s,a) + \frac{454H^2\iota}{n_h^k}.$$
Applying this inequality to \Cref{optmiddle} proves optimism for the first branch. Together with the second-branch argument above, this establishes $Q_h^k(s,a)\geq Q_h^\star(s,a)$. Finally, since
$$V_{h+1}^{k}(s) = \max_{a \in \mathcal{A}}Q_{h+1}^{k}(s,a) \geq \max_{a \in \mathcal{A}} Q_{h+1}^{\star}(s,a) = V_{h+1}^\star(s),$$
the proof is complete.
\end{proof}

The following lemma upper bounds the weighted cumulative estimation errors of $Q_h^k(s,a) - Q_h^\star(s,a)$.
\begin{lemma}
\label{recurQ}
For UCB-Bernstein+, under the event $\bigcap_{i=1}^{3}\mathcal{E}_i$ in Lemma \ref{concentrations}, for any given $h \in [H]$ and non-negative weight sequence $\{\omega_{h}^{k} = \omega_h(k,h)\}_{k=1}^K$ at step $h$, we have the following conclusion:
\begin{align*}
         &\sum_{k = 1}^K \omega_{h}(k,h) \left(Q_{h}^k - Q_{h}^{\star}\right)(s_{h}^{k},a_{h}^{k})\\ 
         &\lesssim \sum_{h_1 = h}^H\sum_{k=1}^K\omega_{h}(k,h_1)\mathbb{I}\left[t_{h_1}^{k}=1\right]H + \sqrt{H^5SA\|\omega_h(\cdot,{h})\|_{\infty}\|\omega_h(\cdot,{h})\|_{1}\iota}.
\end{align*}
\end{lemma}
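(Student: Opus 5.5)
The plan is a one-step recursion from step $h'$ to $h'+1$, unrolled from $h$ to $H$, using the Hoeffding branch of the bonus. For any $h'\in[h,H]$ I split $\sum_k\omega_h(k,h')(Q_{h'}^k-Q_{h'}^\star)(s_{h'}^k,a_{h'}^k)$ according to whether $t_{h'}^k=1$ or $t_{h'}^k>1$.

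\textbf{Case $t_{h'}^k=1$.} Here $Q_{h'}^k-Q_{h'}^\star\le H$. This gives exactly the term $\sum_k\omega_h(k,h')\mathbb{I}[t_{h'}^k=1]H$.

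\textbf{Case $t_{h'}^k>1$.} By \Cref{eq_bern_newupdate}, $Q_{h'}^k\le\widetilde Q_{h'}^k$. Using the Bellman equation \eqref{eq_Bellman},
\begin{align*}
(Q_{h'}^k-Q_{h'}^\star)(s_{h'}^k,a_{h'}^k)\le{}& \frac{1}{n_{h'}^k}\sum_{i=1}^{n_{h'}^k}(V_{h'+1}^{k_{l_i}}-V_{h'+1}^\star)(s_{h'+1}^{k_{l_i}})\\
&+\frac{1}{n_{h'}^k}\sum_{i=1}^{n_{h'}^k}\big(\mathbbm{1}_{s_{h'+1}^{k_{l_i}}}-\mathbb{P}_{s,a,h'}\big)V_{h'+1}^\star+\widetilde b_{h'}^k .
\end{align*}
I bound $\widetilde b_{h'}^k$ crudely by its second branch $2H\sqrt{\iota/n_{h'}^k}$. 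The martingale term is at most $H\sqrt{2\iota/n_{h'}^k}$ under $\mathcal{E}_3$. Both are therefore $\lesssim H\sqrt{\iota/n_{h'}^k}$, and no variance quantity is needed for this lemma.

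For the first term, note that $V_{h'+1}^{k'}(s_{h'+1}^{k'})=Q_{h'+1}^{k'}(s_{h'+1}^{k'},a_{h'+1}^{k'})$, since the action taken is greedy, and that $V^\star_{h'+1}(s_{h'+1}^{k'})\ge Q^\star_{h'+1}(s_{h'+1}^{k'},a_{h'+1}^{k'})$. Hence $V_{h'+1}^{k'}-V^\star_{h'+1}\le (Q_{h'+1}^{k'}-Q_{h'+1}^\star)(s_{h'+1}^{k'},a_{h'+1}^{k'})$. Exchanging the sums over $k$ and $i$ and using the definition \eqref{recurweight}, the weighted first term equals exactly $\sum_{k'}\omega_h(k',h'+1)(Q_{h'+1}^{k'}-Q_{h'+1}^\star)(s_{h'+1}^{k'},a_{h'+1}^{k'})$. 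This yields the recursion
\begin{align*}
\sum_k\omega_h(k,h')(Q_{h'}^k-Q_{h'}^\star)\lesssim{}&\sum_k\omega_h(k,h')\mathbb{I}[t_{h'}^k=1]H+H\sqrt{\iota}\sum_k\frac{\omega_h(k,h')\mathbb{I}[t_{h'}^k>1]}{\sqrt{n_{h'}^k}}\\
&+\sum_{k'}\omega_h(k',h'+1)(Q_{h'+1}^{k'}-Q_{h'+1}^\star),
\end{align*}
where the last term vanishes for $h'=H$.

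\textbf{Unrolling.} I unroll the recursion from $h'=h$ to $H$; the multiplier on each level is exactly $1$, so no constants accumulate. For the bonus sums I apply Lemma \ref{wN} with $\alpha=1/2$, which bounds each level by $\lesssim H\sqrt{\iota}\sqrt{4H\,SA\|\omega_h(\cdot,h')\|_\infty\|\omega_h(\cdot,h')\|_1}$. By Lemma \ref{weight}(e), $\|\omega_h(\cdot,h')\|_\infty\le(1+1/H)^{H}\|\omega_h(\cdot,h)\|_\infty\le e\|\omega_h(\cdot,h)\|_\infty$ and $\|\omega_h(\cdot,h')\|_1\le\|\omega_h(\cdot,h)\|_1$. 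Summing the $H$ levels gives $H\cdot H\sqrt{H SA\iota\|\omega\|_\infty\|\omega\|_1}=\sqrt{H^5SA\|\omega_h(\cdot,h)\|_\infty\|\omega_h(\cdot,h)\|_1\iota}$, which is the claimed bound.

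\textbf{Main obstacle.} The delicate step is verifying that the weight propagation matches \eqref{recurweight} exactly. This requires checking that the samples $k_{l_i}$ in the previous stage correspond to the indicator in the definition, and that the $\mathbb{I}[t>1]$ restrictions are consistent. With that in place, everything else is direct bookkeeping via Lemma \ref{weight}.
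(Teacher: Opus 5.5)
Your proposal is correct and follows the paper's proof essentially step for step. It uses the same $t_{h'}^k=1$ versus $t_{h'}^k>1$ split and bounds the bonus crudely by its Hoeffding branch $2H\sqrt{\iota/n_{h'}^k}$ together with $\mathcal{E}_3$; it then propagates the weights exactly via \Cref{recurweight} with unit multiplier, and unrolls over $H$ levels using Lemma \ref{wN} at $\alpha=1/2$ and Lemma \ref{weight}(e).
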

\begin{proof}[Proof of Lemma \ref{recurQ}]
    By \Cref{eq_bern_newupdate}, we have the following relationship
\begin{align*}
Q_{h}^k(s_{h}^{k},a_h^{k}) &\leq \mathbb{I}\left[t_{h}^{k}(s_{h}^{k},a_{h}^{k})=1\right]H+\mathbb{I}\left[t_{h}^{k}(s_{h}^{k},a_{h}^{k})>1\right]\Big(r_{h}(s_{h}^{k},a_{h}^{k})+\frac{\tilde{\mu}_{h}^{k}(s_{h}^{k},a_{h}^{k})}{n_{h}^{k}(s_{h}^{k},a_{h}^{k})}+\tilde{b}_h^{k}(s_{h}^{k},a_{h}^{k})\Big)
\end{align*}
and according to the Bellman equality \Cref{eq_Bellman}, we have:
\begin{align*}
 Q_h^{\star}(s_h^{k},a_h^{k})=r_h(s_h^{k},a_h^{k}) + \mathbb{P}_{s_h^{k},a_h^{k},h}V_{h+1}^{\star}.
\end{align*}

Combining these two inequalities, it holds for any $h \in [H]$ that:
\begin{align}
    &(Q_h^{k} - Q_h^{\star})(s_h^{k}, a_h^{k})\leq \mathbb{I}[t_h^{k}=1]H + \nonumber\\
    &\mathbb{I}\left[t_h^{k}>1\right]\Bigg(\frac{\sum_{i=1}^{n_h^k}(V_{h+1}^{k_{l_i}}-V_{h+1}^\star)(s_{h+1}^{k_{l_i}})}{n_h^k(s_h^{k},a_h^{k})} +\frac{\sum_{i=1}^{n_h^k}\Big(\mathbbm{1}_{s_{h+1}^{k_{l_i}}} - \mathbb{P}_{s_h^{k}, a_h^{k},h}\Big)V_{h+1}^\star}{n_h^k(s_h^{k},a_h^{k})}+ \tilde{b}_h^{k}(s_h^{k},a_h^{k})\Bigg) \label{singlerecurinitial} \\
    &\leq \mathbb{I}\left[t_h^{k}=1\right]H+\mathbb{I}\left[t_h^{k}>1\right]\frac{\sum_{i=1}^{n_h^k}\left(V_{h+1}^{k_{l_i}}-V_{h+1}^\star\right)(s_{h+1}^{k_{l_i}})}{n_h^k(s_h^{k},a_h^{k})} + 4\mathbb{I}\left[t_h^{k}>1\right] \sqrt{\frac{H^2\iota}{n_h^k(s_h^{k},a_h^{k})}}.\label{singlerecur}
\end{align}
The last inequality is by the definition of $\tilde{b}_{h}^k$ in \Cref{betabernnew} and the event $\mathcal{E}_3$ in Lemma \ref{concentrations}. Then for any $h \in [H]$, we have
\begin{align}
\label{omega-V}
    &\sum_{k = 1}^K \omega_{h}(k,h)(Q_{h}^k - Q_{h}^{\star})(s_{h}^{k},a_{h}^{k}) \notag\\
    &\leq \sum_{k=1}^K\omega_{h}(k,h)\mathbb{I}\left[t_{h}^{k}=1\right]H +\sum_{k=1}^K\omega_{h}(k,h)\mathbb{I}\left[t_{h}^{k}>1\right] \frac{\sum_{i=1}^{n_{h}^k}\left(V_{h+1}^{k_{l_i}}-V_{h+1}^{\star}\right)(s_{h+1}^{k_{l_i}})}{n_{h}^k(s_{h}^{k},a_{h}^{k})} \notag\\
    &\quad + 4\sum_{k=1}^K\omega_{h}(k,h)\mathbb{I}\left[t_{h}^{k}>1\right]\sqrt{\frac{H^2\iota}{n_{h}^k(s_{h}^{k},a_{h}^{k})}}.
\end{align}

For the second term in \Cref{omega-V}, we have:
\begin{align}
    &\sum_{k=1}^K\omega_{{h}}(k,h)\mathbb{I}[t_{h}^{k}>1]\frac{\sum_{i=1}^{n_{h}^k}\left(V_{{h}+1}^{k_{l_i}}-V_{{h}+1}^{\star}\right)(s_{{h}+1}^{k_{l_i}})}{n_{h}^k(s_{h}^{k},a_{h}^{k})} \notag\\
   &= \sum_{k=1}^K\sum_{i=1}^{n_{h}^k}\omega_{h}(k,h)\frac{\left(V_{{h}+1}^{k_{l_i}}-V_{{h}+1}^{\star}\right)(s_{{h}+1}^{k_{l_i}})}{n_{h}^k(s_{h}^{k},a_{h}^{k})}\mathbb{I}[t_{h}^{k}>1]\cdot \sum_{k^{\prime}=1}^K\mathbb{I}\left[k_{l_i}=k^{\prime}\right] \notag\\
   &= \sum_{k=1}^K\sum_{i=1}^{n_{h}^k}\sum_{k^{\prime}=1}^K\omega_{h}(k,h)\frac{\left(V_{{h}+1}^{k^\prime}-V_{{h}+1}^{\star}\right)(s_{{h}+1}^{k^{\prime}})}{n_{h}^k(s_{h}^{k},a_{h}^{k})}\mathbb{I}[t_{h}^{k}>1] \cdot \mathbb{I}[k_{l_i}=k^{\prime}] \notag\\  
   &=\sum_{k^{\prime}=1}^K\left(\sum_{k=1}^K\omega_{h}(k,h)\frac{\sum_{i=1}^{n_{h}^k}\mathbb{I}\left[k_{l_i}=k^{\prime},t_{h}^{k}>1\right]}{n_{h}^k(s_{h}^{k},a_{h}^{k})}\right)\left(V_{{h}+1}^{k^\prime}-V_{{h}+1}^{\star}\right)(s_{{h}+1}^{k^{\prime}}) \notag\\
   &= \sum_{k' = 1}^K \omega_{h}(k',h+1)\left(V_{{h}+1}^{k^\prime}-V_{{h}+1}^{\star}\right)(s_{{h}+1}^{k^{\prime}}) \notag\\
   & \leq \sum_{k' = 1}^K \omega_{h}(k',h+1) (Q_{{h}+1}^{k'} - Q_{{h}+1}^{\star})(s_{{h}+1}^{k'},a_{{h}+1}^{k'}). \label{term2}
\end{align}
The inequality is by $Q_{{h}+1}^{k'}(s_{{h}+1}^{k'},a_{{h}+1}^{k'}) = V_{{h}+1}^{k'}(s_{{h}+1}^{k'})$, $Q_{{h}+1}^{\star}(s_{{h}+1}^{k'},a_{{h}+1}^{k'}) \leq V_{{h}+1}^{\star}(s_{{h}+1}^{k'})$.

For the last term in \Cref{omega-V}, by Lemma \ref{wN} with $\alpha=1/2$, we have:
\begin{align}
   \sum_{k=1}^K\omega_{h}(k,h)\mathbb{I}\left[t_{h}^{k}>1\right]\sqrt{\frac{H^2\iota}{n_{h}^k(s_{h}^{k},a_{h}^{k})}} \leq 4\sqrt{H^3SA\|\omega_h(\cdot,{h})\|_{\infty}\|\omega_h(\cdot,{h})\|_{1}\iota}. \label{term3}
\end{align}
Based on \Cref{omega-V}, by combining the results of \Cref{term2} and \Cref{term3}, we have
\begin{align}
\label{sumrecur}
         \sum_{k = 1}^K \omega_{h}(k,h) &\left(Q_{h}^k - Q_{h}^{\star}\right)(s_{h}^{k},a_{h}^{k}) \leq \sum_{k' = 1}^K \omega_{h}(k',h+1) (Q_{{h}+1}^{k'} - Q_{{h}+1}^{\star})(s_{{h}+1}^{k'},a_{{h}+1}^{k'})    \nonumber\\
         & +\sum_{k=1}^K\omega_{h}(k,h)\mathbb{I}\left[t_{h}^{k}=1\right]H + 16\sqrt{H^3SA\|\omega_h(\cdot,{h})\|_{\infty}\|\omega_h(\cdot,{h})\|_{1}\iota}.
\end{align}
Due to the property (e) of Lemma \ref{weight}, it holds for any $h_1 \in [h,H]$ that:
$$\|\omega_h(\cdot,{h_1})\|_{1}\leq \|\omega_h(\cdot,{h})\|_{1}, \ \|\omega_h(\cdot,{h_1})\|_{\infty} \leq (1+1/H)^{h_1-h}\|\omega_h(\cdot,{h})\|_{\infty} \leq e\|\omega_h(\cdot,{h})\|_{\infty}.$$
Therefore, by recursively applying \Cref{sumrecur} over steps $h,h+1,\ldots,H$ and using $Q_{H+1}^k(s,a) = Q_{H+1}^{\star}(s,a) = 0$, we obtain
\begin{align*}
    &\sum_{k = 1}^K \omega_{h}(k,h) \left(Q_{h}^k - Q_{h}^{\star}\right)(s_{h}^{k},a_{h}^{k})\\
    &\leq \sum_{h_1=h}^H\sum_{k=1}^K\omega_{h}(k,h_1)\mathbb{I}\left[t_{h_1}^{k}=1\right]H
    +16\sum_{h_1=h}^H\sqrt{H^3SA\|\omega_h(\cdot,{h_1})\|_{\infty}\|\omega_h(\cdot,{h_1})\|_{1}\iota}\\
    &\leq \sum_{h_1=h}^H\sum_{k=1}^K\omega_{h}(k,h_1)\mathbb{I}\left[t_{h_1}^{k}=1\right]H
    +16\sqrt{3H^5SA\|\omega_h(\cdot,{h})\|_{\infty}\|\omega_h(\cdot,{h})\|_{1}\iota},
\end{align*}
which proves the lemma.
\end{proof}
With the lemma above, we can further establish the following two conclusions.
\begin{lemma}
\label{clipq}
Under the event $\bigcap_{i=1}^{3}\mathcal{E}_i$ in Lemma \ref{concentrations}, for all $\epsilon \in (0, H]$, we have the following two conclusions:
\begin{align*} 
    \sum_{h=1}^H \sum_{k=1}^K \mathbb{I} \left [Q_h^k(s_h^{k}, a_h^{k}) - Q_h^{\star}(s_h^{k}, a_h^{k}) \geq \epsilon \right]
    &\lesssim \frac{H^6 S A \iota}{\epsilon^2} ,
\end{align*}
and
\begin{align*} 
     &\sum_{h=1}^H \sum_{k=1}^K \left(Q_h^k - Q_h^{\star}\right)(s_h^{k}, a_h^{k}) \mathbb{I}\left[ \left(Q_h^k - Q_h^{\star}\right)(s_h^{k}, a_h^{k}) \geq \epsilon \right] \lesssim \frac{H^6 S A \iota}{\epsilon}.
\end{align*}
\end{lemma}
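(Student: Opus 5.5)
The plan is to apply Lemma \ref{recurQ} separately at each step $h$, using indicator weights, and then sum over $h$. Fix $h\in[H]$ and set $\omega_h^k=\mathbb{I}[(Q_h^k-Q_h^\star)(s_h^k,a_h^k)\ge\epsilon]$. Write $N_h^\epsilon:=\|\omega_h(\cdot,h)\|_1$ for the number of such episodes at step $h$. Since $\|\omega_h(\cdot,h)\|_\infty\le 1$, Lemma \ref{recurQ} gives
\[
\epsilon N_h^\epsilon\le\sum_k\omega_h^k(Q_h^k-Q_h^\star)(s_h^k,a_h^k)\lesssim \sum_{h_1=h}^H\sum_{k}\omega_h(k,h_1)\mathbb{I}[t_{h_1}^k=1]H+\sqrt{H^5SAN_h^\epsilon\iota}.
\]

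Next we control the first-stage term. By Lemma \ref{weight}(e), $\|\omega_h(\cdot,h_1)\|_\infty\le e$ for every $h_1\ge h$. Before its first stage ends, each triple $(s,a,h_1)$ is visited at most $e_1=H$ times. Hence $\sum_k\mathbb{I}[t_{h_1}^k=1]\le HSA$, and the first term is at most $e\cdot H\cdot H\cdot HSA\lesssim H^3SA$.

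Solving the quadratic inequality $\epsilon N\lesssim H^3SA+\sqrt{H^5SAN\iota}$ then gives
\[
N_h^\epsilon\lesssim \frac{H^3SA}{\epsilon}+\frac{H^5SA\iota}{\epsilon^2}\lesssim \frac{H^5SA\iota}{\epsilon^2},
\]
where the last step uses $\epsilon\le H$. Summing over $h\in[H]$ yields the first claim, $H^6SA\iota/\epsilon^2$.

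For the second claim, a direct application with the same weights only gives
\[
\sum_k\omega_h^k(Q_h^k-Q_h^\star)\lesssim H^3SA+\sqrt{H^5SAN_h^\epsilon\iota}\lesssim H^3SA+\frac{H^5SA\iota}{\epsilon}.
\]
Summing over $h$ gives $H^4SA+H^6SA\iota/\epsilon$, and since $\epsilon\le H$ we have $H^4SA\le H^6SA\iota/\epsilon$. This is the claimed bound with no dyadic peeling needed: the bound on $N_h^\epsilon$ is plugged into the square-root term of Lemma \ref{recurQ}.

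The only delicate point is that the first-stage term carries propagated weights $\omega_h(k,h_1)$ rather than the original indicators. The $\ell_\infty$ bound from Lemma \ref{weight}(e) handles this uniformly, and that is where I would double-check the constants. A second point to verify is that the event $\bigcap_{i=1}^3\mathcal{E}_i$ is all that Lemma \ref{recurQ} requires, so no additional concentration is needed.
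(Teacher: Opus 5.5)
Your proof is correct. It differs from the paper's mainly in dropping the dyadic peeling.

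The paper splits the step-$h$ episodes into buckets $\omega_h^i(k,h)=\mathbb{I}[(Q_h^k-Q_h^\star)(s_h^k,a_h^k)\in[2^{i-1}\epsilon,2^i\epsilon)]$ for $i\le N=\max\{1,\lceil\log_2(H/\epsilon)\rceil\}$, with the last bucket closed at $H$. It applies Lemma \ref{recurQ} to each bucket to get $\|\omega_h^i(\cdot,h)\|_1\lesssim H^5SA\iota/(4^i\epsilon^2)+(\text{first-stage term})/(2^i\epsilon)$. Summing $\|\omega_h^i(\cdot,h)\|_1$ over $i$ gives the first conclusion, and summing $2^i\epsilon\|\omega_h^i(\cdot,h)\|_1$ gives the second. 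The propagated first-stage weights are controlled through $\sum_i\omega_h^i(k',h')<3$ (\Cref{omegaprime}).

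You use a single threshold weight instead. For the second conclusion you substitute the count bound $N_h^\epsilon\lesssim H^5SA\iota/\epsilon^2$ back into the square-root term $\sqrt{H^5SAN_h^\epsilon\iota}$, which gives $H^5SA\iota/\epsilon$ per step directly. You handle the propagated weights with $\|\omega_h(\cdot,h_1)\|_\infty\le e$ from Lemma \ref{weight}(e), which is the same fact underlying \Cref{omegaprime}.

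Both routes give identical rates, so the peeling buys nothing here. The leading term of Lemma \ref{recurQ} depends on the weights only through $\sqrt{\|\omega\|_\infty\|\omega\|_1}$, and the peeled geometric sum $\sum_i H^5SA\iota/(2^i\epsilon)$ reproduces exactly your plug-in bound. The paper's route derives the two conclusions in parallel from one bucket inequality, while yours derives the second from the first; yours is shorter. The same single-threshold argument would also suffice for Lemma \ref{clipqb}, with $Z_{h_1}^k$ in place of the first-stage term. Your remaining checks are fine: Lemma \ref{recurQ} needs only $\bigcap_{i=1}^3\mathcal{E}_i$, and absorbing $H^3SA/\epsilon$ and $H^4SA$ uses only $\epsilon\le H$ and $\iota\ge\log 2$.
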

\begin{proof}[Proof of Lemma \ref{clipq}]
Let $N =\max\{1,\lceil \log_2(H/\epsilon) \rceil\}$. For any given $h \in [H]$, any $k \in [K]$ and $i \in [N-1]$, define:
$$\omega_h^i(k,h) = \mathbb{I}\left[(Q_h^k - Q_h^{\star})(s_h^{k}, a_h^{k}) \in \big[2^{i-1}\epsilon,2^i\epsilon\big)\right] \leq 1,$$
and
$$\omega_h^N(k,h) = \mathbb{I}\left[(Q_h^k - Q_h^{\star})(s_h^{k}, a_h^{k}) \in \big[2^{N-1}\epsilon,H\big]\right] \leq 1.$$
Then for any $k' \in [K]$ and $h \leq h'\leq H$, we recursively define $\omega_h^i(k',h')$ according to \Cref{recurweight}: 
\begin{align}
\label{irecurweight}
\omega_{h}^i(k',h'+1) :=  \sum_{k=1}^K\omega_h^i(k,h')\frac{\sum_{i=1}^{n_{h'}^k(s_{h'}^{k},a_{h'}^{k})}\mathbb{I}\left[k_{l_i(s_{h'}^{k},a_{h'}^{k},h',k)}=k',t_{h'}^{k}(s_{h'}^{k},a_{h'}^{k})>1\right]}{n_{h'}^k(s_{h'}^{k},a_{h'}^{k})}
\end{align}
with norms $\|\omega_h^i(\cdot,h')\|_{1}$ and $\|\omega_h^i(\cdot,h')\|_{\infty}$ defined accordingly by \Cref{weightnorm} as:
\begin{equation}
\label{iweightnorm}
\|\omega_{h}^i(\cdot,h')\|_{\infty} := \max_{k' \in [K]} \omega_{h}^i(k',h'),\ \|\omega_{h}^i(\cdot,h')\|_{1} := \sum_{k'=1}^K \omega_{h}^i(k',h').
\end{equation}
Then we have the following relationship:
\begin{equation}
        \label{difflower}
        \sum_{k=1}^K \omega_{h}^i(k,h) \left(Q_h^k - Q_h^{\star}\right)(s_h^{k}, a_h^{k}) \geq 2^{i-1}\epsilon\|\omega_h^i(\cdot,h)\|_{1}.
    \end{equation}
By Lemma \ref{recurQ}, since $\omega_{h}^i(k,h) \leq 1$, we have
\begin{align}
     \sum_{k = 1}^K \omega_{h}^i(k,h) \left(Q_{h}^k - Q_{h}^{\star}\right)(s_{h}^{k},a_{h}^{k}) 
     &\lesssim  \sum_{h_1 = h}^H\sum_{k=1}^K\omega_{h}^i(k,h_1)\mathbb{I}\left[t_{h_1}^{k}=1\right]H + \sqrt{H^5SA\|\omega_{h}^i(\cdot,{h})\|_{1}\iota}. \label{recurQupper}
\end{align}
Combining the results of \Cref{recurQupper} and \Cref{difflower}, we have the following inequality:
    \begin{align}
    \label{ini}
        2^{i-1}\epsilon\|\omega_{h}^i(\cdot,{h})\|_{1} \lesssim \sqrt{H^5SA\|\omega_{h}^i(\cdot,{h})\|_{1}\iota} + \sum_{h_1 = h}^H\sum_{k=1}^K\omega_{h}^i(k,h_1)\mathbb{I}\left[t_{h_1}^{k}=1\right]H.
    \end{align}

Solving the inequality in \Cref{ini}, we can derive the following relationship:
    \begin{equation}
    \label{answer}
       \|\omega_{h}^i(\cdot,{h})\|_{1} \lesssim \frac{H^5SA\iota}{4^i\epsilon^2} + \frac{\sum_{h_1 = h}^H\sum_{k=1}^K\omega_{h}^i(k,h_1)\mathbb{I}\left[t_{h_1}^{k}=1\right]H}{2^i\epsilon}.
    \end{equation}
Note that for any  $k' \in [K]$ and $h \leq h' \leq H-1$, by \Cref{irecurweight}, we have
$$\sum_{i=1}^{N} \omega_h^i(k',h'+1)  = \sum_{k=1}^K\left(\sum_{i=1}^{N} \omega_h^i(k,h')\right)\frac{\sum_{i=1}^{n_{h'}^k}\mathbb{I}\left[k_{l_i}=k',t_{h'}^{k}>1\right]}{n_{h'}^k(s_{h'}^{k},a_{h'}^{k})}.$$
Then by mathematical induction on $h^\prime \in [h,H]$, it is straightforward to prove that for any $k' \in [K]$,
\begin{equation}
\label{omegaprime}
    \sum_{i=1}^{N} \omega_h^i(k',h') \leq \left( 1+\frac{1}{H}\right)^{h^\prime - h} < 3,
\end{equation} given \Cref{inftygoal} and the base case $\sum_{i=1}^N \omega_h^i(k',h) \leq 1$.

We now prove the first conclusion. By
    $$\mathbb{I} \left [\left(Q_h^k- Q_h^{\star}\right)(s_h^{k}, a_h^{k}) \geq \epsilon \right]  = \sum_{i=1}^N\omega_{h}^i(k,h),$$
    we have
    \begin{align}
    \label{clipkmj1}
        \sum_{h=1}^H\sum_{k=1}^K \mathbb{I} \left [Q_h^k(s_h^{k}, a_h^{k}) - Q_h^{\star}(s_h^{k}, a_h^{k}) \geq \epsilon \right] = \sum_{h=1}^H\sum_{i=1}^N\|\omega_{h}^i(\cdot,{h})\|_{1}. 
    \end{align}
    By \Cref{answer}, it holds that
    \begin{align}
    \label{clipkmj2}
        \sum_{i=1}^N\|\omega_{h}^i(\cdot,{h})\|_{1} & \lesssim \sum_{i=1}^N\frac{H^5SA\iota}{4^i\epsilon^2} + \sum_{i=1}^N\frac{\sum_{h_1 = h}^H\sum_{k=1}^K\omega_{h}^i(k,h_1)\mathbb{I}\left[t_{h_1}^{k}=1\right]H}{2^i\epsilon} \nonumber\\
        &\lesssim \frac{H^5 S A \iota}{\epsilon^2}  + \sum_{i=1}^N\frac{\sum_{h_1 = h}^H\sum_{k=1}^K\mathbb{I}\left[t_{h_1}^{k}=1\right]H}{2^i\epsilon} \nonumber\\
        &\lesssim  \frac{H^5 S A \iota}{\epsilon^2}  + \frac{H^3SA}{\epsilon}.
    \end{align}
    Here, the second inequality is because $0 \leq \omega_{h}^i(k,h_1)< 3$ by \Cref{omegaprime}. The last inequality is because 
    \begin{align}
\label{upperY}
    \sum_{h_1 = 1}^{H}\sum_{k=1}^K\mathbb{I}\left[t_{h_1}^{k}=1\right]H    &=H\sum_{{h_1} = 1}^{H}\sum_{k=1}^K\mathbb{I}\left[t_{h_1}^k(s_{h_1}^{k},a_{h_1}^{k})=1\right]\left(\sum_{s,a}\mathbb{I}\left[(s_{h_1}^{k},a_{h_1}^{k})=(s,a)\right]\right) \nonumber\\
   &= H\sum_{{h_1} = 1}^{H}\sum_{s,a}\sum_{k=1}^K\mathbb{I}\left[(s_{h_1}^{k},a_{h_1}^{k})=(s,a),t_{h_1}^k(s,a)=1\right] \nonumber\\
   & \leq H\sum_{{h_1} = 1}^{H}\sum_{s,a}e_1 =H^3SA.
\end{align}
    Combining the results of \Cref{clipkmj1} and \Cref{clipkmj2}, we reach
    \begin{align*}
        \sum_{h=1}^H \sum_{k=1}^K \mathbb{I} \left [Q_h^k(s_h^{k}, a_h^{k}) - Q_h^{\star}(s_h^{k}, a_h^{k}) \geq \epsilon \right] 
        \lesssim \frac{H^6 S A \iota}{\epsilon^2}.
    \end{align*}
    Now we finish the proof of the first conclusion. Further, noting that
   \begin{align*} 
  &\sum_{h=1}^H \sum_{k=1}^K \left(Q_h^k - Q_h^{\star}\right)(s_h^{k}, a_h^{k}) \mathbb{I}\left[ \left(Q_h^k - Q_h^{\star}\right)(s_h^{k}, a_h^{k}) \geq \epsilon \right] \\
    &\leq \sum_{h=1}^H\sum_{i = 1}^N 2^i\epsilon\|\omega_{h}^i(\cdot,{h})\|_{1} \nonumber\\
    &\lesssim \sum_{h=1}^H\sum_{i = 1}^N  \frac{H^5SA\iota}{2^i\epsilon} + \sum_{h=1}^H\sum_{h_1 = h}^H\sum_{k=1}^K \left(\sum_{i = 1}^N\omega_{h}^i(k,h_1)\right)\mathbb{I}\left[t_{h_1}^{k}=1\right]H \\
    &\lesssim \frac{H^6 S A \iota}{\epsilon} +  \sum_{h=1}^H\sum_{h_1 = h}^H\sum_{k=1}^K\mathbb{I}\left[t_{h_1}^{k}=1\right]H \\
    &\lesssim \frac{H^6 S A \iota}{\epsilon}.
\end{align*}
Here, the second inequality is by \Cref{answer}. The second last inequality is by \Cref{omegaprime}, and the last inequality is because of \Cref{upperY}. 
\end{proof}

The following lemma provides a preliminary upper bound on the regret using the Hoeffding-type bonus.
\begin{lemma}
\label{regreth}
For any $h \in [H]$, under the event $\bigcap_{i=1}^{4}\mathcal{E}_i$ in Lemma \ref{concentrations}, the following conclusion holds:
    $$\sum_{k=1}^K \left(V_{h}^{\star} - V_{h}^{\pi^k}\right)(s_{h}^{k}) \lesssim \sqrt{H^4SAT\iota} +H^3SA.$$
\end{lemma}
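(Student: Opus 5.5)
Fix $h\in[H]$. The plan is the standard UCB-Hoeffding regret recursion of \citet{jin2018q}, run with the crude Hoeffding-branch bound on $\tilde b$ already used in \Cref{singlerecur}.

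\textbf{Per-step decomposition.} Since $\pi^k$ is greedy with respect to $Q^k$, optimism (Lemma \ref{Q}) gives
\[
(V_{h'}^\star - V_{h'}^{\pi^k})(s_{h'}^k) \le (V_{h'}^k - V_{h'}^{\pi^k})(s_{h'}^k) = (Q_{h'}^k - Q_{h'}^{\pi^k})(s_{h'}^k,a_{h'}^k).
\]
Write $\delta_{h'}^k := (V_{h'}^k - V_{h'}^\star)(s_{h'}^k)$ and $\phi_{h'}^k := (V_{h'}^\star - V_{h'}^{\pi^k})(s_{h'}^k)$. From \Cref{singlerecur} and the Bellman equation for $Q^{\pi^k}$,
\[
\delta_{h'}^k+\phi_{h'}^k \le \mathbb I[t_{h'}^k=1]H + \mathbb I[t_{h'}^k>1]\frac{1}{n_{h'}^k}\sum_{i}\delta_{h'+1}^{k_{l_i}} + 4\mathbb I[t_{h'}^k>1]\sqrt{\frac{H^2\iota}{n_{h'}^k}} + \phi_{h'+1}^k + \xi_{h'+1}^k .
\]
Here $\xi_{h'+1}^k := (\mathbb P_{s_{h'}^k,a_{h'}^k,h'} - \mathbbm 1_{s_{h'+1}^k})(V_{h'+1}^\star - V_{h'+1}^{\pi^k})$. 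The term $\phi_{h'+1}^k$ enters via $\mathbb P(V^\star-V^{\pi^k}) = \phi_{h'+1}^k+\xi_{h'+1}^k$. The step also uses $Q_{h'}^k-Q_{h'}^{\pi^k} = (Q_{h'}^k-Q_{h'}^\star) + \mathbb P(V^\star-V^{\pi^k})$ together with $Q_{h'}^\star(s,a_{h'}^k)\le V_{h'}^\star$.

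\textbf{Summing over $k$.} The next step is to sum over $k$ and regroup the stage average exactly as in \Cref{term2}. Each $\delta_{h'+1}^{k'}$ is counted with total weight at most $1+1/H$, by \Cref{inftygoal}. This yields
\[
\sum_k(\delta_{h'}^k+\phi_{h'}^k) \le \Big(1+\tfrac1H\Big)\sum_k \delta_{h'+1}^k + \sum_k\phi_{h'+1}^k + \sum_k\mathbb I[t_{h'}^k=1]H + 4\sum_k\sqrt{\tfrac{H^2\iota}{n_{h'}^k}}\mathbb I[t_{h'}^k>1] + \sum_k\xi_{h'+1}^k .
\]
Since $\phi\le(1+1/H)\phi$ trivially, this becomes $\sum_k(\delta_{h'}^k+\phi_{h'}^k)\le(1+1/H)\sum_k(\delta_{h'+1}^k+\phi_{h'+1}^k)+(\text{error at }h')$.

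\textbf{Unrolling and bounding the error terms.} Unroll from $h'=h$ to $H$ with factors $(1+1/H)^{h'-h}\le e$, terminating at $H+1$ where everything vanishes. Then $\sum_k\phi_h^k$ is bounded by $e$ times the sum of errors over $h'\ge h$. The error terms are handled as follows.
\begin{itemize}
\item The initial-stage terms total at most $H\cdot H\cdot SA$, by the counting in \Cref{upperY}.
\item The bonus terms: apply Lemma \ref{wN} with $\alpha=1/2$ and unit weights, for which $\|\omega\|_\infty=1$ and $\|\omega\|_1=K$. Each step contributes $\lesssim\sqrt{H\cdot SA K}\cdot H\sqrt\iota$. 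Summing over $H$ steps gives $\lesssim H^2\sqrt{SAK\iota}=\sqrt{H^4SA K\iota}\le\sqrt{H^4SAT\iota}$.
\item The martingale terms, weighted by $(1+1/H)^{h'-h}$, are exactly controlled by $\mathcal E_4$, giving $3\sqrt{2H^2T\iota}$.
\end{itemize}
Combining the three gives $\sqrt{H^4SAT\iota}+H^3SA$.

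\textbf{Main obstacle.} The delicate step is the weight bookkeeping. The $\delta$ recursion picks up the $(1+1/H)$ factor from stage averaging, while the $\phi$ and $\xi$ terms are tied to the actual trajectory $k$. These must be merged into a single recursion whose coefficients match the $(1+1/H)^{h'-h}$ weights appearing in $\mathcal E_4$. I would handle this by multiplying the step-$h'$ inequality by $(1+1/H)^{h'-h}$ before summing, so that the martingale sum lands precisely in the form of $\mathcal E_4$. Nonnegativity of $\delta$ (Lemma \ref{Q}) and of $\phi$ justifies inflating their coefficients.
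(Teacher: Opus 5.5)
Your proposal is correct and is essentially the paper's proof: your $\delta_{h'}^k+\phi_{h'}^k$ is exactly the paper's $\zeta_{h'}^k=(V_{h'}^k-V_{h'}^{\pi^k})(s_{h'}^k)$, and the paper closes the recursion the same way, writing $\sum_k\zeta_{h+1}^k+\frac{1}{H}\sum_k\delta_{h+1}^k\le(1+\frac{1}{H})\sum_k\zeta_{h+1}^k$ (your $\phi\ge 0$ inflation) and then unrolling with the weights $(1+1/H)^{h'-h}$ so that $\mathcal{E}_4$ applies verbatim. There are two harmless bookkeeping slips: the initial-stage terms are $H^2SA$ per step, hence $\lesssim H^3SA$ after summing over steps, and the bonus sum over $H$ steps is $H\cdot H^{3/2}\sqrt{SAK\iota}=H^{5/2}\sqrt{SAK\iota}=\sqrt{H^4SAT\iota}$ (using $T=HK$) rather than $H^2\sqrt{SAK\iota}$, so your final bound is unaffected.
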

\begin{proof}[Proof of Lemma \ref{regreth}]
    For $h \in [H+1]$, denote:
$$\delta_h^k = \left(V_h^k - V_h^{\star}\right)(s_h^{k}),\ \zeta_h^k = \left(V_h^k - V_h^{\pi^k}\right)(s_h^{k}).$$
Here, $\delta_{H+1}^k =\zeta_{H+1}^k = 0$. Because $V_h^{\star}(s)=\sup _\pi V_h^\pi(s)$, we have $\delta_h^k \leq \zeta_h^k$ for any $h \in [H+1]$. In addition, as $V_h^k(s)\geq V_h^{\star}(s)$ for all $(s,h,k)\in \mathcal{S}\times [H]\times [K]$ according to Lemma \ref{Q}, we only need to bound $\sum_{k=1}^K \zeta_h^k$.
\begin{align}
\label{recurbegin}
    \sum_{k=1}^K \zeta_h^k &= \sum_{k=1}^K (Q_h^k - Q_h^{\pi^k})(s_h^{k}, a_h^{k}) \nonumber\\
    &= \sum_{k=1}^K (Q_h^k - Q_h^{\star})(s_h^{k}, a_h^{k}) + \sum_{k=1}^K (Q_h^\star - Q_h^{\pi^k})(s_h^{k}, a_h^{k}) \nonumber\\
    &= \sum_{k=1}^K (Q_h^{k} - Q_h^{\star})(s_h^{k}, a_h^{k}) + \sum_{k=1}^K\mathbb{P}_{s_h^{k}, a_h^{k},h}(V_{h+1}^\star - V_{h+1}^{\pi^k}).
\end{align}
The last equality holds due to \Cref{eq_Bellman}.

By \Cref{singlerecur}, we have
\begin{align}
\label{recur1begin}
        \sum_{k=1}^K(Q_h^{k} - Q_h^{\star})(s_h^{k}, a_h^{k})
        &\leq \sum_{k=1}^K\mathbb{I}[t_h^{k}=1]H + \sum_{k=1}^K\mathbb{I}\left[t_h^{k}>1\right]\frac{\sum_{i=1}^{n_h^k}(V_{h+1}^{k_{l_i}}-V_{h+1}^\star)(s_{h+1}^{k_{l_i}})}{n_h^k(s_h^{k},a_h^{k})}\notag\\
        &\quad+4\sum_{k=1}^K\mathbb{I}\left[t_h^{k}>1\right]\sqrt{\frac{H^2\iota}{n_h^k}}.
\end{align}
For the first term of \Cref{recur1begin}, similar to \Cref{upperY}, we have:
\begin{align}
\label{recur11}
   \sum_{k=1}^K\mathbb{I}\left[t_h^{k}=1\right]H  \leq H\sum_{s,a}e_1 = H^2SA.
\end{align}
For the last term of \Cref{recur1begin}, by Lemma \ref{wN} with $\alpha=1/2$, it holds that
\begin{equation}
    \label{recur12}
    \sum_{k=1}^K\mathbb{I}\left[t_h^{k}>1\right]\sqrt{\frac{H^2\iota}{n_h^k}} \leq 4\sqrt{H^2SAT\iota}.
\end{equation}
For the second term of \Cref{recur1begin}, by setting $\omega_{h}(k,h) = 1$ and applying \Cref{inftygoal} to \Cref{term2}, it holds that
\begin{equation}
    \label{recur13}
    \sum_{k=1}^K\mathbb{I}\left[t_h^{k}>1\right]\frac{\sum_{i=1}^{n_h^k}\left(V_{h+1}^{k_{l_i}}-V_{h+1}^{\star}\right)(s_{h+1}^{k_{l_i}})}{n_h^k(s_h^{k},a_h^{k})}\leq \left(1+\frac{1}{H}\right)\sum_{k=1}^K \delta_{h+1}^k.
\end{equation}
Moreover,
\begin{align*}
    &\sum_{k=1}^K\mathbb{P}_{s_h^{k}, a_h^{k},h}(V_{h+1}^\star - V_{h+1}^{\pi^k})\\
    &=\sum_{k=1}^K\left(\mathbb{P}_{s_h^{k}, a_h^{k},h}-\mathbbm{1}_{s_{h+1}^{k}}\right)\left(V_{h+1}^\star - V_{h+1}^{\pi^k}\right)
    +\sum_{k=1}^K\left(V_{h+1}^\star - V_{h+1}^{\pi^k}\right)(s_{h+1}^{k})\\
    &=\sum_{k=1}^K\left(\mathbb{P}_{s_h^{k}, a_h^{k},h}-\mathbbm{1}_{s_{h+1}^{k}}\right)\left(V_{h+1}^\star - V_{h+1}^{\pi^k}\right)
    +\sum_{k=1}^K\left(\zeta_{h+1}^k-\delta_{h+1}^k\right).
\end{align*}
Taking the above results \Cref{recur11}, \Cref{recur12} and \Cref{recur13} together with \Cref{recur1begin} and \Cref{recurbegin}, we obtain
\begin{align*}
    &\sum_{k=1}^K \zeta_h^k
    \leq \left(1+\frac{1}{H}\right)\sum_{k=1}^K \delta_{h+1}^k
    +\sum_{k=1}^K\left(\zeta_{h+1}^k-\delta_{h+1}^k\right)
    +H^2SA+16\sqrt{H^2SAT\iota}\\
    &\qquad+\sum_{k=1}^K\left(\mathbb{P}_{s_h^{k}, a_h^{k},h}-\mathbbm{1}_{s_{h+1}^{k}}\right)\left(V_{h+1}^\star - V_{h+1}^{\pi^k}\right)\\
    &=\sum_{k=1}^K\zeta_{h+1}^k+\frac{1}{H}\sum_{k=1}^K\delta_{h+1}^k
    +H^2SA+16\sqrt{H^2SAT\iota}+\sum_{k=1}^K\left(\mathbb{P}_{s_h^{k}, a_h^{k},h}-\mathbbm{1}_{s_{h+1}^{k}}\right)\left(V_{h+1}^\star - V_{h+1}^{\pi^k}\right)\\
    &\leq \left(1+\frac{1}{H}\right)\sum_{k=1}^K\zeta_{h+1}^k
    +H^2SA+16\sqrt{H^2SAT\iota}+\sum_{k=1}^K\left(\mathbb{P}_{s_h^{k}, a_h^{k},h}-\mathbbm{1}_{s_{h+1}^{k}}\right)\left(V_{h+1}^\star - V_{h+1}^{\pi^k}\right),
\end{align*}
where the last inequality uses $\delta_{h+1}^k\leq\zeta_{h+1}^k$.

By recursively applying the inequality above over $h,h+1,\ldots,H$, since $\zeta_{H+1}^k = 0$, we have
\begin{align*}
    &\sum_{k=1}^K \zeta_h^k
    \leq \sum_{h'=h}^H\left(1+\frac{1}{H}\right)^{h'-h}
    \left(H^2SA+16\sqrt{H^2SAT\iota}\right)\\
    &\quad+\sum_{h' = h}^H\left(1+\frac{1}{H}\right)^{h'-h}\sum_{k=1}^K
    \left(\mathbb{P}_{s_{h'}^{k}, a_{h'}^{k},{h'}}-\mathbbm{1}_{s_{{h'}+1}^{k}}\right)
    \left(V_{{h'}+1}^\star - V_{{h'}+1}^{\pi^k}\right)\\
    &\leq eH^3SA+16e\sqrt{H^4SAT\iota}
    +\sum_{h' = h}^H\left(1+\frac{1}{H}\right)^{h'-h}\sum_{k=1}^K
    \left(\mathbb{P}_{s_{h'}^{k}, a_{h'}^{k},{h'}}-\mathbbm{1}_{s_{{h'}+1}^{k}}\right)
    \left(V_{{h'}+1}^\star - V_{{h'}+1}^{\pi^k}\right)\\
    &\lesssim \sqrt{H^4SAT\iota}+H^3SA.
\end{align*}
The second inequality uses $\left(1+1/H\right)^H\leq e$, and the last inequality follows from the event $\mathcal{E}_4$ in Lemma \ref{concentrations}. Then we finish the proof.
\end{proof}

\section{Proof of \texorpdfstring{\Cref{thm_gapregret}}{Theorem 3.1}}
\label{gapregretproof}
\begin{lemma}
\label{recurQb}
For the UCB-Bernstein+ algorithm, under the event $\bigcap_{i=1}^{6}\mathcal{E}_i$ in Lemma \ref{concentrations}, 
for any given step $h \in [H]$ and any non-negative weight sequence $\{\omega_{h}^{k}\}_{k=1}^K$ at step $h$, it holds simultaneously for any $(s,a) \in \mathcal{S} \times \mathcal{A} $ and subsequent step $h' \in [h,H]$ that:
\begin{align}
\label{seprecurb}
\sum_{k=1}^K &\omega_{h}(k,{h'},s,a) (Q_{h'}^{k} - Q_{h'}^{\star})(s_{h'}^{k},a_{h'}^{k})  \leq \sum_{k' = 1}^K \tilde{\omega}_{h}(k',h'+1, s, a) (Q_{{h'}+1}^{k'} - Q_{{h'}+1}^{\star})(s_{{h'}+1}^{k'},a_{{h'}+1}^{k'}) \notag\\
     &+ 252\sqrt{(\mathbb{Q}_{h'}^\star+\beta^2)H\|\omega_h(\cdot,{h'})\|_{\infty}\|\omega_h(\cdot,{h'},s,a)\|_{1}\iota}  + 152\sum_{k=1}^K\omega_{h}(k,h',s,a)Z_{h'}^{k}
\end{align}
and
\begin{align}
\label{sumrecurb}
         \sum_{k = 1}^K &\omega_{h}(k,h') \left(Q_{h'}^k - Q_{h'}^{\star}\right)(s_{h'}^{k},a_{h'}^{k}) \leq \sum_{k' = 1}^K \omega_{h}(k',h'+1) (Q_{{h'}+1}^{k'} - Q_{{h'}+1}^{\star})(s_{{h'}+1}^{k'},a_{{h'}+1}^{k'})    \nonumber\\
         &+ 252\sum_{s,a}\sqrt{(\mathbb{Q}_{h'}^\star+\beta^2)H\|\omega_h(\cdot,{h'})\|_{\infty}\|\omega_h(\cdot,{h'},s,a)\|_{1}\iota}  + 152\sum_{k=1}^K\omega_{h}(k,h')Z_{h'}^{k},
\end{align}
where 
\begin{align}
\label{defZ}
    Z_{h'}^{k} = \mathbb{I}\left[t_{h'}^{k}=1\right]H +\mathbb{I}\left[t_{h'}^{k}>1\right]\frac{H^2\iota/\beta+\sqrt{H\Psi_{h'}^k(s_{{h'}}^{k},a_{{h'}}^{k})\iota}}{n_{h'}^k(s_{{h'}}^{k},a_{{h'}}^{k})}.
\end{align}
with
\begin{equation*}
   \Psi_h^k(s_h^{k}, a_h^{k}) = \sum_{i=1}^{n_h^k} \left(V_{h+1}^{k_{l_i}}-\hat{V}_{h+1}^{k_{l_i}}\right)(s_{h+1}^{k_{l_i}}).
\end{equation*}
\end{lemma}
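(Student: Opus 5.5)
We plan to start from the single-step decomposition \eqref{singlerecurinitial}, but keep the bonus $\tilde b_{h'}^k$ explicit, using its first branch in \eqref{betabernnew} rather than the crude Hoeffding bound used in \eqref{singlerecur}. Multiplying by $\omega_h(k,h',s,a)$ and summing over $k$, the recursive term is handled exactly as in \eqref{term2}. Inserting the indicator $\mathbb{I}[(s_{h'}^{k'},a_{h'}^{k'})=(s,a)]$, which holds automatically for every $k'=k_{l_i}$ with $(s_{h'}^k,a_{h'}^k)=(s,a)$, yields the weight $\tilde\omega_h(k',h'+1,s,a)$. The $t_{h'}^k=1$ part contributes $\omega H$, which is absorbed into the $Z_{h'}^k$ term. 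The remaining task is to bound, for $t_{h'}^k>1$,
\[
\frac{1}{n}\sum_i(\mathbbm 1_{s_{h'+1}^{k_{l_i}}}-\mathbb P)V_{h'+1}^\star+\tilde b_{h'}^k
\]
by $C\sqrt{(\mathbb Q_{h'}^\star+\beta^2)\iota/n}+C\,(Z\text{-type terms})$. Summing this with weights and applying Lemma~\ref{wN} with $\alpha=1/2$ turns $\sum_k\omega(k,h',s,a)\mathbb{I}[t>1]/\sqrt{n}$ into $\sqrt{4H\|\omega_h(\cdot,h')\|_\infty\|\omega_h(\cdot,h',s,a)\|_1}$, which is the middle term of \eqref{seprecurb}.

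For the martingale term we will use $\mathcal E_5$. It gives $5\sqrt{\iota\,\mathbb V_{s,a,h'}(V^\star_{h'+1})/n}+9H\iota/n$, where the variance is at most $\mathbb Q_{h'}^\star$. The residual $9H\iota/n$ is at most $H^2\iota/(\beta n)$ because $\beta\le H$, so it fits inside $Z_{h'}^k$. For the bonus we take the first branch, $8\sqrt{\tilde W\iota/n}+119H\iota/n$, and again the second piece goes into $Z$. The variance $\tilde W$ is split by the decomposition \eqref{decompsew}:
\begin{itemize}
\item The first term is at most $2H\Psi_{h'}^k/n$ by \eqref{temp_lable_first_term}. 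After the square root, and using $\sqrt{x+y}\le\sqrt x+\sqrt y$, it gives $\sqrt{2H\Psi\iota}/n$, which is exactly the $\Psi$ part of $Z$.
\item The second term is at most $2\hat{\mathbb V}[\hat V-V^\star]\le 2\beta^2$.
\item The third term, $2(\hat{\mathbb V}[V^\star]-\mathbb V[V^\star])$, is handled with $\mathcal E_6$ for the second moment and $\mathcal E_5$ (or $\mathcal E_3$) for the squared mean. Factoring $\hat{\mathbb E}^2-\mathbb E^2=(\hat{\mathbb E}-\mathbb E)(\hat{\mathbb E}+\mathbb E)$ gives at most $2H\bigl(5\sqrt{\iota\mathbb Q^\star/n}+9H\iota/n\bigr)$ for the mean part.
\item The fourth term is $2\mathbb Q_{h'}^\star$.
\end{itemize}

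\emph{Main obstacle.} The hard part is the third term. A naive bound on $\hat{\mathbb V}[V^\star]-\mathbb V[V^\star]$ of order $H^2\sqrt{\iota/n}$ produces $\sqrt{H^2\sqrt{\iota/n}\,\iota/n}=H\iota^{3/4}n^{-3/4}$, which has neither the form $\sqrt{\cdot/n}$ nor $1/n$. The cleanest route is to bound the second-moment deviation in Bernstein form, via $\mathbb V((V^\star)^2)\le 4H^2\mathbb V(V^\star)$, in the same way as in \eqref{i11}. The deviation then becomes at most $20H\sqrt{\mathbb Q^\star\iota/n}+O(H^2\iota/n)$. Inside the square root we apply AM--GM, $\sqrt{H\sqrt{\mathbb Q^\star\iota/n}\cdot\iota/n}\le \sqrt{\mathbb Q^\star\iota/n}+H\iota/n$, and the $H\iota/n$ piece is absorbed into $H^2\iota/(\beta n)$.

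If $\mathcal E_6$ is only available in Hoeffding form, we would instead use AM--GM with $\beta$. Writing $\sqrt{H^2\sqrt{\iota/n}\,\iota/n}\le \beta\sqrt{\iota/n}+H^2\iota/(\beta n)$ is what makes the $H^2\iota/\beta$ in $Z$ and the $\beta^2$ under the root appear naturally, which strongly suggests this is the intended step. Collecting everything, the $\sqrt{\iota/n}$ coefficients sum to a constant times $\sqrt{\mathbb Q^\star+\beta^2}$, and the $1/n$ terms sum to a constant times $Z$; tracking constants then gives 252 and 152.

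The summed inequality \eqref{sumrecurb} follows by summing \eqref{seprecurb} over $(s,a)$. We use Lemma~\ref{weight}(a) to collect the $\tilde\omega$ weights into $\omega_h(k',h'+1)$, and the identity $\sum_{s,a}\omega_h(k,h',s,a)=\omega_h(k,h')$ for the $Z$ term.
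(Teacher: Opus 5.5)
Your proposal is correct and follows the paper's proof essentially step for step. The shared steps are: starting from \eqref{singlerecurinitial} with the first-branch bonus kept explicit, using $\mathcal{E}_5$ for the martingale term, and splitting $\tilde W_{h'}^k$ into the $\Psi$-part, the $\beta^2$-part and $\hat{\mathbb V}[V^\star_{h'+1}]$. Your ``fallback'' for the variance deviation is also exactly what the paper does: Hoeffding-form $\mathcal{E}_3$ and $\mathcal{E}_6$ give $3\sqrt{2}H^2\sqrt{\iota/n}$, followed by $H\iota^{3/4}n^{-3/4}\le\beta\sqrt{\iota/n}+H^2\iota/(\beta n)$.

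That fallback is in fact the only one of your two routes valid under $\bigcap_{i=1}^{6}\mathcal{E}_i$. Your ``cleanest'' Bernstein route would need an extra concentration event, because $\mathcal{E}_6$ is stated only in Hoeffding form and $\mathcal{E}_2$, used in \eqref{i11}, concerns $(V_{h+1}^{k_{l_i}})^2$ rather than $(V_{h+1}^\star)^2$. Also, Lemma~\ref{wN} with $\alpha=1/2$ gives $4\sqrt{H\|\omega_h(\cdot,h')\|_\infty\|\omega_h(\cdot,h',s,a)\|_1}$, not $\sqrt{4H\cdots}$, since you dropped the factor $1/(1-\alpha)$. This factor of $4$ is where $252=4\cdot 63$ comes from.
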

\begin{proof}[Proof of Lemma \ref{recurQb}]
    By \Cref{singlerecurinitial} and the definition of $\tilde{b}_{h'}^k$ in \Cref{betabernnew}, for any $h' \in [h,H]$, we have
\begin{align}
\label{recur2begin}
        &(Q_{h'}^{k} - Q_{h'}^{\star})(s_{h'}^{k}, a_{h'}^{k}) \nonumber\\
        &\leq \mathbb{I}\left[t_{h'}^{k}=1\right]H
        +\mathbb{I}\left[t_{h'}^{k}>1\right]\frac{\sum_{i=1}^{n_{h'}^k}(V_{{h'}+1}^{k_{l_i}}-V_{{h'}+1}^\star)(s_{{h'}+1}^{k_{l_i}})}{n_{h'}^k(s_{h'}^{k},a_{h'}^{k})} \nonumber\\
        &+\mathbb{I}\left[t_{h'}^{k}>1\right]\left(8\sqrt{\frac{\tilde{W}_{h'}^{k}(s_{h'}^{k}, a_{h'}^{k})\iota}{n_{h'}^k}}+\frac{119H\iota}{n_{h'}^k}\right)
        +\mathbb{I}\left[t_{h'}^{k}>1\right]\frac{\sum_{i=1}^{n_{h'}^k}\Big(\mathbbm{1}_{s_{{h'}+1}^{k_{l_i}}} - \mathbb{P}_{s_{h'}^{k}, a_{h'}^{k},{h'}}\Big)V_{{h'}+1}^\star}{n_{h'}^k(s_{h'}^{k},a_{h'}^{k})}.
\end{align}
By the event $\mathcal{E}_5$ in Lemma \ref{concentrations}, we can bound the last term in \Cref{recur2begin} as:
\begin{align}
\label{recur2begin2}
    \frac{\sum_{i=1}^{n_{h'}^k}\left(\mathbbm{1}_{s_{{h'}+1}^{k_{l_i}}} - \mathbb{P}_{s_{h'}^{k}, a_{h'}^{k},{h'}}\right)V_{{h'}+1}^\star}{n_{h'}^k(s_{h'}^{k},a_{h'}^{k})}
    \leq 5\sqrt{\frac{\mathbb{Q}_{h'}^\star\iota}{n_{h'}^k(s_{h'}^{k}, a_{h'}^{k})}} + \frac{9H\iota}{n_{h'}^k(s_{h'}^{k}, a_{h'}^{k})}.
\end{align}
For the variance-dependent bonus term in \Cref{recur2begin}, we will upper bound $\tilde{W}_h^k(s_h^k,a_h^k)$. Define the \textbf{pseudo value function} $\hat{V}_{h}^{k}(s)$ as
$$\hat{V}_{h}^{k}(s) = \max\left\{\min\left\{V_{h}^\star(s) + \beta, V_h^k(s)\right\}, V_{h}^\star(s)\right\}.$$
Due to the optimistic property in Lemma \ref{Q}, it holds that $V_{h}^{k}(s) \geq \hat{V}_{h}^{k}(s)$. Then according to the definition of $\tilde{W}_h^{k}(s_h^{k}, a_h^{k})$ in \Cref{wdef}, we have 
\begin{align}
\label{refestimate}
 \sqrt{\frac{\tilde{W}_h^{k}(s_h^{k}, a_h^{k})}{n_h^k(s_h^{k}, a_h^{k})}} \leq \sqrt{\frac{J_{1,h}^{k}+J_{2,h}^{k}}{n_h^k(s_h^{k}, a_h^{k})}},
\end{align}
where:
\begin{equation*}
    \label{j1hk}
    J_{1,h}^{k} = \frac{\sum_{i=1}^{n_h^k}\left(V_{h+1}^{k_{l_i}}(s_{h+1}^{k_{l_i}})\right)^2}{n_h^k(s_h^{k}, a_h^{k})}- \frac{\sum_{i=1}^{n_h^k}\left(\hat{V}_{h+1}^{k_{l_i}}(s_{h+1}^{k_{l_i}})\right)^2}{n_h^k(s_h^{k}, a_h^{k})}
\end{equation*}
and
\begin{equation*}
    \label{j2hk}
    J_{2,h}^{k} = \frac{\sum_{i=1}^{n_h^k}\left(\hat{V}_{h+1}^{k_{l_i}}(s_{h+1}^{k_{l_i}})\right)^2}{n_h^k(s_h^{k}, a_h^{k})}-\left(\frac{\sum_{i=1}^{n_h^k}\hat{V}_{h+1}^{k_{l_i}}(s_{h+1}^{k_{l_i}})}{n_h^k(s_h^{k}, a_h^{k})}\right)^2.
\end{equation*}
Now we want to bound both $J_{1,h}^{k}$ and $J_{2,h}^{k}$. Note that
\begin{align}
\label{j1upper}
    J_{1,h}^{k}
    &= \frac{\sum_{i=1}^{n_h^k}\big(V_{h+1}^{k_{l_i}}+\hat{V}_{h+1}^{k_{l_i}}\big) \big(V_{h+1}^{k_{l_i}}-\hat{V}_{h+1}^{k_{l_i}}\big)(s_{h+1}^{k_{l_i}})}{n_h^k(s_h^{k}, a_h^{k})}\leq \frac{2H\Psi_h^k(s_h^{k}, a_h^{k})}{n_h^k(s_h^{k}, a_h^{k})}.
\end{align}
For the second term $J_{2,h}^{k}$, using the Cauchy--Schwarz inequality, we have:
\begin{align*}   
    J_{2,h}^{k} &= \frac{\sum_{i=1}^{n_h^k}\left(\hat{V}_{h+1}^{k_{l_i}}(s_{h+1}^{k_{l_i}})-\frac{\sum_{n=1}^{n_h^k}\hat{V}_{h+1}^{k_{l_n}}(s_{h+1}^{k_{l_n}})}{n_h^k(s_h^{k}, a_h^{k})}\right)^2}{n_h^k(s_h^{k}, a_h^{k})} \leq 2\left(J_{2,h,1}^{k} + J_{2,h,2}^{k}\right),
\end{align*}
where:
$$J_{2,h,1}^{k} = \frac{\sum_{i=1}^{n_h^k}\left(\left(\hat{V}_{h+1}^{k_{l_i}}-V_{h+1}^{\star}\right)(s_{h+1}^{k_{l_i}})+\frac{\sum_{n=1}^{n_h^k}\left(V_{h+1}^{\star}-\hat{V}_{h+1}^{k_{l_n}}\right)(s_{h+1}^{k_{l_n}})}{n_h^k(s_h^{k}, a_h^{k})}\right)^2}{n_h^k(s_h^{k}, a_h^{k})},$$
and
\begin{align*}
    J_{2,h,2}^{k}&= \frac{\sum_{i=1}^{n_h^k}\left(V_{h+1}^{\star}(s_{h+1}^{k_{l_i}})-\frac{\sum_{n=1}^{n_h^k}V_{h+1}^{\star}(s_{h+1}^{k_{l_n}})}{n_h^k(s_h^{k}, a_h^{k})}\right)^2}{n_h^k(s_h^{k}, a_h^{k})} \\
    &= \frac{\sum_{i=1}^{n_h^k}\left(V_{h+1}^{\star}(s_{h+1}^{k_{l_i}})\right)^2}{n_h^k(s_h^{k}, a_h^{k})}-\left(\frac{\sum_{i=1}^{n_h^k}V_{h+1}^{\star}(s_{h+1}^{k_{l_i}})}{n_h^k(s_h^{k}, a_h^{k})}\right)^2.
\end{align*}
Since $V_{h+1}^{\star}(s) \leq \hat{V}_{h+1}^{k_{l_i}}(s) \leq V_{h+1}^{\star}(s)+\beta$, it holds that:
\begin{align*}
    &\left|\left(\hat{V}_{h+1}^{k_{l_i}}-V_{h+1}^{\star}\right)(s_{h+1}^{k_{l_i}})+\frac{\sum_{n=1}^{n_h^k}\left(V_{h+1}^{\star}-\hat{V}_{h+1}^{k_{l_n}}\right)(s_{h+1}^{k_{l_n}})}{n_h^k(s_h^{k}, a_h^{k})}\right| \\
& \leq \left|\left(\hat{V}_{h+1}^{k_{l_i}}-V_{h+1}^{\star}\right)(s_{h+1}^{k_{l_i}})\right|+\left|\frac{\sum_{n=1}^{n_h^k}\left(V_{h+1}^{\star}-\hat{V}_{h+1}^{k_{l_n}}\right)(s_{h+1}^{k_{l_n}})}{n_h^k(s_h^{k}, a_h^{k})}\right| \\
&\leq 2\beta.
\end{align*}
Therefore, applying this inequality to $J_{2,h,1}^{k}$, we have $J_{2,h,1}^{k} \leq 4\beta^2$. 

Moreover, we claim that:
\begin{equation}
    \label{j2upper}
    J_{2,h,2}^{k} \leq \mathbb{V}_{s_h^{k}, a_h^{k}, h} (V_{h+1}^\star) + 3\sqrt{2}H^2\sqrt{\frac{\iota}{n_h^k(s_h^{k}, a_h^{k})}}.
\end{equation}
This is because
\begin{align}
    &J_{2,h,2}^{k} - \mathbb{V}_{s_h^{k}, a_h^{k}, h} (V_{h+1}^\star) \notag\\
    &= \frac{\sum_{i=1}^{n_h^k}\left(V_{h+1}^{\star}(s_{h+1}^{k_{l_i}})\right)^2}{n_h^k(s_h^{k}, a_h^{k})}-\left(\frac{\sum_{i=1}^{n_h^k}V_{h+1}^{\star}(s_{h+1}^{k_{l_i}})}{n_h^k(s_h^{k}, a_h^{k})}\right)^2 -\mathbb{P}_{s_h^{k}, a_h^{k},h}\left(V_{h+1}^\star\right)^2 + \left(\mathbb{P}_{s_h^{k}, a_h^{k},h}V_{h+1}^\star\right)^2 \notag\\
    &= \frac{\sum_{i=1}^{n_h^k}\left(\mathbbm{1}_{s_{h+1}^{k_{l_i}}}-\mathbb{P}_{s_h^{k}, a_h^{k},h}\right)\left(V_{h+1}^\star\right)^2}{n_h^k(s_h^{k}, a_h^{k})} + \left(\mathbb{P}_{s_h^{k}, a_h^{k},h}V_{h+1}^\star\right)^2 - \left(\frac{\sum_{i=1}^{n_h^k}V_{h+1}^\star(s_{h+1}^{k_{l_i}})}{n_h^k(s_h^{k}, a_h^{k})}\right)^2 \notag\\
    & \leq \frac{\sum_{i=1}^{n_h^k}\left(\mathbbm{1}_{s_{h+1}^{k_{l_i}}}-\mathbb{P}_{s_h^{k}, a_h^{k},h}\right)\left(V_{h+1}^\star\right)^2}{n_h^k(s_h^{k}, a_h^{k})} + 2H\left| \frac{\sum_{i=1}^{n_h^k}\left(\mathbbm{1}_{s_{h+1}^{k_{l_i}}}-\mathbb{P}_{s_h^{k}, a_h^{k},h}\right)V_{h+1}^\star}{n_h^k(s_h^{k}, a_h^{k})}\right|\notag\\
    &\leq 3\sqrt{2}H^2\sqrt{\frac{\iota}{n_h^k(s_h^{k}, a_h^{k})}}. \label{emdiff}
\end{align}
The last inequality is because of the events $\mathcal{E}_3$ and $\mathcal{E}_6$ in Lemma \ref{concentrations}.

Applying the results of \Cref{j1upper}, \Cref{j2upper} and $J_{2,h,1}^{k} \leq 4\beta^2$ to \Cref{refestimate}, we reach
\begin{align}
 &\sqrt{\frac{\tilde{W}_h^{k}(s_h^{k}, a_h^{k})}{n_h^k(s_h^{k}, a_h^{k})}} \leq \frac{\sqrt{2H\Psi_h^k(s_h^{k}, a_h^{k})}}{n_h^k(s_h^{k}, a_h^{k})} + \sqrt{\frac{2\mathbb{V}_{s_h^{k}, a_h^{k}, h} (V_{h+1}^\star)}{n_h^k(s_h^{k}, a_h^{k})}} + \sqrt{\frac{8\beta^2}{n_h^k(s_h^{k}, a_h^{k})}}+ \frac{\sqrt{6\sqrt{2}}H\iota^{\frac{1}{4}}} {(n_h^k(s_h^{k}, a_h^{k}))^{\frac{3}{4}}}. \label{singlerefupper}
\end{align}
Using the definition of $\mathbb{Q}_h^\star$, we further derive the following upper bound for $\tilde{W}_h^k(s_h^k,a_h^k)$:
\begin{align}
\label{recur2begin3}
    \sqrt{\frac{\tilde{W}_{h'}^{k}(s_{h'}^{k}, a_{h'}^{k})\iota}{n_{h'}^k(s_{h'}^{k}, a_{h'}^{k})}}
    &\leq  \frac{\sqrt{2H\Psi_{h'}^k(s_{h'}^{k}, a_{h'}^{k})\iota}}{n_{h'}^k(s_{h'}^{k}, a_{h'}^{k})}
    +\sqrt{\frac{2\mathbb{Q}_{h'}^\star\iota}{n_{h'}^k(s_{h'}^{k}, a_{h'}^{k})}} +\sqrt{\frac{8\beta^2\iota}{n_{h'}^k(s_{h'}^{k}, a_{h'}^{k})}}
    +\frac{\sqrt{6\sqrt{2}}H\iota^{\frac{3}{4}}}{n_{h'}^k(s_{h'}^{k}, a_{h'}^{k})^{\frac{3}{4}}}.
\end{align}
Here, we use the AM-GM inequality
$$\frac{H\iota^{\frac{3}{4}}}{n_{h'}^k(s_{h'}^{k}, a_{h'}^{k})^{\frac{3}{4}}} \leq \sqrt{\frac{\beta^2\iota}{n_{h'}^k(s_{h'}^{k}, a_{h'}^{k})}} + \frac{H^2\iota}{\beta n_{h'}^k(s_{h'}^{k}, a_{h'}^{k})}.$$
Combining \Cref{recur2begin}, \Cref{recur2begin2}, and \Cref{recur2begin3}, and using $\beta\leq H$, we then reach
\begin{align*}
        &(Q_{h'}^{k} - Q_{h'}^{\star})(s_{h'}^{k}, a_{h'}^{k}) \\
        &\leq \mathbb{I}\left[t_{h'}^{k}>1\right]\frac{\sum_{i=1}^{n_{h'}^k}(V_{{h'}+1}^{k_{l_i}}-V_{{h'}+1}^\star)(s_{{h'}+1}^{k_{l_i}})}{n_{h'}^k(s_{h'}^{k},a_{h'}^{k})}
        +63\mathbb{I}\left[t_{h'}^{k}>1\right]\sqrt{\frac{(\mathbb{Q}_{h'}^\star+\beta^2)\iota}{n_{h'}^k(s_{h'}^{k}, a_{h'}^{k})}}
        +152Z_{{h'}}^{k},
\end{align*}
where we use $\beta\leq H$, together with
\[
8\sqrt{2}+5+16\sqrt{2}+8\sqrt{6\sqrt{2}}<63,\ 128+8\sqrt{6\sqrt{2}}<152.
\]
Therefore, it holds that
    \begin{align}    
        &\sum_{k=1}^K \omega_{h}(k,{h'},s,a) (Q_{h'}^{k} - Q_{h'}^{\star})(s_{h'}^{k}, a_{h'}^{k}) \notag\\
        &\leq \sum_{k=1}^K \omega_{h}(k,{h'},s,a)\mathbb{I}\left[t_{h'}^{k}>1\right]\frac{\sum_{i=1}^{n_{h'}^k}\left(V_{{h'}+1}^{k_{l_i}}-V_{{h'}+1}^\star\right)(s_{{h'}+1}^{k_{l_i}})}{n_{h'}^k(s_{h'}^{k},a_{h'}^{k})} \notag\\
        &\quad +63\sum_{k=1}^K \omega_{h}(k,{h'},s,a)\mathbb{I}\left[t_{h'}^{k}>1\right]\sqrt{\frac{(\mathbb{Q}_{h'}^\star+\beta^2)\iota}{n_{h'}^k}}
        +152\sum_{k=1}^K \omega_{h}(k,{h'},s,a) Z_{{h'}}^{k}. \label{Rstarrecur}
    \end{align}
    For the first term of \Cref{Rstarrecur}, note that:
    \begin{align}
    &\sum_{k=1}^K\omega_{{h}}(k,h',s,a)\mathbb{I}[t_{h'}^{k}>1]\frac{\sum_{i=1}^{n_{h'}^k}\left(V_{{h'}+1}^{k_{l_i}}-V_{{h'}+1}^{\star}\right)(s_{{h'}+1}^{k_{l_i}})}{n_{h'}^k(s_{h'}^{k},a_{h'}^{k})} \notag\\
   &= \sum_{k=1}^K\sum_{i=1}^{n_{h'}^k}\omega_{h}(k,h',s,a)\frac{\left(V_{{h'}+1}^{k_{l_i}}-V_{{h'}+1}^{\star}\right)(s_{{h'}+1}^{k_{l_i}})}{n_{h'}^k(s_{h'}^{k},a_{h'}^{k})}\mathbb{I}[t_{h'}^{k}>1]\cdot \sum_{k^{\prime}=1}^K\mathbb{I}\left[k_{l_i}=k^{\prime}\right] \notag\\
   &= \sum_{k=1}^K\sum_{i=1}^{n_{h'}^k}\sum_{k^{\prime}=1}^K\omega_{h}(k,h',s,a)\frac{\left(V_{{h'}+1}^{k^\prime}-V_{{h'}+1}^{\star}\right)(s_{{h'}+1}^{k^{\prime}})}{n_{h'}^k(s_{h'}^{k},a_{h'}^{k})}\mathbb{I}[t_{h'}^{k}>1] \cdot \mathbb{I}[k_{l_i}=k^{\prime}] \notag\\  
   &=\sum_{k^{\prime}=1}^K\left(\sum_{k=1}^K\omega_{h}(k,h',s,a)\frac{\sum_{i=1}^{n_{h'}^k}\mathbb{I}\left[k_{l_i}=k^{\prime},t_{h'}^{k}>1\right]}{n_{h'}^k(s_{h'}^{k},a_{h'}^{k})}\right)\left(V_{{h'}+1}^{k^\prime}-V_{{h'}+1}^{\star}\right)(s_{{h'}+1}^{k^{\prime}}) \notag\\
   & \leq \sum_{k' = 1}^K \tilde{\omega}_{h}(k',h'+1, s, a) (Q_{{h'}+1}^{k'} - Q_{{h'}+1}^{\star})(s_{{h'}+1}^{k'},a_{{h'}+1}^{k'}). \label{recurvstar}
\end{align}
The inequality is because $$Q_{{h'}+1}^{k'}(s_{{h'}+1}^{k'},a_{{h'}+1}^{k'}) = V_{{h'}+1}^{k'}(s_{{h'}+1}^{k'}),\ Q_{{h'}+1}^{\star}(s_{{h'}+1}^{k'},a_{{h'}+1}^{k'}) \leq V_{{h'}+1}^{\star}(s_{{h'}+1}^{k'})$$ and
\begin{align*}
    &\sum_{k=1}^K\omega_{h}(k,h',s,a)\frac{\sum_{i=1}^{n_{h'}^k}\mathbb{I}\left[k_{l_i}=k^{\prime},t_{h'}^{k}>1\right]}{n_{h'}^k(s_{h'}^{k},a_{h'}^{k})} \\
    &= \mathbb{I}\left[(s_{h'}^{k'},a_{h'}^{k'}) = (s,a)\right]\sum_{k = 1}^K\omega_{h}(k,h')\frac{\sum_{i=1}^{n_{h'}^k}\mathbb{I}\left[k_{l_i}=k^{\prime},t_{h'}^{k}>1\right]}{n_{h'}^k(s_{h'}^{k},a_{h'}^{k})} =  \tilde{\omega}_{h}(k',h'+1, s, a)
\end{align*}
because $\sum_{i=1}^{n_{h'}^k}\mathbb{I}\left[k_{l_i}=k^{\prime},t_{h'}^{k}>1\right] = 1$
if and only if $(s_{h'}^{k},a_{h'}^{k}) = (s_{h'}^{k^{\prime}},a_{h'}^{k^{\prime}})$ and $1<t_{h'}^k(s_{h'}^{k},a_{h'}^{k}) = t_{h'}^{k^{\prime}}(s_{h'}^{k},a_{h'}^{k})+1$. 

For the second term of \Cref{Rstarrecur}, by Lemma \ref{wN} with $\alpha=1/2$, it holds that:
    \begin{align}
    \label{sum12}
        &\sum_{k=1}^K \omega_{h}(k,h',s,a)\sqrt{\frac{(\mathbb{Q}_{h'}^\star+\beta^2)\iota}{n_{h'}^k}}\mathbb{I}\left[t_{h'}^{k}>1\right] \leq 4\sqrt{(\mathbb{Q}_{h'}^\star+\beta^2)H\|\omega_h(\cdot,{h'})\|_{\infty}\|\omega_h(\cdot,{h'},s,a)\|_{1}\iota}.
    \end{align}
Applying \Cref{recurvstar} and \Cref{sum12} to \Cref{Rstarrecur}, for any $h' \in [h,H]$, we obtain
\begin{align*}
    &\sum_{k=1}^K \omega_{h}(k,{h'},s,a) (Q_{h'}^{k} - Q_{h'}^{\star})(s_{h'}^{k}, a_{h'}^{k}) \\
    &\leq \sum_{k' = 1}^K \tilde{\omega}_{h}(k',h'+1, s, a) (Q_{{h'}+1}^{k'} - Q_{{h'}+1}^{\star})(s_{{h'}+1}^{k'},a_{{h'}+1}^{k'})\\
    &\quad+252\sqrt{(\mathbb{Q}_{h'}^\star+\beta^2)H\|\omega_h(\cdot,{h'})\|_{\infty}\|\omega_h(\cdot,{h'},s,a)\|_{1}\iota}
    +152\sum_{k=1}^K \omega_{h}(k,{h'},s,a) Z_{{h'}}^{k},
\end{align*}
which proves the first conclusion \Cref{seprecurb}. The second conclusion follows by summing the inequality above over all state-action pairs and using property (a) of Lemma \ref{weight}.
\end{proof}

\begin{lemma}
\label{recurQbtotal}
For UCB-Bernstein+, under the event $\bigcap_{i=1}^{6}\mathcal{E}_i$ in Lemma \ref{concentrations}, for any given $h \in [H]$ and non-negative weight sequence $\{\omega_{h}^{k}\}_{k}$ at step $h$, it holds simultaneously for any subsequent step $h' \in [h,H]$ that:
\begin{align*}
         &\sum_{k = 1}^K \omega_{h}(k,h') (Q_{h'}^k - Q_{h'}^{\star})(s_{h'}^{k},a_{h'}^{k}) \\
         &\leq 252\sum_{h_1 = h'}^H\sum_{s,a}\sqrt{(\mathbb{Q}_{h_1}^\star+\beta^2)H\|\omega_h(\cdot,{h_1})\|_{\infty}\|\omega_h(\cdot,{h_1},s,a)\|_{1}\iota}  + 152\sum_{h_1 = h'}^H\sum_{k=1}^K\omega_{h}(k,h_1)Z_{h_1}^{k}.
\end{align*}
\end{lemma}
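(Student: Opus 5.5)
This follows by unrolling the one-step recursion \Cref{sumrecurb} of Lemma \ref{recurQb}, as in the final step of Lemma \ref{recurQ}. The plan is a backward induction on $h'$ from $H$ down to $h$, for the fixed step $h$ and fixed initial weights $\{\omega_h^k\}_k$. The propagated weights $\omega_h(\cdot,h_1)$ for $h_1\ge h$ are defined once by \Cref{recurweight}, so the same weights appear on both sides of every application of \Cref{sumrecurb}. Since Lemma \ref{recurQb} holds simultaneously for all $h'\in[h,H]$ under $\bigcap_{i=1}^6\mathcal{E}_i$, no further probabilistic argument is needed.

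For the base case $h'=H$, we use the convention $\omega_h(k,H+1)=0$, or equivalently $Q_{H+1}^{k}=Q_{H+1}^\star=0$. The first term on the right of \Cref{sumrecurb} then vanishes, and the inequality is exactly the claim with the outer sum reduced to the single index $h_1=H$. One point to check is that \Cref{sumrecurb} is valid at $h'=H$ even though \Cref{recurweight} only defines weights up to $h'<H$. In the proof of Lemma \ref{recurQb}, the recursive term at $h'=H$ involves $V_{H+1}^{k_{l_i}}-V_{H+1}^\star=0$, so it is zero regardless of how $\omega_h(\cdot,H+1)$ is defined.

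For the inductive step, assume the claim holds at $h'+1\le H$. Apply \Cref{sumrecurb} at $h'$ to bound the weighted error at $h'$ by the weighted error at $h'+1$ (with weights $\omega_h(\cdot,h'+1)$), plus the $h'$-level terms
\[
252\sum_{s,a}\sqrt{(\mathbb{Q}_{h'}^\star+\beta^2)H\|\omega_h(\cdot,h')\|_\infty\|\omega_h(\cdot,h',s,a)\|_1\iota}
\]
and $152\sum_k\omega_h(k,h')Z_{h'}^k$. Then substitute the induction hypothesis for the $(h'+1)$ term. Because the propagated weights at steps $h_1\ge h'+1$ are the same objects in both places, the sums over $h_1$ from $h'+1$ to $H$ combine with the $h'$-level terms into sums over $h_1$ from $h'$ to $H$. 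All quantities involved are nonnegative, since the weights are nonnegative and $Z_{h_1}^k\ge 0$ (as $\Psi\ge 0$ by Lemma \ref{Q}). The inequalities therefore chain monotonically.

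The main subtlety is bookkeeping rather than analysis. We must make sure the $(h'+1)$-level weight produced by \Cref{sumrecurb} is literally $\omega_h(\cdot,h'+1)$ from \Cref{recurweight}, not a re-normalized weight sequence starting at step $h'+1$. This holds by construction, so the induction hypothesis should be stated for the fixed initial step $h$ and the family $\{\omega_h(\cdot,h_1)\}_{h_1\ge h}$. Unlike in Lemma \ref{recurQ}, we do not relax the norms using Lemma \ref{weight}(e) here; the bound is left in terms of step-wise norms, to be controlled later.
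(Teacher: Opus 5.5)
Your proposal is correct and is the paper's argument: the paper proves this lemma by applying \Cref{sumrecurb} repeatedly over steps $h',h'+1,\ldots,H$ with the terminal condition at step $H+1$, which is your backward induction over the same fixed propagated weights $\omega_h(\cdot,h_1)$. The nonnegativity you invoke is harmless but not needed for the chaining, because the step-$(h'+1)$ weighted error enters \Cref{sumrecurb} with coefficient one and can simply be replaced by its inductive upper bound.
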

\begin{proof}[Proof of Lemma \ref{recurQbtotal}]
Repeatedly applying \Cref{sumrecurb} over steps $h',h'+1,\ldots,H$ and using the terminal condition at step $H+1$ completes the proof.
\end{proof}

\begin{lemma}
\label{weightvisit}
For UCB-Bernstein+, under the event $\bigcap_{i=1}^{6}\mathcal{E}_i$ in Lemma \ref{concentrations}, it holds simultaneously for any $h \in [H]$ that:
\begin{align*}
     \sum_{s,a} \Delta_h(s,a)N_h^{K+1}(s,a)&\lesssim  \sum_{h' = h}^H\sum_{\Delta_{h'}(s,a)>0}\frac{(\mathbb{Q}_{h'}^\star+\beta^2)H^2\iota}{\Delta_{h'}(s,a)} +\frac{H\big(\sum_{t = h + 1}^H\sqrt{(\mathbb{Q}_{t}^\star+\beta^2)|Z_{\opt,t}|}\big)^2\iota}{\Delta_{\min,h}} \nonumber\\
    &\quad  +  \sum_{h' = h + 1}^H\frac{\left(\sum_{t = h' + 1}^H\sqrt{(\mathbb{Q}_{t}^\star+\beta^2)|Z_{\textnormal{opt},t}|}\right)^2\iota}{\Delta_{\textnormal{min}, h'}} + \sum_{h' = h}^{H}\sum_{k=1}^K Z_{h'}^{k}.
\end{align*}
\end{lemma}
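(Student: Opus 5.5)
Fix $h$ and split $\sum_{s,a}\Delta_h(s,a)N_h^{K+1}(s,a)$ by whether the visited pair is suboptimal. For every episode $k$ with $\Delta_h(s_h^k,a_h^k)>0$, the greedy choice $a_h^k=\arg\max_{a'}Q_h^k(s_h^k,a')$ and optimism (Lemma \ref{Q}) give
\begin{align*}
\Delta_h(s_h^k,a_h^k)=V_h^\star(s_h^k)-Q_h^\star(s_h^k,a_h^k)\le (Q_h^k-Q_h^\star)(s_h^k,a_h^k).
\end{align*}
Hence, for each suboptimal $(s,a)$ at step $h$, I take the weights $\omega_h^k=\mathbb{I}[(s_h^k,a_h^k)=(s,a)]$. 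This gives $\Delta_h(s,a)N_h^{K+1}(s,a)\le\sum_k\omega_h(k,h,s,a)(Q_h^k-Q_h^\star)(s_h^k,a_h^k)$, and the right-hand side is controlled by the first conclusion \eqref{seprecurb} of Lemma \ref{recurQb} at $h'=h$. The one-step bonus term there is $\sqrt{(\mathbb{Q}_h^\star+\beta^2)H\,N_h^{K+1}(s,a)\iota}$, using $\|\omega_h(\cdot,h)\|_\infty\le1$. Since the left side is $\Delta_h(s,a)N_h^{K+1}(s,a)$, the AM--GM step $\sqrt{xN}\le \Delta N/2+x/(2\Delta)$ turns this into $(\mathbb{Q}_h^\star+\beta^2)H\iota/\Delta_h(s,a)$. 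The remaining terms are the propagated term $\sum_{k'}\tilde\omega_h(k',h+1,s,a)(Q_{h+1}^{k'}-Q_{h+1}^\star)$ and $152\sum_k\omega_h(k,h,s,a)Z_h^k$.

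For the propagated error I apply Lemma \ref{recurQbtotal} to the future steps. The total weight propagated from $(s,a)$ has $\ell_\infty\le e$ and $\ell_1\le N_h^{K+1}(s,a)$ by Lemma \ref{weight}(e). At each later step $h_1$, the $\sum_{s',a'}\sqrt{\|\omega_h(\cdot,h_1)\|_\infty\|\omega_h(\cdot,h_1,s',a')\|_1}$ term is split into optimal and suboptimal pairs at $h_1$.

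\emph{Suboptimal pairs at $h_1$.} The weighted mass $\|\omega_h(\cdot,h_1,s',a')\|_1$ is at most $e\,N_{h_1}^{K+1}(s',a')$. I bound $\sqrt{x\,e N_{h_1}^{K+1}(s',a')}$ by AM--GM against $\Delta_{h_1}(s',a')N_{h_1}^{K+1}(s',a')$, with a factor $1/H$ in the split so the absorbed part sums to a constant over the $H$ steps. This yields $(\mathbb{Q}_{h_1}^\star+\beta^2)H^2\iota/\Delta_{h_1}(s',a')$ plus a fraction of $\Delta_{h_1}N_{h_1}^{K+1}$. That fraction is in turn handled by the same argument started at step $h_1$, so I will state and prove the lemma by backward induction on $h$ from $H$. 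The statement holding simultaneously for all $h$ supports this.

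\emph{Optimal pairs at $h_1$.} Cauchy--Schwarz gives $\sum_{(s',a')\in Z_{\opt,h_1}}\sqrt{\|\cdot\|_1}\le\sqrt{|Z_{\opt,h_1}|\,\|\omega_h(\cdot,h_1)\|_1}$, and $\|\omega_h(\cdot,h_1)\|_1\le\|\omega_h(\cdot,h)\|_1$. Summing over suboptimal $(s,a)$ at step $h$ (the total $\ell_1$ being $\sum_{\Delta_h>0}N_h^{K+1}$), these contribute
\begin{align*}
\Big(\sum_{t=h+1}^H\sqrt{(\mathbb{Q}_t^\star+\beta^2)|Z_{\opt,t}|}\Big)\sqrt{H\iota\textstyle\sum_{\Delta_h(s,a)>0}N_h^{K+1}(s,a)}.
\end{align*}
Since $\sum_{\Delta_h>0}N_h^{K+1}\le\Delta_{\min,h}^{-1}\sum_{s,a}\Delta_hN_h^{K+1}$, AM--GM against the left side gives the $H(\cdot)^2\iota/\Delta_{\min,h}$ term.

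When the suboptimal parts at later steps $h'$ are unrolled through the induction hypothesis, their own optimal-pair contributions produce the $\sum_{h'>h}(\cdot)^2\iota/\Delta_{\min,h'}$ terms. All $Z$-terms collect into $\sum_{h'\ge h}\sum_k Z_{h'}^k$, because the summed propagated weights stay $\le 3$ by \eqref{omegaprime}.

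The main obstacle is the bookkeeping of the absorption. The propagated weight from step $h$ restricted to a suboptimal pair at step $h_1$ is not the same as that pair's own visit count weighted by $\Delta_{h_1}$. I therefore need the inductive hypothesis in the form $\sum\Delta_{h_1}N_{h_1}^{K+1}\lesssim(\text{RHS at }h_1)$, with the AM--GM splits chosen so that constants do not compound exponentially over $H$ levels. I would first try bounding the suboptimal-pair mass crudely by $eN_{h_1}^{K+1}(s',a')$ and invoking the inductive bound directly, accepting one extra $H$ factor, which the statement's $H^2$ already allows.
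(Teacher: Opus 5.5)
Up to the one-step recursion, your argument is the paper's. It uses AM--GM at step $h$ after \Cref{seprecurb}, Lemma~\ref{recurQbtotal} for the propagated error, the crude bound $\|\omega_h(\cdot,h_1,s',a')\|_1\le 3N_{h_1}^{K+1}(s',a')$ with an $O(1/H)$-weighted AM--GM on later suboptimal pairs, and Cauchy--Schwarz plus $\Delta_{\min,h}$ on later optimal pairs. Write $A_h:=\sum_{s,a}\Delta_h(s,a)N_h^{K+1}(s,a)$ and $u_t:=\bigl(\sum_{t'=t+1}^H\sqrt{(\mathbb{Q}_{t'}^\star+\beta^2)|Z_{\opt,t'}|}\bigr)^2\iota/\Delta_{\min,t}$. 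This gives $A_h\le F_h+\frac{\gamma}{H}\sum_{h'>h}A_{h'}$ with
\[
F_h\le C_F\Bigl(\sum_{h'\ge h}\sum_{\Delta_{h'}(s,a)>0}\frac{(\mathbb{Q}_{h'}^\star+\beta^2)H^2\iota}{\Delta_{h'}(s,a)}+Hu_h+\sum_{h'\ge h}\sum_{k=1}^K Z_{h'}^k\Bigr),
\]
where $\gamma$ depends on the constant in your $1/H$ split; the paper's choice gives $\gamma=1$ (\Cref{targetin}). One bookkeeping point: sum over $(s,a)\in Z_{\non,h}$ before invoking Lemma~\ref{recurQbtotal}. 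That is, propagate the aggregated weight $\mathbb{I}[(s_h^k,a_h^k)\in Z_{\non,h}]$ (Lemma~\ref{weight}(a)), as your optimal-pair step implicitly does. Applying the lemma pair by pair and bounding each pair's mass by $eN_{h_1}^{K+1}(s',a')$ would multiply the later-suboptimal contribution by $|Z_{\non,h}|$.

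The genuine gap is the closing step. A backward induction whose hypothesis is the lemma's own bound, with one fixed constant $C$, does not close unless $\gamma$ is small.
\begin{itemize}
\item The hypothesis at step $h'>h$ contains $C\bigl(Hu_{h'}+\sum_{t>h'}u_t\bigr)$. After multiplying by $\gamma/H$ and summing over $h'>h$, the coefficient of $u_t$ in your bound for $A_h$ is $\gamma C\bigl(1+\frac{t-h-1}{H}\bigr)$.
\item The target coefficient is $C$, and $F_h$ contains no $u_t$ with $t>h$. So for $\gamma=1$ and $t\ge h+2$ the coefficient exceeds $C$, whatever $C$ is.
\item The first term fails as well: its coefficient becomes $C_F+\gamma C\frac{t-h}{H}$, which for $\gamma=1$ forces $C\ge C_FH$.
\end{itemize}
The structural reason is that the right-hand side at step $h'$ is not dominated by the one at step $h$: its $\Delta_{\min,h'}$-term carries an extra factor $H$, and the nested sums over later steps re-accumulate, so constants compound.

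The paper avoids this by never inducting on the final form. It solves the linear recursion in terms of the $A$-free quantities, proving by backward induction $A_h\le F_h+\frac1H\sum_{h'>h}(1+\frac1H)^{h'-h-1}F_{h'}$ (\Cref{explicitrecursion}), and then uses $(1+\frac1H)^H\le e$. The factor $\frac1H$ meeting the $H$ in the $\Delta_{\min,h'}$-term of $F_{h'}$ is exactly what produces the $H$-free third term of the lemma.

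Alternatively, your direct induction does go through if you shrink the AM--GM weight so that $\gamma\le\frac12$, e.g.\ $\frac{1}{4H}$ in place of the paper's $\frac{1}{2H}$. Every coefficient of a $u_t$ inherited from later steps is then at most $2\gamma C\le C$, and the first and last terms close once $C\ge C_F/(1-\gamma)$.
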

\begin{proof}[Proof of Lemma \ref{weightvisit}]
For any $h\in[H]$, define
\[
A_h:=\sum_{s,a}\Delta_h(s,a)N_h^{K+1}(s,a).
\]
Let $Z_{\non,h}:=\{(s,a)\mid\Delta_h(s,a)>0\}$ and define
\[
\omega_h^k=\omega_h(k,h):=\mathbb{I}\left[(s_h^k,a_h^k)\in Z_{\non,h}\right]\leq 1.
\]
For any $(h,k)\in[H]\times[K]$, if $(s_h^k,a_h^k)\in Z_{\non,h}$, then
\[
Q_h^k(s_h^k,a_h^k)-Q_h^\star(s_h^k,a_h^k)=V_h^k(s_h^k)-Q_h^\star(s_h^k,a_h^k)\geq V_h^\star(s_h^k)-Q_h^\star(s_h^k,a_h^k)=\Delta_h(s_h^k,a_h^k)>0,
\]
where the inequality follows from Lemma \ref{Q}. Hence, for any $(s,a)\in Z_{\non,h}$,
\[
\|\omega_h(\cdot,h,s,a)\|_1=\sum_{k=1}^K\mathbb{I}\left[(s_h^k,a_h^k)=(s,a)\right]=N_h^{K+1}(s,a),
\]
while $\|\omega_h(\cdot,h,s,a)\|_1=0$ for $(s,a)\in Z_{\opt,h}$.

By \Cref{seprecurb} in Lemma \ref{recurQb}, for any $(s,a)\in Z_{\non,h}$,
\begin{align}
&\sum_{k=1}^K\omega_h(k,h,s,a)\left(Q_h^k-Q_h^\star\right)(s_h^k,a_h^k) \notag\\
&\leq \sum_{k'=1}^K\tilde{\omega}_h(k',h+1,s,a)\left(Q_{h+1}^{k'}-Q_{h+1}^\star\right)(s_{h+1}^{k'},a_{h+1}^{k'}) \notag\\
&\quad+252\sqrt{(\mathbb{Q}_h^\star+\beta^2)HN_h^{K+1}(s,a)\iota}+152\sum_{k=1}^K\omega_h(k,h,s,a)Z_h^k. \label{upperdiff}
\end{align}
On the other hand,
\begin{align}
\sum_{k=1}^K\omega_h(k,h,s,a)\left(Q_h^k-Q_h^\star\right)(s_h^k,a_h^k)&\geq\Delta_h(s,a)\sum_{k=1}^K\mathbb{I}\left[(s_h^k,a_h^k)=(s,a)\right] \notag\\
&=\Delta_h(s,a)N_h^{K+1}(s,a). \label{lowerdiff}
\end{align}
Combining \Cref{upperdiff} and \Cref{lowerdiff}, and using the AM-GM inequality,
\[
252\sqrt{(\mathbb{Q}_h^\star+\beta^2)HN_h^{K+1}(s,a)\iota}\leq\frac{1}{2}\Delta_h(s,a)N_h^{K+1}(s,a)+31752\frac{(\mathbb{Q}_h^\star+\beta^2)H\iota}{\Delta_h(s,a)},
\]
we obtain
\begin{align*}
\Delta_h(s,a)N_h^{K+1}(s,a)
&\leq2\sum_{k'=1}^K\tilde{\omega}_h(k',h+1,s,a)\left(Q_{h+1}^{k'}-Q_{h+1}^\star\right)(s_{h+1}^{k'},a_{h+1}^{k'})\\
&\quad+63504\frac{(\mathbb{Q}_h^\star+\beta^2)H\iota}{\Delta_h(s,a)}+304\sum_{k=1}^K\omega_h(k,h,s,a)Z_h^k.
\end{align*}
Since $Q_{h+1}^k(s,a)\geq Q_{h+1}^\star(s,a)$ by Lemma \ref{Q}, summing the inequality above over $(s,a)\in Z_{\non,h}$ and using property (a) of Lemma \ref{weight} gives
\begin{align}
A_h
&\leq2\sum_{k'=1}^K\omega_h(k',h+1)\left(Q_{h+1}^{k'}-Q_{h+1}^\star\right)(s_{h+1}^{k'},a_{h+1}^{k'}) \notag\\
&\quad+63504\sum_{\Delta_h(s,a)>0}\frac{(\mathbb{Q}_h^\star+\beta^2)H\iota}{\Delta_h(s,a)}+304\sum_{k=1}^K Z_h^k. \label{target}
\end{align}

By Lemma \ref{recurQbtotal},
\begin{align}
&\sum_{k'=1}^K\omega_h(k',h+1)\left(Q_{h+1}^{k'}-Q_{h+1}^\star\right)(s_{h+1}^{k'},a_{h+1}^{k'}) \notag\\
&\leq252\sum_{h'=h+1}^H\sum_{s,a}\sqrt{(\mathbb{Q}_{h'}^\star+\beta^2)H\|\omega_h(\cdot,h')\|_\infty\|\omega_h(\cdot,h',s,a)\|_1\iota}+152\sum_{h'=h+1}^H\sum_{k=1}^K\omega_h(k,h')Z_{h'}^k. \label{totalrecur}
\end{align}
By part (e) of Lemma \ref{weight},
\begin{equation}
\|\omega_h(\cdot,h')\|_\infty\leq\left(1+\frac{1}{H}\right)^{h'-h}\|\omega_h(\cdot,h)\|_\infty\leq e<3, \label{total0}
\end{equation}
and
\begin{equation}
\|\omega_h(\cdot,h')\|_1\leq\|\omega_h(\cdot,h)\|_1=\sum_{\Delta_h(s,a)>0}N_h^{K+1}(s,a). \label{total1}
\end{equation}
Moreover, by part (c) of Lemma \ref{weight},
\begin{equation}
\|\omega_h(\cdot,h',s,a)\|_1\leq\|\omega_h(\cdot,h')\|_\infty N_{h'}^{K+1}(s,a)\leq3N_{h'}^{K+1}(s,a). \label{single1}
\end{equation}

We next divide the state-action pairs at each step $h'$ into $Z_{\non,h'}$ and $Z_{\opt,h'}$. For the suboptimal state-action pairs, by \Cref{total0}, \Cref{single1}, and the Cauchy--Schwarz inequality,
\begin{align}
&504\sum_{h'=h+1}^H\sum_{\Delta_{h'}(s,a)>0}\sqrt{(\mathbb{Q}_{h'}^\star+\beta^2)H\|\omega_h(\cdot,h')\|_\infty\|\omega_h(\cdot,h',s,a)\|_1\iota} \notag\\
&\leq1512\sqrt{\left(\sum_{h'=h+1}^H\sum_{\Delta_{h'}(s,a)>0}\frac{(\mathbb{Q}_{h'}^\star+\beta^2)H\iota}{\Delta_{h'}(s,a)}\right)\left(\sum_{h'=h+1}^H A_{h'}\right)} \notag\\
&\leq\frac{1}{2H}\sum_{h'=h+1}^H A_{h'}+1143072\sum_{h'=h+1}^H\sum_{\Delta_{h'}(s,a)>0}\frac{(\mathbb{Q}_{h'}^\star+\beta^2)H^2\iota}{\Delta_{h'}(s,a)}, \label{nonoptimal}
\end{align}
where the last inequality follows from the AM-GM inequality.

For the optimal state-action pairs, by parts (d) and (e) of Lemma \ref{weight},
\[
\sum_{(s,a)\in Z_{\opt,h'}}\|\omega_h(\cdot,h',s,a)\|_1\leq\|\omega_h(\cdot,h')\|_1\leq\sum_{\Delta_h(s,a)>0}N_h^{K+1}(s,a).
\]
Thus, again by \Cref{total0} and the Cauchy--Schwarz inequality,
\begin{align}
&504\sum_{h'=h+1}^H\sum_{(s,a)\in Z_{\opt,h'}}\sqrt{(\mathbb{Q}_{h'}^\star+\beta^2)H\|\omega_h(\cdot,h')\|_\infty\|\omega_h(\cdot,h',s,a)\|_1\iota} \notag\\
&\leq504\sqrt{3}\left(\sum_{h'=h+1}^H\sqrt{(\mathbb{Q}_{h'}^\star+\beta^2)H|Z_{\opt,h'}|\iota}\right)\sqrt{\sum_{\Delta_h(s,a)>0}N_h^{K+1}(s,a)}. \label{optimal}
\end{align}

If $|Z_{\non,h}|=0$, then $A_h=0$ and the desired conclusion holds trivially. Otherwise, $\Delta_{\min,h}\in(0,+\infty)$ and
\[
A_h=\sum_{\Delta_h(s,a)>0}\Delta_h(s,a)N_h^{K+1}(s,a)\geq\Delta_{\min,h}\sum_{\Delta_h(s,a)>0}N_h^{K+1}(s,a).
\]
Hence, applying the AM-GM inequality to \Cref{optimal}, we obtain
\begin{align}
&504\sqrt{3}\left(\sum_{h'=h+1}^H\sqrt{(\mathbb{Q}_{h'}^\star+\beta^2)H|Z_{\opt,h'}|\iota}\right)\sqrt{\sum_{\Delta_h(s,a)>0}N_h^{K+1}(s,a)} \notag\\
&\leq\frac{1}{2}A_h+381024\frac{H\left(\sum_{h'=h+1}^H\sqrt{(\mathbb{Q}_{h'}^\star+\beta^2)|Z_{\opt,h'}|}\right)^2\iota}{\Delta_{\min,h}}. \label{optimalbound}
\end{align}

Moreover, by \Cref{total0},
\[
304\sum_{h'=h+1}^H\sum_{k=1}^K\omega_h(k,h')Z_{h'}^k\leq912\sum_{h'=h+1}^H\sum_{k=1}^K Z_{h'}^k.
\]
Multiplying \Cref{totalrecur} by $2$, applying \Cref{nonoptimal} and \Cref{optimalbound}, and then substituting the resulting bound into \Cref{target}, we obtain
\begin{align*}
A_h
&\leq\frac{1}{2}A_h+\frac{1}{2H}\sum_{h'=h+1}^H A_{h'}\\
&\quad+1143072\sum_{h'=h+1}^H\sum_{\Delta_{h'}(s,a)>0}\frac{(\mathbb{Q}_{h'}^\star+\beta^2)H^2\iota}{\Delta_{h'}(s,a)}+63504\sum_{\Delta_h(s,a)>0}\frac{(\mathbb{Q}_h^\star+\beta^2)H\iota}{\Delta_h(s,a)}\\
&\quad+381024\frac{H\left(\sum_{h'=h+1}^H\sqrt{(\mathbb{Q}_{h'}^\star+\beta^2)|Z_{\opt,h'}|}\right)^2\iota}{\Delta_{\min,h}}+304\sum_{k=1}^K Z_h^k+912\sum_{h'=h+1}^H\sum_{k=1}^K Z_{h'}^k.
\end{align*}
Moving $A_h/2$ to the left-hand side, multiplying both sides by $2$, and enlarging the numerical constants if necessary, we obtain the explicit recursion
\begin{equation}
A_h\leq F_h+\frac{1}{H}\sum_{h'=h+1}^H A_{h'}, \label{targetin}
\end{equation}
where
\begin{align*}
F_h
&:=2286144\sum_{h'=h}^H\sum_{\Delta_{h'}(s,a)>0}\frac{(\mathbb{Q}_{h'}^\star+\beta^2)H^2\iota}{\Delta_{h'}(s,a)}\\
&\quad+762048\frac{H\left(\sum_{h'=h+1}^H\sqrt{(\mathbb{Q}_{h'}^\star+\beta^2)|Z_{\opt,h'}|}\right)^2\iota}{\Delta_{\min,h}}+1824\sum_{h'=h}^H\sum_{k=1}^K Z_{h'}^k.
\end{align*}
When $|Z_{\non,h}|=0$, $A_h=0$, so \Cref{targetin} also holds trivially.

It remains to solve the recursion \Cref{targetin}. We claim that
\begin{equation}
A_h\leq F_h+\frac{1}{H}\sum_{h'=h+1}^H\left(1+\frac{1}{H}\right)^{h'-h-1}F_{h'}. \label{explicitrecursion}
\end{equation}
This follows by backward induction. The claim is immediate for $h=H$. Assuming that it holds for all $t\in\{h+1,\ldots,H\}$, \Cref{targetin} gives
\begin{align*}
A_h
&\leq F_h+\frac{1}{H}\sum_{t=h+1}^H\left(F_t+\frac{1}{H}\sum_{h'=t+1}^H\left(1+\frac{1}{H}\right)^{h'-t-1}F_{h'}\right)\\
&=F_h+\frac{1}{H}\sum_{h'=h+1}^H\left(1+\frac{1}{H}\right)^{h'-h-1}F_{h'},
\end{align*}
which proves \Cref{explicitrecursion}.

Finally, since
\[
\left(1+\frac{1}{H}\right)^H\leq e,
\]
substituting the definition of $F_{h'}$ into \Cref{explicitrecursion} and exchanging the order of summation yields
\begin{align*}
A_h
&\lesssim\sum_{h'=h}^H\sum_{\Delta_{h'}(s,a)>0}\frac{(\mathbb{Q}_{h'}^\star+\beta^2)H^2\iota}{\Delta_{h'}(s,a)}+\frac{H\left(\sum_{t=h+1}^H\sqrt{(\mathbb{Q}_t^\star+\beta^2)|Z_{\opt,t}|}\right)^2\iota}{\Delta_{\min,h}}\\
&\quad+\sum_{h'=h+1}^H\frac{\left(\sum_{t=h'+1}^H\sqrt{(\mathbb{Q}_t^\star+\beta^2)|Z_{\opt,t}|}\right)^2\iota}{\Delta_{\min,h'}}+\sum_{h'=h}^H\sum_{k=1}^K Z_{h'}^k.
\end{align*}
The hidden constant in the final inequality is universal and independent of $h,H,S,A$, and $K$. This proves the lemma.
\end{proof}

Now we continue to bound the gap-dependent regret. We claim that
\begin{equation}
    \label{upperZ}
    \sum_{h' = 1}^{H}\sum_{k=1}^K Z_{h'}^{k} \lesssim \frac{H^5SA\iota^2}{\beta}.
\end{equation}
\textbf{Proof of \Cref{upperZ}:} 
By definition of $Z_{h'}^{k}$ in \Cref{defZ}, we have the following relationship
\begin{align}
\label{upperzbegin}
     &Z_{h'}^{k}= \mathbb{I}\left[t_{h'}^{k}=1\right]H +\mathbb{I}\left[t_{h'}^{k}>1\right]\frac{H^2\iota/\beta+\sqrt{H\Psi_{h'}^k(s_{{h'}}^{k},a_{{h'}}^{k})\iota}}{n_{h'}^k(s_{{h'}}^{k},a_{{h'}}^{k})}.
\end{align}
For the first term of \Cref{upperzbegin}, by \Cref{upperY}, we have
\begin{equation*}
    \sum_{h' = 1}^{H}\sum_{k=1}^K\mathbb{I}\left[t_{h'}^{k}=1\right]H  \lesssim H^3SA.
\end{equation*}
For the second term of \Cref{upperzbegin}, by Lemma \ref{wN} with $\alpha = 1$, we know
\begin{align}
\label{z21}
    \sum_{h' = 1}^{H} \sum_{k=1}^K\mathbb{I}\left[t_{h'}^{k}>1\right]\frac{H^2\iota}{\beta n_{h'}^k(s_{h'}^{k},a_{h'}^{k})} \lesssim\frac{H^4SA\iota^2}{\beta}.
\end{align}
We also claim that
\begin{align}
    \label{z22}
    \sum_{h'=1}^H\sum_{k=1}^K\mathbb{I}\left[t_{h'}^{k}>1\right]\frac{\sqrt{H\Psi_{h'}^k(s_{h'}^{k}, a_{h'}^{k})\iota}}{n_{h'}^k(s_{h'}^{k}, a_{h'}^{k})}  \lesssim \frac{H^5SA\iota^2}{\beta},
\end{align}
which will be proved later. Then combining the results of \Cref{z21} and \Cref{z22}, we can bound the second term of \Cref{upperzbegin} as
\begin{equation*}
    \sum_{h'=1}^H\sum_{k=1}^K\mathbb{I}\left[t_{h'}^{k}>1\right]\frac{H^2\iota/\beta+\sqrt{H\Psi_{h'}^k(s_{h'}^{k}, a_{h'}^{k})\iota}}{n_{h'}^k(s_{h'}^{k}, a_{h'}^{k})} \lesssim \frac{H^5SA\iota^2}{\beta}.
\end{equation*}
Combining the results above, we complete the proof of \cref{upperZ}.

Let  $\delta = \frac{1}{8T}$, then $\mathcal{E}: = \bigcap_{i=1}^8\mathcal{E}_i$ holds with probability at least $1-\frac{1}{T}$ and $\iota \leq O(\log(SAT)).$ Therefore, we have
\begin{align*}
    &\mathbb{E}(\textnormal{Regret}(T))
    \\
    &= \mathbb{E}\left(\sum_{h= 1}^H\sum_{s,a}\Delta_h(s,a)N_h^{K+1}(s,a)\right)\\
 & = \mathbb{E} \left(\sum_{h= 1}^H\sum_{s,a}\Delta_h(s,a)N_h^{K+1}(s,a) \bigg| \mathcal{E}\right)\mathbb{P}(\mathcal{E}) + \mathbb{E} \left(\sum_{h= 1}^H\sum_{s,a}\Delta_h(s,a)N_h^{K+1}(s,a) \bigg|  \mathcal{E}^c\right)\mathbb{P}(\mathcal{E}^c)\\
    & \leq O\Bigg( \sum_{h = 1}^H\sum_{\Delta_{h}(s,a)>0}\frac{(\mathbb{Q}_{h}^\star+\beta^2)H^3\iota_1}{\Delta_{h}(s,a)} +\sum_{h = 1}^H\frac{H(\sum_{t = h + 1}^H\sqrt{(\mathbb{Q}_{t}^\star+\beta^2)|Z_{\opt,t}|})^2\iota_1}{\Delta_{\min,h}} 
+ \frac{H^6SA\iota_1^2}{\beta}\Bigg) \notag\\
&\quad + \frac{1}{T} \cdot HT\\
&\leq O\Bigg( \sum_{h = 1}^H\sum_{\Delta_{h}(s,a)>0}\frac{(\mathbb{Q}_{h}^\star+\beta^2)H^3\iota_1}{\Delta_{h}(s,a)} +\sum_{h = 1}^H\frac{H(\sum_{t = h + 1}^H\sqrt{(\mathbb{Q}_{t}^\star+\beta^2)|Z_{\opt,t}|})^2\iota_1}{\Delta_{\min,h}} 
+ \frac{H^6SA\iota_1^2}{\beta}\Bigg).
\end{align*}
The first inequality holds due to Lemma \ref{weightvisit} under the event $\mathcal{E}$, and under the event $\mathcal{E}^c$, 
$$\sum_{h= 1}^H\sum_{s,a}\Delta_h(s,a)N_h^{K+1}(s,a) \leq HT.$$
\textbf{Proof of \cref{z22}:}

For each $(s,a,h)$ and stage $t\geq 2$, let $\Phi_h^t(s,a)$ denote the common value of
$\Psi_h^k(s,a)$ over episodes $k$ satisfying $t_h^k(s,a)=t$. This value is well-defined
because $\Psi_h^k(s,a)$ depends only on the samples collected in the preceding stage
$t-1$, and hence remains unchanged throughout stage $t$. 

Then we have
\begin{align}
    &\sum_{h=1}^H\sum_{k=1}^K\mathbb{I}\left[t_h^{k}>1\right]\frac{\sqrt{H\Psi_h^k(s_h^{k}, a_h^{k})}}{n_h^k(s_h^{k}, a_h^{k})} \notag\\
    &=\sum_{h=1}^H\sum_{s,a}\sum_{k=1}^K \frac{\sqrt{H\Psi_h^k(s,a)}}{n_h^k(s,a)}\mathbb{I}\left[ (s_h^{k}, a_h^{k}) = (s,a)\right]\left(\sum_{t=2}^{t_h^K(s,a)}\mathbb{I}\left[t_h^k(s,a) = t\right]\right) \nonumber \\
    &=\sum_{h=1}^H\sum_{s,a}\sum_{t=2}^{t_h^K(s,a)}\sum_{k=1}^K \frac{\sqrt{H\Psi_h^k(s,a)}}{e_{t-1}}\mathbb{I}\left[ (s_h^{k}, a_h^{k}) = (s,a)\right]\mathbb{I}\left[t_h^k(s,a) = t\right] \nonumber \\
    &\leq \sum_{h=1}^H\sum_{s,a}\sum_{t=2}^{t_h^K(s,a)} \sqrt{H\Phi_h^{t}(s,a)}\frac{e_t}{e_{t-1}}\nonumber \\
    &\lesssim \sum_{h=1}^H\sum_{s,a}\sum_{t=2}^{t_h^K(s,a)}\sqrt{H\Phi_h^{t}(s,a)} \nonumber\\
    &\lesssim \sqrt{SAH^3\iota \sum_{h=1}^H\sum_{s,a}\sum_{t=2}^{t_h^K(s,a)} \Phi_h^{t}(s,a)}. \label{sumpsibegin}
\end{align}
Here, the first inequality holds due to the definition of $\Phi_h^t(s,a)$. The second inequality is because $e_t \leq 2e_{t-1}$. The last inequality is by the Cauchy--Schwarz inequality with $\sum_{s,a,h}t_h^K(s,a) \lesssim SAH^2\iota$ by \Cref{sumnumstage}. Furthermore, note that
\begin{align}
    &\sum_{h=1}^H\sum_{s,a}\sum_{t=2}^{t_h^K(s,a)} \Phi_h^{t}(s,a) \notag\\
    &\leq \sum_{h=1}^H\sum_{k=1}^K \left(V_{h+1}^{k}-\hat{V}_{h+1}^{k}\right)(s_{h+1}^{k}) \nonumber\\
    &\leq \sum_{h=1}^H\sum_{k=1}^K \left(V_{h+1}^{k}-V_{h+1}^{\star}\right)(s_{h+1}^{k})\cdot\mathbb{I}\left[ \left(V_{h+1}^{k}-V_{h+1}^{\star}\right)(s_{h+1}^{k}) > \beta\right] \nonumber\\
    &\overset{\mathrm{(i)}}{\lesssim} \sum_{h=1}^H\sum_{k=1}^K \left(Q_{h+1}^{k}-Q_{h+1}^{\star}\right)(s_{h+1}^{k}, a_{h+1}^{k})\cdot\mathbb{I}\left[ \left(Q_{h+1}^{k}-Q_{h+1}^{\star}\right)(s_{h+1}^{k}, a_{h+1}^{k}) > \beta\right] \nonumber\\
    &\lesssim \frac{H^6 S A \iota}{\beta}. \label{sumpsiupper}
\end{align}
Here, $\mathrm{(i)}$ is because $Q_{h+1}^{k}(s_{h+1}^{k}, a_{h+1}^{k}) = V_{h+1}^k(s_{h+1}^{k})$ and $Q_{h+1}^{\star}(s_{h+1}^{k}, a_{h+1}^{k}) \leq V_{h+1}^{\star}(s_{h+1}^{k})$. The last inequality holds due to Lemma \ref{clipq}. Then by applying \Cref{sumpsiupper} to \Cref{sumpsibegin}, we complete the proof of \Cref{z22}.

\section{Proof of \texorpdfstring{\Cref{thm_gapcost}}{Theorem 3.2}}
\label{switchcost}
In this section, we provide the proof for \Cref{thm_gapcost}.
\begin{proof}[Proof of \Cref{thm_gapcost}]
For any target failure probability $\delta\in(0,1)$, run \Cref{alg1} with its failure-probability parameter set to $\delta/8$, so that $\iota=\log(16SAT/\delta)$. Applying Lemma \ref{concentrations} with failure parameter $\delta/8$ and taking a union bound, $\bigcap_{i=1}^{8}\mathcal E_i$ holds with probability at least $1-\delta$. 

Next, we will prove \Cref{thm_gapcost} under the event $\bigcap_{i=1}^{8} \mathcal{E}_i$.

From the proof of Theorem 2 in \citet{zhang2020almost}, we have:
 \begin{align*}
N_\textnormal{switch} \leq \sum_{s,a,h} 4H \log\bigg(\frac{N_h^{K+1}(s,a)}{2H}+1\bigg).
\end{align*}
Next for any $(s,a,h) \in \mathcal{S} \times \mathcal{A} \times [H]$, we will bound the term $N_h^{K+1}(s,a)$. Let $\mathcal{A}_h^\star(s) = \{a\mid a  \text{ maximizes }Q_h^\star(s,\cdot)\}$, which is the set of optimal actions for state-step pair $(s,h)$. For $a \notin \mathcal{A}_h^\star(s)$, we have $\Delta_h(s,a) >0 $ and then $\Delta_h(s,a) \geq \Delta_{\textnormal{min}}$. For any $h \in [H]$, let $Z_h$ be the set of all triples of $(s,a,h)$ such that $a \notin \mathcal{A}_h^\star(s)$, i.e., $Z_h = \{(s,a,h) | a \notin \mathcal{A}_h^\star(s)\}.$

    We also let the set $Z = \bigcup_{h=1}^{H}Z_h$ and the set $Z_{\textnormal{opt}} = \{(s,a,h) | a \in \mathcal{A}_h^\star(s)\}$. Then we have $|Z| + |Z_{\textnormal{opt}}| =SAH$. Since every state-step pair $(s,h)$ has at least one optimal action, we know $|Z_{\textnormal{opt}}| \geq SH$ and then $0 < |Z|\leq SH(A-1)$.
    
    If for given $(h,k) \in [H] \times [K]$, $(s_h^k,a_h^k,h) \in Z_h$, we have $\Delta_h(s_h^k,a_h^k) \geq \Delta_{\textnormal{min}}$. Then it holds that:
    $$Q_h^k(s_h^k,a_h^k) - Q_h^\star(s_h^k,a_h^k) = V_h^k(s_h^k) - Q_h^\star(s_h^k,a_h^k) \geq  \Delta_h(s_h^k,a_h^k) \geq \Delta_{\textnormal{min}}.$$
    The first inequality is because $V_h^k(s) \geq V_h^\star(s)$ by Lemma \ref{Q}. Therefore, we have 
    \begin{align*}
        \sum_{(s,a,h) \in Z_h}\mathbb{I}[(s_h^k,a_h^k) = (s,a)] &=  \mathbb{I}[(s_h^k,a_h^k,h) \in Z_h]\leq  \mathbb{I}\left[Q_h^k(s_h^k,a_h^k) - Q_h^\star(s_h^k,a_h^k) \geq \Delta_{\textnormal{min}}\right]
    \end{align*}
    and thus
    \begin{align} 
     \sum_{(s,a,h) \in Z}N_h^{K+1}(s,a) &= \sum_{h=1}^H\sum_{(s,a,h)\in Z_{h}}\sum_{k=1}^K \mathbb{I}[(s_h^k,a_h^k) = (s,a)] \notag\\
     &\leq \sum_{h=1}^H\sum_{k=1}^K\mathbb{I}\left[Q_h^k(s_h^k,a_h^k) - Q_h^\star(s_h^k,a_h^k) \geq \Delta_{\textnormal{min}}\right] \notag\\
     &\leq O\left( \frac{\sum_{h=1}^H\left(\mathbb{Q}_h^\star+\beta^2 \right) H^3SA\iota }{\Delta_{\textnormal{min}}^2}+\frac{H^6SA\iota^2}{\beta\Delta_{\textnormal{min}}}\right). \label{ndoptc}
    \end{align}
    The last inequality is by Lemma \ref{clipqb}. Therefore we have:
    \begin{align*}
    &N_\textnormal{switch} \leq \sum_{s,a,h} 4H \log\left(\frac{N_h^{K+1}(s,a)}{2H}+1\right) \nonumber\\
    & = \sum_{(s,a,h) \in Z}4H \log\left(\frac{N_h^{K+1}(s,a)}{2H}+1\right) + \sum_{(s,a,h) \in Z_{\textnormal{opt}}}4H \log\left(\frac{N_h^{K+1}(s,a)}{2H}+1\right) \nonumber\\
    &\leq O\Bigg(H(SAH-|Z_{\textnormal{opt}}|) \log \left(\frac{\sum_{h=1}^H\left(\mathbb{Q}_h^\star+\beta^2 \right) H^2SA\iota}{(SAH-|Z_{\textnormal{opt}}|)\Delta_{\textnormal{min}}^2} + \frac{H^5SA\iota^2}{(SAH-|Z_{\textnormal{opt}}|)\beta\Delta_{\textnormal{min}}}\right)  \notag\\
    &\qquad +H|Z_{\textnormal{opt}}|\log \left(\frac{K}{|Z_{\textnormal{opt}}|}+1\right) \Bigg) \\
    &\leq O\left(H|Z_{\textnormal{opt}}^c| \log \left(\frac{H^6SA\iota^2}{|Z_{\textnormal{opt}}^c|\beta\Delta_{\textnormal{min}}^2}\right)  +H|Z_{\textnormal{opt}}|\log \left(\frac{K}{|Z_{\textnormal{opt}}|}+1\right) \right)\\
    & = O\Bigg(H|Z_{\textnormal{opt}}|\log \left(\frac{K}{|Z_{\textnormal{opt}}|}+1\right) + H|Z_{\textnormal{opt}}^c| \log \Bigg(\frac{H^{\frac{5}{2}}S^{\frac{1}{2}}A^{\frac{1}{2}}\log(\frac{SAT}{\delta})
   }{|Z_{\textnormal{opt}}^c|^{\frac{1}{2}}\Delta_{\textnormal{min}}}\Bigg)\Bigg).
    \end{align*}
    The second inequality is because of Jensen's inequality and \Cref{ndoptc}. The last inequality holds due to 
    $$\frac{\sum_{h=1}^H\left(\mathbb{Q}_h^\star+\beta^2 \right)H^2SA\iota}{\left(SAH-|Z_{\textnormal{opt}}|\right)\Delta_{\textnormal{min}}^2} \leq \frac{H^6SA\iota^2}{\left(SAH-|Z_{\textnormal{opt}}|\right)\beta\Delta_{\textnormal{min}}^2},$$
    and
    $$\frac{H^5SA\iota^2}{(SAH-|Z_{\textnormal{opt}}|)\beta\Delta_{\textnormal{min}}} \leq \frac{H^6SA\iota^2}{(SAH-|Z_{\textnormal{opt}}|)\beta\Delta_{\textnormal{min}}^2}$$
    by $\mathbb{Q}_h^\star \leq H^2$ for all $h$ and $0 <\beta,\Delta_{\textnormal{min}} \leq H$. The last equation holds by setting $\beta = H$. Then we finish the proof.
    
    In particular, if each state-step pair has a unique optimal action, which means $|Z_{\textnormal{opt}}| = SH$, then the policy switching cost is upper bounded by:
    $$ O\left(H^2SA \log \left(\frac{H^2\log(\frac{SAT}{\delta})}{\Delta_{\textnormal{min}}}\right) + H^2S\log \left(\frac{K}{HS}+1\right) \right).$$
\end{proof}
Now we prove Lemma \ref{clipqb}.
\begin{lemma}
\label{clipqb}
Under the event $\bigcap_{i=1}^{6}\mathcal{E}_i$ in Lemma \ref{concentrations}, for all $\epsilon \in (0, H]$, we have the following conclusion:
\begin{align*} 
    \sum_{h=1}^H \sum_{k=1}^K \mathbb{I} \left [Q_h^k(s_h^{k}, a_h^{k}) - Q_h^{\star}(s_h^{k}, a_h^{k}) \geq \epsilon \right]
    &\lesssim \frac{\sum_{h = 1}^H\left(\mathbb{Q}_{h}^\star+\beta^2\right)H^3 S A \iota}{\epsilon^2}  + \frac{H^6SA\iota^2}{\beta\epsilon}.
\end{align*}
\end{lemma}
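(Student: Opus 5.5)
We mirror the proof of Lemma \ref{clipq}, replacing the Hoeffding-type recursion (Lemma \ref{recurQ}) with the variance-aware recursion of Lemma \ref{recurQbtotal}. Let $N=\max\{1,\lceil\log_2(H/\epsilon)\rceil\}$. For each $h\in[H]$ and $i\in[N]$, define the dyadic weights $\omega_h^i(k,h)=\mathbb{I}[(Q_h^k-Q_h^\star)(s_h^k,a_h^k)\in[2^{i-1}\epsilon,2^i\epsilon)]$, taking the last bin closed at $H$. Propagate them by \Cref{recurweight}. These satisfy $\|\omega_h^i(\cdot,h')\|_\infty<3$ and $\|\omega_h^i(\cdot,h')\|_1\le\|\omega_h^i(\cdot,h)\|_1$ by Lemma \ref{weight}(e). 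By \Cref{omegaprime}, we also have $\sum_i\omega_h^i(k,h')<3$. The lower bound \Cref{difflower}, namely $\sum_k\omega_h^i(k,h)(Q_h^k-Q_h^\star)\ge 2^{i-1}\epsilon\|\omega_h^i(\cdot,h)\|_1$, holds unchanged.

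For the upper bound we apply Lemma \ref{recurQbtotal} with $h'=h$. To control the square-root term, we use Cauchy--Schwarz over $(s,a)$ together with Lemma \ref{weight}(d):
\[
\sum_{h_1=h}^H\sum_{s,a}\sqrt{(\mathbb{Q}_{h_1}^\star+\beta^2)H\|\omega\|_\infty\|\omega(\cdot,h_1,s,a)\|_1\iota}\lesssim\sum_{h_1}\sqrt{(\mathbb{Q}_{h_1}^\star+\beta^2)HSA\,\|\omega_h^i(\cdot,h)\|_1\iota}.
\]
Applying Cauchy--Schwarz once more over $h_1$, this is at most $\sqrt{H\sum_{h_1}(\mathbb{Q}_{h_1}^\star+\beta^2)\,HSA\iota\,\|\omega_h^i(\cdot,h)\|_1}$. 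Writing $G:=\sum_{h_1=1}^H(\mathbb{Q}_{h_1}^\star+\beta^2)$, we obtain
\[
2^{i-1}\epsilon\|\omega_h^i(\cdot,h)\|_1\lesssim\sqrt{GH^2SA\iota\|\omega_h^i(\cdot,h)\|_1}+\sum_{h_1=h}^H\sum_k\omega_h^i(k,h_1)Z_{h_1}^k.
\]
Solving this quadratic inequality gives
\[
\|\omega_h^i(\cdot,h)\|_1\lesssim\frac{GH^2SA\iota}{4^i\epsilon^2}+\frac{\sum_{h_1\ge h}\sum_k\omega_h^i(k,h_1)Z_{h_1}^k}{2^i\epsilon}.
\]

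Summing over $i$ and then over $h$, the first term contributes $\lesssim GH^3SA\iota/\epsilon^2$, which is the first term in the claim. For the second term, we bound $2^{-i}\le 1$ and use $\sum_i\omega_h^i(k,h_1)<3$, which gives $\frac{1}{\epsilon}\sum_{h=1}^H\sum_{h_1\ge h}\sum_kZ_{h_1}^k\lesssim\frac{H}{\epsilon}\sum_{h_1}\sum_kZ_{h_1}^k$. Invoking \Cref{upperZ}, this is at most $\frac{H}{\epsilon}\cdot\frac{H^5SA\iota^2}{\beta}=\frac{H^6SA\iota^2}{\beta\epsilon}$. Finally, \Cref{clipkmj1} identifies $\sum_h\sum_k\mathbb{I}[Q_h^k-Q_h^\star\ge\epsilon]$ with $\sum_h\sum_i\|\omega_h^i(\cdot,h)\|_1$, which concludes the proof.

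One point needs checking. \Cref{upperZ} was derived using Lemma \ref{clipq}, which holds under $\bigcap_{i\le3}\mathcal{E}_i$, and Lemma \ref{wN}, so it is available under $\bigcap_{i\le6}\mathcal{E}_i$ and there is no circularity. The main obstacle is the $Z$-term. Unlike the $t=1$ term in Lemma \ref{clipq}, it is not reduced by the $2^{-i}$ factor in any useful way, and summing over the starting step $h$ costs an extra factor $H$. This is precisely why the bound carries $H^6/(\beta\epsilon)$ rather than something smaller, so this crude bound is acceptable for the stated result.
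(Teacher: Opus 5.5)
Your proposal is correct and follows the paper's proof essentially step for step. You start from Lemma \ref{recurQbtotal} and use Cauchy--Schwarz with Lemma \ref{weight}(d),(e) to obtain the $\sqrt{\sum_{h_1}(\mathbb{Q}_{h_1}^\star+\beta^2)H^2SA\iota\|\omega_h^i(\cdot,h)\|_1}$ term, reuse the dyadic decomposition of Lemma \ref{clipq}, solve the quadratic inequality, and control the $Z$-term through \Cref{omegaprime} and \Cref{upperZ}, with an extra factor $H$ from the sum over $h$. The only differences are cosmetic: you split the Cauchy--Schwarz step into two stages, and you bound $\sum_i \omega_h^i/2^i$ via $\sum_i\omega_h^i<3$ rather than via $\omega_h^i<3$ and $\sum_i 2^{-i}\le 1$.
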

\begin{proof}[Proof of Lemma \ref{clipqb}]
By Lemma \ref{recurQbtotal}, for any given $h \in [H]$ and non-negative weight sequence $\{\omega_{h}^{k}\}_{k}$ at step $h$, it holds simultaneously for any $(s,a) \in \mathcal{S} \times \mathcal{A} $ and subsequent step $h' \in [h,H]$ that:
\begin{align}
     &\sum_{k = 1}^K \omega_{h}(k,h') (Q_{h'}^k - Q_{h'}^{\star})(s_{h'}^{k},a_{h'}^{k}) \notag\\
     &\lesssim \sum_{h_1 = h'}^H\sum_{s,a}\sqrt{(\mathbb{Q}_{h_1}^\star+\beta^2)H\|\omega_h(\cdot,{h_1})\|_{\infty}\|\omega_h(\cdot,{h_1},s,a)\|_{1}\iota} + \sum_{h_1 = h'}^H\sum_{k=1}^K\omega_{h}(k,h_1)Z_{h_1}^{k} \notag\\
     &\lesssim \sqrt{\sum_{h_1 = h'}^H(\mathbb{Q}_{h_1}^\star+\beta^2)H^2SA\|\omega_h(\cdot,{h_1})\|_{\infty}\|\omega_h(\cdot,{h_1})\|_{1}\iota} +  \sum_{h_1 = h'}^H\sum_{k=1}^K\omega_{h}(k,h_1)Z_{h_1}^{k}\notag\\
     &\lesssim \sqrt{\sum_{h_1 = h'}^H\left(\mathbb{Q}_{h_1}^\star+\beta^2\right)H^2SA\|\omega_h(\cdot,{h})\|_{\infty}\|\omega_h(\cdot,{h})\|_{1}\iota} +  \sum_{h_1 = h'}^H\sum_{k=1}^K\omega_{h}(k,h_1)Z_{h_1}^{k}. \label{recurQbsum}
\end{align}
Here, the second inequality follows from the Cauchy--Schwarz inequality. The last inequality holds due to 
\begin{equation*}
    \|\omega_h(\cdot,h_1)\|_{\infty} \leq \left(1+\frac{1}{H}\right)^{h_1-h}\|\omega_h(\cdot,h)\|_{\infty} \lesssim \|\omega_h(\cdot,h)\|_{\infty},\ \|\omega_h(\cdot,h_1)\|_{1} \leq \|\omega_h(\cdot,h)\|_{1}
\end{equation*}
for any $h_1 \in [h,H]$ by part (e) of Lemma \ref{weight}.

Then, following the same argument as in the proof of Lemma \ref{clipq}, let $N=\max\{1,\lceil\log_2(H/\epsilon)\rceil\}$. For any fixed $h \in [H]$, $k \in [K]$, and $i \in [N-1]$, define
$$\omega_h^i(k,h) = \mathbb{I}\left[(Q_h^k - Q_h^{\star})(s_h^{k}, a_h^{k}) \in \big[2^{i-1}\epsilon,2^i\epsilon\big)\right] \leq 1$$
and
$$\omega_h^N(k,h) = \mathbb{I}\left[(Q_h^k - Q_h^{\star})(s_h^{k}, a_h^{k}) \in \big[2^{N-1}\epsilon,H\big]\right] \leq 1.$$
Moreover, for any $k' \in [K]$ and $h \leq h' \leq H$, we define $\omega_h^i(k',h')$ analogously as in \Cref{irecurweight}, and denote the corresponding norms $\|\omega_h^i(\cdot,h')\|_{1}$ and $\|\omega_h^i(\cdot,h')\|_{\infty}$ in the same manner as in \Cref{iweightnorm}. Then we have:
\begin{equation}
        \label{difflowerb}
        \sum_{k=1}^K \omega_{h}^i(k,h) \left(Q_h^k - Q_h^{\star}\right)(s_h^{k}, a_h^{k}) \geq 2^{i-1}\epsilon\|\omega_h^i(\cdot,h)\|_{1}.
    \end{equation}
By \Cref{recurQbsum}, since $\omega_{h}^i(k,h) \leq 1$, we have
\begin{align*}
     &\sum_{k = 1}^K \omega_{h}^i(k,h) \left(Q_{h}^k - Q_{h}^{\star}\right)(s_{h}^{k},a_{h}^{k})\\ 
     &\lesssim  \sum_{h_1 = h}^H\sum_{k=1}^K\omega_{h}^i(k,h_1)Z_{h_1}^{k} + \sqrt{\sum_{h_1 = h}^H\left(\mathbb{Q}_{h_1}^\star+\beta^2\right)H^2SA\|\omega_h^i(\cdot,{h})\|_{1}\iota}.
\end{align*}
Combining the result above and \Cref{difflowerb}, we have the following inequality:
    \begin{align}
    \label{inib}
        2^{i-1}\epsilon\|\omega_{h}^i(\cdot,{h})\|_{1} \lesssim \sqrt{\sum_{h_1 = h}^H\left(\mathbb{Q}_{h_1}^\star+\beta^2\right)H^2SA\|\omega_{h}^i(\cdot,{h})\|_{1}\iota} + \sum_{h_1 = h}^H\sum_{k=1}^K\omega_{h}^i(k,h_1)Z_{h_1}^{k}.
    \end{align}
Solving the inequality in \Cref{inib}, we can derive the following relationship:
    \begin{equation}
    \label{answerb}
       \|\omega_{h}^i(\cdot,{h})\|_{1} \lesssim \frac{\sum_{h_1 = h}^H\left(\mathbb{Q}_{h_1}^\star+\beta^2\right)H^2SA\iota}{4^i\epsilon^2} + \frac{\sum_{h_1 = h}^H\sum_{k=1}^K\omega_{h}^i(k,h_1)Z_{h_1}^{k}}{2^i\epsilon}.
    \end{equation}
We now prove the conclusion. By \Cref{clipkmj1} and \Cref{answerb}, we have
    \begin{align*}
        &\sum_{h=1}^H\sum_{k=1}^K \mathbb{I} \left [Q_h^k(s_h^{k}, a_h^{k}) - Q_h^{\star}(s_h^{k}, a_h^{k}) \geq \epsilon \right] \notag\\
        &= \sum_{h=1}^H\sum_{i=1}^N\|\omega_{h}^i(\cdot,{h})\|_{1} \notag\\
       &\lesssim \sum_{h=1}^H\sum_{i=1}^N\frac{\sum_{h_1 = h}^H\left(\mathbb{Q}_{h_1}^\star+\beta^2\right)H^2SA\iota}{4^i\epsilon^2} + \sum_{h=1}^H\sum_{i=1}^N\frac{\sum_{h_1 = h}^H\sum_{k=1}^K\omega_{h}^i(k,h_1)Z_{h_1}^{k}}{2^i\epsilon} \nonumber\\
        &\lesssim \frac{\sum_{h = 1}^H\left(\mathbb{Q}_{h}^\star+\beta^2\right)H^3 S A \iota}{\epsilon^2}  + \sum_{h=1}^H\sum_{i=1}^N\frac{\sum_{h_1 = h}^H\sum_{k=1}^KZ_{h_1}^{k}}{2^i\epsilon} \nonumber\\
        &\lesssim  \frac{\sum_{h = 1}^H\left(\mathbb{Q}_{h}^\star+\beta^2\right)H^3 S A \iota}{\epsilon^2}  + \frac{H^6SA\iota^2}{\beta\epsilon}.
    \end{align*}
    Here, the second inequality is because $0 \leq \omega_{h}^i(k,h_1)< 3$ by \Cref{omegaprime}. The last inequality holds due to \Cref{upperZ}. Then we finish the proof.
\end{proof}

\section{Proof of \texorpdfstring{\Cref{thm_regret}}{Theorem 3.3}}
\label{worstregretproof}
In this section, we bound the worst-case regret under the event $\bigcap_{i=1}^{8}\mathcal{E}_i$ in Lemma \ref{concentrations}.

As $V_h^k(s)\geq V_h^{\star}(s)$ for all $(s,h,k)\in \mathcal{S}\times [H]\times [K]$ by Lemma \ref{Q}, we have:
\begin{equation*}
    \mbox{Regret}(T) = \sum_{k=1}^K\left(V_1^\star(s_1^{k}) - V_1^{\pi^{k}}(s_1^{k})\right) \leq \sum_{k=1}^K\left(V_1^k(s_1^{k}) - V_1^{\pi^{k}} (s_1^{k})\right) = \sum_{k=1}^K \zeta_1^k.
\end{equation*}
Thus, we only need to bound $\sum_{k=1}^K \zeta_1^k$. By \Cref{singlerecurinitial} and the definition of $\tilde{b}_h^{k}$ in \Cref{betabernnew}, we have
\begin{align}
\label{recur1beginb}
        &\sum_{k=1}^K(Q_h^{k} - Q_h^{\star})(s_h^{k}, a_h^{k}) \nonumber\\
        &\leq \sum_{k=1}^K\mathbb{I}\left[t_h^{k}=1\right]H
        +8\sum_{k=1}^K\mathbb{I}\left[t_h^{k}>1\right]\sqrt{\frac{\tilde{W}_h^{k}(s_h^{k}, a_h^{k})\iota}{n_h^k}}
        +119\sum_{k=1}^K\mathbb{I}\left[t_h^{k}>1\right]\frac{H\iota}{n_h^k} \nonumber\\
        &+ \sum_{k=1}^K\mathbb{I}\left[t_h^{k}>1\right]\frac{\sum_{i=1}^{n_h^k}(V_{h+1}^{k_{l_i}}-V_{h+1}^\star)(s_{h+1}^{k_{l_i}})}{n_h^k(s_h^{k},a_h^{k})}
        + \sum_{k=1}^K\mathbb{I}\left[t_h^{k}>1\right]\frac{\sum_{i=1}^{n_h^k}\left(\mathbbm{1}_{s_{h+1}^{k_{l_i}}} - \mathbb{P}_{s_h^{k}, a_h^{k},h}\right)V_{h+1}^\star}{n_h^k(s_h^{k},a_h^{k})}.
\end{align}
For the lower-order bonus term in \Cref{recur1beginb}, by Lemma \ref{wN}, it holds that
\begin{equation}
    \label{recur12-2}
    \sum_{k=1}^K\mathbb{I}\left[t_h^{k}>1\right]\frac{H\iota}{n_h^k} \leq 10H^2SA\iota^2.
\end{equation}
Moreover, by \Cref{recur11} and \Cref{recur13},
\begin{align*}
    \sum_{k=1}^K\mathbb{I}\left[t_h^{k}=1\right]H &\leq H^2SA,\\
    \sum_{k=1}^K\mathbb{I}\left[t_h^{k}>1\right]\frac{\sum_{i=1}^{n_h^k}\left(V_{h+1}^{k_{l_i}}-V_{h+1}^{\star}\right)(s_{h+1}^{k_{l_i}})}{n_h^k(s_h^{k},a_h^{k})}
    &\leq \left(1+\frac{1}{H}\right)\sum_{k=1}^K \delta_{h+1}^k.
\end{align*}
Since $\iota=\log(2SAT/\delta)\geq \log 2$, we have $1\leq 3\iota^2$. Therefore, combining the above inequalities with \Cref{recur1beginb} yields
\begin{align*}
        \sum_{k=1}^K(Q_h^{k} - Q_h^{\star})(s_h^{k}, a_h^{k})
        &\leq \left(1+\frac{1}{H}\right)\sum_{k=1}^K \delta_{h+1}^k
        +1193H^2SA\iota^2
        +8\sum_{k=1}^K\mathbb{I}\left[t_h^{k}>1\right]\sqrt{\frac{\tilde{W}_h^{k}(s_h^{k}, a_h^{k})\iota}{n_h^k}}\\     &\quad+\sum_{k=1}^K\mathbb{I}\left[t_h^{k}>1\right]\frac{\sum_{i=1}^{n_h^k}\left(\mathbbm{1}_{s_{h+1}^{k_{l_i}}} - \mathbb{P}_{s_h^{k}, a_h^{k},h}\right)V_{h+1}^\star}{n_h^k(s_h^{k},a_h^{k})}.
\end{align*}

Furthermore,
\begin{align*}
    &\sum_{k=1}^K\mathbb{P}_{s_h^{k}, a_h^{k},h}(V_{h+1}^\star - V_{h+1}^{\pi^k})=\sum_{k=1}^K\left(\mathbb{P}_{s_h^{k}, a_h^{k},h}-\mathbbm{1}_{s_{h+1}^{k}}\right)\left(V_{h+1}^\star - V_{h+1}^{\pi^k}\right)
    +\sum_{k=1}^K\left(\zeta_{h+1}^k-\delta_{h+1}^k\right).
\end{align*}
Hence, together with \Cref{recurbegin}, we obtain
\begin{align*}
    &\sum_{k=1}^K \zeta_h^k
    \leq \sum_{k=1}^K\zeta_{h+1}^k+\frac{1}{H}\sum_{k=1}^K\delta_{h+1}^k
    +1193H^2SA\iota^2+8\sum_{k=1}^K\mathbb{I}\left[t_h^{k}>1\right]\sqrt{\frac{\tilde{W}_h^{k}(s_h^{k}, a_h^{k})\iota}{n_h^k}}\\    &+\sum_{k=1}^K\mathbb{I}\left[t_h^{k}>1\right]\frac{\sum_{i=1}^{n_h^k}\left(\mathbbm{1}_{s_{h+1}^{k_{l_i}}} - \mathbb{P}_{s_h^{k}, a_h^{k},h}\right)V_{h+1}^\star}{n_h^k(s_h^{k},a_h^{k})}+\sum_{k=1}^K\left(\mathbb{P}_{s_h^{k}, a_h^{k},h}-\mathbbm{1}_{s_{h+1}^{k}}\right)\left(V_{h+1}^\star - V_{h+1}^{\pi^k}\right)\\
    &\leq \left(1+\frac{1}{H}\right)\sum_{k=1}^K\zeta_{h+1}^k
    +1193H^2SA\iota^2+8\sum_{k=1}^K\mathbb{I}\left[t_h^{k}>1\right]\sqrt{\frac{\tilde{W}_h^{k}(s_h^{k}, a_h^{k})\iota}{n_h^k}}\\    &\quad+\sum_{k=1}^K\mathbb{I}\left[t_h^{k}>1\right]\frac{\sum_{i=1}^{n_h^k}\left(\mathbbm{1}_{s_{h+1}^{k_{l_i}}} - \mathbb{P}_{s_h^{k}, a_h^{k},h}\right)V_{h+1}^\star}{n_h^k(s_h^{k},a_h^{k})}+\sum_{k=1}^K\left(\mathbb{P}_{s_h^{k}, a_h^{k},h}-\mathbbm{1}_{s_{h+1}^{k}}\right)\left(V_{h+1}^\star - V_{h+1}^{\pi^k}\right),
\end{align*}
where the last inequality uses $\delta_{h+1}^k\leq\zeta_{h+1}^k$.

By recursively applying the inequality above over $h=1,2,\ldots,H$, since $\zeta_{H+1}^k = 0$ and $\left(1+1/H\right)^H\leq e$, we obtain
$$\mbox{Regret}(T) \leq \sum_{k=1}^K \zeta_1^k \leq 1193eH^3SA\iota^2 +8R_1+R_2+R_3,$$
where 
\begin{equation*}
\label{R1}
    R_1 = \sum_{h=1}^H \left(1+\frac{1}{H}\right)^{h-1} \sum_{k=1}^K\mathbb{I}\left[t_h^{k}>1\right]\sqrt{\frac{\tilde{W}_h^{k}(s_h^{k}, a_h^{k})\iota}{n_h^k}},
\end{equation*}
\begin{equation*}
\label{R2}
    R_2 = \sum_{h=1}^H \left(1+\frac{1}{H}\right)^{h-1}\sum_{k=1}^K\mathbb{I}\left[t_h^{k}>1\right]\frac{\sum_{i=1}^{n_h^k}\left(\mathbbm{1}_{s_{h+1}^{k_{l_i}}} - \mathbb{P}_{s_h^{k}, a_h^{k},h}\right)V_{h+1}^\star}{n_h^k(s_h^{k},a_h^{k})},
\end{equation*}
and
\begin{equation*}
\label{R4}
    R_3 = \sum_{h=1}^H \left(1+\frac{1}{H}\right)^{h-1}\sum_{k=1}^K\left(\mathbb{P}_{s_h^{k}, a_h^{k},h}-\mathbbm{1}_{s_{h+1}^{k}}\right)\left(V_{h+1}^\star - V_{h+1}^{\pi^k}\right).
\end{equation*}
Under the event $\mathcal{E}_4$ in Lemma \ref{concentrations}, we can bound $R_3$ by $3\sqrt{2H^2T\iota}$. Then, we reach
$$\mbox{Regret}(T) \lesssim \sqrt{H^2T\iota} +H^3SA\iota^2 + R_1+R_2 \lesssim (1+\beta) \sqrt{H^3SAT\iota} + \left(1+\frac{1}{\beta}\right)H^{\frac{9}{2}}SA\iota^2.$$
The last inequality holds due to Lemma \ref{upperboundr1} and \ref{upperboundr2}. 

For any target failure probability $\delta\in(0,1)$, run Algorithm 1 with its failure-probability parameter set to $\delta/8$, so that $\iota=\log(16SAT/\delta)$. Applying Lemma \ref{concentrations} with failure parameter $\delta/8$ and taking a union bound, the event $\bigcap_{i=1}^8 \mathcal{E}_i$ holds with probability at least $1-\delta$. On this event, setting $\beta=1$, we have
$$
\mbox{Regret}(T) \lesssim \sqrt{H^3SAT\iota} + H^{\frac{9}{2}}SA\iota^2.
$$

\subsection{\texorpdfstring{Upper Bounds of $R_1$ and $R_2$}{Upper Bounds of R1 and R2}}
\begin{lemma}
    \label{upperboundr1}
    Under the event $\bigcap_{i=1}^{8}\mathcal{E}_i$ in Lemma \ref{concentrations}, we have
    $$R_1 \lesssim (1+\beta) \sqrt{H^3SAT\iota} + \left(1+\frac{1}{\beta}\right)H^{\frac{9}{2}}SA\iota^{\frac{3}{2}}.$$
\end{lemma}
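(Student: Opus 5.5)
We bound $R_1$ through the pseudo-value decomposition already used in Lemma \ref{recurQb}: inequality \Cref{singlerefupper}. It gives, for each $(h,k)$ with $t_h^k>1$,
\[
\sqrt{\frac{\tilde W_h^k\iota}{n_h^k}}\lesssim \frac{\sqrt{H\Psi_h^k\iota}}{n_h^k}+\sqrt{\frac{\mathbb{V}_{s_h^k,a_h^k,h}(V_{h+1}^\star)\iota}{n_h^k}}+\beta\sqrt{\frac{\iota}{n_h^k}}+\frac{H\iota^{3/4}}{(n_h^k)^{3/4}}.
\]
Since $(1+1/H)^{h-1}\le e$, it suffices to sum each of the four pieces over $(h,k)$ without weights. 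We control them in turn.

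\emph{First, third and fourth pieces.} The first piece is exactly the quantity bounded in \Cref{z22}. That argument uses Lemma \ref{clipq} with $\epsilon=\beta$ together with the stage-count bound \Cref{sumnumstage}, and gives $\lesssim H^5SA\iota^{2}/\beta$, up to the $\sqrt{\iota}$ bookkeeping. For the third piece, Lemma \ref{wN} with $\alpha=1/2$ and unit weights gives $\sum_k \mathbb{I}[t_h^k>1]/\sqrt{n_h^k}\lesssim\sqrt{HSAK}$. Summing over $h$ yields $\beta\sqrt{H^3SAT\iota}$ after using $T=HK$. For the fourth piece, Hölder's inequality or Lemma \ref{wN} with $\alpha=3/4$ gives $\sum_k (n_h^k)^{-3/4}\lesssim H^{3/4}(SA)^{3/4}K^{1/4}$. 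Applying AM--GM as in \Cref{recur2begin3}, we split it into $\beta\sqrt{\iota/n}+H^2\iota/(\beta n)$ and absorb it into the third piece and an $H^4SA\iota^2/\beta$ term.

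\emph{Second piece (main obstacle).} We need
\[
\sum_{h,k}\sqrt{\mathbb{V}_{s_h^k,a_h^k,h}(V_{h+1}^\star)/n_h^k}.
\]
By Cauchy--Schwarz this is at most
\[
\sqrt{\sum_{h,k}\mathbb{V}_{s_h^k,a_h^k,h}(V^\star_{h+1})}\cdot\sqrt{\sum_{h,k}\mathbb{I}[t_h^k>1]/n_h^k}.
\]
The second factor is $\lesssim\sqrt{H^2SA\iota}$ by Lemma \ref{wN} with $\alpha=1$. The difficulty is that $\mathcal{E}_7$ controls the total variance of $V^{\pi^k}$, namely $HT+H^3\iota$, and not that of $V^\star$. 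We therefore follow \citet{jin2018q} and use
\[
\mathbb{V}(V^\star)\le 2\mathbb{V}(V^{\pi^k})+2\mathbb{V}(V^\star-V^{\pi^k})\le 2\mathbb{V}(V^{\pi^k})+2H\,\mathbb{P}_{s,a,h}(V^\star_{h+1}-V^{\pi^k}_{h+1}).
\]
The second term sums to $H\sum_h\sum_k\mathbb{P}(V^\star-V^{\pi^k})$. By $\mathcal{E}_8$ and Lemma \ref{regreth}, this is $\lesssim H\cdot H(\sqrt{H^4SAT\iota}+H^3SA)$. The whole variance sum is then $\lesssim HT+H^3\sqrt{H^2SAT\iota}+H^5SA$, and its square root is $\lesssim\sqrt{HT}+H^{2}(SAT\iota)^{1/4}+H^{5/2}\sqrt{SA}$. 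Multiplying by $\sqrt{H^2SA\iota}\cdot\sqrt\iota$ gives the leading term $\sqrt{H^3SAT}\,\iota$. The cross term $H^3(SA)^{3/4}T^{1/4}\iota^{5/4}$ is handled by AM--GM into $\sqrt{H^3SAT\iota}+H^{9/2}SA\iota^{2}$, and the last term is $H^{7/2}SA\iota$.

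The point we expect to need the most care is the extra $\sqrt\iota$: a naive Cauchy--Schwarz loses a $\sqrt{\iota}$, so the resulting bound is $\sqrt{H^3SAT}\,\iota$ rather than the claimed $\sqrt{H^3SAT\iota}$. To avoid this we would first try to replace the $\alpha=1$ bound by a count of stages, $\sum_k\mathbb{I}[t_h^k>1]/n_h^k\lesssim\sum_{s,a}t_h^K(s,a)$, and keep $\log(K/HSA+1)$ separate from $\iota$. Alternatively, we may accept a weaker logarithmic factor absorbed into $\lesssim$. Collecting the four pieces then gives $(1+\beta)\sqrt{H^3SAT\iota}+(1+1/\beta)H^{9/2}SA\iota^{3/2}$, with the $H^{9/2}$ term arising from the AM--GM cross term and from Lemma \ref{clipq}'s $H^6/\beta$ after the square root.
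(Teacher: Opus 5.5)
Your decomposition through \Cref{singlerefupper} is the right starting point, but the second piece has a genuine gap, which you flag yourself without closing it.

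Cauchy--Schwarz taken jointly over all pairs $(h,k)$ produces the factor $\sqrt{\sum_{h,k}\mathbb{I}[t_h^k>1]/n_h^k}$. This sum is essentially a count of completed stages: each completed stage $t\geq 2$ contributes $e_t/e_{t-1}\geq 1$, cf.\ \Cref{special1}. It is therefore genuinely of order $H^2SA\log(K/(HSA)+1)$ in the worst case, and the leading term comes out as $\sqrt{H^3SAT}\,\iota$ rather than $\sqrt{H^3SAT\iota}$.

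Neither remedy you propose repairs this. Rewriting the sum as $\sum_{s,a}t_h^K(s,a)$ only renames the surplus factor as $\sqrt{\log(K/(HSA)+1)}$, which is still unbounded. And $\lesssim$ hides only universal constants, so no power of $\iota$ may be absorbed into it. The same surplus $\sqrt{\iota}$ also reaches your cross term. AM--GM can bring that term only to $\sqrt{H^3SAT\iota}+H^{9/2}SA\iota^{2}$, which is above the claimed $H^{9/2}SA\iota^{3/2}$.

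The missing idea is that $\mathbb{V}_{s,a,h}(V_{h+1}^\star)$ is deterministic and depends only on the triple. So you should sum over $k$ first within each triple. The first conclusion of Lemma \ref{wN} with unit weights and $\alpha=1/2$ gives $\sum_{k}\mathbb{I}[t_h^k>1,(s_h^k,a_h^k)=(s,a)]/\sqrt{n_h^k(s,a)}\lesssim\sqrt{HN_h^{K+1}(s,a)}$, with no logarithm since $\alpha<1$. Cauchy--Schwarz over the $HSA$ triples then gives
\[
\sum_{h,s,a}\sqrt{HN_h^{K+1}(s,a)\,\mathbb{V}_{s,a,h}(V_{h+1}^\star)}\leq\sqrt{H^2SA}\,\sqrt{\sum_{h,k}\mathbb{V}_{s_h^k,a_h^k,h}(V_{h+1}^\star)}.
\]
Your comparison with $V^{\pi^k}$ via $\mathcal{E}_7$, $\mathcal{E}_8$ and Lemma \ref{regreth} then applies. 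Using $H^4\sqrt{SAT\iota}\leq HT+H^7SA\iota$ inside the root, it yields $\sqrt{H^3SAT}+H^{9/2}SA\sqrt{\iota}$, and multiplying by $\sqrt{\iota}$ lands inside the target.

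Two smaller bookkeeping points also need fixing.
\begin{itemize}
\item For the first piece, use the square-root form from \Cref{sumpsibegin} and \Cref{sumpsiupper}, namely $\sqrt{SAH^3\iota\cdot H^6SA\iota/\beta}\cdot\sqrt{\iota}=H^{9/2}SA\iota^{3/2}/\sqrt{\beta}$. This fits since $1/\sqrt{\beta}\leq 1+1/\beta$. The cruder $H^5SA\iota^2/\beta$ of \Cref{z22} that you quote exceeds the claim by a factor $\sqrt{H\iota}$.
\item For the fourth piece, your AM--GM split leaves $H^4SA\iota^2/\beta$, which is not dominated by $(1+1/\beta)H^{9/2}SA\iota^{3/2}$ when $\iota\gg H$. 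Instead, keep the direct $\alpha=3/4$ bound, which gives $H^{5/2}(SA)^{3/4}T^{1/4}\iota^{3/4}$. Then apply AM--GM against $\sqrt{H^3SAT\iota}$ to obtain $\sqrt{H^3SAT\iota}+H^{7/2}SA\iota$.
\end{itemize}
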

\begin{proof}[Proof of Lemma \ref{upperboundr1}]
Since $\tilde{W}_h^{k}(s_h^{k}, a_h^{k}) \geq 0$, we have
\begin{align*}
    R_1 
    &\lesssim \sum_{h=1}^H\sum_{k=1}^K\mathbb{I}\left[t_h^{k}>1\right]\sqrt{\frac{\tilde{W}_h^{k}(s_h^{k}, a_h^{k})\iota}{n_h^k(s_h^{k}, a_h^{k})}}.
\end{align*}
By \Cref{singlerefupper}, we have
\begin{align}
 &\sqrt{\frac{\tilde{W}_h^{k}(s_h^{k}, a_h^{k})}{n_h^k(s_h^{k}, a_h^{k})}} \lesssim \frac{\sqrt{H\Psi_h^k(s_h^{k}, a_h^{k})}}{n_h^k(s_h^{k}, a_h^{k})} + \sqrt{\frac{\mathbb{V}_{s_h^{k}, a_h^{k}, h} (V_{h+1}^\star)}{n_h^k(s_h^{k}, a_h^{k})}}+ \sqrt{\frac{\beta^2}{n_h^k(s_h^{k}, a_h^{k})}}+ \frac{H\iota^{\frac{1}{4}}} {(n_h^k(s_h^{k}, a_h^{k}))^{\frac{3}{4}}}. \label{singlerefupperworst}
\end{align}
Note that by Lemma \ref{wN} with $\alpha = \frac{1}{2}$ and $\frac{3}{4}$, we have
$$\sum_{h=1}^H\sum_{k=1}^K\mathbb{I}\left[t_h^{k}>1\right]\sqrt{\frac{\beta^2}{n_h^k(s_h^{k}, a_h^{k})}} \lesssim \beta \sqrt{H^2SAT},$$
and
$$\sum_{h=1}^H\sum_{k=1}^K\mathbb{I}\left[t_h^{k}>1\right]\frac{H\iota^{\frac{1}{4}}}{(n_h^k(s_h^{k}, a_h^{k}))^{\frac{3}{4}}} \lesssim  H^{\frac{5}{2}}(SA)^{\frac{3}{4}}(T\iota)^{\frac{1}{4}}.$$
Applying these two inequalities to \Cref{singlerefupperworst}, it holds that
\begin{align}
 \sum_{h=1}^H\sum_{k=1}^K&\mathbb{I}\left[t_h^{k}>1\right]\sqrt{\frac{\tilde{W}_h^{k}(s_h^{k}, a_h^{k})}{n_h^k(s_h^{k}, a_h^{k})}}\lesssim \sum_{h=1}^H\sum_{k=1}^K\mathbb{I}[t_h^{k}>1]\frac{\sqrt{H\Psi_h^k(s_h^{k}, a_h^{k})}}{n_h^k(s_h^{k}, a_h^{k})} \nonumber\\
 & \quad + \sum_{h=1}^H\sum_{k=1}^K\mathbb{I}\left[t_h^{k}>1\right]\sqrt{\frac{\mathbb{V}_{s_h^{k}, a_h^{k}, h} (V_{h+1}^\star)}{n_h^k(s_h^{k}, a_h^{k})}}+  H^{\frac{5}{2}}(SA)^{\frac{3}{4}}(T\iota)^{\frac{1}{4}} + \beta \sqrt{H^2SAT}. \label{refupper}
\end{align}
By \Cref{sumpsibegin} and \Cref{sumpsiupper}, we have
\begin{equation}
    \label{sumpsiworst}
    \sum_{h=1}^H\sum_{k=1}^K\mathbb{I}\left[t_h^{k}>1\right]\frac{\sqrt{H\Psi_h^k(s_h^{k}, a_h^{k})}}{n_h^k(s_h^{k}, a_h^{k})} \lesssim \sqrt{\frac{1}{\beta}}H^{\frac{9}{2}}SA\iota.
\end{equation}
We further claim the following conclusion
\begin{equation}
    \label{vstar}
    \sum_{h=1}^H\sum_{k=1}^K\mathbb{I}\left[t_h^{k}>1\right]\sqrt{\frac{\mathbb{V}_{s_h^{k}, a_h^{k}, h} (V_{h+1}^\star)}{n_h^k(s_h^{k}, a_h^{k})}} \lesssim \sqrt{H^3SAT}+H^{\frac{9}{2}}SA\sqrt{\iota},
\end{equation}
which will be proved later. Then combining the results of \Cref{refupper}, \Cref{vstar} and \Cref{sumpsiworst}, we have
\begin{align*}
    R_1 &\lesssim (1+\beta) \sqrt{H^3SAT\iota} + H^{\frac{5}{2}}(SA\iota)^{\frac{3}{4}}T^{\frac{1}{4}} +\left(1+\sqrt{\frac{1}{\beta}}\right)H^{\frac{9}{2}}SA\iota^{\frac{3}{2}} \\
    &\lesssim  (1+\beta) \sqrt{H^3SAT\iota} + \left(1+\frac{1}{\beta}\right)H^{\frac{9}{2}}SA\iota^{\frac{3}{2}}.
\end{align*}
The last inequality follows from the AM--GM inequality:
\begin{align*}
H^{\frac{5}{2}}(SA\iota)^{\frac{3}{4}}T^{\frac{1}{4}}
&=\sqrt{\sqrt{H^3SAT\iota}\,H^{\frac{7}{2}}SA\iota}\\
&\leq \frac{1}{2}\sqrt{H^3SAT\iota}+\frac{1}{2}H^{\frac{7}{2}}SA\iota\\
&\lesssim \sqrt{H^3SAT\iota}+H^{\frac{9}{2}}SA\iota^{\frac{3}{2}}.
\end{align*}

Next, we will prove \Cref{vstar}.

\textbf{Proof of \cref{vstar}:}

Because of Lemma \ref{wN} with $\omega_h^{k}=1$, $\alpha = \frac{1}{2}$, we know
\begin{align}
    \label{nsqrtN}
    \sum_{k=1}^K\mathbb{I}\left[t_h^{k}>1\right]\frac{\mathbb{I}\left[(s_h^{k}, a_h^{k})=(s,a)\right]}{\sqrt{n_h^k(s,a)}}\lesssim \sqrt{HN_h^{K+1}(s,a)}.
\end{align}
Then we can derive the following relationship
    \begin{align}
    \label{vstarbegin}
        &\sum_{h=1}^H\sum_{k=1}^K\mathbb{I}\left[t_h^{k}>1\right]\sqrt{\frac{\mathbb{V}_{s_h^{k}, a_h^{k}, h} (V_{h+1}^\star)}{n_h^k(s_h^{k}, a_h^{k})}}\notag\\
        &= \sum_{h=1}^H\sum_{s,a}\sum_{k=1}^K\mathbb{I}\left[t_h^{k}>1\right]\sqrt{\frac{\mathbb{V}_{s, a, h} (V_{h+1}^\star)}{n_h^k(s,a)}} \mathbb{I}\left[(s_h^{k},a_h^{k}) = (s,a)\right] \nonumber\\
        & \lesssim \sum_{h=1}^H\sum_{s,a}\sqrt{HN_h^{K+1}(s,a)\mathbb{V}_{s, a, h}(V_{h+1}^\star)}\nonumber\\
        &\leq \sqrt{H^2SA}\sqrt{\sum_{h=1}^H\sum_{k=1}^K\mathbb{V}_{s_h^{k}, a_h^{k}, h} (V_{h+1}^\star)}.
    \end{align}
    The first inequality holds due to \Cref{nsqrtN}. The last inequality is by the Cauchy--Schwarz inequality. Next, we will bound the right-hand side of \Cref{vstarbegin}. We first note that
    \begin{align}
    \label{vstarmiddle1}
        \sum_{h=1}^H\sum_{k=1}^K\mathbb{V}_{s_h^{k}, a_h^{k}, h} (V_{h+1}^\star) 
        &\leq \sum_{h=1}^H\sum_{k=1}^K\mathbb{V}_{s_h^{k}, a_h^{k}, h} (V_{h+1}^{\pi^k}) + \sum_{h=1}^H\sum_{k=1}^K\left|\mathbb{V}_{s_h^{k}, a_h^{k}, h} (V_{h+1}^{\star})-\mathbb{V}_{s_h^{k}, a_h^{k}, h} (V_{h+1}^{\pi^k})\right|\nonumber\\
        &\lesssim HT + H^3\iota + \sum_{h=1}^H\sum_{k=1}^K\left|\mathbb{V}_{s_h^{k}, a_h^{k}, h} (V_{h+1}^{\star})-\mathbb{V}_{s_h^{k}, a_h^{k}, h} (V_{h+1}^{\pi^k})\right|.
    \end{align}
    The last inequality follows directly from $\mathcal{E}_7$ in Lemma \ref{concentrations}. The last term on the right-hand side of \cref{vstarmiddle1} can be bounded as follows
    \begin{align}
        &\sum_{h=1}^H\sum_{k=1}^K\left|\mathbb{V}_{s_h^{k}, a_h^{k}, h} (V_{h+1}^{\star})-\mathbb{V}_{s_h^{k}, a_h^{k}, h} (V_{h+1}^{\pi^k})\right| \nonumber \\
    &= \sum_{h=1}^{H} \sum_{k=1}^K \left| \mathbb{P}_{s_h^{k}, a_h^{k},h} (V_{h+1}^{\star})^2 - \mathbb{P}_{s_h^{k}, a_h^{k},h} (V_{h+1}^{\pi^k})^2 - (\mathbb{P}_{s_h^{k}, a_h^{k},h} V_{h+1}^{\star})^2 + (\mathbb{P}_{s_h^{k}, a_h^{k},h} V_{h+1}^{\pi^k})^2 \right| \notag \\
    &\leq \sum_{h=1}^{H} \sum_{k=1}^K \left\{ \mathbb{P}_{s_h^{k}, a_h^{k},h} \left( (V_{h+1}^{\star} - V_{h+1}^{\pi^k}) (V_{h+1}^{\star} + V_{h+1}^{\pi^k}) \right) + \left| (\mathbb{P}_{s_h^{k}, a_h^{k},h} V_{h+1}^{\star})^2 - (\mathbb{P}_{s_h^{k}, a_h^{k},h}V_{h+1}^{\pi^k})^2 \right| \right\} \nonumber \\
    &\leq 4H \sum_{h=1}^{H}\sum_{k=1}^K \mathbb{P}_{s_h^{k}, a_h^{k},h}\left(V_{h+1}^{\star} - V_{h+1}^{\pi^k}\right) \nonumber \\
    &= 4H \sum_{h=1}^{H}\sum_{k=1}^K  \left\{ V_{h+1}^{\star} (s_{h+1}^{k}) - V_{h+1}^{\pi^k} (s_{h+1}^{k}) + \left(\mathbb{P}_{s_h^{k}, a_h^{k},h} - \mathbbm{1}_{s_{h+1}^{k}}\right) (V_{h+1}^{\star} - V_{h+1}^{\pi^k}) \right\} \nonumber \\
    &\overset{\mathrm{(i)}}{\lesssim} H \sum_{h=1}^{H}\sum_{k=1}^K \left(V_{h+1}^{
    k} - V_{h+1}^{\pi^k}\right)(s_{h+1}^{k}) + H^2\sqrt{T\iota} \nonumber \\
    &\overset{\mathrm{(ii)}}{\lesssim} H^4\sqrt{SAT\iota} + H^5SA. \label{vstarmiddle2}
    \end{align}
    Here, $\mathrm{(i)}$ holds due to the event $\mathcal{E}_8$ in Lemma \ref{concentrations}.  (ii) follows from the bound on $\sum_{k=1}^K \zeta_{h+1}^k$ established by the proof of Lemma \ref{regreth}. Therefore, applying \Cref{vstarmiddle2} to \Cref{vstarmiddle1}, we reach
    \begin{align}
        \label{sumvarvstar}
        \sum_{h=1}^H\sum_{k=1}^K\mathbb{V}_{s_h^{k}, a_h^{k}, h} (V_{h+1}^\star) 
        \lesssim HT + H^7SA\iota.
    \end{align}
    Here we use $H^4\sqrt{SAT\iota} \leq HT + H^7SA\iota$ by the AM-GM inequality. Applying \Cref{sumvarvstar} to \Cref{vstarbegin}, we finish the proof of \Cref{vstar}:
    \begin{align*}
        \sum_{h=1}^H\sum_{k=1}^K\mathbb{I}\left[t_h^{k}>1\right]\sqrt{\frac{\mathbb{V}_{s_h^{k}, a_h^{k}, h} (V_{h+1}^\star)}{n_h^k(s_h^{k}, a_h^{k})}} 
        &\lesssim\sqrt{H^2SA}\sqrt{HT + H^7SA\iota} \leq \sqrt{H^3SAT}+H^{\frac{9}{2}}SA\sqrt{\iota}.
    \end{align*}
\end{proof}
\begin{lemma}
    \label{upperboundr2}
    Under the event $\bigcap_{i=1}^{8}\mathcal{E}_i$ in Lemma \ref{concentrations}, we have
    $$R_2 \lesssim \sqrt{H^3SAT\iota}+H^{\frac{9}{2}}SA\iota^2.$$
\end{lemma}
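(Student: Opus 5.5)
We need to bound
\[
R_2=\sum_{h=1}^H(1+1/H)^{h-1}\sum_{k=1}^K\mathbb{I}[t_h^k>1]\frac{1}{n_h^k}\sum_{i=1}^{n_h^k}\bigl(\mathbbm{1}_{s_{h+1}^{k_{l_i}}}-\mathbb{P}_{s_h^k,a_h^k,h}\bigr)V_{h+1}^\star .
\]
The plan is to control each normalized martingale sum pointwise with the Bernstein-type event $\mathcal{E}_5$, and then reuse the machinery already built for $R_1$. This avoids reorganizing $R_2$ into one global martingale.

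\textbf{Step 1: pointwise bound.} Since $(1+1/H)^{h-1}\le e$, the weights can be dropped at the cost of a constant. On $\mathcal{E}_5$, each summand is at most
\[
5\sqrt{\frac{\iota\,\mathbb{V}_{s_h^k,a_h^k,h}(V_{h+1}^\star)}{n_h^k}}+\frac{9H\iota}{n_h^k}.
\]
This gives
\[
R_2\lesssim \sqrt{\iota}\sum_{h,k}\mathbb{I}[t_h^k>1]\sqrt{\frac{\mathbb{V}_{s_h^k,a_h^k,h}(V_{h+1}^\star)}{n_h^k}}+H\iota\sum_{h,k}\frac{\mathbb{I}[t_h^k>1]}{n_h^k}.
\]

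\textbf{Step 2: the lower-order term.} By the $\alpha=1$ case of Lemma \ref{wN}, the inner sum over $k$ is at most $10HSA\iota$ for each $h$. Summing over $h$, the second term is $O(H^3SA\iota^2)$.

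\textbf{Step 3: the variance term.} This is exactly the quantity already bounded in \Cref{vstar} in the proof of Lemma \ref{upperboundr1}. That bound is $\lesssim\sqrt{H^3SAT}+H^{9/2}SA\sqrt{\iota}$. Recall how it was obtained. Lemma \ref{wN} with $\alpha=1/2$ gives per-$(s,a)$ bounds $\sqrt{HN_h^{K+1}(s,a)}$. Cauchy--Schwarz then reduces everything to $\sum_{h,k}\mathbb{V}(V_{h+1}^\star)$. That sum is handled by $\mathcal{E}_7$ (the law of total variance for $V^{\pi^k}$), together with the comparison $|\mathbb{V}(V^\star)-\mathbb{V}(V^{\pi^k})|\le 4H\,\mathbb{P}(V^\star-V^{\pi^k})$ and the crude regret bound of Lemma \ref{regreth}, which yields $HT+H^7SA\iota$. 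Multiplying the \Cref{vstar} bound by $\sqrt{\iota}$ gives $\sqrt{H^3SAT\iota}+H^{9/2}SA\iota$.

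Combining Steps 2 and 3 yields $R_2\lesssim\sqrt{H^3SAT\iota}+H^{9/2}SA\iota^2$, as claimed.

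The main obstacle is the variance-sum step. It is already settled by \Cref{vstar}, so the only real care needed is the following. We use $\mathcal{E}_5$, which is stated uniformly over all $(s,a,h,k)$. We also need $\mathbb{V}_{s,a,h}(V_{h+1}^\star)$ to be the same for all $k$ with $(s_h^k,a_h^k)=(s,a)$. That holds because $V^\star$ is fixed, and it is what allows the per-pair grouping in \Cref{vstar} to go through unchanged.
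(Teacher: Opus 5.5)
Your proposal is correct and follows the paper's proof essentially verbatim. The paper also bounds each normalized martingale sum in $R_2$ pointwise on $\mathcal{E}_5$ after absorbing $(1+1/H)^{h-1}\le e$, controls the $H\iota/n_h^k$ term with the $\alpha=1$ case of Lemma~\ref{wN}, and controls the variance term with \Cref{vstar}, so your final absorption of $H^{9/2}SA\iota$ and $H^3SA\iota^2$ into $H^{9/2}SA\iota^2$ gives exactly the stated bound.
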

\begin{proof}[Proof of Lemma \ref{upperboundr2}]
By the event $\mathcal{E}_5$ in Lemma \ref{concentrations}, we have
\begin{align*}
    R_2 &= \sum_{h=1}^H \left(1+\frac{1}{H}\right)^{h-1}\sum_{k=1}^K\mathbb{I}\left[t_h^{k}>1\right]\frac{\sum_{i=1}^{n_h^k}\left(\mathbbm{1}_{s_{h+1}^{k_{l_i}}} - \mathbb{P}_{s_h^{k}, a_h^{k},h}\right)V_{h+1}^\star}{n_h^k(s_h^{k},a_h^{k})}\\
    &\lesssim \sum_{h=1}^H \sum_{k=1}^K\mathbb{I}\left[t_h^{k}>1\right]\left(\sqrt{\frac{\iota}{n_h^k}\mathbb{V}_{s_h^k,a_h^k,h}(V_{h+1}^{\star})} + \frac{H\iota}{n_h^k}\right) \\
    &\lesssim \sqrt{H^3SAT\iota}+H^{\frac{9}{2}}SA\iota^2.
\end{align*}
The last inequality is by \Cref{vstar} and Lemma \ref{wN} with $\alpha = 1$.
\end{proof}

\section{Proof of \texorpdfstring{\Cref{thm_cost}}{Theorem 3.4}}
\label{worstswitchproof}
Because $\pi_h^k(s)=\arg\max_a Q_h^k(s,a)$, we have
\[
\mathbb{I}\left[\pi_h^k(s)\neq\pi_h^{k+1}(s)\right]\leq\mathbb{I}\left[\exists a,\ Q_h^{k+1}(s,a)\neq Q_h^k(s,a)\right].
\]
Therefore,
\begin{align*}
\sum_{k=1}^{K-1}\mathbb{I}\left[\pi_h^k(s)\neq\pi_h^{k+1}(s)\right]
&\leq\sum_{k=1}^{K-1}\mathbb{I}\left[\exists a,\ Q_h^{k+1}(s,a)\neq Q_h^k(s,a)\right]\\
&\leq\sum_{a\in\mathcal A}\left(t_h^K(s,a)-1\right).
\end{align*}
Hence, the total local switching cost of UCB-Bernstein+ is bounded by
\begin{align*}
N_{\textnormal{switch}}
&=\sum_{s\in\mathcal S}\sum_{h=1}^H\sum_{k=1}^{K-1}\mathbb{I}\left[\pi_h^k(s)\neq\pi_h^{k+1}(s)\right]\\
&\leq\sum_{h=1}^H\sum_{s,a}\left(t_h^K(s,a)-1\right)\\
&\leq 4H^2SA\log\left(\frac{K}{HSA}+1\right),
\end{align*}
where the last inequality follows from \Cref{sumnumstage}.

\end{document}

%% file: math_commands.tex
\usepackage{amsmath,amsfonts,bm}

\def\eqref#1{equation~\ref{#1}}
\def\1{\bm{1}}

\DeclareMathAlphabet{\mathsfit}{\encodingdefault}{\sfdefault}{m}{sl}
\SetMathAlphabet{\mathsfit}{bold}{\encodingdefault}{\sfdefault}{bx}{n}

%% file: main.bib
@article{jin2018q,
  title={Is Q-learning provably efficient?},
  author={Jin, Chi and Allen-Zhu, Zeyuan and Bubeck, Sebastien and Jordan, Michael I},
  journal={Advances in Neural Information Processing Systems},
  volume={31},
  year={2018}
}

@inproceedings{qiao2022sample,
  title={Sample-efficient reinforcement learning with loglog (t) switching cost},
  author={Qiao, Dan and Yin, Ming and Min, Ming and Wang, Yu-Xiang},
  booktitle={International Conference on Machine Learning},
  pages={18031--18061},
  year={2022},
  organization={PMLR}
}

@article{bai2019provably,
  title={Provably efficient q-learning with low switching cost},
  author={Bai, Yu and Xie, Tengyang and Jiang, Nan and Wang, Yu-Xiang},
  journal={Advances in Neural Information Processing Systems},
  volume={32},
  year={2019}
}

@article{zhang2020almost,
  title={Almost optimal model-free reinforcement learning via reference-advantage decomposition},
  author={Zhang, Zihan and Zhou, Yuan and Ji, Xiangyang},
  journal={Advances in Neural Information Processing Systems},
  volume={33},
  pages={15198--15207},
  year={2020}
}

@article{auer2008near,
  title={Near-optimal regret bounds for reinforcement learning},
  author={Auer, Peter and Jaksch, Thomas and Ortner, Ronald},
  journal={Advances in Neural Information Processing Systems},
  volume={21},
  year={2008}
}

@article{agrawal2017optimistic,
  title={Optimistic posterior sampling for reinforcement learning: worst-case regret bounds},
  author={Agrawal, Shipra and Jia, Randy},
  journal={Advances in Neural Information Processing Systems},
  volume={30},
  year={2017}
}

@inproceedings{zhang2021reinforcement,
  title={Is reinforcement learning more difficult than bandits? a near-optimal algorithm escaping the curse of horizon},
  author={Zhang, Zihan and Ji, Xiangyang and Du, Simon},
  booktitle={Conference on Learning Theory},
  pages={4528--4531},
  year={2021},
  organization={PMLR}
}

@inproceedings{agarwal2020model,
  title={Model-based reinforcement learning with a generative model is minimax optimal},
  author={Agarwal, Alekh and Kakade, Sham and Yang, Lin F},
  booktitle={Conference on Learning Theory},
  pages={67--83},
  year={2020},
  organization={PMLR}
}

@inproceedings{menard2021ucb,
  title={UCB Momentum Q-learning: Correcting the bias without forgetting},
  author={M{\'e}nard, Pierre and Domingues, Omar Darwiche and Shang, Xuedong and Valko, Michal},
  booktitle={International Conference on Machine Learning},
  pages={7609--7618},
  year={2021},
  organization={PMLR}
}

@inproceedings{zanette2019tighter,
  title={Tighter problem-dependent regret bounds in reinforcement learning without domain knowledge using value function bounds},
  author={Zanette, Andrea and Brunskill, Emma},
  booktitle={International Conference on Machine Learning},
  pages={7304--7312},
  year={2019},
  organization={PMLR}
}

@book{1998Reinforcement,
  title={Reinforcement Learning: An Introduction},
  author={ Sutton, Richard S.  and  Barto, Andrew G. },
  year={2018},
  publisher={MIT Press}
}

@article{gao2021provably,
  title={A provably efficient algorithm for linear markov decision process with low switching cost},
  author={Gao, Minbo and Xie, Tianle and Du, Simon S and Yang, Lin F},
  journal={arXiv preprint arXiv:2101.00494},
  year={2021}
}

@inproceedings{zheng2023federated,
title={Federated Q-Learning: Linear Regret Speedup with Low Communication Cost},
author={Zhong Zheng and Fengyu Gao and Lingzhou Xue and Jing Yang},
booktitle={The Twelfth International Conference on Learning Representations},
year={2024}
}

@inproceedings{zheng2024federated,
  title={Federated Q-Learning with Reference-Advantage Decomposition: Almost Optimal Regret and Logarithmic Communication Cost},
  author={Zheng, Zhong and Zhang, Haochen and Xue, Lingzhou},
  booktitle={The Thirteenth International Conference on Learning Representations},
  year={2025}
}

@inproceedings{zhang2024settling,
  title={Settling the sample complexity of online reinforcement learning},
  author={Zhang, Zihan and Chen, Yuxin and Lee, Jason D and Du, Simon S},
  booktitle={Conference on Learning Theory},
  pages={5213--5219},
  year={2024},
  organization={PMLR}
}

@inproceedings{simchowitz2019non,
  title={Non-asymptotic gap-dependent regret bounds for tabular MDPs},
  author={Simchowitz, Max and Jamieson, Kevin G},
  booktitle={Advances in Neural Information Processing Systems},
  year={2019}
}

@inproceedings{xu2021fine,
  title={Fine-grained gap-dependent bounds for tabular mdps via adaptive multi-step bootstrap},
  author={Xu, Haike and Ma, Tengyu and Du, Simon},
  booktitle={Conference on Learning Theory},
  pages={4438--4472},
  year={2021},
  organization={PMLR}
}

@inproceedings{yang2021q,
  title={Q-learning with logarithmic regret},
  author={Yang, Kunhe and Yang, Lin and Du, Simon},
  booktitle={International Conference on Artificial Intelligence and Statistics},
  pages={1576--1584},
  year={2021},
  organization={PMLR}
}

@inproceedings{azar2017minimax,
  title={Minimax regret bounds for reinforcement learning},
  author={Azar, Mohammad Gheshlaghi and Osband, Ian and Munos, R{\'e}mi},
  booktitle={International Conference on Machine Learning},
  pages={263--272},
  year={2017},
  organization={PMLR}
}

@inproceedings{mirhoseini2017device,
  title={Device placement optimization with reinforcement learning},
  author={Mirhoseini, Azalia and Pham, Hieu and Le, Quoc V and Steiner, Benoit and Larsen, Rasmus and Zhou, Yuefeng and Kumar, Naveen and Norouzi, Mohammad and Bengio, Samy and Dean, Jeff},
  booktitle={International Conference on Machine Learning},
  pages={2430--2439},
  year={2017},
  organization={PMLR}
}

@inproceedings{nguyen2019incomplete,
  title={Incomplete conditional density estimation for fast materials discovery},
  author={Nguyen, Phuoc and Tran, Truyen and Gupta, Sunil and Rana, Santu and Barnett, Matthew and Venkatesh, Svetha},
  booktitle={Proceedings of the 2019 SIAM International Conference on Data Mining},
  pages={549--557},
  year={2019},
  organization={SIAM}
}

@inproceedings{he2021logarithmic,
  title={Logarithmic regret for reinforcement learning with linear function approximation},
  author={He, Jiafan and Zhou, Dongruo and Gu, Quanquan},
  booktitle={International Conference on Machine Learning},
  pages={4171--4180},
  year={2021},
  organization={PMLR}
}

@incollection{auer2007logarithmic,
  title={Logarithmic online regret bounds for undiscounted reinforcement learning},
  author={Auer, Peter and Ortner, Ronald},
  booktitle={Advances in Neural Information Processing Systems},
  pages={49--56},
  year={2007},
  publisher={MIT Press}
}

@inproceedings{tewari2008optimistic,
  title={Optimistic linear programming gives logarithmic regret for irreducible MDPs},
  author={Tewari, Ambuj and Bartlett, Peter},
  booktitle={Advances in Neural Information Processing Systems},
  pages={1505--1512},
  year={2008}
}

@article{jaksch2010near,
  title={Near-optimal Regret Bounds for Reinforcement Learning},
  author={Jaksch, Thomas and Ortner, Ronald and Auer, Peter},
  journal={Journal of Machine Learning Research},
  volume={11},
  pages={1563--1600},
  year={2010}
}

@article{ok2018exploration,
  title={Exploration in structured reinforcement learning},
  author={Ok, Jungseul and Proutiere, Alexandre and Tranos, Damianos},
  journal={Advances in Neural Information Processing Systems},
  volume={31},
  year={2018}
}

@inproceedings{nguyen2023instance,
  title={On instance-dependent bounds for offline reinforcement learning with linear function approximation},
  author={Nguyen-Tang, Thanh and Yin, Ming and Gupta, Sunil and Venkatesh, Svetha and Arora, Raman},
  booktitle={Proceedings of the AAAI Conference on Artificial Intelligence},
  pages={9310--9318},
  year={2023}
}

@article{li2024q,
  title={Is Q-learning minimax optimal? a tight sample complexity analysis},
  author={Li, Gen and Cai, Changxiao and Chen, Yuxin and Wei, Yuting and Chi, Yuejie},
  journal={Operations Research},
  volume={72},
  number={1},
  pages={222--236},
  year={2024},
  publisher={INFORMS}
}

@inproceedings{zheng2024gap,
  title={Gap-Dependent Bounds for Q-Learning using Reference-Advantage Decomposition},
  author={Zheng, Zhong and Zhang, Haochen and Xue, Lingzhou},
    booktitle={The Thirteenth International Conference on Learning Representations},
  year={2025}
}

@inproceedings{jin2020provably,
  title={Provably efficient reinforcement learning with linear function approximation},
  author={Jin, Chi and Yang, Zhuoran and Wang, Zhaoran and Jordan, Michael I},
  booktitle={Conference on Learning Theory},
  pages={2137--2143},
  year={2020},
  organization={PMLR}
}

@InProceedings{labbi2024federated,
  title = 	 {Federated UCBVI: Communication-Efficient Federated Regret Minimization with Heterogeneous Agents},
  author =       {Labbi, Safwan and Tiapkin, Daniil and Mancini, Lorenzo and Mangold, Paul and Moulines, Eric},
  booktitle = 	 {Proceedings of The 28th International Conference on Artificial Intelligence and Statistics},
  pages = 	 {1315--1323},
  year = 	 {2025},
  editor = 	 {Li, Yingzhen and Mandt, Stephan and Agrawal, Shipra and Khan, Emtiyaz},
  volume = 	 {258},
  series = 	 {Proceedings of Machine Learning Research},
  month = 	 {03--05 May},
  publisher =    {PMLR},
  url = 	 {https://proceedings.mlr.press/v258/labbi25a.html}
}

@inproceedings{
zhang2025gapdependent,
title={Gap-Dependent Bounds for Federated $Q$-Learning},
author={Haochen Zhang and Zhong Zheng and Lingzhou Xue},
booktitle={Forty-second International Conference on Machine Learning},
year={2025}
}

@article{li2023breaking,
  title={Breaking the sample complexity barrier to regret-optimal model-free reinforcement learning},
  author={Li, Gen and Shi, Laixi and Chen, Yuxin and Chi, Yuejie},
  journal={Information and Inference: A Journal of the IMA},
  volume={12},
  number={2},
  pages={969--1043},
  year={2023},
  publisher={Oxford University Press}
}

@inproceedings{zhang2025regret,
 author = {Zhang, Haochen and Zheng, Zhong and Xue, Lingzhou},
 booktitle = {Advances in Neural Information Processing Systems},
 doi = {10.52202/085713-4288},
 editor = {D. Belgrave and C. Zhang and H. Lin and R. Pascanu and P. Koniusz and M. Ghassemi and N. Chen},
 pages = {128711--128799},
 publisher = {Curran Associates, Inc.},
 title = {Regret-Optimal Q-Learning with Low Cost for Single-Agent and Federated Reinforcement Learning},
 url = {https://proceedings.neurips.cc/paper_files/paper/2025/file/bb203e938836544655996d1bb94a0fd7-Paper-Conference.pdf},
 volume = {38, Main Conference},
 year = {2025}
}

@inproceedings{chen2025sharp,
 author = {Chen, Shulun and Zhou, Runlong and Zhang, Zihan and Fazel, Maryam and Du, Simon},
 booktitle = {Advances in Neural Information Processing Systems},
 doi = {10.52202/085713-3747},
 editor = {D. Belgrave and C. Zhang and H. Lin and R. Pascanu and P. Koniusz and M. Ghassemi and N. Chen},
 pages = {112169--112205},
 publisher = {Curran Associates, Inc.},
 title = {Sharp Gap-Dependent Variance-Aware Regret Bounds for Tabular MDPs},
 url = {https://proceedings.neurips.cc/paper_files/paper/2025/file/a2fb8d64706276cc6c49a52f143e0e5c-Paper-Conference.pdf},
 volume = {38, Main Conference},
 year = {2025}
}

@inproceedings{zhang2025q,
 author = {Zhang, Haochen and Zheng, Zhong and Xue, Lingzhou},
 booktitle = {International Conference on Learning Representations},
 editor = {C. Vondrick and B. Hariharan and C. Raffel and L. Pinto and D. Yang and A. Faust},
 pages = {48118--48162},
 title = {Q-Learning with Fine-Grained Gap-Dependent Regret},
 url = {https://proceedings.iclr.cc/paper_files/paper/2026/file/4f8d1e02a057beb7920618ac73788078-Paper-Conference.pdf},
 volume = {2026},
 year = {2026}
}

@article{zhang2024achieving,
  title={Achieving constant regret in linear Markov decision processes},
  author={Zhang, Weitong and Fan, Zhiyuan and He, Jiafan and Gu, Quanquan},
  journal={Advances in Neural Information Processing Systems},
  volume={37},
  pages={130694--130738},
  year={2024}
}

@inproceedings{ayoub2020model,
  title={Model-based reinforcement learning with value-targeted regression},
  author={Ayoub, Alex and Jia, Zeyu and Szepesvari, Csaba and Wang, Mengdi and Yang, Lin},
  booktitle={International Conference on Machine Learning},
  pages={463--474},
  year={2020},
  organization={PMLR}
}

@article{papini2021reinforcement,
  title={Reinforcement learning in linear mdps: Constant regret and representation selection},
  author={Papini, Matteo and Tirinzoni, Andrea and Pacchiano, Aldo and Restelli, Marcello and Lazaric, Alessandro and Pirotta, Matteo},
  journal={Advances in Neural Information Processing Systems},
  volume={34},
  pages={16371--16383},
  year={2021}
}

@misc{Dominguesrlberry2021,
author = {Domingues, Omar Darwiche and Flet-Berliac, Yannis and Leurent, Edouard and Ménard, Pierre and Shang, Xuedong and Valko, Michal},
doi = {10.5281/zenodo.5223307},
month = oct,
title = {{rlberry - A Reinforcement Learning Library for Research and Education}},
url = {https://github.com/rlberry-py/rlberry},
version = {0.2.2-dev},
year = {2021}
}

@article{liang2026provably,
  title={Provably Efficient Federated Reinforcement Learning with Linear Function Approximation and Logarithmic Communication Cost},
  author={Liang, Zihang and Zhang, Haochen and Xue, Lingzhou},
  journal={arXiv preprint arXiv:2609.00193},
  year={2026}
}

@article{zhang2026gap,
  title={Gap-Dependent Bounds for Nearly Minimax Optimal Reinforcement Learning with Linear Function Approximation},
  author={Zhang, Haochen and Zheng, Zhong and Xue, Lingzhou},
  journal={arXiv preprint arXiv:2602.20297},
  year={2026}
}
